\def\A{{\bf A}}
\def\a{{\bf a}}
\def\B{{\bf B}}
\def\bb{{\bf b}}
\def\C{{\bf C}}
\def\D{{\bf D}}
\def\G{{\bf G}}
\def\H{{\bf H}}
\def\I{{\bf I}}
\def\M{{\bf M}}
\def\N{{\bf N}}
\def\PP{{\bf P}}
\def\S{{\bf S}}
\def\s{{\bf s}}
\def\U{{\bf U}}
\def\u{{\bf u}}
\def\V{{\bf V}}
\def\W{{\bf W}}
\def\w{{\bf w}}
\def\X{{\bf X}}
\def\x{{\bf x}}
\def\Y{{\bf Y}}
\def\y{{\bf y}}
\def\0{{\bf 0}}
\def\1{{\bf 1}}
\def\NM{{\mathcal N}}
\def\OM{{\mathcal O}}
\def\RB{{\mathbb R}}
\def\EB{{\mathbb E}}
\def\PB{{\mathbb P}}
\def\varepsi{\mbox{\boldmath$\varepsilon$\unboldmath}}
\def\Ps{\mbox{\boldmath$\Psi$\unboldmath}}
\def\Si{\mbox{\boldmath$\Sigma$\unboldmath}}
\def\si{\mbox{\boldmath$\sigma$\unboldmath}}
\def\De{\mbox{\boldmath$\Delta$\unboldmath}}
\def\Xii{\mbox{\boldmath$\Xi$\unboldmath}}
\def\argmin{\mathop{\rm argmin}}
\def\bias{\mathsf{bias}}
\def\var{\mathsf{var}}
\def\nnz{\mathrm{nnz}}
\def\poly{\mathrm{poly}}
\def\tr{\mathrm{tr}}
\def\rk{\mathrm{rank}}
\def\diag{\mathsf{diag}}
\newtheorem{Remark}{Remark}
\newtheorem{assumption}{Assumption}
\newcommand{\mylabel}[2]{#2\def\@currentlabel{#2}\label{#1}}
\begin{document}

\title{Sketched Ridge Regression: Optimization Perspective,\\ Statistical Perspective, and Model Averaging}

\author{\name Shusen Wang \email wssatzju@gmail.com \\
       \addr International Computer Science Institute and Department of Statistics \\
       University of California at Berkeley\\
       Berkeley, CA 94720, USA
       \AND
       \name Alex Gittens \email gittea@rpi.edu \\
       \addr Computer Science Department \\
       Rensselaer Polytechnic Institute \\
       Troy, NY 12180, USA 
       \AND
       \name Michael W.\ Mahoney \email mmahoney@stat.berkeley.edu \\
       \addr International Computer Science Institute and Department of Statistics \\
       University of California at Berkeley \\
       Berkeley, CA 94720, USA }

\editor{Mehryar Mohri}

\maketitle

\begin{abstract}
We address the statistical and optimization impacts of the classical sketch and Hessian sketch used to approximately solve the Matrix Ridge Regression (MRR) problem. Prior research has quantified the effects of classical sketch on the strictly simpler least squares regression (LSR) problem. We establish that classical sketch has a similar effect upon the optimization properties of MRR as it does on those of LSR: namely, it recovers nearly optimal solutions. By contrast, Hessian sketch does not have this guarantee; instead, the approximation error is governed by a subtle interplay between the ``mass'' in the responses and the optimal objective value. 

For both types of approximation, the regularization in the sketched MRR problem results in significantly different statistical properties from those of the sketched LSR problem. In particular, there is a bias-variance trade-off in sketched MRR that is not present in sketched LSR. We provide upper and lower bounds on the bias and variance of sketched MRR; these bounds show that classical sketch significantly increases the variance, while Hessian sketch significantly increases the bias. Empirically, sketched MRR solutions can have risks that are higher by an order-of-magnitude than those of the optimal MRR solutions.

We establish theoretically and empirically that model averaging greatly decreases the gap between the risks of the true and sketched solutions to the MRR problem. Thus, in parallel or distributed settings, sketching combined with model averaging is a powerful technique that quickly obtains near-optimal solutions to the MRR problem while greatly mitigating the increased statistical risk incurred by sketching. 
\end{abstract}

\begin{keywords}
Randomized Linear Algebra, Matrix Sketching, Ridge Regression
\end{keywords}

\section{Introduction}
\label{sec:introduction}

Regression is one of the most fundamental problems in machine learning.  The
simplest and most thoroughly studied regression model is least squares
regression (LSR).  Given features $\X = [\x_1^T; \ldots , \x_n^T] \in
\RB^{n\times d}$ and responses $\y = [y_1, \ldots , y_n]^T \in \RB^{n}$, the LSR
problem $\min_{\w} \|\X \w - \y \|_2^2$ can be solved in $\OM (nd^2 )$ time
using the QR decomposition or in $\OM (n d t)$ time using accelerated gradient
descent algorithms.  Here, $t$ is the number of iterations, which depends on
the initialization, the condition number of $\X^T \X$, and the stopping criterion.

This paper considers the $n\gg d$ problem, where there is much redundancy in $\X$.  
Matrix sketching, as used in the paradigm of Randomized Linear Algebra (RLA) \citep{mahoney2011ramdomized,woodruff2014sketching,drineas2016randnla}, aims to reduce the
size of $\X$ while limiting information loss; the sketching operation can consist of 
sampling a subset of the rows of $\X$, or forming linear combinations
of the rows of $\X$. Either operation is modeled mathematically by multiplication with a sketching matrix $\S$ to form
the sketch $\S^T \X$. The sketching matrix $\S \in \RB^{n \times s}$ satisfies $d < s \ll n$
so that $\S^T \X$ generically has the same rank but much fewer rows as $\X$. 
Sketching has been used to speed up LSR \citep{drineas2006sampling,drineas2011faster,clarkson2013low,meng2013low,nelson2013osnap}
by solving the sketched LSR problem $\min_{\w} \|\S^T\X \w - \S^T \y\|_2^2$ instead of the
original LSR problem.  Solving sketched LSR costs either $\OM (s d^2 +
T_{{s}})$ time using the QR decomposition or $\OM (s d t + T_{{s}})$ time using
accelerated gradient descent algorithms, where $t$ is as defined
previously\footnote{The condition number of $\X^T \S \S^T \X$
	is very close to that of $\X^T \X$,
	and thus the number of iterations $t$ is almost unchanged.} and
$T_{{s}}$ is the time cost of sketching.  For example, $T_{\textrm{s}} = \OM
(nd \log s)$ when $\S$ is the subsampled randomized Hadamard
transform~\citep{drineas2011faster}, and $T_{{s}} = \OM (nd)$ when $\S$ is a
CountSketch matrix~\citep{clarkson2013low}.

There has been much work in RLA on analyzing the quality of sketched LSR with
different sketching methods and different objectives; see the reviews
\citep{mahoney2011ramdomized,woodruff2014sketching,drineas2016randnla} and the references therein.
The concept of sketched LSR originated in the theoretical computer science literature, e.g.,
~\citet{drineas2006sampling,drineas2011faster}, where the 
behavior of sketched LSR was first studied from an optimization perspective.  Let $\w^\star$ be the
optimal LSR solution and $\tilde{\w}$ be the solution to sketched LSR.
This line of work established that if $s = \OM (d/\epsilon + \poly (d))$, then
the objective value $\|\X \tilde{\w} - \y \big\|_2^2$ is at most (1+$\epsilon$) 
times greater than $\|\X {\w^\star} - \y \big\|_2^2$.
These works also bounded $\| \tilde{\w} - \w^\star \|_2^2$ in terms of
the difference in the objective function values at $\tilde{\w}$ and $\w^\star$ and the condition number of $\X^T \X$.

A more recent line of work has studied sketched LSR from a statistical perspective: 
~\cite{ma2014statistical,raskutti2015statistical,pilanci2015iterative,wang2016computationally}
considered statistical properties of sketched LSR such as the bias and variance.
In particular,~\citet{pilanci2015iterative} showed that the solutions to sketched LSR have
much higher variance than the optimal solutions.

Both of these perspectives are important and of practical interest. 
The optimization perspective is relevant when the approximate solution is used to initialize an (expensive) iterative optimization algorithm;
the statistical perspective is relevant in machine learning and statistics applications where the approximate solution is directly used in lieu of the optimal solution.

In practice, regularized regression, e.g., ridge regression and LASSO,
exhibit more attractive bias-variance trade-offs and generalization errors than
vanilla LSR. Furthermore, the matrix generalization of LSR, where multiple
responses are to be predicted, is often more useful than LSR. 
However, the properties of sketched regularized matrix
regression are largely unknown.  Hence, we consider the question: {\it how does our understanding of
	the optimization and statistical properties of sketched LSR generalize to
  sketched regularized regression problems?} We answer this question for
the sketched matrix ridge regression (MRR) problem. 

Recall that $\X$ is ${n \times d}$. Let $\Y \in \RB^{n \times m}$ denote
a matrix of corresponding responses.  We study the MRR problem
\begin{eqnarray} \label{eq:def_f}
\min_\W \; \Big\{ f (\W )
\; \triangleq \; \tfrac{1}{n} \big\| \X \W - \Y \big\|_F^2 + \gamma \|\W\|_F^2 \Big\},
\end{eqnarray}
which has optimal solution
\begin{eqnarray} \label{eq:def_W_optimal}
\W^\star
& = & (\X^T \X + n \gamma \I_d )^\dag \X^T \Y.
\end{eqnarray}
Here, $(\cdot)^\dagger$ denotes the Moore-Penrose inversion operation.
LSR is a special case of MRR, with $m=1$ and $\gamma = 0$.  The optimal
solution $\W^\star$ can be obtained in $\OM (n d^2 + nmd)$ time using a QR
decomposition of $\X$.  Sketching can be applied to MRR in two ways:
\begin{eqnarray}
{\W}^{\textrm{c}}
& = & (\X^T \S \S^T \X + n \gamma \I_d )^\dag (\X^T \S \S^T \Y) ,  \label{eq:def_W_tilde} \\
{\W}^{\textrm{h}}
& = & (\X^T \S \S^T \X + n \gamma \I_d )^\dag \X^T \Y . \label{eq:def_W_hat}
\end{eqnarray}
Following the convention of \citet{pilanci2015iterative,wang2016sketching},
we call ${\W}^{\textrm{c}}$ the {\bf classical sketch}
and ${\W}^{\textrm{h}}$ the {\bf Hessian sketch}.
Table~\ref{tab:timecost} lists the time costs of the three solutions to MRR.

\begin{table}[!h]\setlength{\tabcolsep}{0.3pt}
	\caption{The time cost of the solutions to MRR.
		Here $T_s (\X)$ and $T_s (\Y)$ denote the time cost of forming
		the sketches $\S^T \X \in \RB^{s\times d}$ and $\S^T \Y \in \RB^{s\times m}$.}
	\label{tab:timecost}
	\begin{center}
		\begin{small}
			\begin{tabular}{c c c}
				\hline
				~~~{\bf Solution}~~~&~~~{\bf Definition}~~~&~~~{\bf Time Complexity}~~~\\
				\hline
				~Optimal Solution~ & ~~~\eqref{eq:def_W_optimal}~~~
				& ~~~$\OM (n d^2 + nmd)$~~~ \\
				~Classical Sketch~ & ~~~\eqref{eq:def_W_tilde}~~~
				& ~~~$\OM (s d^2 + smd) + T_s (\X) + T_s (\Y)$~~~ \\
				~Hessian Sketch~ & ~~~\eqref{eq:def_W_hat}~~~
				& ~~~$\OM (s d^2 + nmd)+ T_s (\X)$~~~ \\
				\hline
			\end{tabular}
		\end{small}
		\vspace{-5mm}
	\end{center}
\end{table}

\subsection{Main Results and Contributions} \label{sec:intro:main}

We summarize all of our upper bounds in Table~\ref{tab:main}.
Our optimization analysis bounds the gap between the objective function values at the 
sketched and optimal solutions,
while our statistical analysis quantifies the behavior of the bias and variance of the sketched solutions
relative to those of the true solutions.

\begin{table}[!h]\setlength{\tabcolsep}{0.3pt}
	\def\arraystretch{1.4}
  \caption{A summary of our main results. In the table, $\W$ is the solution of classical/Hessian sketch 
    with or without model averaging (mod.\ avg.);
		$\W^\star$ is the optimal solution;
    $g$ is the number of models used in model averaging; and
		$\beta = \frac{\|\X\|_2^2 }{\|\X\|_2^2 + n \gamma } \leq 1$,
		where $\gamma$ is the regularization parameter.
    For conciseness, we take the sketching matrix $\S \in \RB^{n\times s}$ to correspond to 
		Gaussian projection, SRHT, or shrinkage leverage score sampling.
		Similar but more complex expressions hold for 
    uniform sampling (with or without model averaging)
		and CountSketch (only without model averaging.)
    All the bounds hold with constant probability.
    The notation $\tilde{\OM}$ conceals logarithmic factors. 
	}
	\label{tab:main}
	\begin{center}
		\begin{footnotesize}
			\begin{tabular}{c | c c | c c}
				\hline
				& \multicolumn{2}{c|}{\bf Classical Sketch}
				& \multicolumn{2}{c}{\bf Hessian Sketch}  \\ 
				& ~{\bf \scriptsize w/o mod.\ avg.}~~ & ~~{\bf \scriptsize w/ mod.\ avg.}~
				& ~~{\bf \scriptsize w/o mod.\ avg.}~~ & ~~{\bf \scriptsize w/ mod.\ avg.}~~ \\ 
				\hline
				$s = $
				& \multicolumn{2}{c|}{ $\tilde\OM ({ d} / {\epsilon } )$ }
				& \multicolumn{2}{c}{ $\tilde\OM ( {d} / {\epsilon })$ } \\
				{\tiny $  f (\W ) - f (\W^\star)  \leq $}
				& $\beta \epsilon f (\W^\star) $ 
				& $\beta (\tfrac{\epsilon}{g} + \beta^2 \epsilon^2 ) f (\W^\star)$
				& ~~~~$\beta^2 \epsilon \big[ \tfrac{\|\Y\|_F^2}{n} - f (\W^\star) \big] $ ~~~~
				& ~$\beta^2(\tfrac{\epsilon}{g} + {\epsilon^2 } ) 
				\big[ \tfrac{\|\Y\|_F^2}{n} - f (\W^\star) \big] $~ \\
				Theorems
				& Theorem~\ref{thm:optimization:classical}
				& Theorem~\ref{thm:optimization:classical_avg}
				& Theorem~\ref{thm:optimization:hessian}
				& Theorem~\ref{thm:optimization:hessian_avg}\\
				\hline
				$s = $
				& \multicolumn{2}{c|}{ $\tilde\OM ({ d} / {\epsilon^2 } )$ }
				& \multicolumn{2}{c}{ $\tilde\OM ( { d} / {\epsilon^2 })$ } \\
				$\frac{\bias (\W) }{ \bias (\W^\star )} \leq $
				& $1+\epsilon$ 
				& $1+\epsilon$
				& $(1+\epsilon) (1 + \frac{\epsilon \|\X\|_2^2}{n\gamma} )$ 
				& $1 + \epsilon + \big( \frac{\epsilon }{ \sqrt{g} } + \epsilon^2  \big) \frac{\| \X \|_2^2 }{n \gamma }$ \\
				$\frac{\var (\W) }{ \var (\W^\star )} \leq $
				&$(1+\epsilon) \frac{n}{s}$ 
				&$\frac{n}{s} \Big( \sqrt{\tfrac{1 + \epsilon / g}{g} } + \epsilon \Big)^2$
				& $1+\epsilon$ 
				& $1+\epsilon$	\\
				Theorems
				& Theorem~\ref{thm:biasvariance:classical}
				& Theorem~\ref{thm:biasvariance:classical_avg}
				& Theorem~\ref{thm:biasvariance:hessian}
				& Theorem~\ref{thm:biasvariance:hessian_avg}\\
				\hline
			\end{tabular}
		\end{footnotesize}
	\end{center}
\end{table}

We first study classical and Hessian sketches from the {\bf optimization perspective}.
Theorems~\ref{thm:optimization:classical} and \ref{thm:optimization:hessian} show:
\begin{itemize}
	\item
    {Classical sketch} achieves relative error in the objective value.
	With sketch size $s = \tilde\OM (d/\epsilon   )$, the
  sketched solution satisfies $f ({\W}^{\textrm{c}}) \leq (1+\epsilon) f (\W^\star)$.
	\vspace{-1mm}
	\item
    {Hessian sketch} does not achieve relative error in the objective value. In particular,
	if $\frac{1}{n} \| \Y\|_F^2$ is much larger than $f (\W^\star)$,
	then $f ({\W}^{\textrm{h}})$ can be far larger than $f (\W^\star)$.
	\vspace{-1mm}
	\item
    For both classical and Hessian sketch, the relative quality of approximation often improves as
	the regularization parameter $\gamma$ increases (because $\beta$ decreases).
\end{itemize}

We then study classical and Hessian sketch from the {\bf statistical perspective}, by modeling
$\Y = \X \W_0 + \Xii$ as the sum of a true linear model and random noise, 
decomposing the risk $R (\W ) = \EB \|\X \W - \X \W_0 \|_F^2$ into bias and variance terms,
and bounding these terms.
We draw the following conclusions
(see Theorems~\ref{thm:bias_var_decomp}, \ref{thm:biasvariance:classical}, 
\ref{thm:biasvariance:hessian} for the details):
\begin{itemize}
	\item
    The bias of {classical sketch} can be nearly as small as that of the optimal solution.
	The variance is $\Theta \big(\frac{n}{s} \big)$ times that of the optimal solution;
	this bound is optimal. 
  Therefore over-regularization\footnote{For example,
    using a larger value of the regularization parameter $\gamma$ 
    than one would optimally choose for the unsketched problem.}
	should be used to supress the variance.
	(As $\gamma$ increases, the bias increases, and the variance decreases.)
	\item
    Since Hessian sketch uses the whole of $\Y$, the variance of {Hessian sketch} can be close to that of the optimal solution.
    However, Hessian sketch incurs a high bias, especially when $n \gamma$ is small compared to $\|\X\|_2^2$.
    This indicates that over-regularization is necessary for Hessian sketch to deliver solutions with low bias.
\end{itemize}
Our empirical evaluations bear out these theoretical results.
In particular, in Section~\ref{sec:experiment1}, we show in Figure~\ref{fig:risk_nb2}
that even when the regularization parameter $\gamma$ is fine-tuned,
the risks of classical and Hessian sketch are worse than 
that of the optimal solution by an order of magnitude.
This is an empirical demonstration of the fact that the near-optimal properties of sketch from the optimization
perspective are much less relevant in a statistical setting than its sub-optimal statistical properties.

We propose to use {\bf model averaging},
which averages the solutions of $g$ sketched MRR problems,
to attain lower optimization and statistical errors.
Without ambiguity, we denote model-averaged classical and Hessian sketches by
$\W^\textrm{c}$ and $\W^\textrm{h}$, respectively.
Theorems \ref{thm:optimization:classical_avg}, \ref{thm:optimization:hessian_avg},
\ref{thm:biasvariance:classical_avg}, \ref{thm:biasvariance:hessian_avg} establish the following results:
\begin{itemize}
	\item
    {Classical Sketch}.
    Model averaging decreases the objective function value and the variance
	and does not increase the bias.
	Specifically, with the same sketch size $s$, 
  model averaging ensures
	$\frac{ f(\W^{\textrm{c}}) - f (\W^\star) }{f(\W^{\star})}$ and
	$\frac{ \var (\W^{\textrm{c}}) }{ \var (\W^{\star}) }$ respectively decrease to 
	almost $\tfrac{1}{g}$ of those of classical sketch without model averaging,
	provided that $s \gg d$.
	See Table~\ref{tab:main} for the details.
	\item
    {Hessian Sketch}.
	Model averaging decreases the objective function value and the bias
	and does not increase the variance.
\end{itemize}
In the distributed setting, the feature-response pairs $(\x_1, \y_1) , \cdots , (\x_n, \y_n ) \in \RB^{d} \times \RB^m$
are divided among $g$ machines. Assuming that the data have been shuffled randomly, each machine contains a sketch 
of the MRR constructed by uniformly sampling rows from the data set without replacement. 
We illustrate this procedure in Figure~\ref{fig:partition}.
In this setting, the model averaging
procedure communicates the $g$ local models only once to return the final estimate; this process has very low communication and latency costs,
and suggests two further applications of classical sketch with model averaging:
\begin{itemize}
	\item
	{Model Averaging for Machine Learning.}
  When a low-precision solution is acceptable, model averaging can be used in lieu of
	distributed numerical optimization algorithms requiring multiple rounds of communication.
  If $\frac{n}{g}$ is large enough compared to $d$ and the row coherence of $\X$ is small,
	then ``one-shot'' model averaging
	has bias and variance comparable to the optimal solution.
	\vspace{-1mm}
	\item
	{Model Averaging for Optimization}.
	If a high-precision solution to MRR is required,
	then an iterative numerical optimization algorithm must be used.
	The cost of such algorithms heavily depends on the quality of the initialization.\footnote{For example, the conjugate gradient method satisfies
		$\tfrac{\|\W^{(t)} - \W^\star \|_F^2}{\|\W^{(0)} - \W^\star \|_F^2} \leq
		\theta_1^t$ 
    and stochastic block coordinate descent \citep{tu2016large} satisfies
		$\tfrac{ \EB f (\W^{(t)}) - f (\W^\star)}{ f (\W^{(0)})  - f (\W^\star) }
		\leq \theta_2^t$. 
		Here $\W^{(t)}$ is the output of the $t$-th iteration;
		$\theta_1, \theta_2 \in (0, 1)$ depend on the condition number of $\X^T \X + n \gamma \I_d$
		and some other factors.} 
  A good initialization reduces the number of iterations needed to reach convergence.
	The averaged model is provably close to the optimal solution, so 
	model averaging provides a high-quality initialization for more expensive algorithms.
\end{itemize}

\begin{figure}
	\begin{center}
		\includegraphics[width=0.8\textwidth]{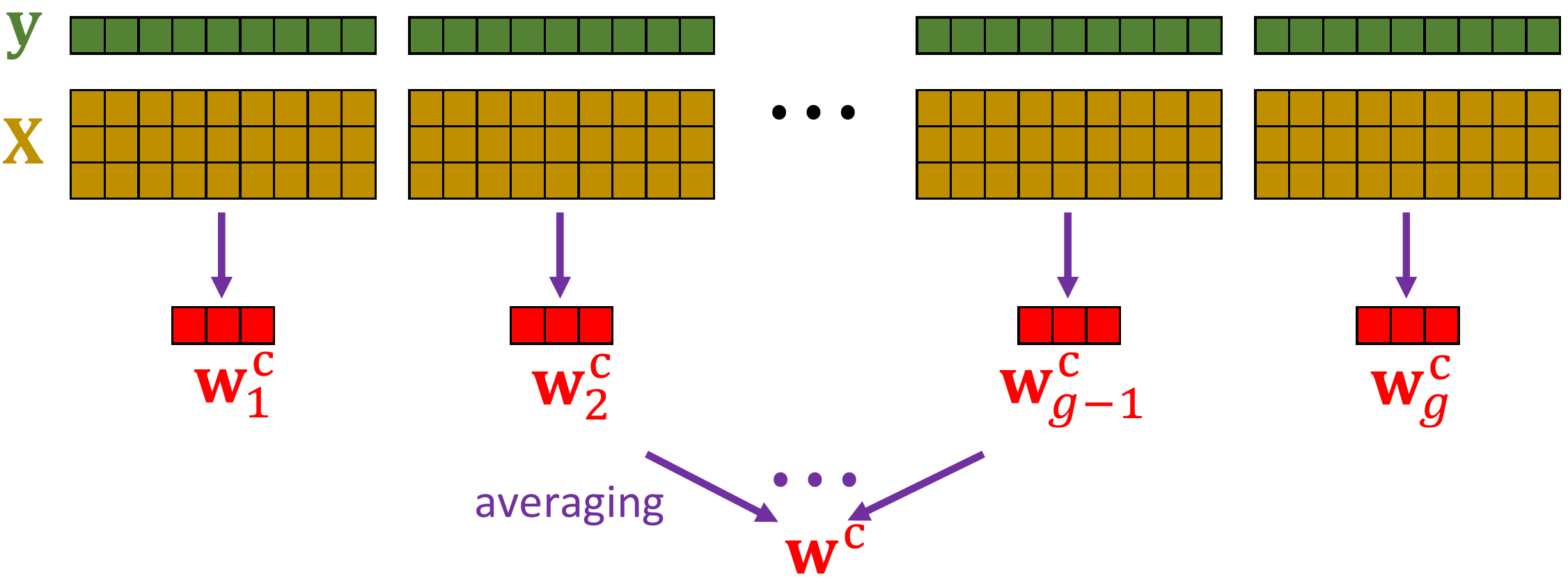}
	\end{center}
  \caption{Using model averaging with the classical sketch
    in the distributed setting to approximately solve LSR.}
	\label{fig:partition}
\end{figure}

\subsection{Prior Work}

The body of work on sketched LSR mentioned 
earlier~\citep{drineas2006sampling,drineas2011faster,clarkson2013low,meng2013low,nelson2013osnap} 
shares many similarities with our results.
However, the theories of sketched LSR developed from the optimization
perspective do not obviously extend to MRR, and the statistical analysis of LSR
and MRR differ: among other differences, 
LSR is unbiased while MRR is biased and therefore
has a bias-variance tradeoff that must be considered.

\citet{lu2013faster}~has considered a different application of sketching to ridge regression: 
they assume $d \gg n$, reduce the number of features in $\X$ using sketching,
and conduct statistical analysis.
Our setting differs in that we consider $n \gg d$, reduce the number of samples by sketching,
and allow for multiple responses.

The model averaging analyzed in this paper is similar in spirit to the \textsc{AvgM} algorithm of~\citep{zhang2013communication}.
When classical sketch is used with uniform row sampling without replacement, our model averaging procedure is a special case 
of \textsc{AvgM}. However, our results do not follow from those of~\citep{zhang2013communication}.
First, we make no assumption on the data, $\X$ and $\Y$, and the model (parameters), $\W$.
Second, we study both the optimization objective, $\|\X \W^{\textrm{c}} - \X \W^\star\|_F^2$, and the statistical objective, $\EB \|\X \W^{\textrm{c}} - \X \W_0\|_F^2$,
where $\W^{\textrm{c}}$ is the average of the approximate solutions obtained used classical sketch,
$\W_0$ is the unknown ground truth, and $\W^\star$ is the optimal solution based on the observed data;
they studied solely the optimization objective.
Third, our results apply to many other sketching ensembles than uniform sampling without replacement.
Our results clearly indicate that the performance critically depends on the row coherence of $\X$;
this dependence has not been explicitly captured in \citep{zhang2013communication}.
\citet{zhang2014divide} studied a different statistical objective and their resulting bound has a higher-order of dependence on $d$ and other parameters. 

Iterative Hessian sketch has been studied in~\citet{pilanci2015iterative,wang2016sketching,wang2017giant}.
By way of comparison, all the algorithms in this paper are ``one-shot'' rather than iterative.
This work has connections to the contemporary works~\citep{avron2017sharper,thanei2017random,derezinski2017unbiased,derezinski2018subsampling}.
\citet{avron2017sharper}~studied classical sketch from the optimization perspective;
\citet{thanei2017random} studied LSR with model averaging;
\citet{derezinski2017unbiased,derezinski2018subsampling} studied linear regression with volume sampling for experimental design.

\subsection{Paper Organization}

Section~\ref{sec:preliminary} defines our notation and introduces the sketching schemes we consider.
Section~\ref{sec:main} presents our theoretical results. 
Sections~\ref{sec:experiment1} and~\ref{sec:experiments_real} conduct experiments to 
verify our theories and demonstrates the efficacy of model averaging.
Section~\ref{sec:proofsketch} sketches the proofs of our main results.
Complete proofs are provided in the appendix.


\section{Preliminaries} \label{sec:preliminary}

Throughout, we take $\I_n$ to be the $n\times n$ identity matrix and $\0$ to be a vector or matrix of all zeroes of the appropriate size.
Given a matrix $\A = [a_{i j}]$, the $i$-th row is denoted by $\a_{i:}$, 
and the $j$-th column is denoted by $\a_{:j}$.
The Frobenius and spectral norms of $\A$ are written as, respectively, $\|\A \|_F$ and $\|\A \|_2$.
The set $\{1, 2, \cdots, n\}$ is written $[n]$.
Let $\OM$, $\Omega$, and $\Theta$ be the standard asymptotic notation, and let
$\tilde{\OM}$ conceal logarithmic factors.

Throughout, we fix $\X \in \RB^{n \times d}$ as our matrix of features. We set $\rho = \rk (\X)$ and
write the SVD of $\X$ as
$\X =\U \Si \V^T $,
where $\U$, $\Si$, $\V$ are respectively $n\times \rho$,
$\rho\times \rho$, and $d\times \rho$ matrices.
We let $\sigma_1 \geq \cdots \geq \sigma_\rho > 0$ be the singular values of $\X$.
The Moore-Penrose inverse of $\X$ is defined by
$\X^\dag = \V \Si^{-1} \U^T$.
The row leverage scores of $\X$ are $l_i = \|\u_{:i}\|_2^2$ for $i \in [n]$.
The row coherence of $\X$ is $\mu (\X) = \frac{n}{\rho} \max_{i } \|\u_{:i}\|_2^2$.
Throughout, we let $\mu$ be shorthand for $\mu (\X)$.
The notation defined in Table~\ref{tab:notation} is used throughout this paper.

\begin{table}[t]\setlength{\tabcolsep}{0.3pt}
	\def\arraystretch{1.1}
	\caption{The commonly used notation.}
	\label{tab:notation}
	\begin{center}
		\begin{small}
			\begin{tabular}{c l}
				\hline
				~~~{\bf Notation}~~~&~~~{\bf Definition}~~~\\
				\hline
				~~~$\X \in \RB^{n\times d}$~~~ & ~~~each row is a data sample (feature vector)~~~\\
				~~~$\Y \in \RB^{n\times m}$~~~ & ~~~each row contains the corresponding responses~~~\\
				~~~$\U \Si \V^T$~~~ & ~~~the SVD of $\X$~~~\\
				~~~$\rho$~~~ & ~~~the rank of $\X$~~~ \\
				~~~$\mu $~~~ & ~~~the row coherence of $\X$~~~\\
				~~~$\sigma_i$~~~ & ~~~the $i$-th largest singular value of $\X$~~~\\
				~~~$\gamma$~~~ & ~~~the regularization parameter~~~\\
				~~~$\beta $~~~ & ~~~$\beta=\frac{\|\X\|_2^2}{\|\X\|_2^2 + n \gamma} \leq 1$~~~ \\
				~~~$\S \in \RB^{n\times s}$~~~ & ~~~the sketching matrix~~~ \\
				~~~${\W^\star} \in \RB^{d\times m}$~~~ & ~~~the optimal solution \eqref{eq:def_W_optimal}~~~\\
        ~~~${\W}^{\textrm{c}} \in \RB^{d\times m}$~~~ & ~~~approximate solution obtained using the classical sketch \eqref{eq:def_W_tilde}~~~\\
        ~~~${\W}^{\textrm{h}} \in \RB^{d\times m}$~~~ & ~~~approximate solution obtained using the Hessian sketch \eqref{eq:def_W_hat}~~~\\
		        ~~~${\W_0} \in \RB^{d\times m}$~~~ & ~~~the unknown ground truth (in the statistical setting)~~~\\
				\hline
			\end{tabular}
		\end{small}
	\end{center}
\end{table}

Matrix sketching attempts to reduce the size of large matrices while minimizing the loss of spectral information that is 
useful in tasks like linear regression.
We denote the process of sketching a matrix $\X \in \RB^{n \times d}$ by $\X' = \S^T \X$.
Here, $\S \in \RB^{n\times s}$ is called a sketching matrix
and $\X' \in \RB^{s\times d}$ is called a sketch of $\X$.
In practice, except for Gaussian projection (where the entries of $\S$ are i.i.d.\ sampled from $\NM (0, 1/s)$),
the sketching matrix $\S$ is not formed explicitly.

Matrix sketching can be accomplished by random sampling or random projection.
{\bf Random sampling} corresponds to sampling rows of $\X$ i.i.d.\ with replacement according to
given row sampling probabilities
$p_1 , \cdots , p_m \in (0, 1)$.
The corresponding (random) sketching matrix $\S \in \RB^{n\times s}$ 
has exactly one non-zero entry, whose position indicates the index of the selected row in each column;
in practice, this $\S$ is not explicitly formed.
{\bf Uniform sampling} fixes $p_1 = \cdots  = p_n = \frac{1}{n}$.
{\bf Leverage score sampling} sets $p_i$ proportional to the (exact or approximate~\citep{drineas2012fast}) row leverage scores $l_i$ of $\X$.
In practice {\bf shrinked leverage score sampling}
can be a better choice than leverage score sampling~\citep{ma2014statistical}.
The sampling probabilities of shrinked leverage score sampling are defined by
$p_i = \frac{1}{2} \big(\frac{{l}_i}{\sum_{j=1}^n {l}_j } + \frac{1}{n} \big)$.\footnote{In fact,
	$p_i$ can be any convex combination of $\frac{{l}_i}{\sum_{j=1}^n {l}_j }$
	and $\frac{1}{n}$ \citep{ma2014statistical}. We use the weight $\frac{1}{2}$ for convenience; 
	our conclusions extend in a straightforward manner to other weightings.}

The exact leverage scores are unnecessary in practice;
constant-factor approximation to the leverage scores is sufficient.
Leverage scores can be efficiently approximated by the algorithms of \citep{drineas2012fast}.
Let $l_1 , \cdots , l_n$ be the true leverage scores.
We denote the approximate leverages by $\widetilde{l}_1 , \cdots , \widetilde{l}_n$ and require that they satisfy
\begin{equation} \label{eq:def:appro_lev}
\tilde{l}_q \in [l_q, \tau l_q ]
\quad \textrm{ for all } \;  q\in [n] ,
\end{equation}
where $\tau \geq 1$ indicates the quality of approximation.
We then use $p_q = \tilde{l}_q / \sum_j \tilde{l}_j$ as the sampling probabilities.
One can obtain the same accuracies when using approximate leverage scores in place of the true leverage scores by 
increasing $s$ by a factor of $\tau$, so as long as $\tau$ is a small constant, 
the orders of the sketch sizes when using exact or approximate leverage score sampling are the same.
Thus we do not distinguish between exact and approximate leverage scores in this paper.
For {shrinked leverage score sampling},
we define the sampling probabilities
\begin{eqnarray} \label{eq:rr_mixed_prob}
  p_i = \tfrac{1}{2} \left(\tfrac{\tilde{l}_i}{\sum_{j=1}^n \tilde{l}_j } + \tfrac{1}{n} \right)
\quad \textrm{ for } \; i = 1 , \dots , n .
\end{eqnarray}

{\bf Gaussian projection} is also well-known as the prototypical Johnson-Lindenstrauss transform \citep{johnson1984extensions}.
Let $\G \in \RB^{n\times s}$ be a standard Gaussian matrix,
i.e., each entry is sampled independently from $\NM (0, 1)$.
The matrix $\S = \frac{1}{\sqrt{s}} \G$ is a Gaussian projection matrix.
It takes $\OM (nds)$ time to apply $\S\in \RB^{n\times s}$ to any $n\times d$ dense matrix,
which makes Gaussian projection computationally inefficient relative to other forms of sketching.

The {\bf Subsampled randomized Hadamard transform (SRHT)} \citep{drineas2011faster,lu2013faster,tropp2011improved}
is a more efficient alternative to Gaussian projection.
Let $\H_n \in \RB^{n\times n}$ be the Walsh-Hadamard matrix with $+1$ and $-1$ entries,
$\D \in \RB^{n\times n}$ be a diagonal matrix with diagonal entries sampled uniformly from $\{+1, -1\}$,
and $\PP \in \RB^{n\times s}$ be the uniform row sampling matrix defined above.
The matrix
$\S = \frac{1}{\sqrt{n}} \D \H_n \PP \in \RB^{n\times s}$
is an SRHT matrix,
and can be applied to any $n\times d$ matrix in $\OM(n d \log s)$ time.
In practice, the subsampled randomized Fourier transform (SRFT) \citep{woolfe2008fast} is often used in lieu
of the SRHT, because the SRFT exists for all values of $n$, whereas $\H_n$ exists only for some values of $n$.
Their performance and theoretical analyses are very similar.

{\bf CountSketch} can be applied to any $\X \in \RB^{n\times d}$ in $\OM (nd)$ time \citep{charikar2004finding,clarkson2013low,meng2013low,nelson2013osnap,pham2013fast,weinberger2009feature}.
Though more efficient to apply, CountSketch requires a larger sketch size than Gaussian projections, 
SRHT, and leverage score sampling to attain the same theoretical guarantees.
Interested readers can refer to~\citep{woodruff2014sketching} for a detailed description of CountSketch.
Unlike the other sketching methods mentioned here, model averaging with CountSketch may not be theoretically sound.
See Remark~\ref{remark:cs_avg} for further discussion.


\section{Main Results} \label{sec:main}

Sections~\ref{sec:main:optimization} and \ref{sec:main:statistical} analyze sketched MRR from, respectively,  the
optimization and statistical perspectives.
Sections~\ref{sec:main:optimization_avg} and \ref{sec:main:statistical_avg} capture the 
impacts of model averaging on, respectively, the optimization and statistical properties of sketched MRR.

We described six sketching methods in Section~\ref{sec:preliminary}.
For simplicity, in this section, we refer to leverage score sampling,
shrinked leverage score sampling, Gaussian projection, and SRHT
as {\bf the four sketching methods} while we refer to uniform sampling and CountSketch by name.
Throughout, let $\mu$ be the row coherence of $\X$ and
$\beta = \frac{\|\X\|_2^2}{\|\X\|_2^2 + n \gamma} \leq 1$.


\subsection{Sketched MRR: Optimization Perspective}  \label{sec:main:optimization}

Theorem~\ref{thm:optimization:classical} shows that 
$f ({\W}^{\textrm{c}})$, the objective value of classical sketch, 
is close to the optimal objective value $f (\W^\star)$, and that the
approximation quality improves as the regularization parameter $\gamma$ increases.

\begin{theorem} [Classical Sketch]\label{thm:optimization:classical}
	Let $\beta = \frac{\|\X\|_2^2}{\|\X\|_2^2 + n \gamma} \leq 1$.
	For the four sketching methods with
	$s = \tilde\OM \big( \frac{ d}{\epsilon}  \big)$,
	uniform sampling with
	$s = \OM \big( \frac{\mu  d \log d}{\epsilon}  \big)$,
	and CountSketch with
	$s = \OM \big( \frac{ d^2}{\epsilon}  \big)$,
	the inequality
	\begin{eqnarray*}
		f ({\W}^{\textrm{c}}) -f (\W^\star )
		& \leq & \epsilon \beta \, f (\W^\star )
	\end{eqnarray*}
	holds with probability at least 0.9.
  The uncertainty is with respect to the random choice of sketching matrix.
\end{theorem}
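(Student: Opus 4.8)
The plan is to reduce sketched MRR to a sketched matrix least-squares problem with a structured sketch, and then run the standard subspace-embedding-plus-approximate-matrix-multiplication argument while tracking the regularization. First I would introduce the augmented data
\[
\bar\X \;=\; \begin{bmatrix}\X\\ \sqrt{n\gamma}\,\I_d\end{bmatrix}\in\RB^{(n+d)\times d},
\qquad
\bar\Y \;=\; \begin{bmatrix}\Y\\ \0\end{bmatrix}\in\RB^{(n+d)\times m},
\]
so that $f(\W)=\tfrac1n\|\bar\X\W-\bar\Y\|_F^2$ and $\W^\star=\bar\X^\dag\bar\Y$. With the block sketch $\bar\S=\begin{bmatrix}\S&\0\\\0&\I_d\end{bmatrix}$, the matrix $\bar\S^T\bar\X$ stacks $\S^T\X$ over $\sqrt{n\gamma}\,\I_d$ and $\bar\S^T\bar\Y$ stacks $\S^T\Y$ over $\0$, so that ${\W}^{\textrm{c}}$ is exactly the (unique) minimizer of $\|\bar\S^T(\bar\X\W-\bar\Y)\|_F^2$. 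Since $f$ is a quadratic whose Hessian $\bar\X^T\bar\X=\X^T\X+n\gamma\I_d$ is invertible, optimality of $\W^\star$ yields the exact identity $f({\W}^{\textrm{c}})-f(\W^\star)=\tfrac1n\|\bar\X\De\|_F^2$ with $\De={\W}^{\textrm{c}}-\W^\star$; it therefore suffices to bound $\|\bar\X\De\|_F$.

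Next I would pass to the SVD $\bar\X=\bar\U\bar\Si\bar\V^T$, with top block $\bar\U_1\in\RB^{n\times d}$ and bottom block $\bar\U_2\in\RB^{d\times d}$ of $\bar\U$, and set $\bar\R=\bar\X\W^\star-\bar\Y$, the optimal residual, which satisfies $\bar\U^T\bar\R=\0$ by the normal equations. The sketched normal equations $\bar\X^T\bar\S\bar\S^T(\bar\X\De+\bar\R)=\0$ reduce, after the substitution $\z:=\bar\Si\bar\V^T\De$ (so that $\|\z\|_F=\|\bar\X\De\|_F$), to $(\I_d+\Ph)\z=-\bar\U^T\bar\S\bar\S^T\bar\R$, where $\Ph:=\bar\U^T\bar\S\bar\S^T\bar\U-\I_d$. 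The key point is that the block structure of $\bar\S$ collapses both error terms onto the top block: $\Ph=\bar\U_1^T(\S\S^T-\I_n)\bar\U_1$ and, using $\bar\U^T\bar\R=\0$, $\bar\U^T\bar\S\bar\S^T\bar\R=\bar\U_1^T(\S\S^T-\I_n)\bar\R_1$, where $\bar\R_1=\X\W^\star-\Y$ is the top block of $\bar\R$. Since $\range(\bar\U_1)\subseteq\range(\X)=\range(\U)$, write $\bar\U_1=\U\M$ with $\M=\U^T\bar\U_1$; diagonalizing $\bar\X^T\bar\X$ then gives $\bar\U_1^T\bar\U_1=\I_d-n\gamma\bar\Si^{-2}$, whose eigenvalues are $\tfrac{\sigma_i^2}{\sigma_i^2+n\gamma}$, so that
\[
\|\M\|_2^2=\|\bar\U_1\|_2^2=\frac{\|\X\|_2^2}{\|\X\|_2^2+n\gamma}=\beta .
\]
This identity is precisely where the factor $\beta$ enters the bound.

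Finally I would invoke the standard structural guarantees for $\S$ on the column space of $\U$: the subspace embedding bound $\|\U^T\S\S^T\U-\I_\rho\|_2\le\eta$ and the approximate matrix multiplication bound $\EB\|\U^T(\S\S^T-\I_n)\bar\R_1\|_F^2\le\tfrac{c}{s}\|\U\|_F^2\|\bar\R_1\|_F^2=\tfrac{c\rho}{s}\|\bar\R_1\|_F^2$ (followed by Markov's inequality). Through $\bar\U_1=\U\M$ these transfer to $\|\Ph\|_2\le\beta\eta\le\eta$ and $\|\bar\U^T\bar\S\bar\S^T\bar\R\|_F^2\le\beta\,\|\U^T(\S\S^T-\I_n)\bar\R_1\|_F^2$. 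Taking $\eta\le\tfrac12$ makes $\I_d+\Ph$ invertible with $\|(\I_d+\Ph)^{-1}\|_2\le2$, and combining the pieces with $\|\bar\R_1\|_F\le\|\bar\R\|_F$ and $f(\W^\star)=\tfrac1n\|\bar\R\|_F^2$ gives
\[
f({\W}^{\textrm{c}})-f(\W^\star)=\tfrac1n\|\z\|_F^2
\;\le\;\tfrac{4\beta}{n}\,\|\U^T(\S\S^T-\I_n)\bar\R_1\|_F^2
\;\le\;\tfrac{4c\rho}{s}\,\beta\,f(\W^\star).
\]
Choosing $s=\tilde\OM(d/\epsilon)$ (using $\rho\le d$ to absorb $4c\rho/s$ into $\epsilon$) together with a union bound over the two structural events gives the claim with probability at least $0.9$; the weaker sample complexities $s=\OM(\mu d\log d/\epsilon)$ for uniform sampling and $s=\OM(d^2/\epsilon)$ for CountSketch come from the corresponding forms of these same two guarantees.

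I expect the main obstacle to be establishing the two structural conditions at the right, sketch-specific sample complexities with the $\beta$ factor cleanly isolated. The reduction to augmented least squares and the computation $\|\bar\U_1\|_2^2=\beta$ are the conceptual core and are short; the real work lies in verifying the approximate matrix multiplication bound at the $\OM(d/\epsilon)$ rate (rather than the $\OM(d/\epsilon^2)$ rate needed for a spectral embedding) for each of the four sketching methods, handling the SRHT through its high-probability rather than in-expectation AMM guarantee, and carefully tracking constants through the final union bound.
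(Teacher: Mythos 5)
Your proof is correct, and it takes a genuinely different route from the paper's. The paper works directly with the ridge normal equations: it derives the identity $(\X^T \S\S^T\X + n\gamma\I_d)(\W^{\textrm{c}} - \W^\star) = \X^T\S\S^T\Y^\perp + n\gamma(\X^T\S\S^T\X - \X^T\X)(\X^T\X+n\gamma\I_d)^{-1}\X^\dag\Y$ and bounds the two terms separately --- the first via approximate matrix multiplication applied to $\Y^\perp$ (which satisfies $\U^T\Y^\perp = \0$), the second via the subspace embedding at accuracy $\eta \approx \sqrt{\epsilon}$ --- with the factor $\beta$ extracted from $\|(\Si^2+n\gamma\I_\rho)^{-1/2}\Si\|_2^2$ and the two contributions recombined through the identity $f(\W^\star) = \tfrac{1}{n}[\|\Y^\perp\|_F^2 + n\gamma\|(\Si^2+n\gamma\I_\rho)^{-1/2}\U^T\Y\|_F^2]$ (Lemma~\ref{lem:optimization}). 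You instead fold the regularizer into an augmented least-squares problem with the block sketch $\bar\S$, after which a single application of the matrix multiplication property to the full (fixed) residual $\bar\R_1 = \X\W^\star - \Y$ plus a constant-accuracy subspace embedding ($\eta \le 1/2$) suffices; the factor $\beta$ emerges cleanly as $\|\bar\U_1\|_2^2$, and the case split between residual and regularization terms disappears because $\bar\U^T\bar\R = \0$ collapses both onto the top block. Both routes land at $s = \tilde\OM(d/\epsilon)$ --- in the paper the embedding requirement $\eta^2 \lesssim \epsilon$ and the AMM requirement are each $\tilde\OM(d/\epsilon)$, while in yours the AMM requirement alone dominates --- but your reduction is arguably more modular (it reuses the sketched-LSR machinery verbatim) and requires the spectral condition only at constant accuracy. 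The one point to be careful about, which you handle correctly, is that Assumption~\ref{assumption:1:2} must be invoked for a matrix $\bar\R_1$ that is \emph{not} orthogonal to $\range(\U)$ when $\gamma > 0$; this is fine since the property holds for arbitrary fixed $\B$, but it is a genuine departure from the paper's use of the property only on $\Y^\perp$.
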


The corresponding guarantee for the performance of Hessian sketch is given in 
Theorem~\ref{thm:optimization:hessian}. It is weaker than the guarantee for classical sketch,
especially when $\frac{1}{n} \|\Y\|_F^2$ is far larger than $f (\W^\star )$.
If $\Y$ is nearly noiseless---$\Y$ is well-explained by a linear combination of 
the columns of $\X$---and $\gamma$ is small,
then $f (\W^\star )$ is close to zero,
and consequently $f (\W^\star )$ can be far smaller than $\frac{1}{n} \|\Y\|_F^2$.
Therefore, in this case which is ideal for MRR, $f ({\W}^{\textrm{h}})$ is not close to $f(\W^\star )$ and our 
theory suggests Hessian sketch does not perform as well as classical sketch.
This is verified by our experiments (see Figure~\ref{fig:obj_nb2}), which show that
unless $\gamma$ is large or a large portion of $\Y$ is outside the column space of $\X$,
the ratio $\frac{ f (\W^\textrm{h})}{f (\W^\star )}$ can be large.

\begin{theorem} [Hessian Sketch] \label{thm:optimization:hessian}
	Let $\beta = \frac{\|\X\|_2^2}{\|\X\|_2^2 + n \gamma} \leq 1$.
	For the four sketching methods with
	$s = \tilde\OM \big( \frac{ d}{\epsilon} \big)$,
	uniform sampling with $s =  {\OM} \big(  \frac{\mu  d \log d}{\epsilon} \big)$,
	and CountSketch with $s =  {\OM} (\frac{ d^2}{\epsilon})$,
	the inequality
	\vspace{-1mm}
	\begin{small}
		\begin{eqnarray*}
			f ({\W}^{\textrm{h}}) - f (\W^\star )
			& \leq & \epsilon \beta^2 \, \left(\tfrac{\|\Y\|_F^2 }{n }   - f (\W^\star ) \right) .
			\vspace{-1mm}
		\end{eqnarray*}
	\end{small}
	holds with probability at least 0.9.
  The uncertainty is with respect to the random choice of sketching matrix.
\end{theorem}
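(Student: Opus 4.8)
The plan is to reduce the objective gap to a purely spectral quantity and to extract the factor $\beta^2$ from the fact that Hessian sketch perturbs only the data term $\X^T\X$, leaving the regularizer $n\gamma\I_d$ intact. First I would rewrite the objective in augmented form: with $\G_\gamma := \X^T\X + n\gamma\I_d$, the first-order optimality of $\W^\star$ (equivalently $\X^T\Y = \G_\gamma\W^\star$) yields the two identities
\begin{eqnarray*}
f(\W) - f(\W^\star) &=& \tfrac1n\tr\!\big[(\W-\W^\star)^T \G_\gamma (\W-\W^\star)\big], \\
\tfrac{\|\Y\|_F^2}{n} - f(\W^\star) &=& \tfrac1n\tr\!\big[(\W^\star)^T \G_\gamma\,\W^\star\big],
\end{eqnarray*}
where the second follows by substituting $\X^T\Y = \G_\gamma\W^\star$ into $f(\W^\star)$ and simplifying. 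Writing $\De := \W^{\textrm{h}}-\W^\star$, the theorem is therefore equivalent to the matrix inequality $\tr(\De^T\G_\gamma\De)\le\epsilon\beta^2\,\tr((\W^\star)^T\G_\gamma\W^\star)$.

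Next I would obtain a closed form for $\De$. With $\tilde\G_\gamma := \X^T\S\S^T\X + n\gamma\I_d$ and $\E := \X^T(\S\S^T-\I_n)\X = \tilde\G_\gamma - \G_\gamma$, the normal equations $\tilde\G_\gamma\W^{\textrm{h}}=\X^T\Y=\G_\gamma\W^\star$ give $\De = -\tilde\G_\gamma^{-1}\E\,\W^\star$. Substituting the SVD $\X=\U\Si\V^T$ and setting $\Ph := \U^T\S\S^T\U-\I_\rho$, $\Lam := \Si^2 + n\gamma\I_\rho$, and $\A := \V^T\W^\star$, everything restricts to $\range(\V)$ (both $\G_\gamma^{-1}$ and $\tilde\G_\gamma^{-1}$ preserve it, and $\W^\star,\De$ lie in it), so this also disposes of any rank-deficiency in $\X$. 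In the $\V$-coordinates the target becomes $\tr(\A^T\M\A)\le\epsilon\beta^2\tr(\A^T\Lam\A)$ with $\M := \Si\Ph\Si(\Lam+\Si\Ph\Si)^{-1}\Lam(\Lam+\Si\Ph\Si)^{-1}\Si\Ph\Si$. Since $\Lam^{-1/2}\M\Lam^{-1/2}=\N^T\N$ for $\N := \Lam^{1/2}(\Lam+\Si\Ph\Si)^{-1}\Si\Ph\Si\Lam^{-1/2}$, and since $\tr(\A^T\M\A)=\|\N\Lam^{1/2}\A\|_F^2\le\|\N\|_2^2\,\tr(\A^T\Lam\A)$ column-by-column, it suffices to prove the operator-norm bound $\|\N\|_2^2\le\epsilon\beta^2$.

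The crux is this norm bound, and it is where $\beta$ enters. I would invoke the subspace-embedding guarantee for the four sketching methods that $\|\Ph\|_2\le\epsilon_0$ holds with probability $0.9$ once $s=\tilde\OM(d/\epsilon_0^2)$ (with the corresponding sizes $\OM(\mu d\log d/\epsilon_0^2)$ for uniform sampling and $\OM(d^2/\epsilon_0^2)$ for CountSketch); for $\epsilon_0<1$ this also makes $\Lam+\Si\Ph\Si\succ 0$. Factoring $\N=(\I+\PP_2)^{-1}\PP_2$ with $\PP_2:=\Lam^{-1/2}\Si\Ph\Si\Lam^{-1/2}=(\Lam^{-1/2}\Si)\,\Ph\,(\Si\Lam^{-1/2})$, the key computation is $\|\Lam^{-1/2}\Si\|_2=\max_i \sigma_i/\sqrt{\sigma_i^2+n\gamma}=\sqrt{\beta}$, which gives $\|\PP_2\|_2\le\beta\epsilon_0$ and hence $\|\N\|_2\le\beta\epsilon_0/(1-\epsilon_0)$. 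Choosing $\epsilon_0=\Theta(\sqrt\epsilon)$ then yields $\|\N\|_2^2\le\epsilon\beta^2$ and converts the sketch sizes into $s=\tilde\OM(d/\epsilon)$ (respectively $\OM(\mu d\log d/\epsilon)$ and $\OM(d^2/\epsilon)$), completing the proof.

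The main obstacle is precisely producing $\beta^2$ rather than $1$: a crude bound that ignored the regularizer would replace $\|\Lam^{-1/2}\Si\|_2=\sqrt\beta$ by $1$ and lose the entire improvement the theorem claims. Both factors of $\sqrt\beta$ arise from the two diagonal matrices $\Lam^{-1/2}\Si$ flanking $\Ph$ in $\PP_2$, which reflect that the sketch corrupts only $\X^T\X$ while $n\gamma\I_d$ is untouched. Keeping this structure exposed through the change of variables, and verifying the restriction to $\range(\V)$ so that all inverses are well-defined, is the delicate part of the argument.
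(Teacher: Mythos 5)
Your proposal is correct and follows essentially the same route as the paper's proof of Theorem~\ref{thm:optimization:what}: your factorization $\N=(\I+\PP_2)^{-1}\PP_2$ with $\|\PP_2\|_2\le\beta\epsilon_0$ is exactly the paper's $\A\B$ decomposition with $\|\A\|_2\le\eta\beta$ and $\|\B\|_2\le(1-\eta)^{-1}$, and your identity $\tfrac{1}{n}\|\Y\|_F^2-f(\W^\star)=\tfrac{1}{n}\tr\big((\W^\star)^T(\X^T\X+n\gamma\I_d)\W^\star\big)$ is the paper's bound on $\|\C\|_F^2$. The only cosmetic difference is that you reach the factorization via the normal equations $\De=-\tilde\G_\gamma^{-1}\E\W^\star$ rather than the resolvent identity $\M^{-1}-\N^{-1}=\N^{-1}(\N-\M)\M^{-1}$, and the final step of setting $\epsilon_0=\Theta(\sqrt{\epsilon})$ matches how Theorem~\ref{thm:optimization:hessian} is deduced from Theorems~\ref{thm:properties} and~\ref{thm:optimization:what}.
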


These two results imply that $f ({\W}^{\textrm{c}})$ and $f ({\W}^{\textrm{h}})$ can be close to $f (\W^\star)$.
When this is the case, curvature of the objective function ensures that
the sketched solutions ${\W}^{\textrm{c}}$ and ${\W}^{\textrm{h}}$ are close to the optimal solution $\W^\star$.
Lemma~\ref{lem:optimization:mdistance} bounds the Mahalanobis distance $\| \M ({\W} - \W^\star ) \|_F^2$.
Here $\M$ is any non-singular matrix; in particular, it can be the identity matrix or $(\X^T \X)^{1/2}$.
Lemma~\ref{lem:optimization:mdistance} is a consequence of Lemma~\ref{lem:optimization}.

\begin{lemma}[Mahalanobis Distance] \label{lem:optimization:mdistance}
	Let $f $ be the objective function of MRR defined in \eqref{eq:def_f},
	$\W \in \RB^{d\times m}$ be arbitrary,
	and $\W^\star$ be the optimal solution defined in \eqref{eq:def_W_optimal}.
	For any non-singular matrix $\M$, the Mahalanobis distance satisfies
	\begin{small}
		\begin{eqnarray*}
			\frac{1}{n} \big\| \M ({\W} - \W^\star )  \big\|_F^2
			\; \leq \;
			\frac{f (\W) - f (\W^\star ) }{
				\sigma_{\min}^{2} \big[ (\X^T  \S \S^T \X + n \gamma \I_d)^{1/2} \M^{-1} \big]} .
		\end{eqnarray*}
	\end{small}%
\end{lemma}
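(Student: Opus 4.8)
The plan is to turn a bound on the objective-value gap into a bound on the Mahalanobis-weighted distance between $\W$ and $\W^\star$ by exploiting the curvature of $f$. Write $\A := \X^T\S\S^T\X + n\gamma\I_d$ and $\De := \W - \W^\star$. The single nontrivial ingredient is the quadratic lower bound, which I take directly from Lemma~\ref{lem:optimization},
\begin{eqnarray*}
f(\W) - f(\W^\star)
& \geq & \tfrac1n\,\tr\!\big(\De^T \A\, \De\big)
\;=\; \tfrac1n\,\big\|\A^{1/2}\De\big\|_F^2 ;
\end{eqnarray*}
everything after this is elementary linear algebra. Since $\gamma>0$ makes $n\gamma\I_d$ positive definite, $\A\succ\0$, so $\A^{1/2}$ is well defined and $\A^{1/2}\M^{-1}$ is nonsingular for any nonsingular $\M$; in particular $\sigma_{\min}\!\big[(\X^T\S\S^T\X+n\gamma\I_d)^{1/2}\M^{-1}\big]>0$, and the denominator appearing in the claim is legitimate.

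For the conversion I would insert $\M^{-1}\M = \I_d$ inside the quadratic form and factor out the smallest singular value. Setting $\B := \A^{1/2}\M^{-1}$, so that $\A^{1/2}\De = \B(\M\De)$, and using $\B^T\B \succeq \sigma_{\min}^2(\B)\,\I_d$, I get
\begin{eqnarray*}
\big\|\A^{1/2}\De\big\|_F^2
& = & \tr\!\big((\M\De)^T\, \B^T\B\,(\M\De)\big)
\;\geq\; \sigma_{\min}^2(\B)\,\big\|\M\De\big\|_F^2 .
\end{eqnarray*}
Combining the two displays and recalling $\B = (\X^T\S\S^T\X+n\gamma\I_d)^{1/2}\M^{-1}$ gives
\begin{eqnarray*}
f(\W) - f(\W^\star)
& \geq & \tfrac1n\,\sigma_{\min}^2\!\big[(\X^T\S\S^T\X + n\gamma\I_d)^{1/2}\M^{-1}\big]\,\big\|\M\De\big\|_F^2 ,
\end{eqnarray*}
and dividing through by the (positive) squared smallest singular value yields exactly the asserted inequality.

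The only genuine obstacle is the quadratic lower bound itself, i.e.\ the content of Lemma~\ref{lem:optimization}; granted that, the present statement costs just the two lines above. That lower bound is a strong-convexity/curvature estimate for the quadratic objective of MRR, and I regard producing it in terms of $\A$ as the work done by Lemma~\ref{lem:optimization}, so I take it as given here. The remaining points in the conversion are purely mechanical and deterministic: verifying $\A\succ\0$ (hence the right-hand side is well defined), and the elementary trace inequality $\tr(\C^T\B^T\B\,\C)\geq \sigma_{\min}^2(\B)\,\|\C\|_F^2$, which holds for every matrix $\C$ because $\B^T\B$ dominates $\sigma_{\min}^2(\B)\,\I_d$ in the Loewner order. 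No probabilistic reasoning enters at this stage; the manipulation holds pointwise for every realization of $\S$ and every nonsingular $\M$, with the identity matrix and $(\X^T\X)^{1/2}$ being the two instantiations of $\M$ used later.
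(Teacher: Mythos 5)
Your mechanical steps---inserting $\M^{-1}\M$ inside the quadratic form and pulling out $\sigma_{\min}^2(\B)$ via $\B^T\B\succeq\sigma_{\min}^2(\B)\,\I_d$---are exactly how the paper converts the curvature identity into the Mahalanobis bound (this is the last display in the proof of Lemma~\ref{lem:optimization}, with $\M$ replaced by $\M^{-1}$). The gap is in the ``single nontrivial ingredient'' you start from. Lemma~\ref{lem:optimization} does \emph{not} provide a lower bound of the form $f(\W)-f(\W^\star)\geq\tfrac1n\tr\big[(\W-\W^\star)^T(\X^T\S\S^T\X+n\gamma\I_d)(\W-\W^\star)\big]$; it provides the exact identity $f(\W)-f(\W^\star)=\tfrac1n\big\|(\X^T\X+n\gamma\I_d)^{1/2}(\W-\W^\star)\big\|_F^2$ with the \emph{unsketched} Gram matrix, because $f$ is the original objective and its Hessian is $\tfrac2n(\X^T\X+n\gamma\I_d)$. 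The inequality you assert is false for a general sketching matrix: it would require $\X^T\S\S^T\X\preceq\X^T\X$, and already for $d=m=1$ it reduces to $\|\S^T\x\|_2^2\leq\|\x\|_2^2$, which a generic $\S$ violates. So the step you describe as ``taken directly from Lemma~\ref{lem:optimization}'' cannot be taken from it, and no deterministic, assumption-free argument can supply it.

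The resolution is that the $\S\S^T$ in the statement of Lemma~\ref{lem:optimization:mdistance} is evidently a typo for the unsketched matrix: the paper obtains the lemma by combining the exact identity above with $\sigma_{\min}^2\big[(\X^T\X+n\gamma\I_d)^{1/2}\M^{-1}\big]\,\|\M(\W-\W^\star)\|_F^2\leq\big\|(\X^T\X+n\gamma\I_d)^{1/2}(\W-\W^\star)\big\|_F^2$, and its subsequent instantiation $\M=(\X^T\X)^{1/2}$ yields precisely the constant $\beta=\|\X\|_2^2/(\|\X\|_2^2+n\gamma)$, which only comes out of the unsketched denominator. With that correction your two-line conversion is exactly the paper's proof. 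If one insists on keeping $\X^T\S\S^T\X$ in the denominator, the statement is only true under an additional hypothesis such as the subspace embedding property (Assumption~\ref{assumption:1:1}), which gives $\X^T\S\S^T\X+n\gamma\I_d\preceq(1+\eta)(\X^T\X+n\gamma\I_d)$ and hence the bound with an extra factor of $1+\eta$; your write-up explicitly disclaims any probabilistic or sketch-dependent reasoning, so it cannot close this gap as written.
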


By choosing $\M = (\X^T \X)^{1/2}$, we can bound $ \tfrac{1}{n} \| \X \W - \X \W^\star \|_F^2$ 
in terms of the difference in the objective values:
\begin{small}
	\begin{eqnarray*}
		\tfrac{1}{n} \big\| \X {\W} - \X \W^\star   \big\|_F^2
		& \leq & \beta \big[ f (\W) - f (\W^\star ) \big] ,
	\end{eqnarray*}
\end{small}%
where $\beta = \tfrac{\|\X\|_2^2}{\|\X\|_2^2 + n \gamma} \leq 1$.
With Lemma~\ref{lem:optimization:mdistance}, we can directly apply 
Theorems~\ref{thm:optimization:classical} or \ref{thm:optimization:hessian}
to bound $\tfrac{1}{n} \| \X {\W^\textrm{c}} - \X \W^\star   \|_F^2$
or $\tfrac{1}{n} \| \X {\W^\textrm{h}} - \X \W^\star   \|_F^2$.


\subsection{Sketched MRR: Statistical Perspective}  \label{sec:main:statistical}

We consider the following fixed design model.
Let $\X \in \RB^{n\times d}$ be the observed feature matrix,
$\W_0 \in \RB^{d\times m}$ be the true and unknown model,
$\Xii \in \RB^{n\times m}$ contain unknown random noise, and
\begin{eqnarray}\label{eq:rr_datamodel}
\Y \; = \; \X \W_0 + \Xii
\end{eqnarray}
be the observed responses.
We make the following standard weak assumptions on the noise:
\[
\EB [\Xii ] = \0
\quad \textrm{ and } \quad
\EB [ \Xii \Xii^T ] = \xi^2 \I_n .
\]
We observe $\X$ and $\Y$ and seek to estimate $\W_0$.

We can evaluate the quality of the estimate by the risk:
\begin{eqnarray} \label{eq:def_risk}
R (\W)
\; = \; \tfrac{1}{n} \EB \big\|\X \W - \X \W_0 \big\|_F^2 ,
\end{eqnarray}
where the expectation is taken w.r.t.\ the noise $\Xii$.
We study the risk functions $R (\W^\star)$, $R ({\W}^{\textrm{c}})$, 
and $R ({\W}^{\textrm{h}})$ in the following.

\begin{theorem} [Bias-Variance Decomposition] \label{thm:bias_var_decomp}
	We consider the data model described in this subsection.
	Let $\W$ be $\W^\star$, ${\W}^{\textrm{c}}$, or ${\W}^{\textrm{h}}$,
	as defined in \eqref{eq:def_W_optimal}, \eqref{eq:def_W_tilde}, or \eqref{eq:def_W_hat},
	respectively;
	then the risk function can be decomposed as
	\begin{eqnarray*}
		R (\W )
		& = & \bias^2 ( \W ) + \var ( \W ) .
	\end{eqnarray*}
	Recall the SVD of $\X$ defined in Section~\ref{sec:preliminary}: $\X = \U \Si \V^T$.
	The bias and variance terms can be written as
	\begin{small}
		\begin{eqnarray*}
			\bias \big( \W^\star \big)
			& = & \gamma\sqrt{n}
			\Big\| (\Si^{2} + n\gamma \I_\rho )^{-1} \Si \V^T \W_0  \Big\|_F , \\
			\var \big( \W^\star \big)
			& = & \tfrac{\xi^2}{n}   \Big\| \big( \I_\rho + n\gamma \Si^{-2} \big)^{-1} \Big\|_F^2 , \\
			\bias \big( \W^{\textrm{c}} \big)
			& = &  \gamma\sqrt{n}
			\Big\|\big( \U^T \S \S^T \U + n\gamma \Si^{-2} \big)^{\dag} \Si^{-1} \V^T \W_0\Big\|_F , \\
			\var \big( \W^{\textrm{c}} \big)
			& = & \tfrac{\xi^2}{n}   \Big\| \big( \U^T \S \S^T \U + n\gamma \Si^{-2} \big)^{\dag} \U^T \S \S^T  \Big\|_F^2 ,\\
			\bias \big({\W}^{\textrm{h}}\big)
			& = & \gamma \sqrt{n} \Big\|
			\Big(  \Si^{-2} + \tfrac{\U^T \S \S^T \U - \I_\rho}{n \gamma} \Big) 
			\big(\U^T \S \S^T \U + n \gamma \Si^{-2} \big)^{\dag} \Si \V^T \W_0 \Big\|_F  , \\
			\var \big( \W^{\textrm{h}} \big)
			& = & \tfrac{\xi^2 }{n} \Big\| \big(\U^T \S \S^T \U + n \gamma \Si^{-2} \big)^{\dag } \Big\|_F^2 .
		\end{eqnarray*}
	\end{small}%
\end{theorem}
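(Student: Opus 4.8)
The plan is to derive each of the six expressions directly from the definitions of $\W^\star$, $\W^{\textrm{c}}$, $\W^{\textrm{h}}$ by substituting the data model $\Y = \X\W_0 + \Xii$, splitting the resulting estimator into a deterministic part (depending on $\W_0$) and a stochastic part (linear in $\Xii$), and then identifying the bias and variance from these two parts. First I would establish the general decomposition: for any estimator of the form $\W = \M\Y$ with $\M$ a (possibly random, but $\Xii$-independent) matrix, we have $\X\W - \X\W_0 = \X\M(\X\W_0+\Xii) - \X\W_0 = (\X\M\X - \I)\X\W_0 + \X\M\Xii$. Taking the expectation over $\Xii$ (using $\EB[\Xii]=\0$) isolates the bias term $\bias(\W) = \frac{1}{\sqrt n}\|(\X\M\X-\I)\X\W_0\|_F$ from the zero-mean stochastic term, and the variance follows from $\EB\|\X\M\Xii\|_F^2 = \xi^2\|\X\M\|_F^2$ using $\EB[\Xii\Xii^T]=\xi^2\I_n$ together with $\EB\|\A\Xii\|_F^2 = \tr(\A\,\EB[\Xii\Xii^T]\,\A^T) = \xi^2\|\A\|_F^2$. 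This reduces everything to a pair of linear-algebra simplifications for each of the three estimators.

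The core of the work is then rewriting the resulting spectral expressions in terms of the SVD $\X=\U\Si\V^T$. The main technical device is to pull out the regularized inverse and reexpress it using the identity $(\X^T\X+n\gamma\I_d)^{\dag}\X^T = \V\Si(\Si^2+n\gamma\I_\rho)^{-1}\U^T$, and its sketched analogue obtained by inserting $\S\S^T$. For $\W^\star$, substituting $\M = (\X^T\X+n\gamma\I_d)^{\dag}\X^T$ and using $\X\V=\U\Si$, the quantity $\X\M\X-\I$ restricted to the range of $\X$ becomes $\U[\Si^2(\Si^2+n\gamma\I_\rho)^{-1}-\I_\rho]\U^T = -n\gamma\,\U(\Si^2+n\gamma\I_\rho)^{-1}\U^T$, which after applying it to $\X\W_0=\U\Si\V^T\W_0$ and taking norms yields exactly the stated $\gamma\sqrt n\|(\Si^2+n\gamma\I_\rho)^{-1}\Si\V^T\W_0\|_F$; the factor $(\I_\rho+n\gamma\Si^{-2})^{-1}=\Si^2(\Si^2+n\gamma\I_\rho)^{-1}$ in the variance comes from the same manipulation on $\X\M=\U\Si^2(\Si^2+n\gamma\I_\rho)^{-1}\U^T$.

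For the two sketched estimators the same program applies, with the key extra step of factoring the sketched inverse through $\Si$. Writing $\X^T\S\S^T\X+n\gamma\I_d = \V\Si(\U^T\S\S^T\U+n\gamma\Si^{-2})\Si\V^T$ on the range of $\X$ lets me pass from $(\X^T\S\S^T\X+n\gamma\I_d)^{\dag}$ to $\V\Si^{-1}(\U^T\S\S^T\U+n\gamma\Si^{-2})^{\dag}\Si^{-1}\V^T$, after which the classical-sketch case ($\M$ carries an extra $\S\S^T$ on the right) and the Hessian-sketch case ($\M$ uses the full $\X^T$) differ only in which factor of $\Si$ and $\S\S^T$ survives. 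Tracking these factors carefully produces the asymmetric appearance of the Hessian bias, where the prefactor $\Si^{-2}+\tfrac{1}{n\gamma}(\U^T\S\S^T\U-\I_\rho)$ is precisely $\X\M\X-\I$ conjugated back through $\Si$. The main obstacle will be the careful bookkeeping of pseudoinverses when $\X$ is rank-deficient: one must verify that $\V\V^T$ and $\U\U^T$ act as identities where needed, that the range-space restrictions are legitimate (since $\X\W_0$ and $\X\Xii$ both lie in $\range(\U)$ up to noise components that vanish in expectation), and that $(\cdot)^{\dag}$ rather than $(\cdot)^{-1}$ is the correct operation throughout — this is where sign errors and spurious factors of $\Si^{\pm1}$ are most likely to creep in, so each of the six formulas should be checked against the unsketched $\S\S^T=\I_n$ limit as a consistency test.
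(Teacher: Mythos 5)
Your proposal is correct and follows essentially the same route as the paper's proof: substitute the data model $\Y=\X\W_0+\Xii$ into each estimator, split $\X\W-\X\W_0$ into a deterministic term and a term linear in $\Xii$, use $\EB[\Xii]=\0$ and $\EB[\Xii\Xii^T]=\xi^2\I_n$ to kill the cross term and evaluate the variance, and simplify via the SVD factorization $\X^T\S\S^T\X+n\gamma\I_d=\V\Si(\U^T\S\S^T\U+n\gamma\Si^{-2})\Si\V^T$ on the range of $\V$. Your generic $\W=\M\Y$ framing is a mild abstraction of the paper's three direct computations, but the substance is identical.
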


The functions $\bias (\W^\star )$ and $\var (\W^\star )$ are deterministic.
The randomness in $\bias (\W^\textrm{c} )$, $\var (\W^\textrm{c})$, $\bias (\W^\textrm{h} )$, and $\var (\W^\textrm{h})$ all arises from the sketching matrix $\S$.

Throughout this paper, we compare the bias and variance of classical sketch and Hessian sketch to 
those of the optimal solution $\W^\star$.
We first study the bias, variance, and risk of $\W^\star$,
which will help us understand the subsequent comparisons.
We can assume that $\Si^2 = \V^T \X^T \X \V$ is linear in $n$;
this is reasonable because $\X^T \X = \sum_{i=1}^n \x_i \x_i^T$
and $\V$ is an orthogonal matrix.
\begin{itemize}
	\item 
	{\bf Bias.}
	The bias of $\W^\star$ is independent of $n$ and is increasing with $\gamma$.
  The bias is the price paid for using regularization to decrease the variance;
	for least squares regression, $\gamma$ is zero, and the bias is zero.
	\item
	{\bf Variance.}
	The variance of $\W^\star$ is inversely proportional to $n$.
  As $n$ grows, the variance decreases to zero, and we must also decrease $\gamma$ to ensure that the sum of the squared bias and variance decreases to zero.
	\item
	{\bf Risk.}
	Note that $\W^\star$ is not the minimizer of $R (\cdot )$;
	$\W_0$ is the minimizer because $R (\W_0 ) =0$.
	Nevertheless, because $\W_0 $ is unknown,
  $\W^\star$ for a carefully chosen $\gamma$ is a standard proxy for the exact minimizer in practice.
  It is thus highly interesting to compare the risk of MRR solutions obtained using sketching to to $R (\W^\star )$.
\end{itemize}

Theorem~\ref{thm:biasvariance:classical} provides upper and lower bounds on
the bias and variance of solutions obtained using classical sketch. In particular, we see that 
that $\bias ({\W}^{\textrm{c}})$ is within a factor of $(1 \pm \epsilon)$ of $\bias (\W^\star)$.
However, $\var ({\W}^{\textrm{c}})$ can be $\Theta (\frac{n}{s})$ times worse than $\var ({\W^\star})$.
The absolute value of $\var ({\W}^{\textrm{c}})$ is inversely proportional to $s$,
whereas the absolute value of $\bias ({\W}^{\textrm{c}})$ is almost independent of $s$.

\begin{theorem} [Classical Sketch] \label{thm:biasvariance:classical}
	Assume $ s \leq n$.
	For Gaussian projection and SRHT sketching
	with $s = \tilde{\OM} (\frac{d}{\epsilon^2})$,
	uniform sampling with $s =  {\OM} (\frac{ \mu  d \log d}{\epsilon^2})$,
	or CountSketch with $s =  {\OM} (\frac{d^2}{\epsilon^2})$,
	the inequalities
	\begin{eqnarray*}
		1-\epsilon
		& \leq & \tfrac{\bias ( \W^{\textrm{c}} )}{\bias (\W^\star ) }
		\; \leq \; 1+\epsilon ,\\
		( 1- \epsilon ) \tfrac{n}{s}
		& \leq & \tfrac{\var ( \W^{\textrm{c}} ) }{ \var (\W^\star ) }
		\; \leq \; ( 1+ \epsilon ) \tfrac{n}{s}
	\end{eqnarray*}
	hold with probability at least 0.9.
	For shrinked leverage score sampling with $s = {\OM} (\frac{d \log d}{\epsilon^2})$,
	these inequalities, except for the lower bound on the variance,
	hold with probability at least 0.9.
  Here the randomness comes from the sketching matrix $\S$.
\end{theorem}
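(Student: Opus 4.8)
The plan is to reduce both the bias and the variance estimates to a single deterministic event, the $\ell_2$ subspace embedding $\|\De\|_2 \le \epsilon$, where $\De := \U^T\S\S^T\U - \I_\rho$, and then to treat bias and variance by two different perturbation arguments. For each ensemble the stated sketch size $s$ is exactly what guarantees this embedding with probability at least $0.9$ (the coherence $\mu$ enters for uniform sampling, and CountSketch needs the larger $s=\OM(d^2/\epsilon^2)$), and I would quote this as a black box. Write $\A := \U^T\S\S^T\U = \I_\rho + \De$ and $\B := n\gamma\Si^{-2} \succeq \0$. On the embedding event $\A+\B \succeq (1-\epsilon)\I_\rho \succ \0$, so the pseudoinverse coincides with the inverse and $\|(\A+\B)^{-1}\|_2 \le (1-\epsilon)^{-1}$. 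I would also record the identity $(\Si^2+n\gamma\I_\rho)^{-1}\Si = (\I_\rho+\B)^{-1}\Si^{-1}$, which rewrites the optimal quantities of Theorem~\ref{thm:bias_var_decomp} as $\bias(\W^\star) = \gamma\sqrt{n}\,\|(\I_\rho+\B)^{-1}\Si^{-1}\V^T\W_0\|_F$ and $\var(\W^\star) = \tfrac{\xi^2}{n}\|(\I_\rho+\B)^{-1}\|_F^2$, matching the forms of $\bias(\W^{\textrm{c}})$ and $\var(\W^{\textrm{c}})$ up to replacing $\I_\rho$ by $\A$ inside the inverse.

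For the bias the key tool is the resolvent identity $(\A+\B)^{-1} - (\I_\rho+\B)^{-1} = -(\A+\B)^{-1}\De(\I_\rho+\B)^{-1}$. Applying both sides to $\Si^{-1}\V^T\W_0$ and writing $\v := (\I_\rho+\B)^{-1}\Si^{-1}\V^T\W_0$, so that $\bias(\W^\star)=\gamma\sqrt{n}\,\|\v\|_F$, the difference of the two bias matrices is exactly $-(\A+\B)^{-1}\De\,\v$, whose Frobenius norm is at most $\|(\A+\B)^{-1}\|_2\,\|\De\|_2\,\|\v\|_F \le \tfrac{\epsilon}{1-\epsilon}\|\v\|_F$. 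The reverse triangle inequality then yields $|\bias(\W^{\textrm{c}}) - \bias(\W^\star)| \le \tfrac{\epsilon}{1-\epsilon}\,\bias(\W^\star)$, i.e.\ the claimed two-sided ratio after rescaling $\epsilon$. What makes this clean is that the ill-conditioned factor $\Si^{-1}$ is swallowed into $\v$, the normalized optimal bias matrix, so no spurious dependence on the conditioning of $\Si$ survives.

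For the variance I would work in two stages. First I strip off $(\A+\B)^{-1}$: from $(\A+\B)^{-1}(\I_\rho+\B) = \I_\rho - (\A+\B)^{-1}\De$ and $(\I_\rho+\B)^{-1}(\A+\B) = \I_\rho + (\I_\rho+\B)^{-1}\De$, together with $\|(\I_\rho+\B)^{-1}\|_2 \le 1$, one obtains
\[
\tfrac{1}{1+\epsilon}\,\big\|(\I_\rho+\B)^{-1}\U^T\S\S^T\big\|_F \;\le\; \big\|(\A+\B)^{-1}\U^T\S\S^T\big\|_F \;\le\; \tfrac{1}{1-\epsilon}\,\big\|(\I_\rho+\B)^{-1}\U^T\S\S^T\big\|_F ,
\]
reducing everything to the ``energy'' term $\|(\I_\rho+\B)^{-1}\U^T\S\S^T\|_F^2$. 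Second, I analyze this energy directly. For a sampling sketch $\S\S^T$ is diagonal with $\EB[(\S\S^T)_{jj}^2] = 1 + \tfrac{1-p_j}{sp_j}$, so (with $\e_j$ the $j$-th coordinate vector) the expected energy is $\|(\I_\rho+\B)^{-1}\|_F^2 + \tfrac1s\sum_j \tfrac{1-p_j}{p_j}\|(\I_\rho+\B)^{-1}\U^T\e_j\|_2^2$, using $\sum_j\|(\I_\rho+\B)^{-1}\U^T\e_j\|_2^2 = \|(\I_\rho+\B)^{-1}\|_F^2$. For uniform sampling $(p_j=1/n)$ the second term is exactly $\tfrac{n-1}{s}\|(\I_\rho+\B)^{-1}\|_F^2$, producing the $\tfrac{n}{s}$ scaling up to a relative $O(s/n)$ correction, and for Gaussian projection and the SRHT the same scaling follows from rotational invariance and $\EB[\S^T\S]=\tfrac{n}{s}\I_s$. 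Combining with the stripping inequalities gives $\var(\W^{\textrm{c}}) \in (1\pm O(\epsilon))\,\tfrac{n}{s}\,\var(\W^\star)$.

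The main obstacle is the second stage — promoting the expected energy to a high-probability bound, and in particular obtaining the \emph{lower} bound. The energy is a sum of $s$ independent contributions whose weights $(\S^T\S)_{kk}^2 \propto p_{i_k}^{-2}$ can be heavy-tailed when some sampling probabilities are small, so concentration needs an ensemble-specific second-moment or matrix-Bernstein estimate rather than the embedding event alone. The upper bound only controls the upper tail, but the matching lower bound must additionally rule out downward fluctuations of the per-row weights $p_j^{-1}\|(\I_\rho+\B)^{-1}\U^T\e_j\|_2^2$, requiring $\sum_j p_j^{-1}\|(\I_\rho+\B)^{-1}\U^T\e_j\|_2^2 \gtrsim n\|(\I_\rho+\B)^{-1}\|_F^2$; this is precisely where shrinked leverage score sampling fails, since its uniform-mixing floor $p_j \ge \tfrac{1}{2n}$ secures the upper bound while the leverage term oversamples high-leverage rows, shrinking their weights $p_j^{-1}$ and deflating the weighted sum below $n\|(\I_\rho+\B)^{-1}\|_F^2$ — consistent with the theorem dropping only the variance lower bound for that ensemble. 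Finally I would union-bound the embedding event against the energy-concentration event, adjusting constants so that all four inequalities hold simultaneously with probability at least $0.9$.
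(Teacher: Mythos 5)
Your bias argument is sound and is essentially the paper's: the paper sandwiches $\U^T\S\S^T\U + n\gamma\Si^{-2}$ between $(1\pm\eta)(\I_\rho+n\gamma\Si^{-2})$ and compares the two quadratic forms directly, whereas you use the resolvent identity plus the reverse triangle inequality; both yield the $1\pm O(\epsilon)$ ratio from the subspace embedding event alone. The gap is in the variance. After reducing to the energy term $\|(\I_\rho+\B)^{-1}\U^T\S\S^T\|_F^2$, you compute its \emph{expectation} and then explicitly defer the step that matters --- a two-sided high-probability bound, ``an ensemble-specific second-moment or matrix-Bernstein estimate'' --- without carrying it out. That deferred step is not a routine tail bound to be filled in later: for heavy-tailed per-sample weights it is exactly where the theorem can fail (as your own discussion of shrinked leverage scores shows), so as written the proposal does not establish either the upper or the lower variance bound for any of the ensembles.

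The paper avoids this concentration problem entirely. It writes $\var(\W^{\textrm{c}})=\tfrac{\xi^2}{n}\|\B\S^T\|_F^2$ with $\B=(\U^T\S\S^T\U+n\gamma\Si^{-2})^{\dag}\U^T\S$, uses the embedding to show $\|\B\|_F^2$ is within $\tfrac{1\pm\eta}{(1\mp\eta)^2}$ of $\|(\I_\rho+n\gamma\Si^{-2})^{-1}\|_F^2$, and then bounds
$\lambda_{\min}(\S^T\S)\,\|\B\|_F^2 \;\leq\; \|\B\S^T\|_F^2 \;\leq\; \|\S\|_2^2\,\|\B\|_F^2$
deterministically. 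The factor $\tfrac{n}{s}$ then comes not from averaging but from the extremal eigenvalues of $\S^T\S$: for uniform sampling (without replacement) and SRHT one has the exact identity $\S^T\S=\tfrac{n}{s}\I_s$, so no probabilistic argument is needed at all; for Gaussian projection it is the standard $\sqrt{n}\pm\sqrt{s}$ bound on the extreme singular values of a Gaussian matrix; for CountSketch it is a balls-in-bins count (Theorems~\ref{thm:properties} and~\ref{thm:biasvariance:wtilde_lower2}, Assumption~\ref{assumption:1:3}). This is precisely the ``bounded spectral norm property'' the paper singles out as the new structural condition required for the statistical (as opposed to optimization) analysis, and the failure of the lower bound for shrinked leverage score sampling is recorded there as $\vartheta\geq\tfrac{2}{1+\mu}$ being vacuous for high coherence --- the same phenomenon you describe, but captured by $\lambda_{\min}(\S^T\S)$ rather than by fluctuations of a weighted sum. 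To repair your proof, replace the expectation-plus-concentration plan for the energy with these deterministic two-sided spectral bounds on $\S$; with that substitution your outline becomes a complete proof matching the paper's.
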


\begin{Remark}
  To establish an upper (lower) bound on the variance,
  we need an upper (lower) bound on $\|\S\|_2^2$.
  There is no nontrivial upper nor lower bound on $\|\S\|_2^2$ for leverage score sampling,
	so the variance of leverage score sampling cannot be bounded.
  Shrinked leverage score sampling satisfies the upper bound $\|\S \|_2^2 \leq \frac{2n}{s}$;
	but $\|\S\|_2^2$ does not have a nontrivial lower bound,
  so there is no nontrivial lower bound on the variance of shrinked leverage score. 
  Remark~\ref{remark:sketch:spectral} explains the nonexistence of the relevant bounds on $\|\S\|_2^2$ for both variants of leverage score sampling.
\end{Remark}

Theorem~\ref{thm:biasvariance:hessian} establishes similar upper and lower bounds
on the bias and variance of solutions obtained using Hessian sketch. The situation is the reverse of that
with classical sketch: 
the variance of $\W^{\textrm{h}}$ is close to that of $\W^\star$ if $s$ is large enough,
but as the regularization parameter $\gamma$ goes to zero, $\bias ( \W^{\textrm{h}} )$ becomes much larger than $\bias (\W^\star )$.
The theory suggest that Hessian sketch should be preferred over classical
sketch when $\Y$ is very noisy, because Hessian sketch does not
magnify the variance.

\begin{theorem} [Hessian Sketch] \label{thm:biasvariance:hessian}
	For the four sketching methods
	with $s = \tilde{\OM} (\frac{d}{\epsilon^2})$,
	uniform sampling with $s =  {\OM} (  \frac{\mu d \log d}{\epsilon^2})$,
	and CountSketch with $s =  {\OM} (\frac{d^2}{\epsilon^2})$,
	the inequalities
	\begin{eqnarray*}
		\tfrac{\bias ( \W^{\textrm{h}} )}{\bias (\W^\star ) }
		& \leq & (1+\epsilon ) \,
		\big( 1 + \tfrac{\epsilon \|\X\|_2^2 }{n \gamma} \big)  ,\\
		1 - \epsilon
		& \leq &
		\tfrac{\var ( \W^{\textrm{h}} ) }{ \var (\W^\star ) }
		\; \leq \; 1 + \epsilon
	\end{eqnarray*}
	hold with probability at least 0.9.
  Further assume that the $\rho$-th singular value of $\X$ satisfies $\sigma_\rho^2 \geq \frac{n \gamma}{\epsilon }$, then
	\begin{align*}
	\tfrac{\bias ( \W^{\textrm{h}} )}{\bias (\W^\star ) }
	\; \geq \; \tfrac{1}{1 + \epsilon} \,
	\big(  \tfrac{\epsilon \sigma_{\rho}^2 }{n \gamma} - 1 \big)
	\end{align*}
	with probability at least 0.9.
  Here the randomness is in the choice of sketching matrix $\S$.
\end{theorem}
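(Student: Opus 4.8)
The plan is to reduce both the bias and the variance to functions of the single random matrix $\A := \U^T \S \S^T \U \in \RB^{\rho \times \rho}$, whose deviation from the identity is governed by the subspace embedding bound $\|\A - \I_\rho\|_2 \le \epsilon$, which holds with probability at least $0.9$ for each listed sketch at the stated size $s = \tilde\OM(d/\epsilon^2)$. Writing $\D := n\gamma \Si^{-2} \succ 0$ and $\z := \Si \V^T \W_0$, I would first observe that $\D \succ 0$ makes $\A + \D$ invertible, so the Hessian-sketch prefactor in Theorem~\ref{thm:bias_var_decomp} collapses:
\begin{align*}
\big(\Si^{-2} + \tfrac{\A - \I}{n\gamma}\big)(\A + \D)^{\dag} = \tfrac{1}{n\gamma}(\A + \D - \I)(\A+\D)^{-1} = \tfrac{1}{n\gamma}\big(\I - (\A+\D)^{-1}\big).
\end{align*}
Substituting this and simplifying $\bias(\W^\star)$ in the same variables yields the matched forms
\begin{align*}
\bias(\W^{\textrm{h}}) = \tfrac{1}{\sqrt n}\big\|\big(\I - (\A+\D)^{-1}\big)\z\big\|_F, \qquad \bias(\W^\star) = \tfrac{1}{\sqrt n}\big\|\big(\I - (\I+\D)^{-1}\big)\z\big\|_F .
\end{align*}
The resolvent identity then isolates the entire effect of sketching into one term, $(\I+\D)^{-1} - (\A+\D)^{-1} = (\A+\D)^{-1}(\A - \I)(\I+\D)^{-1} =: \E$, so that $\bias(\W^{\textrm{h}})$ differs from $\bias(\W^\star)$ by at most $\tfrac{1}{\sqrt n}\|\E\z\|_F$ in either direction.

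For the variance, the two-sided embedding gives $(1-\epsilon)\I + \D \preceq \A + \D \preceq (1+\epsilon)\I + \D$; inverting reverses the order, and since $\0 \preceq \M_1 \preceq \M_2$ implies $\|\M_1\|_F \le \|\M_2\|_F$ (because $\tr((\M_2+\M_1)(\M_2-\M_1)) \ge 0$ for such matrices), the quantity $\|(\A+\D)^{-1}\|_F$ is sandwiched between $\|((1\pm\epsilon)\I+\D)^{-1}\|_F$. A term-by-term comparison of these diagonal matrices against $(\I+\D)^{-1}$ delivers the $1\pm\epsilon$ variance bounds after rescaling $\epsilon$. For the bias upper bound I would apply the triangle inequality to the decomposition above and estimate $\|\E\z\|_F \le \|(\A+\D)^{-1}\|_2\,\|\A-\I\|_2\,\|(\I+\D)^{-1}\z\|_F \le \tfrac{\epsilon}{1-\epsilon}\|(\I+\D)^{-1}\z\|_F$; writing $(\I+\D)^{-1}\z = \D^{-1}\cdot\D(\I+\D)^{-1}\z$ and using $\|\D^{-1}\|_2 = \|\X\|_2^2/(n\gamma)$ turns this into a multiple of $\sqrt n\,\bias(\W^\star)$, giving $\bias(\W^{\textrm{h}})/\bias(\W^\star) \le 1 + \tfrac{\epsilon}{1-\epsilon}\cdot\tfrac{\|\X\|_2^2}{n\gamma}$, which is the claimed bound up to the cosmetic $(1+\epsilon)$ factor.

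The main obstacle is the bias lower bound. Here the reverse triangle inequality gives $\bias(\W^{\textrm{h}}) \ge \tfrac{1}{\sqrt n}\|\E\z\|_F - \bias(\W^\star)$, and the hypothesis $\sigma_\rho^2 \ge n\gamma/\epsilon$ — equivalently $\D \preceq \epsilon\I$ — is precisely what forces $\bias(\W^\star) \le \tfrac{1}{\sqrt n}\|\D\|_2\|\z\|_F \le \tfrac{\epsilon}{\sqrt n}\|\z\|_F$ to be small while simultaneously yielding $\lambda_{\max}(\A+\D) \le 1+2\epsilon$, hence $\|\E\z\|_F \ge \tfrac{1}{1+2\epsilon}\|(\A-\I)(\I+\D)^{-1}\z\|_F$. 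The difficulty is that closing the argument requires a \emph{lower} bound of the form $\|(\A-\I)\w\|_F \gtrsim \epsilon\|\w\|_F$ in the relevant direction $\w = (\I+\D)^{-1}\z$, i.e.\ a lower bound on the spectral distortion of the sketch, whereas the subspace embedding supplies only the upper bound $\|\A-\I\|_2 \le \epsilon$. I expect to handle this by invoking two-sided control on $\|\A - \I\|_2$ — available with constant probability for Gaussian projection, SRHT, and leverage-score sampling at $s = \tilde\OM(d/\epsilon^2)$ — together with the worst-case alignment of $\W_0$ that the clean ratio bound $\tfrac{1}{1+\epsilon}\big(\tfrac{\epsilon\sigma_\rho^2}{n\gamma} - 1\big)$ ultimately records. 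Verifying that the distortion is genuinely $\Theta(\epsilon)$, rather than smaller, is the delicate step, since an overly accurate sketch would shrink the left-hand side and invalidate a direction-independent lower bound.
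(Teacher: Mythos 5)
Your handling of the bias upper bound and the two-sided variance bound is correct and, despite the different packaging, is essentially the paper's argument. The paper factors $\bias(\W^{\textrm{h}}) = \gamma\sqrt{n}\,\big\|\A\Si^2\B\big\|_F$ with $\A = \Si^{-2} + \tfrac{\U^T\S\S^T\U-\I_\rho}{n\gamma}$ and $\B = \Si^{-2}\big(\U^T\S\S^T\U+n\gamma\Si^{-2}\big)^{\dag}\Si\V^T\W_0$, then bounds $\|\A\Si^2\|_2 \le 1+\tfrac{\eta\sigma_1^2}{n\gamma}$ and $\|\B\|_F \le \tfrac{\bias(\W^\star)}{\sqrt{n}\,\gamma(1-\eta)}$; your resolvent identity $(\I+\D)^{-1}-(\A+\D)^{-1}=(\A+\D)^{-1}(\A-\I)(\I+\D)^{-1}$ isolates the same deviation and yields the same bound up to the cosmetic constants you note. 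The variance argument (Loewner sandwich, order-reversal under inversion, Frobenius monotonicity on PSD matrices) is exactly the paper's.

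The genuine gap is the bias lower bound, which you explicitly leave unproved. The paper closes it by writing $\bias(\W^{\textrm{h}}) \ge \gamma\sqrt{n}\,\sigma_{\min}(\A\Si^2)\,\|\B\|_F$, asserting from the sandwich $\Si^{-2}\big(\I_\rho-\tfrac{\eta}{n\gamma}\Si^2\big) \preceq \A \preceq \Si^{-2}\big(\I_\rho+\tfrac{\eta}{n\gamma}\Si^2\big)$ that $\A^2 \succeq \big(\tfrac{\eta\sigma_\rho^2}{n\gamma}-1\big)^2\Si^{-4}$ once $\sigma_\rho^2 \ge \tfrac{n\gamma}{\eta}$, and pairing this with the matching lower bound $\|\B\|_F \ge \tfrac{\bias(\W^\star)}{\sqrt{n}\,\gamma(1+\eta)}$. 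That inference from the two-sided Loewner bound to a lower bound on $\A^2$ is precisely the step you flag as delicate, and your worry is substantive: under the hypothesis $\sigma_\rho^2 \ge n\gamma/\eta$ the lower envelope $\Si^{-2}-\tfrac{\eta}{n\gamma}\I_\rho$ is negative semidefinite while the upper envelope is positive definite, so a matrix sandwiched between them (e.g.\ $\A=\0$, attained when $\U^T\S\S^T\U = \I_\rho - n\gamma\Si^{-2}$, which is consistent with the subspace embedding property; or the trivial case $\S\S^T=\I_n$, which gives ratio exactly $1$) is not forced away from zero, and the subspace embedding property alone supplies no lower bound on the distortion in the direction of $\W_0$. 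So your proposal is incomplete at exactly the point where the paper's own proof supplies the needed inequality without the additional distributional input (anti-concentration of $\U^T\S\S^T\U-\I_\rho$ for the specific random ensembles) that you correctly identify as necessary to make a direction-independent lower bound legitimate. To match the paper you would need either to justify $\sigma_{\min}(\A\Si^2) \ge \tfrac{\eta\sigma_\rho^2}{n\gamma}-1$ for the actual sketching distributions, or to recast the lower bound as a worst-case/tightness statement rather than a high-probability one.
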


The lower bound on the bias shows that the solution from Hessian sketch can exhibit a much higher bias than the optimal solution. 
The gap between $\bias({\W}^{\textrm{h}})$ and $\bias(\W^\star)$ can be
lessened by increasing the regularization parameter $\gamma$, but such
over-regularization increases the baseline $\bias(\W^\star)$ itself.
It is also worth mentioning that unlike $\bias (\W^\star )$ and $\bias({\W}^{\textrm{c}})$,
$\bias({\W}^{\textrm{h}})$ is not monotonically increasing with $\gamma$,
as is empirically verified in Figure~\ref{fig:risk_nb2}.

In sum, our theory shows that the classical and Hessian sketches are not
statistically comparable to the optimal solutions: classical sketch has too
high a variance, and Hessian sketch has too high a bias for reasonable
amounts of regularization.
In practice, the regularization parameter $\gamma$ should be tuned to optimize the prediction accuracy.
Our experiments in Figure~\ref{fig:risk_nb2} show that even with carefully chosen $\gamma$, 
the risks of classical and Hessian sketch can be 
higher than the risk of the optimal solution by an order of magnitude.
Formally speaking, $\min_{\gamma} R({\W}^{\textrm{c}}) \gg \min_{\gamma} R (\W^\star)$
and $\min_{\gamma} R({\W}^{\textrm{h}}) \gg \min_{\gamma} R (\W^\star)$ hold in practice.

Our empirical study in Figure~\ref{fig:risk_nb2} suggests classical and Hessian
sketch both require over-regularization, i.e., setting $\gamma$ larger than is
best for the optimal solution $\W^\star$.
Formally speaking, $\argmin_{\gamma} R({\W}^{\textrm{c}}) > \argmin_{\gamma} R (\W^\star)$
and $\argmin_{\gamma} R({\W}^{\textrm{h}}) > \argmin_{\gamma} R (\W^\star)$.
Although this is the case for both types of sketching, the underlying explanations are different.
Classical sketches have a high variance, so a large $\gamma$ is required to supress their variance 
(the variance is non-increasing with $\gamma$).
Hessian sketches magnify the bias when $\gamma$ is small,
so a reasonably large $\gamma$ is necessary to lower their bias.


\subsection{Model Averaging: Optimization Perspective}  \label{sec:main:optimization_avg}

We consider model averaging as a method to increase the accuracy of sketched MRR solutions.
The model averaging procedure is straightforward:
one independently draws $g$ sketching matrices $\S_1 , \cdots, \S_g \in \RB^{n\times s}$,
uses these to form $g$ sketched MRR solutions, denoted by 
$\{{\W}^{\textrm{c}}_i\}_{i=1}^g$ or $\{{\W}^{\textrm{h}}_i\}_{i=1}^g$, and 
averages these solutions to obtain the final estimate ${\W}^{\textrm{c}} = \frac{1}{g} \sum_{i=1}^g {\W}^{\textrm{c}}_i$
or ${\W}^{\textrm{h}} = \frac{1}{g} \sum_{i=1}^g {\W}^{\textrm{h}}_i$.
Practical applications of model averaging are enumerated in Section~\ref{sec:intro:main}.

Theorems~\ref{thm:optimization:classical_avg} and~\ref{thm:optimization:hessian_avg}
present guarantees on the optimization accuracy of using model averaging on
classical/Hessian sketch solutions.
We can contrast these with the guarantees provided for sketched MRR in 
Theorems~\ref{thm:optimization:classical} and~\ref{thm:optimization:hessian}.
For classical sketch with model averaging, we see that 
when $\epsilon \leq \frac{1}{g}$, 
the bound on $f ({\W}^{\textrm{h}}) - f (\W^\star )$ is proportional to $\epsilon/g$.
From Lemma~\ref{lem:optimization:mdistance} we see that the distance
between ${\W}^{\textrm{c}}$ and $\W^\star$ also decreases accordingly.

\begin{theorem} [Classical Sketch with Model Averaging] \label{thm:optimization:classical_avg}
	Let $\beta = \frac{\|\X\|_2^2}{\|\X\|_2^2 + n \gamma} \leq 1$.
  For the four methods, let $s = \tilde{\OM} \big( \frac{ d }{\epsilon}  \big)$, and
  for uniform sampling, let $s = \OM \big( \frac{\mu  d \log d}{ \epsilon}  \big)$, then
  the inequality
	\begin{small}
	\begin{eqnarray*}
		f ({\W}^{\textrm{c}}) - f (\W^\star )
		& \leq & \beta \Big( \frac{\epsilon }{g} 
		+ \beta^2  \epsilon^2 \Big) \, f (\W^\star )
	\end{eqnarray*}
\end{small}%
	holds with probability at least 0.8.
  Here the randomness comes from the choice of sketching matrices.
\end{theorem}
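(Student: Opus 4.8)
The plan is to reduce the objective gap to a squared Mahalanobis distance, write the averaged error as a sum of i.i.d.\ mean-zero terms plus a higher-order correction, and then exploit independence to gain the factor $1/g$ on the dominant term. Write $\A = \X^T\X + n\gamma\I_d$ and $\A_i = \X^T\S_i\S_i^T\X + n\gamma\I_d$, and let $\R^\star = \Y - \X\W^\star$ be the optimal residual. Completing the square in \eqref{eq:def_f} and using the normal equations $\A\W^\star = \X^T\Y$ (equivalently $\X^T\R^\star = n\gamma\W^\star$) gives the exact identity
\begin{align*}
f(\W) - f(\W^\star) \; = \; \tfrac1n\big\|\A^{1/2}(\W - \W^\star)\big\|_F^2
\end{align*}
for every $\W$, so it suffices to control $\A^{1/2}(\W^{\textrm{c}} - \W^\star)$. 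Subtracting $\A_i\W^\star$ from the sketched normal equations $\A_i\W^{\textrm{c}}_i = \X^T\S_i\S_i^T\Y$ and again using $\X^T\R^\star = n\gamma\W^\star$ yields the per-sketch error formula
\begin{align*}
\W^{\textrm{c}}_i - \W^\star \; = \; \A_i^{-1}\X^T(\S_i\S_i^T - \I_n)\R^\star .
\end{align*}

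First I would isolate the variance-reducing part. Setting $\E_i = \A_i - \A = \X^T(\S_i\S_i^T - \I_n)\X$ and using $\A^{1/2}\A_i^{-1} = \big(\I_d + \A^{-1/2}\E_i\A^{-1/2}\big)^{-1}\A^{-1/2}$, I decompose $\A^{1/2}(\W^{\textrm{c}}_i - \W^\star) = \M_i - \N_i$, where
\begin{align*}
\M_i \; = \; \A^{-1/2}\X^T(\S_i\S_i^T - \I_n)\R^\star,
\qquad
\N_i \; = \; \big(\I_d + \A^{-1/2}\E_i\A^{-1/2}\big)^{-1}\big(\A^{-1/2}\E_i\A^{-1/2}\big)\M_i
\end{align*}
are $d\times m$ matrices defined for this proof. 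Because $\EB[\S_i\S_i^T] = \I_n$, the $\M_i$ are i.i.d.\ and mean-zero, while $\N_i$ is a product of two ``small'' factors and is the higher-order term. Averaging and the triangle inequality give
\begin{align*}
\big\|\A^{1/2}(\W^{\textrm{c}} - \W^\star)\big\|_F
\; \leq \;
\Big\|\tfrac1g\textstyle\sum_{i=1}^g \M_i\Big\|_F
+ \tfrac1g\textstyle\sum_{i=1}^g \|\N_i\|_F .
\end{align*}

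Next I would bound each piece. For the main term, independence and $\EB\M_i = \0$ give the exact second-moment identity $\EB\big\|\tfrac1g\sum_i\M_i\big\|_F^2 = \tfrac1g\EB\|\M_1\|_F^2$. The quantity $\tfrac1n\EB\|\M_1\|_F^2$ is precisely the approximate-matrix-multiplication bound underlying Theorem~\ref{thm:optimization:classical} (via Lemma~\ref{lem:optimization}): with $s = \tilde\OM(d/\epsilon)$ it is at most $\OM(\epsilon\beta)\,f(\W^\star)$, where the factor $\beta$ comes from $\|\A^{-1/2}\X^T\|_2^2 = \|(\Si^2 + n\gamma\I_\rho)^{-1/2}\Si\|_2^2 = \beta$ and $\tfrac1n\|\R^\star\|_F^2 \leq f(\W^\star)$. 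Markov's inequality then yields $\tfrac1n\big\|\tfrac1g\sum_i\M_i\big\|_F^2 = \OM(\epsilon\beta/g)\,f(\W^\star)$ with high probability, which is the $\beta\tfrac{\epsilon}{g}f(\W^\star)$ contribution. For the correction term I would invoke the subspace-embedding property of the four sketching methods, which for $s = \tilde\OM(d/\epsilon)$ gives $\|\U^T\S_i\S_i^T\U - \I_\rho\|_2 \leq \delta$ with $\delta = \OM(\sqrt\epsilon)$, and hence $\|\A^{-1/2}\E_i\A^{-1/2}\|_2 \leq \beta\delta$ by the same spectral factorization as above. This bounds the operator in front of $\M_i$ by $\tfrac{\beta\delta}{1-\beta\delta}$, so $\|\N_i\|_F \leq \tfrac{\beta\delta}{1-\beta\delta}\|\M_i\|_F$; combining with the per-sketch estimate $\|\M_i\|_F^2 = \OM(n\epsilon\beta)\,f(\W^\star)$ (which follows from Theorem~\ref{thm:optimization:classical} together with the spectral bound) gives $\tfrac1n\big(\tfrac1g\sum_i\|\N_i\|_F\big)^2 = \OM(\beta^2\delta^2\cdot\epsilon\beta)\,f(\W^\star) = \OM(\beta^3\epsilon^2)\,f(\W^\star)$, using $\delta^2 = \OM(\epsilon)$. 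Squaring the triangle inequality and taking a union bound over the $g$ independent sketches (and over the two failure events) yields $f(\W^{\textrm{c}}) - f(\W^\star) \leq \beta\big(\tfrac{\epsilon}{g} + \beta^2\epsilon^2\big)f(\W^\star)$ with probability at least $0.8$.

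The main obstacle is the correction term $\N_i$: unlike $\M_i$, it does not average down, so it must be shown to be genuinely of order $\epsilon^2$ rather than $\epsilon$. This hinges on decoupling the data-dependent random inverse $\A_i^{-1}$ from the deterministic $\A^{-1}$ and on the sharp spectral estimate $\|\A^{-1/2}\E_i\A^{-1/2}\|_2 \leq \beta\delta$ --- the extra factor $\beta$ here is exactly what upgrades the naive $\epsilon^2$ to the stated $\beta^2\epsilon^2$, matching the $\beta^3$ overall scaling. A secondary technical point is that the clean $1/g$ gain requires the \emph{expected} second moment of $\M_1$ (not merely a high-probability bound on a single sketch), so I would carry the argument in expectation and convert to a high-probability statement at the end, carefully allocating the failure budget across the union bound over the $g$ sketches so that the total stays below $0.2$.
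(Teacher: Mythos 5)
Your proposal is correct and follows essentially the same route as the paper's proof (Theorem~\ref{thm:optimization:wtilde_avg} combined with Theorem~\ref{thm:properties_avg}): the same first-order/second-order split of $\A^{1/2}(\W^{\textrm{c}}_i-\W^\star)$ into a mean-zero term that averages down and a correction of size $\beta\eta\cdot\sqrt{\epsilon\beta}$ per sketch, with the $1/g$ gain coming from the improved matrix-multiplication property of the pooled sketch (which the paper proves by exactly the second-moment-plus-Markov argument you invoke directly, and your handling of the failure budget via expectations of the averaged quantities matches the paper's). The only cosmetic difference is that you work with the single residual $\R^\star$ and one combined term $\M_i$ where the paper splits the error into a $\Y^\perp$ piece and a $\U^T\Y$ piece, bounding the latter by the subspace embedding property rather than matrix multiplication; both yield the same $\beta(\epsilon/g+\beta^2\epsilon^2)$ bound.
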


For Hessian sketch with model averaging, if $\epsilon < \frac{1}{g}$,
then the bound on $f ({\W}^{\textrm{h}}) - f (\W^\star )$ is proportional to $\frac{\epsilon}{g}$.

\begin{theorem} [Hessian Sketch with Model Averaging]\label{thm:optimization:hessian_avg}
	Let $\beta = \frac{\|\X\|_2^2}{\|\X\|_2^2 + n \gamma} \leq 1$.
	For the four methods let $s = \tilde{\OM} \big( \frac{d }{\epsilon }  \big)$, and
	for uniform sampling let $s = \OM \big( \frac{\mu  d \log d }{\epsilon }  \big)$,
	then the inequality
	\begin{small}
		\begin{eqnarray*}
			f ({\W}^{\textrm{h}}) - f (\W^\star )
			& \leq & \beta^2  \, \Big( \frac{\epsilon}{g} + \epsilon^2 \Big) \,
			\Big( \frac{\|\Y\|_F^2}{n}  -  f (\W^\star ) \Big)  
		\end{eqnarray*}
	\end{small}%
	holds with probability at least 0.8.
  Here the randomness comes from the choice of sketching matrices.
\end{theorem}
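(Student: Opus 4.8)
The plan is to reduce the objective-value gap to a Frobenius norm on the $\rho$-dimensional range of $\X$, split off the term that is \emph{linear} in the sketching fluctuations (which benefits from averaging) from a quadratic remainder (which does not), and control the two separately: the linear term in expectation via approximate matrix multiplication, and the remainder deterministically on a subspace-embedding event.

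First I would pass to the SVD coordinates $\X=\U\Si\V^T$. Since $\X^T\Y$ lies in $\range(\V)$ and $\A:=\X^T\X+n\gamma\I_d$ preserves this subspace, both $\W^\star$ and each $\W^{\textrm{h}}_i$ lie in $\range(\V)$, and a short computation gives $\V^T\W^{\textrm{h}}_i=\Si^{-1}(\U^T\S_i\S_i^T\U+\N)^{-1}\U^T\Y$ and $\V^T\W^\star=\Si^{-1}(\I_\rho+\N)^{-1}\U^T\Y$, where $\N:=n\gamma\Si^{-2}$. Because $f$ is a quadratic with Hessian $\tfrac2n\A$ (the identity underlying Lemma~\ref{lem:optimization}), this yields $f(\W^{\textrm{h}})-f(\W^\star)=\tfrac1n\big\|(\I+\N)^{1/2}[\tfrac1g\sum_i(\U^T\S_i\S_i^T\U+\N)^{-1}-(\I+\N)^{-1}]\U^T\Y\big\|_F^2$. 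Introducing $\E_i:=(\I+\N)^{-1/2}(\U^T\S_i\S_i^T\U-\I_\rho)(\I+\N)^{-1/2}$ and $\tilde\Z:=(\I+\N)^{-1/2}\U^T\Y$, the algebraic identity $(\I+\E_i)^{-1}=\I-\E_i+\E_i^2(\I+\E_i)^{-1}$ produces the decomposition $f(\W^{\textrm{h}})-f(\W^\star)=\tfrac1n\|(\bar\E-\bar\R)\tilde\Z\|_F^2$, where $\bar\E=\tfrac1g\sum_i\E_i$ is linear in the fluctuations and $\bar\R=\tfrac1g\sum_i\E_i^2(\I+\E_i)^{-1}$ is the quadratic remainder. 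A per-singular-value computation also gives the exact identity $\tfrac1n\|\tilde\Z\|_F^2=\tfrac{\|\Y\|_F^2}{n}-f(\W^\star)$, which supplies the factor appearing on the right-hand side of the theorem.

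Two observations then drive the estimates. Since $\I+\N=\I+n\gamma\Si^{-2}$ has smallest eigenvalue $1/\beta$, we have $\|(\I+\N)^{-1/2}\|_2^2=\beta$, so $\|\E_i\|_2\leq\beta\,\|\U^T\S_i\S_i^T\U-\I_\rho\|_2$; this is the source of the $\beta^2$ factor. For the four sketching methods with $s=\tilde\OM(d/\epsilon)$, the subspace-embedding property gives $\|\U^T\S_i\S_i^T\U-\I_\rho\|_2\leq\sqrt\epsilon$ for every $i$ on an event $\GM$ of probability $\geq 0.9$ (union over the $g$ sketches). On $\GM$ I would bound the remainder deterministically by $\|\bar\R\tilde\Z\|_F\leq\max_i\|\E_i^2(\I+\E_i)^{-1}\|_2\,\|\tilde\Z\|_F\lesssim\beta^2\epsilon\,\|\tilde\Z\|_F$, which contributes the $\epsilon^2$ term. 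For the linear term I would use that $\EB[\S_i\S_i^T]=\I_n$ forces $\EB[\E_i]=0$, so the i.i.d.\ mean-zero summands give $\EB\|\bar\E\tilde\Z\|_F^2=\tfrac1g\EB\|\E_1\tilde\Z\|_F^2$; an approximate-matrix-multiplication bound, together with $\|\U\|_F^2=\rho\leq d$, $s=\tilde\OM(d/\epsilon)$, and two extracted factors of $\|(\I+\N)^{-1/2}\|_2=\sqrt\beta$, gives $\EB\|\E_1\tilde\Z\|_F^2\lesssim\beta^2\epsilon\,\|\tilde\Z\|_F^2$, whence $\EB\|\bar\E\tilde\Z\|_F^2\lesssim\tfrac{\beta^2\epsilon}{g}\|\tilde\Z\|_F^2$ and Markov's inequality yields $\|\bar\E\tilde\Z\|_F^2\lesssim\tfrac{\beta^2\epsilon}{g}\|\tilde\Z\|_F^2$ with probability $\geq 0.9$. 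A union bound merges this with $\GM$ to give probability $\geq 0.8$, and the triangle inequality $\|(\bar\E-\bar\R)\tilde\Z\|_F\leq\|\bar\E\tilde\Z\|_F+\|\bar\R\tilde\Z\|_F$ followed by expanding the square and absorbing the cross term by AM-GM produces $f(\W^{\textrm{h}})-f(\W^\star)\lesssim\beta^2(\tfrac\epsilon g+\epsilon^2)\tfrac1n\|\tilde\Z\|_F^2$, i.e.\ the claim after rescaling $\epsilon$.

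The main obstacle is the linear term. Naively one would condition on $\GM$ and argue concentration of $\bar\E$, but conditioning on $\GM$ destroys exactly the independence and mean-zero structure that deliver the $1/g$ gain. The crucial maneuver is therefore to obtain the \emph{unconditional} expectation bound for $\bar\E$ through approximate matrix multiplication (which requires no spectral event) while keeping the \emph{deterministic} control of $\bar\R$ on $\GM$, and to combine them only at the very end by a union bound. The other delicate point is threading the $\beta$-dependence consistently through both the AMM step and the remainder, which is ensured by systematically extracting the factor $\|(\I+\N)^{-1/2}\|_2^2=\beta$.
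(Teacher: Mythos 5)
Your proposal is correct, and it reaches the paper's bound through a mechanism that differs in one substantive respect. The paper (Theorem~\ref{thm:optimization:what_avg}, proved in Appendix~\ref{sec:opt_avg:proof:hessian}) performs the same linear-plus-quadratic split that you do — it writes $(\X^T\X+n\gamma\I_d)^{1/2}(\W^{\textrm{h}}_i-\W^\star)=\V\A_i\B_i\C$ with $\B_i=\I_\rho+\De_i$, so that $\frac1g\sum_i\A_i\C$ is the linear term and $\frac1g\sum_i\A_i\De_i\C$ is the $O(\eta^2)$ remainder; your Neumann expansion $(\I+\E_i)^{-1}=\I-\E_i+\E_i^2(\I+\E_i)^{-1}$ is the same decomposition in different notation, and your identity $\tfrac1n\|\tilde\Z\|_F^2=\tfrac{\|\Y\|_F^2}{n}-f(\W^\star)$ is exactly the paper's computation of $\|\C\|_F^2$. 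Where you diverge is the source of the $1/g$ gain on the linear term: the paper observes that $\frac1g\sum_i\S_i\S_i^T=\S\S^T$ for the concatenated matrix $\S=\frac{1}{\sqrt g}[\S_1,\dots,\S_g]$, which for the four methods is again a sketching matrix of the same family with $gs$ columns and therefore satisfies the subspace embedding property with parameter $\eta/\sqrt g$ (Assumption~\ref{assumption:2:1}, via Theorem~\ref{thm:properties_avg}); this yields a \emph{spectral-norm} bound $\|\frac1g\sum_i\A_i\|_2\leq\eta\beta/\sqrt g$, and the whole proof needs only the subspace embedding property. You instead exploit $\EB[\S_i\S_i^T]=\I_n$ and independence to get $\EB\|\bar\E\tilde\Z\|_F^2=\frac1g\EB\|\E_1\tilde\Z\|_F^2$, control the single-sketch term by the expectation form of approximate matrix multiplication, and finish with Markov — a Frobenius bound against the fixed matrix $\tilde\Z$ only, which is all that is needed. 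Your route is more elementary and, notably, does not rely on closure of the sketching family under concatenation (the very property whose failure for CountSketch is flagged in Remark~\ref{remark:cs_avg}), at the cost of invoking the matrix multiplication property where the paper gets by with the subspace embedding alone; the constants and the probability bookkeeping (union bound over the $g$ embedding events plus one Markov event, giving $0.8$) come out the same after rescaling $\epsilon$.
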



\subsection{Model Averaging: Statistical Perspective}  \label{sec:main:statistical_avg}

Model averaging has the salutatory property of reducing the risks of the classical and Hessian sketches.
Our first result conducts a bias-variance decomposition for the averaged solution of the sketched MRR problem.

\begin{theorem} [Bias-Variance Decomposition] \label{thm:bias_var_decomp_avg}
	We consider the fixed design model \eqref{eq:rr_datamodel}.
  Decompose the risk function defined in \eqref{eq:def_risk} as
	\begin{small}
	\begin{eqnarray*}
		R (\W )
		& = & \bias^2 ( \W ) + \var ( \W ) .
	\end{eqnarray*}
\end{small}%
	The bias and variance terms are
	\begin{small}
		\begin{eqnarray*}
			\bias \big( \W^{\textrm{c}} \big)
			& = &  \gamma\sqrt{n}
			\bigg\| \frac{1}{g} \sum_{i=1}^g
			\big( \U^T \S_i \S_i^T \U + n\gamma \Si^{-2} \big)^{\dag} \Si^{-1} \V^T \W_0 \bigg\|_F , \\
			\var \big( \W^{\textrm{c}} \big)
			& = & \frac{\xi^2}{n}   \bigg\| \frac{1}{g} \sum_{i=1}^g
			\big( \U^T \S_i \S_i^T \U + n\gamma \Si^{-2} \big)^{\dag} \U^T \S_i \S_i^T  \bigg\|_F^2 ,\\
			\bias \big({\W}^{\textrm{h}}\big)
			& = & \gamma \sqrt{n} \bigg\| \frac{1}{g} \sum_{i=1}^g
			\big(  \Si^{-2} + \tfrac{\U^T \S_i \S_i^T \U - \I_\rho}{n \gamma} \big) 
			\big(\U^T \S_i \S_i^T \U + n \gamma \Si^{-2} \big)^{\dag} \Si \V^T \W_0 \bigg\|_F  , \\
			\var \big( {\W}^{\textrm{h}} \big)
			& = & \frac{\xi^2 }{n} \bigg\| \frac{1}{g} \sum_{i=1}^g
			\big(\U^T \S_i \S_i^T \U + n \gamma \Si^{-2} \big)^{\dag } \bigg\|_F^2 .
		\end{eqnarray*}
	\end{small}
\end{theorem}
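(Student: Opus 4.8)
The plan is to reduce the averaged decomposition to the single-sketch decomposition of Theorem~\ref{thm:bias_var_decomp} and then invoke linearity. Throughout I condition on the sketching matrices $\S_1,\ldots,\S_g$, so the only randomness is the noise $\Xii$ and the asserted formulas are deterministic functions of the $\S_i$. Substituting the data model $\Y=\X\W_0+\Xii$ into \eqref{eq:def_W_tilde} and \eqref{eq:def_W_hat} shows that each $\W^{\textrm{c}}_i$ and $\W^{\textrm{h}}_i$ is an affine function of $\Xii$: the $\Xii$-independent part equals $\EB_\Xii[\W^{\textrm{c}}_i]$ (resp.\ $\EB_\Xii[\W^{\textrm{h}}_i]$) and the part linear in $\Xii$ is the fluctuation. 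Averaging over $i$ commutes with $\EB_\Xii$, so $\EB_\Xii[\W^{\textrm{c}}]=\frac1g\sum_i\EB_\Xii[\W^{\textrm{c}}_i]$ and likewise for the fluctuations. Expanding $\|\X\W-\X\W_0\|_F^2=\|(\X\EB_\Xii[\W]-\X\W_0)+\X(\W-\EB_\Xii[\W])\|_F^2$, the cross term carries $\EB_\Xii$ of the (zero-mean) fluctuation against a deterministic factor and hence vanishes, giving $R(\W)=\bias^2(\W)+\var(\W)$ with $\bias(\W)=\frac1{\sqrt n}\|\X\EB_\Xii[\W]-\X\W_0\|_F$ and $\var(\W)=\frac1n\EB_\Xii\|\X(\W-\EB_\Xii[\W])\|_F^2$.

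The substantive step is an SVD reduction. Writing $\X=\U\Si\V^T$ and setting $\A_i=\U^T\S_i\S_i^T\U\in\RB^{\rho\times\rho}$, one checks that on $\range(\V)$ the operator $(\X^T\S_i\S_i^T\X+n\gamma\I_d)^\dag$ acts through the small matrix $(\Si\A_i\Si+n\gamma\I_\rho)^\dag=\Si^{-1}(\A_i+n\gamma\Si^{-2})^\dag\Si^{-1}$, while its component on $\nul(\X)$ is annihilated upon left-multiplication by $\X$. This yields $\X(\X^T\S_i\S_i^T\X+n\gamma\I_d)^\dag\X^T\S_i\S_i^T=\U(\A_i+n\gamma\Si^{-2})^\dag\U^T\S_i\S_i^T$ and $\X(\X^T\S_i\S_i^T\X+n\gamma\I_d)^\dag\X^T=\U(\A_i+n\gamma\Si^{-2})^\dag\U^T$. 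For \emph{classical sketch}, substituting into $\X\EB_\Xii[\W^{\textrm{c}}_i]-\X\W_0$ and using $(\Si\A_i\Si+n\gamma\I_\rho)^{-1}\Si\A_i\Si-\I_\rho=-n\gamma(\Si\A_i\Si+n\gamma\I_\rho)^{-1}$ gives the per-sketch mean error $-n\gamma\,\U(\A_i+n\gamma\Si^{-2})^\dag\Si^{-1}\V^T\W_0$; averaging, taking the Frobenius norm (which absorbs the sign and the orthonormal factor $\U$), and dividing by $\sqrt n$ produces the stated $\bias(\W^{\textrm{c}})$. The fluctuation is $\U\big[\frac1g\sum_i(\A_i+n\gamma\Si^{-2})^\dag\U^T\S_i\S_i^T\big]\Xii$, and evaluating $\frac1n\EB_\Xii\|\cdot\|_F^2$ with $\EB[\Xii\Xii^T]=\xi^2\I_n$ collapses the expectation via $\tr(\B\Xii\Xii^T\B^T)$ to $\frac{\xi^2}{n}\|\cdot\|_F^2$, which is exactly $\var(\W^{\textrm{c}})$.

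For \emph{Hessian sketch} the variance is immediate: the fluctuation is $\U\big[\frac1g\sum_i(\A_i+n\gamma\Si^{-2})^\dag\big]\U^T\Xii$, and the same $\EB[\Xii\Xii^T]=\xi^2\I_n$ computation yields $\var(\W^{\textrm{h}})=\frac{\xi^2}{n}\|\frac1g\sum_i(\A_i+n\gamma\Si^{-2})^\dag\|_F^2$. The bias needs one algebraic identity. Writing $\C_i=(\A_i+n\gamma\Si^{-2})^\dag$, the per-sketch mean error reduces to $\U(\C_i-\I_\rho)\Si\V^T\W_0$; since $(\A_i+n\gamma\Si^{-2})\C_i=\I_\rho$, one has $(n\gamma\Si^{-2}+\A_i-\I_\rho)\C_i=\I_\rho-\C_i=-(\C_i-\I_\rho)$, so $(\C_i-\I_\rho)\Si=-n\gamma(\Si^{-2}+\frac{\A_i-\I_\rho}{n\gamma})\C_i\Si$. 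Averaging over $i$ and taking the Frobenius norm (again absorbing the sign) recovers precisely the stated $\bias(\W^{\textrm{h}})$.

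The main obstacle is the SVD reduction of the second paragraph: one must handle the pseudoinverse carefully by splitting $\RB^d=\range(\V)\oplus\nul(\X)$, verify that the $\nul(\X)$ block (which equals $\frac1{n\gamma}\I$ when $\gamma>0$) is killed by the left factor $\X$, and justify the factorization $(\Si\A_i\Si+n\gamma\I_\rho)^\dag=\Si^{-1}(\A_i+n\gamma\Si^{-2})^\dag\Si^{-1}$, valid because $\Si$ is invertible on $\RB^\rho$ and, for $\gamma>0$, the inner matrix is positive definite (the $\dag$ notation then matches the single-sketch convention of Theorem~\ref{thm:bias_var_decomp}). Once these per-sketch identities are established, averaging and the noise expectation are routine, and the cross-term cancellation is the standard bias--variance argument.
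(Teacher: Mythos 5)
Your proof is correct and follows essentially the same route as the paper: the per-sketch SVD reduction (expressing $\X(\X^T\S_i\S_i^T\X+n\gamma\I_d)^\dag\X^T$ and its variants through $(\U^T\S_i\S_i^T\U+n\gamma\Si^{-2})^\dag$, and isolating the $-n\gamma(\cdot)^\dag\Si^{-1}\V^T\W_0$ mean-error term) is exactly the computation in the paper's proof of Theorem~\ref{thm:bias_var_decomp}, and the paper leaves the averaged version to precisely the linearity-plus-vanishing-cross-term argument you spell out. Your explicit treatment of the $\nul(\X)$ block of the pseudoinverse and the identity $(\C_i-\I_\rho)\Si=-n\gamma(\Si^{-2}+\tfrac{\A_i-\I_\rho}{n\gamma})\C_i\Si$ for the Hessian bias are minor presentational differences, not a different approach.
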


Theorems~\ref{thm:biasvariance:classical_avg} and~\ref{thm:biasvariance:hessian_avg}
provide upper bounds on the bias and variance of averaged sketched MRR solutions for, respectively,
classical sketch and Hessian sketch.
We can contrast them with Theorems~\ref{thm:biasvariance:classical} and~\ref{thm:biasvariance:hessian}
to see the statistical benefits of model averaging.
Theorem~\ref{thm:biasvariance:classical_avg} shows that when $g \approx
\frac{n}{s}$, classical sketch with model averaging yields a solution with comparable bias and
variance to the optimal solution.

\begin{theorem} [Classical Sketch with Model Averaging] \label{thm:biasvariance:classical_avg}
  For the four sketching methods
	with $s = \tilde\OM \big( \frac{d}{\epsilon^2} \big)$,
	or uniform sampling with $s = \OM \big( \frac{\mu d \log d}{\epsilon^2} \big)$,
	the inequalities
	\begin{small}
		\begin{eqnarray*}
			\frac{\bias ( \W^{\textrm{c}} )}{\bias (\W^\star )}
			\; \leq \; 1 + \epsilon
			\qquad \textrm{ and } \qquad
			\frac{\var ( \W^{\textrm{c}} ) }{ \var (\W^\star ) }
			\; \leq \; \frac{n}{s} \bigg( \frac{ \sqrt{ 1 + \epsilon  }}{ \sqrt{h}} 
			+ \epsilon \bigg)^2 ,
		\end{eqnarray*}
	\end{small}%
	where $h = \min \{ g, \, \Theta (\frac{n}{s}) \}$,
	hold with probability at least 0.8.
  The randomness comes from the choice of sketching matrices.
\end{theorem}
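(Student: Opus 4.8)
The plan is to start from the exact bias and variance formulas in Theorem~\ref{thm:bias_var_decomp_avg} and to reduce everything to perturbations of the ideal resolvent $P \triangleq (\I_\rho + n\gamma \Si^{-2})^{-1}$. Write $\De_i \triangleq \U^T \S_i \S_i^T \U - \I_\rho$ and $B_i \triangleq (\I_\rho + \De_i + n\gamma \Si^{-2})^\dag$, so that $P$ is exactly what $B_i$ becomes when $\De_i = \0$. The four sketches (and uniform sampling) are subspace embeddings, so for the stated size $s = \tilde\OM(d/\epsilon^2)$ one has $\|\De_i\|_2 \le \epsilon$ for each fixed $i$ with high probability; a union bound over $i \in [g]$, absorbing the extra $\log g$ factor into $s$ (hidden by $\tilde\OM$), makes $\|\De_i\|_2 \le \epsilon$ hold simultaneously for all $i$ with probability $0.9$. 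On this event the resolvent identity $B_i - P = -B_i \De_i P$ together with $\|B_i\|_2 \le (1-\epsilon)^{-1}$ and $\|P\|_2 \le 1$ yields the basic perturbation estimate $\|B_i - P\|_2 \lesssim \epsilon$, which I will reuse.

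\textbf{Bias.} Using Theorem~\ref{thm:bias_var_decomp_avg} and the triangle inequality, $\bias(\W^{\textrm{c}}) = \gamma\sqrt{n}\,\| \tfrac{1}{g}\sum_i B_i \Si^{-1}\V^T\W_0\|_F \le \tfrac{1}{g}\sum_i \gamma\sqrt{n}\,\|B_i\Si^{-1}\V^T\W_0\|_F$. Each summand is exactly the bias of a single classical sketch with $\S = \S_i$, so the single-sketch bound of Theorem~\ref{thm:biasvariance:classical} applies term by term to give $\gamma\sqrt{n}\,\|B_i\Si^{-1}\V^T\W_0\|_F \le (1+\epsilon)\bias(\W^\star)$ (note $\Si^{-1}$ and $P$ are diagonal, hence commute, so the $\De_i=\0$ term equals $\bias(\W^\star)$). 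Averaging preserves this, giving $\bias(\W^{\textrm{c}}) \le (1+\epsilon)\bias(\W^\star)$; this is the easy half.

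\textbf{Variance.} Here the content is the $1/g$ variance reduction. Set $G \triangleq \tfrac{1}{g}\sum_i B_i \U^T\S_i\S_i^T$, so $\var(\W^{\textrm{c}}) = \tfrac{\xi^2}{n}\|G\|_F^2$ while $\var(\W^\star) = \tfrac{\xi^2}{n}\|P\|_F^2$. I would split
\[
G \;=\; P\U^T \;+\; P\,\tfrac{1}{g}\sum_{i=1}^{g}\U^T(\S_i\S_i^T - \I_n) \;+\; \tfrac{1}{g}\sum_{i=1}^{g}(B_i - P)\,\U^T\S_i\S_i^T
\]
and bound the three pieces by the triangle inequality. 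The first is deterministic with $\|P\U^T\|_F = \|P\|_F$; since $\var(\W^\star)$ is the floor, this term is charged against $\sqrt{n/s}\,h^{-1/2}\|P\|_F$ using $h \le \Theta(n/s)$. The second piece is an average of $g$ i.i.d.\ mean-zero matrices; computing the single-term second moment $\EB\|P\U^T(\S\S^T - \I_n)\|_F^2 = \EB\|P\U^T\S\S^T\|_F^2 - \|P\|_F^2 \lesssim \tfrac{n}{s}\|P\|_F^2$ and invoking independence gives expected squared norm $\lesssim \tfrac{1}{g}\tfrac{n}{s}\|P\|_F^2$, so Markov's inequality makes this piece $\lesssim \sqrt{n/s}\,g^{-1/2}\|P\|_F$ with constant probability. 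The third piece is controlled, on the embedding event, through $B_i - P = -B_i\De_i P$ by $\tfrac{1}{g}\sum_i\|B_i\De_i\|_2\,\|P\U^T\S_i\S_i^T\|_F \lesssim \epsilon\sqrt{n/s}\,\|P\|_F$, supplying the $\epsilon$ term. Collecting the three bounds gives $\|G\|_F \le \sqrt{n/s}\big(\tfrac{\sqrt{1+\epsilon}}{\sqrt{h}} + \epsilon\big)\|P\|_F$, and squaring yields the claimed variance ratio; a final union bound over the embedding event and the Markov event keeps the success probability at $0.8$.

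The main obstacle is the middle (fluctuation) piece of the variance, for two reasons: (a) pinning down the exact $\tfrac{n}{s}$ scale of the single-term second moment $\EB\|P\U^T\S\S^T\|_F^2$ uniformly across the sketching ensembles (this is where the $\tfrac{n}{s}$ factor and its ensemble-dependence enter, and it must be verified for SRHT and sampling, not merely Gaussian projection); and (b) turning a second-moment estimate into a high-probability bound while the \emph{same} randomness $\S_i$ simultaneously governs $\De_i$, $B_i$, and the fluctuation. I would neutralize (b) by deliberately routing the fluctuation through the term $P\,\tfrac{1}{g}\sum_i\U^T(\S_i\S_i^T-\I_n)$, which contains no $B_i$ and is exactly mean-zero, so Markov applies cleanly and independently of the embedding event, after which a union bound closes the argument. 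Matching the precise cap $h = \min\{g,\Theta(n/s)\}$ then reduces to choosing the constant in $\Theta(n/s)$ so that the deterministic floor $\|P\|_F$ is absorbed into the $\sqrt{n/s}\,h^{-1/2}\|P\|_F$ term.
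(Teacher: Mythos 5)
Your overall architecture matches the paper's at the top level (reduce to the exact formulas of Theorem~\ref{thm:bias_var_decomp_avg}, then bound bias and variance separately, with the bias handled by a termwise triangle inequality exactly as the paper does), but your treatment of the variance is genuinely different from the paper's, and it is there that the comparison is interesting. The paper's key device is the concatenation trick: since $\tfrac{1}{g}\sum_i \S_i\S_i^T = \S\S^T$ for $\S = \tfrac{1}{\sqrt{g}}[\S_1,\dots,\S_g]\in\RB^{n\times gs}$, and $\S$ is itself a sketching matrix of the same type, Theorem~\ref{thm:properties_avg} supplies \emph{directly} the improved subspace embedding $\|\U^T\S\S^T\U-\I_\rho\|_2\le\eta/\sqrt{g}$ and, crucially, the spectral bound $\|\S\|_2^2\le\theta n/(gs)$ for $gs<n$ (this is an exact identity $\S^T\S=\tfrac{n}{gs}\I_{gs}$ for uniform sampling and SRHT) and $\|\S\|_2^2=\Theta(1)$ for $gs\gtrsim n$. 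Writing $(\U^T\S_i\S_i^T\U+n\gamma\Si^{-2})^\dag = P^{1/2}(\I_\rho+\De_i)P^{1/2}$ and bounding the leading term by $\|P\U^T\S\S^T\|_F\le\|P\|_F\,\|\U^T\S\|_2\,\|\S\|_2$ then yields the $\sqrt{(1+\epsilon)}/\sqrt{h}$ factor deterministically (or w.h.p.\ with sharp constant), with no probabilistic slack. You instead center the leading term, $P\U^T\S\S^T = P\U^T + P\U^T(\S\S^T-\I_n)$, and control the mean-zero fluctuation by a second-moment computation plus Markov. This is sound in outline, and your third (perturbation) piece via $B_i-P=-B_i\De_i P$ reproduces the paper's $\epsilon$-term exactly; but the Markov step costs you a factor $1/\sqrt{\delta}$ (roughly $\sqrt{10}$ at failure probability $0.1$) multiplying the $\sqrt{n/(sg)}$ term, and on the branch $h=g$ of $h=\min\{g,\Theta(\tfrac{n}{s})\}$ there is no hidden constant available to absorb it — so as written your argument proves the variance ratio bound with $\sqrt{1+\epsilon}$ replaced by an absolute constant $C>1$, which is weaker than the stated inequality. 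You also correctly flag that $\EB\|P\U^T\S\S^T\|_F^2\lesssim\tfrac{n}{s}\|P\|_F^2$ must be verified ensemble by ensemble; it is immediate for uniform sampling and SRHT (where $\S^T\S=\tfrac{n}{s}\I_s$) and a short computation for Gaussian, but it is not clean for leverage-score-type sampling. In short: your route buys a statistically intuitive bias--fluctuation decomposition at the price of per-ensemble moment computations and a lost constant; the paper's route buys sharp constants and uniformity across ensembles essentially for free by recognizing the averaged sketch as a single larger sketch and invoking its bounded-spectral-norm property (Assumption~\ref{assumption:2:3}).
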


Theorem~\ref{thm:biasvariance:hessian_avg} shows that model averaging
decreases the bias of Hessian sketch without increasing the variance.
For Hessian sketch without model averaging, recall that $\bias({\W}^{\textrm{h}})$ is larger than $\bias(\W^\star)$ by a factor of $\OM(\|\X\|_2^2/(n\gamma))$.
Theorem~\ref{thm:biasvariance:hessian_avg} shows that model averaging significantly reduces the bias.

\begin{theorem} [Hessian Sketch with Model Averaging] \label{thm:biasvariance:hessian_avg}
  For the four sketching methods with $s = \tilde\OM \big( \frac{d}{\epsilon^2} \big)$,
	or uniform sampling with $s = \OM \big( \frac{\mu d \log d}{\epsilon^2} \big)$,
	the inequalities
	\begin{small}
	\begin{eqnarray*}
		\frac{\bias ( {\W}^{\textrm{h}} )}{\bias (\W^\star ) }
		\; \leq \;
		1 + \epsilon + \Big( \frac{\epsilon }{ \sqrt{g} } + \epsilon^2  \Big)
		\frac{\| \X \|_2^2 }{n \gamma }  
		\qquad \textrm{ and } \qquad
		\frac{\var ( {\W}^{\textrm{h}} ) }{ \var (\W^\star ) }
		\; \leq \;
		1+\epsilon
	\end{eqnarray*}
	\end{small}%
	hold with probability at least 0.8.
  Here the randomness comes from the choice of sketching matrices.
\end{theorem}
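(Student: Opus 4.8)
The plan is to start from the exact bias and variance expressions of Theorem~\ref{thm:bias_var_decomp_avg} and reduce them, via a resolvent expansion, to a first-order term that model averaging controls and a second-order term of size $\OM(\epsilon^2)$. Write $\De_i = \U^T\S_i\S_i^T\U - \I_\rho$, $\Lam = n\gamma\Si^{-2}$, $\A = \I_\rho + \Lam$, and $\z = \Si\V^T\W_0$. The first step is the algebraic simplification
\[
\Big(\Si^{-2} + \tfrac{\De_i}{n\gamma}\Big)\big(\U^T\S_i\S_i^T\U + n\gamma\Si^{-2}\big)^{\dag}\Si\V^T\W_0 = \tfrac{1}{n\gamma}\big(\I_\rho - (\A+\De_i)^{-1}\big)\z ,
\]
which holds because $\Si^{-2}+\tfrac{\De_i}{n\gamma} = \tfrac{1}{n\gamma}(\Lam+\De_i)$, because $\Lam+\De_i = (\A+\De_i)-\I_\rho$, and because $\|\De_i\|_2\le\epsilon<1$ forces $\A+\De_i$ to be invertible (the eigenvalues of $\A$ are at least $1$). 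This recasts $\bias(\W^{\textrm{h}}) = \tfrac{1}{\sqrt n}\big\|\tfrac1g\sum_i(\I_\rho-(\A+\De_i)^{-1})\z\big\|_F$ and, by the same manipulation applied to Theorem~\ref{thm:bias_var_decomp}, $\bias(\W^\star) = \tfrac{1}{\sqrt n}\|(\I_\rho-\A^{-1})\z\|_F = \tfrac{1}{\sqrt n}\|\Lam\A^{-1}\z\|_F$.

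Next I would expand around $\A^{-1}$. The resolvent identity gives $\A^{-1}-(\A+\De_i)^{-1} = \A^{-1}\De_i(\A+\De_i)^{-1}$, and substituting $(\A+\De_i)^{-1} = \A^{-1}-\A^{-1}\De_i(\A+\De_i)^{-1}$ once more separates the averaged deviation into a first- and a second-order piece:
\[
\tfrac1g\sum_i\big(\I_\rho-(\A+\De_i)^{-1}\big) = (\I_\rho-\A^{-1}) + \A^{-1}\bar\De\,\A^{-1} - \tfrac1g\sum_i \A^{-1}\De_i\A^{-1}\De_i(\A+\De_i)^{-1},
\]
with $\bar\De = \tfrac1g\sum_i\De_i$. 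Applying this to $\z$ and using the triangle inequality bounds $\sqrt n\,\bias(\W^{\textrm{h}})$ by $\sqrt n\,\bias(\W^\star)$ plus the norms of the two corrections. The factor $\tfrac{\|\X\|_2^2}{n\gamma}$ appears naturally because $\A$ and $\Lam$ are simultaneously diagonalizable: $\A^{-1}\z = \Lam^{-1}(\Lam\A^{-1}\z)$ with $\|\Lam^{-1}\|_2 = \tfrac{\|\X\|_2^2}{n\gamma}$ gives $\|\A^{-1}\z\|_F\le\tfrac{\|\X\|_2^2}{n\gamma}\sqrt n\,\bias(\W^\star)$, and $\|\A\Lam^{-1}\|_2 = 1+\tfrac{\|\X\|_2^2}{n\gamma}$ gives $\|\z\|_F\le(1+\tfrac{\|\X\|_2^2}{n\gamma})\sqrt n\,\bias(\W^\star)$. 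Using $\|\A^{-1}\|_2\le 1$, $\|\De_i\|_2\le\epsilon$, and $\|(\A+\De_i)^{-1}\|_2\le(1-\epsilon)^{-1}$, the first-order term is at most $\|\bar\De\|_2\tfrac{\|\X\|_2^2}{n\gamma}\sqrt n\,\bias(\W^\star)$ and the second-order term at most $\tfrac{\epsilon^2}{1-\epsilon}(1+\tfrac{\|\X\|_2^2}{n\gamma})\sqrt n\,\bias(\W^\star)$; dividing by $\sqrt n\,\bias(\W^\star)$ yields $1+\OM(\epsilon^2) + \big(\|\bar\De\|_2 + \OM(\epsilon^2)\big)\tfrac{\|\X\|_2^2}{n\gamma}$, which is the claimed form once $\OM(\epsilon^2)\le\epsilon$ is absorbed into the stated $\epsilon$.

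The crux, and the main obstacle, is the bound $\|\bar\De\|_2 \le \tfrac{\epsilon}{\sqrt g}$, which is exactly where model averaging improves on the non-averaged Theorem~\ref{thm:biasvariance:hessian}. For a single sketch the subspace-embedding guarantee underlying the earlier theorems gives only $\|\De_i\|_2\le\epsilon$ at $s=\tilde\OM(d/\epsilon^2)$; the extra $\tfrac{1}{\sqrt g}$ comes from the independence of $\S_1,\dots,\S_g$ together with the unbiasedness $\EB[\U^T\S_i\S_i^T\U]=\I_\rho$ (exact for Gaussian projection, leverage and uniform sampling, and controlled for SRHT), so that $\bar\De$ is an average of $g$ independent mean-zero perturbations whose operator norm concentrates at scale $\epsilon/\sqrt g$. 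I expect establishing this concentration rigorously, while tracking the logarithmic and dimension factors so they are absorbed into $\tilde\OM(d/\epsilon^2)$ and the constants, to be the only genuinely delicate step; it is cleanest to import from the model-averaging sketching lemma already used for the classical-sketch results. The variance bound is easier and needs no averaging gain: writing $\tfrac1g\sum_i(\A+\De_i)^{-1} = \A^{-1} - \A^{-1}M$ with $M=\tfrac1g\sum_i\De_i(\A+\De_i)^{-1}$ and $\|M\|_2\le\tfrac{\epsilon}{1-\epsilon}$ gives $\big\|\tfrac1g\sum_i(\A+\De_i)^{-1}\big\|_F\le(1-\epsilon)^{-1}\|\A^{-1}\|_F$, whence $\var(\W^{\textrm{h}})/\var(\W^\star)\le(1-\epsilon)^{-2}\le 1+\epsilon$ after rescaling $\epsilon$, matching the non-averaged variance guarantee.
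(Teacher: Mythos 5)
Your proposal is correct, and it rests on exactly the same two pillars as the paper's proof: the first-order term is controlled by the concatenated-sketch subspace embedding bound $\|\U^T\S\S^T\U-\I_\rho\|_2\le\eta/\sqrt{g}$ with $\S=\tfrac{1}{\sqrt{g}}[\S_1,\dots,\S_g]$ (this is precisely Assumption~\ref{assumption:2:1}, supplied by Theorem~\ref{thm:properties_avg}, so the ``genuinely delicate step'' you flag is already available off the shelf and needs no new concentration argument --- note that your $\bar\De$ \emph{is} $\U^T\S\S^T\U-\I_\rho$), while the second-order remainder is bounded term by term at size $\OM(\epsilon^2)$ with the extra factor $\|\X\|_2^2/(n\gamma)$ extracted through $\Lam^{-1}$. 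Where you differ is in the algebraic organization. The paper factors the pseudo-inverse symmetrically as $(\U^T\S_i\S_i^T\U+n\gamma\Si^{-2})^{\dag}=\A^{-1/2}(\I_\rho+\De_i')\A^{-1/2}$ and splits the bias into a leading piece $A$ (which inherits the $1/\sqrt{g}$ gain) plus a correction $B$ involving $\De_i'$; you instead collapse the entire bias operator to $\tfrac{1}{n\gamma}\big(\I_\rho-(\A+\De_i)^{-1}\big)$ via the identity $(\Lam+\De_i)(\A+\De_i)^{-1}=\I_\rho-(\A+\De_i)^{-1}$ and then run a two-term resolvent expansion around $\A^{-1}$. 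Your reduction is arguably cleaner --- it makes the comparison with $\bias(\W^\star)=\tfrac{1}{\sqrt{n}}\|(\I_\rho-\A^{-1})\z\|_F$ immediate and makes transparent why the $1/\sqrt{g}$ gain attaches only to the term carrying $\|\X\|_2^2/(n\gamma)$ --- at the cost of slightly different constants ($\tfrac{\epsilon^2}{1-\epsilon}$ appearing also in the $g$-independent part, versus the paper's $\tfrac{1}{1-\eta}$), which wash out after rescaling $\epsilon$. Your variance argument likewise matches the paper's up to a harmless squaring of $(1-\epsilon)^{-1}$.
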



\section{Experiments on Synthetic Data} \label{sec:experiment1}

We conduct experiments on synthetic data to verify our theory.
Section~\ref{sec:experiment1:setting} describes the data model and experiment settings.
Sections~\ref{sec:experiment1:opt} and~\ref{sec:experiment1:stats} empirically study
classical and Hessian sketch from the optimization and statistical perspectives, respectively,
to verify Theorems~\ref{thm:optimization:classical}, \ref{thm:optimization:hessian},
\ref{thm:biasvariance:classical}, and~\ref{thm:biasvariance:hessian}.
Sections~\ref{sec:experiment1:opt_avg} and~\ref{sec:experiment1:stats_avg} study
model averaging from the optimization and statistical perspectives, respectively,
to corroborate Theorems~\ref{thm:optimization:classical_avg}, \ref{thm:optimization:hessian_avg},
\ref{thm:biasvariance:classical_avg}, and~\ref{thm:biasvariance:hessian_avg}.


\subsection{Settings} \label{sec:experiment1:setting}

Following \citep{ma2014statistical,yang2015implementing},
we construct $\X = \U \diag (\si ) \V^T \in \RB^{n\times d}$ 
and $\y = \X \w_0 + \varepsi \in \RB^n$ in the following way.
\begin{itemize}
	\item 
	We take $\U$ be the matrix of left singular vectors of $\A \in \RB^{n\times d}$ which is constructed in the following way. (Note that $\A$ and $\X$ are different.)
	Let the rows of $\A$ be i.i.d.\ sampled from a multivariate $t$-distribution
	with covariance matrix $\C$ and $v=2$ degree of freedom, 
	where the $(i,j)$-th entry of $\C \in \RB^{d\times d}$ is $2\times 0.5^{|i-j|}$.
	Constructing $\A$ in this manner ensures that it has high row coherence. 
	\vspace{-1mm}
	\item
    Let the entries of $\bb\in \RB^{d}$ be equally spaced between $0$ and $-6$ and take
	$\sigma_i = 10^{b_i}$ for all $i\in [d]$.
	\vspace{-1mm}
	\item
    Let $\V \in \RB^{d\times d}$ be an orthonormal basis for the column range of a $d\times d$ standard Gaussian matrix.
	\vspace{-1mm}
	\item
	Let $\w_0 = [\1_{0.2d} ; 0.1 \, \1_{0.6d} ; \1_{0.2d}]$.
	\vspace{-1mm}
	\item
    Take the entries of $\varepsi \in \RB^n$ to be i.i.d.\ samples from the $\NM (0, \xi^2)$ distribution.
\end{itemize}
This construction ensures that $\X$ has high row coherence, and its
condition number is $\kappa (\X^T \X) = 10^{12}$.
Let $\S \in \RB^{n\times s}$ be any of the six sketching methods considered in this paper.
We fix $n = 10^5$, $d=500$, and $s=5,000$.
Since the sketching methods are randomized, we repeat each trial 10 times with idependent sketches and report averaged results.


\begin{figure}[t]
	\begin{center}
		\includegraphics[width=0.9\textwidth]{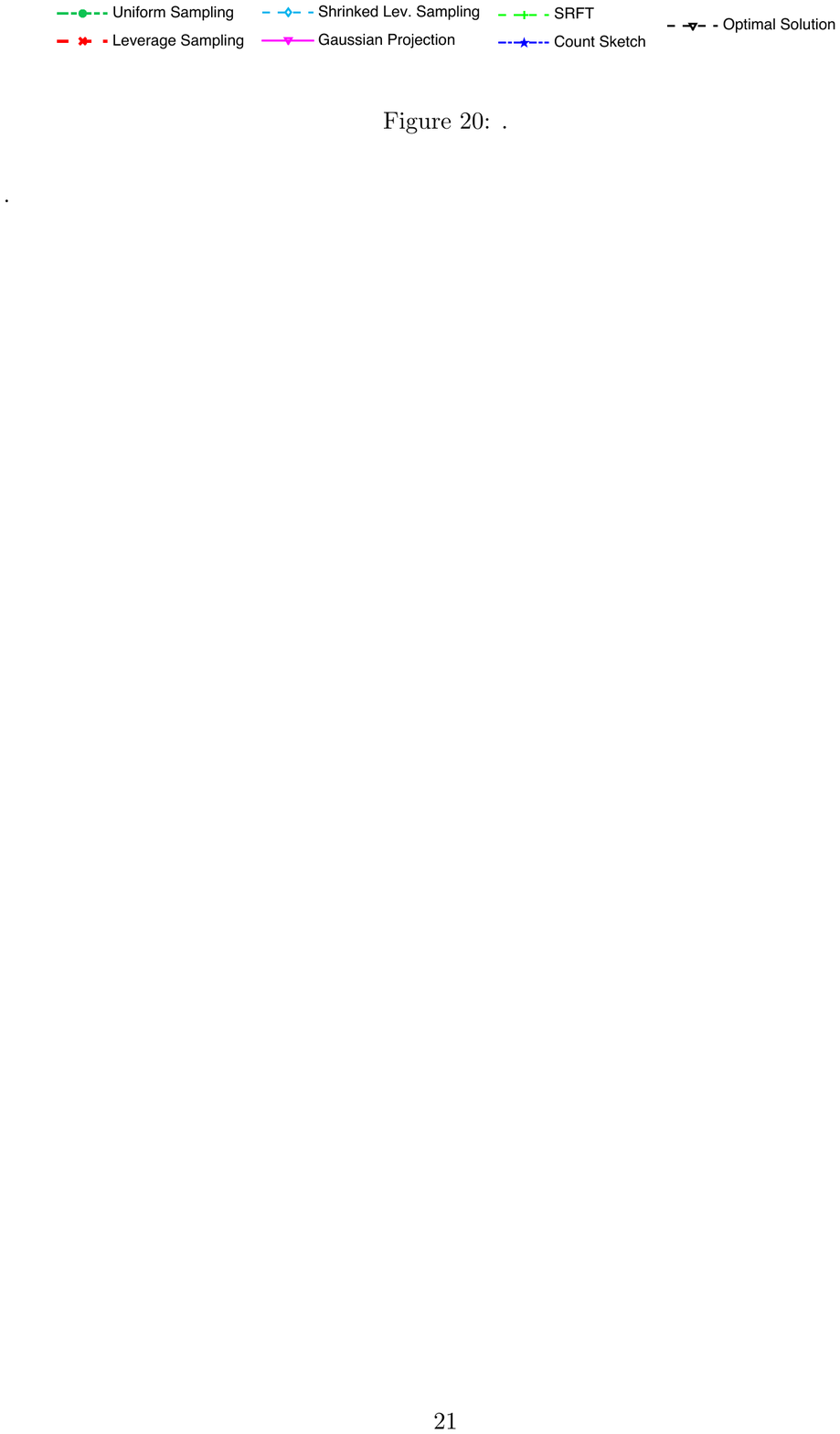}\\
		\includegraphics[width=0.96\textwidth]{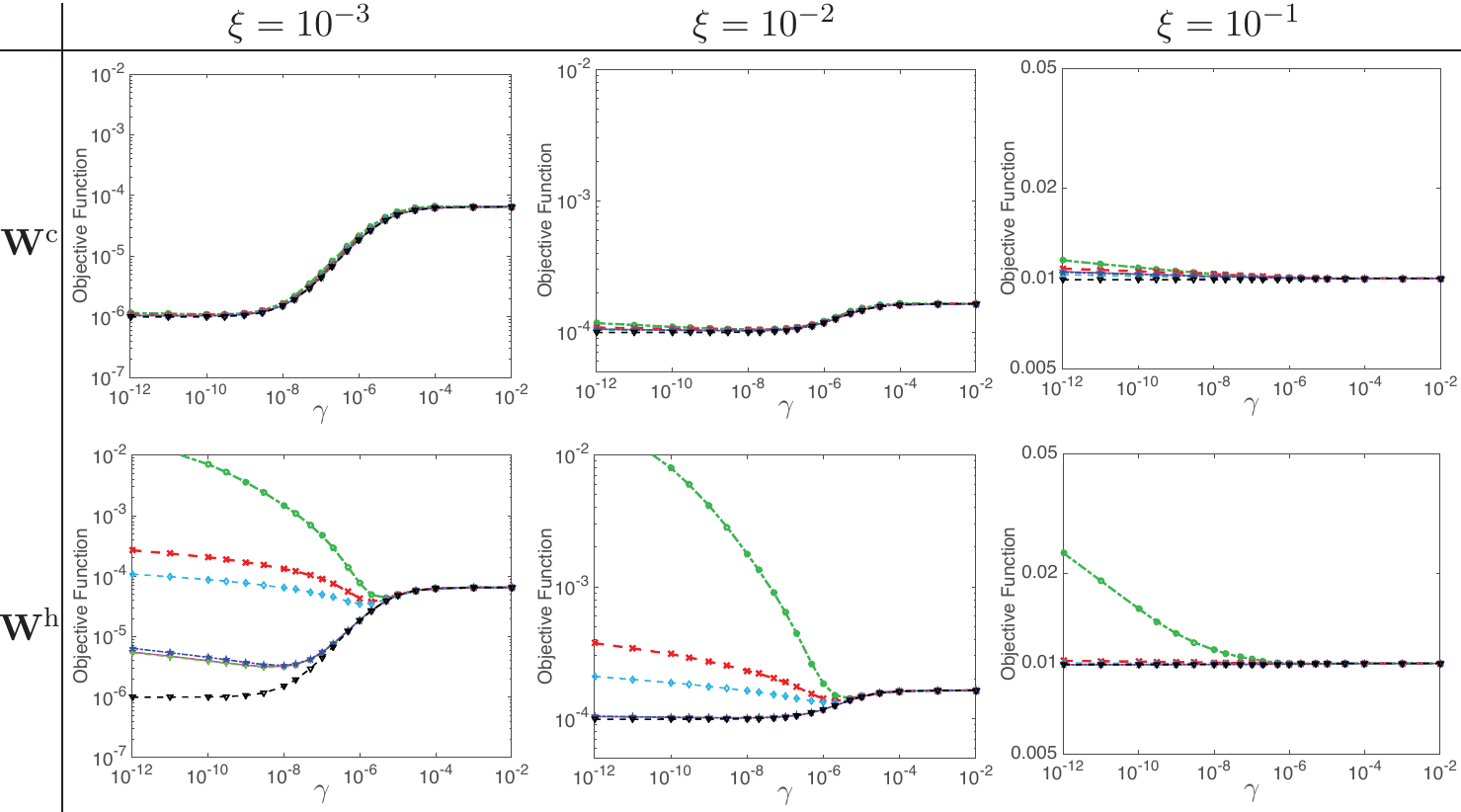}
		\vspace{-4mm}
	\end{center}
	\caption{An empirical study of classical and Hessian sketch from the optimization perspective.
		The $x$-axis is the regularization parameter $\gamma$ (log scale);
		the $y$-axis is the objective function values (log scale).
		Here $\xi$ is the standard deviation of the Gaussian noise added to the response.}
	\label{fig:obj_nb2}
\end{figure}

\subsection{Sketched MRR: Optimization Perspective} \label{sec:experiment1:opt}

We seek to empirically verify Theorems~\ref{thm:optimization:classical} and \ref{thm:optimization:hessian}
which study classical and Hessian sketches, respective, from the optimization perspective.
In Figure~\ref{fig:obj_nb2}, we plot the objective function value
$f (\w) = \frac{1}{n} \|\X \w - \y\|_2^2 + \gamma \|\w\|_2^2$
against $\gamma$, under different settings of $\xi$ (the standard deviation of the Gaussian noise added to the response).
The black curves correspond to the optimal solution $\w^\star$;
the color curves correspond to classical or Hessian sketch with different sketching methods.
The results verify our theory: 
the objective value of the solution from the classical sketch, $\w^{\textrm{c}}$, is always close to optimal; and
the objective value of the solution from the Hessian sketch, $\w^{\textrm{h}}$, is much worse than the optimal value when $\gamma$ is small 
and $\y$ is mostly in the column space of $\X$.


\begin{figure*}[t]
	\begin{center}
		\includegraphics[width=0.9\textwidth]{figure/legend1.pdf}\\
		\includegraphics[width=0.96\textwidth]{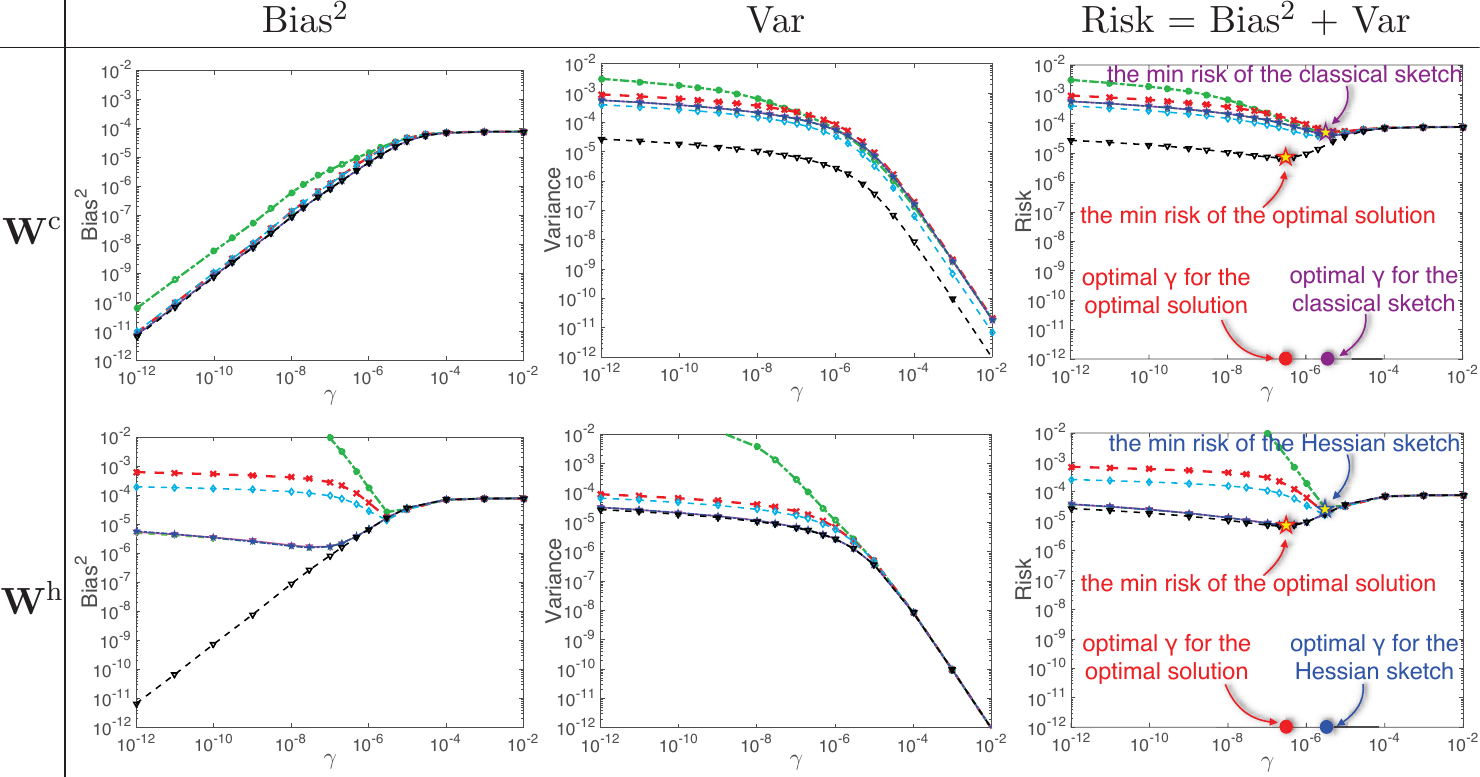}
		\vspace{-4mm}
	\end{center}
  \caption{An empirical study of classical sketch and Hessian sketch from the statistical perspective.
		The $x$-axis is the regularization parameter $\gamma$ (log-scale);
		the $y$-axes are respectively bias$^2$, variance, and risk (log-scale).
    We indicate the minimum risks and optimal choice of $\gamma$ in the plots.}
	\label{fig:risk_nb2}
\end{figure*}

\subsection{Sketched MRR: Statistical Perspective} \label{sec:experiment1:stats}

In Figure~\ref{fig:risk_nb2}, we plot the analytical expressions for the squared bias, variance, and risk
stated in Theorem~\ref{thm:bias_var_decomp} against the regularization parameter $\gamma$. 
Because these expressions involve the random sketching matrix $\S$, 
we randomly generate $\S$, repeat this procedure 10 times, 
and report the average of the computed squared biases, variances, and risks.
We fix $\xi = 0.1$ (the standard deviation of the Gaussian noise). The results of this experiment match our theory:
classical sketch magnified the variance,
and Hessian sketch increased the bias.
Even when $\gamma$ is fine-tuned, the risks of classical and Hessian sketch can be
much higher than those of the optimal solution.
Our experiment also indicates that classical and Hessian sketch require 
setting $\gamma$ larger than the best regularization parameter for the optimal solution $\W^\star$.

Classical and Hessian sketch do not outperform each other in terms of the risk.
When variance dominates bias, Hessian sketch is better in terms of the risk;
when bias dominates variance, classical sketch is preferable.
In the experiment yielding Figure~\ref{fig:risk_nb2},
Hessian sketch delivers lower risks than classical sketch.
This is not generally true: if we use a smaller $\xi$ (the standard deviation of the Gaussian noise), so that the variance is dominated by bias,
then classical sketch results in lower risks than Hessian sketch.


\begin{figure}
	\begin{center}
		\includegraphics[width=0.7\textwidth]{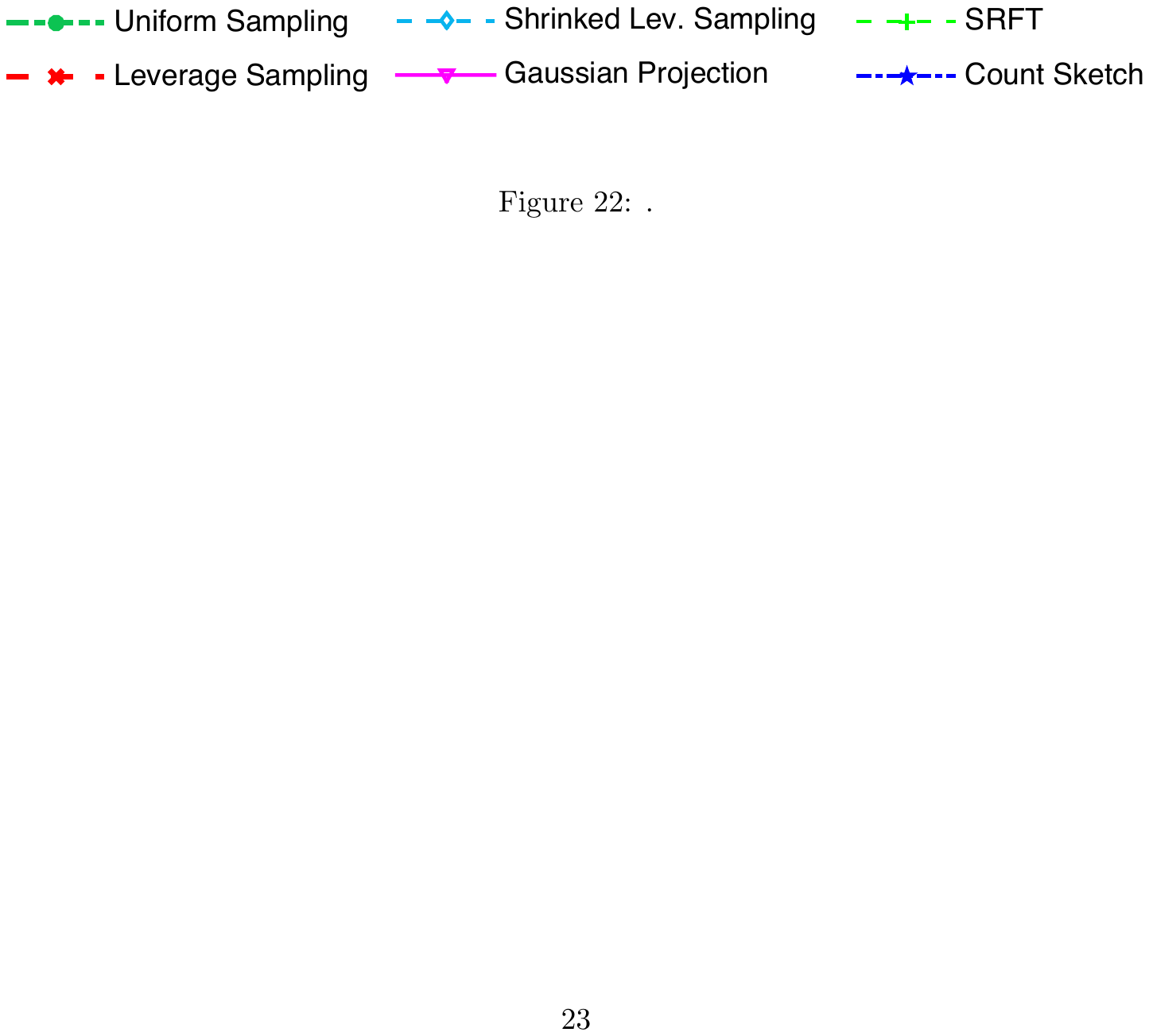}
    \subfigure[\textsf{Classical sketch with model averaging.}]{\includegraphics[width=0.49\textwidth]{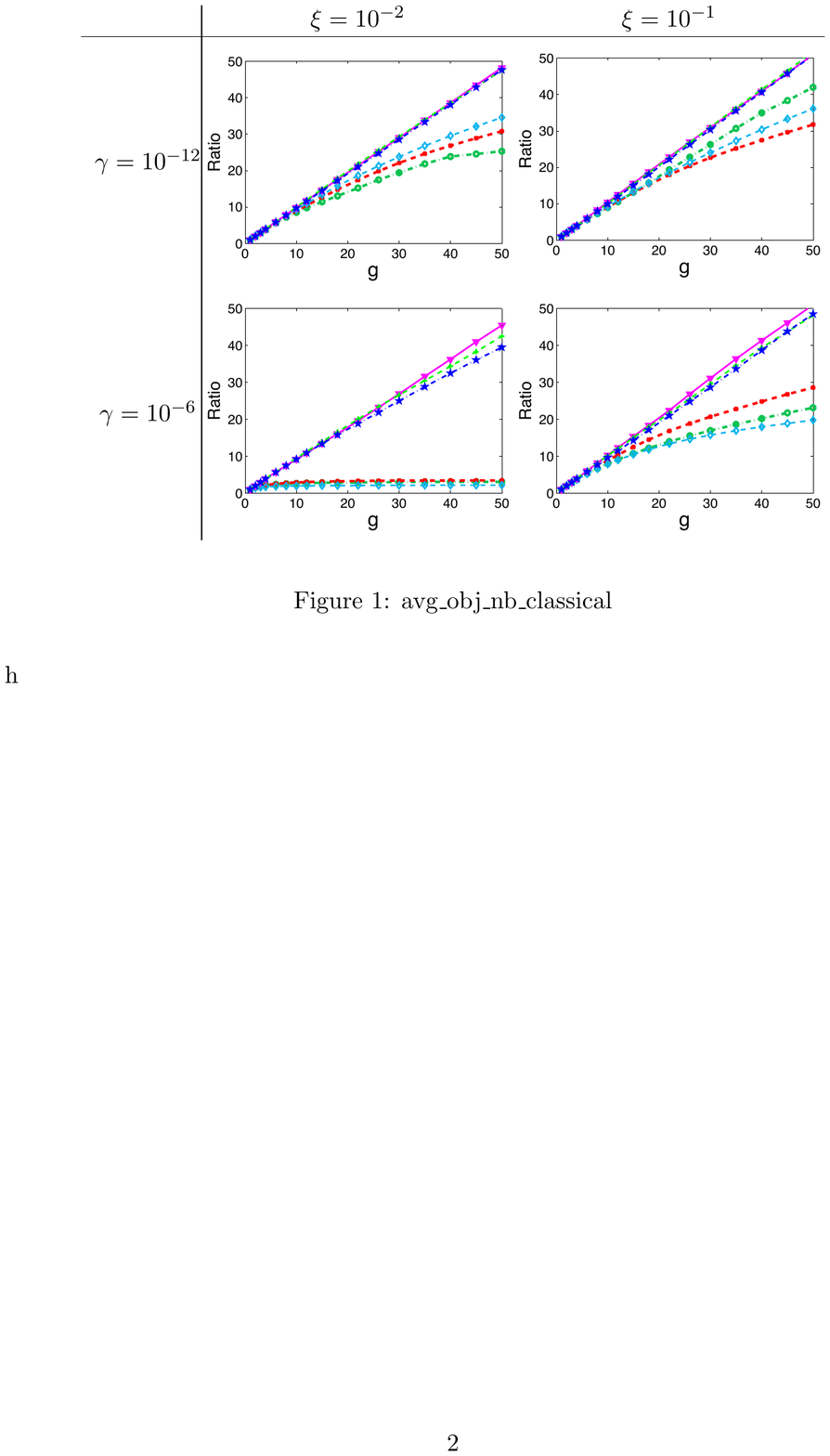}\label{fig:avg_obj_nb_classical}}
    \subfigure[\textsf{Hessian sketch with model averaging.}]{\includegraphics[width=0.49\textwidth]{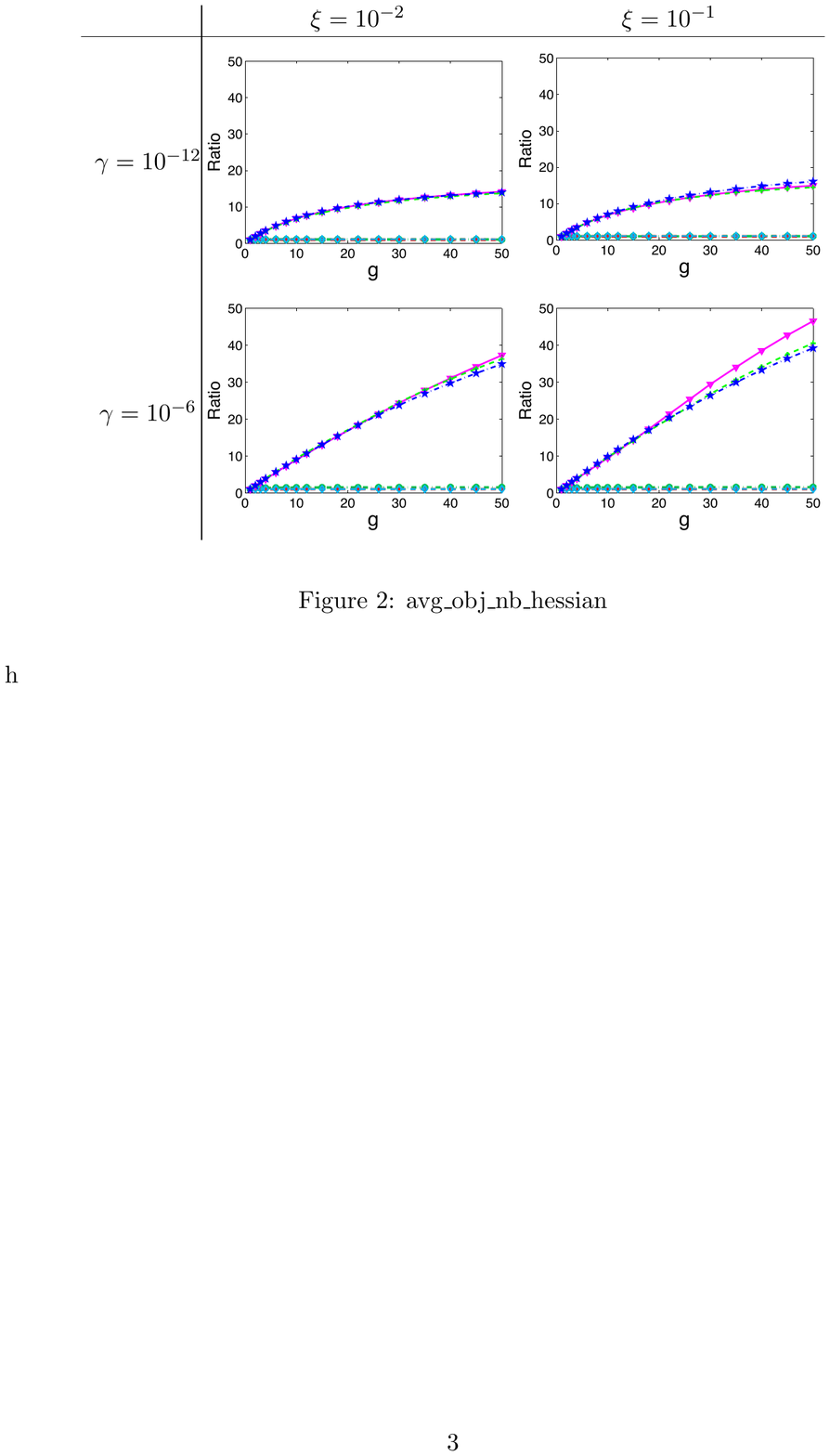}\label{fig:avg_obj_nb_hessian}}
	\end{center}
  \caption{An empirical study of model averaging from the optimization perspective.
    The $x$-axis is $g$, i.e., the number of models that are averaged.
		In \ref{fig:avg_obj_nb_classical}, 
		the $y$-axis is the ratio (log-scale) defined in \eqref{eq:ratio_obj_avg_tilde}.
		In \ref{fig:avg_obj_nb_hessian}, 
		the $y$-axis is the ratio (log-scale) defined in \eqref{eq:ratio_obj_avg_hat}.
    Here $\gamma$ is the regularization parameter and $\xi$ is the standard deviation of the Gaussian noise.}
\end{figure}

\subsection{Model Averaging: Optimization Objective} \label{sec:experiment1:opt_avg}

We consider different noise levels by setting $\xi = 10^{-2}$ or $10^{-1}$, where $\xi$ is defined in Section~\ref{sec:experiment1:setting}
as the standard deviation of the Gaussian noise in the response vector $\y$.
We calculate the objective function values $f ({\w}^{\textrm{c}}_{[g]})$ and $f ({\w}^{\textrm{h}}_{[g]})$
for different settings of $g$, $\gamma$.
We use different methods of sketching at the fixed sketch size $s=5,000$.

Theorem~\ref{thm:optimization:classical_avg} indicates that for large $s$,
e.g., Gaussian projection with $s = \tilde{\OM} \big( \frac{ d }{\epsilon}  \big)$,

\begin{eqnarray} \label{eq:experiment:avg:obj_classical1}
f \big({\w}^{\textrm{c}}_{[g]} \big) - f \big(\w^\star \big)
& \leq & \beta \big( \tfrac{\epsilon}{g} 
+ \beta^2 \epsilon^2 \big) \, f (\w^\star )  ,
\end{eqnarray}
where $\beta = \frac{\|\X\|_2^2}{\|\X\|_2^2 + n \gamma} \leq 1$.
In Figure~\ref{fig:avg_obj_nb_classical}
we plot the ratio
\begin{eqnarray} \label{eq:ratio_obj_avg_tilde}
\tfrac{f ({\w}^{\textrm{c}}_{[1]}) - f (\w^\star) }{f ({\w}^{\textrm{c}}_{[g]})  - f (\w^\star)} 
\end{eqnarray}
against $g$.
Rapid growth of this ratio indicates that model averaging is highly effective.
The results in Figure~\ref{fig:avg_obj_nb_classical}
indicate that model averaging significantly improves the accuracy
as measured by the objective function value.
For the three random projection methods, the growth rate of this ratio is almost linear in $g$.
In Figure~\ref{fig:avg_obj_nb_classical},
we observe that the regularization parameter $\gamma$ affects the ratio \eqref{eq:ratio_obj_avg_tilde}.
The ratio grows faster when $\gamma = 10^{-12}$ than when $\gamma = 10^{-6}$.
This phenomenon is not explained by our theory.

\begin{figure}
	\begin{center}
		\includegraphics[width=0.9\textwidth]{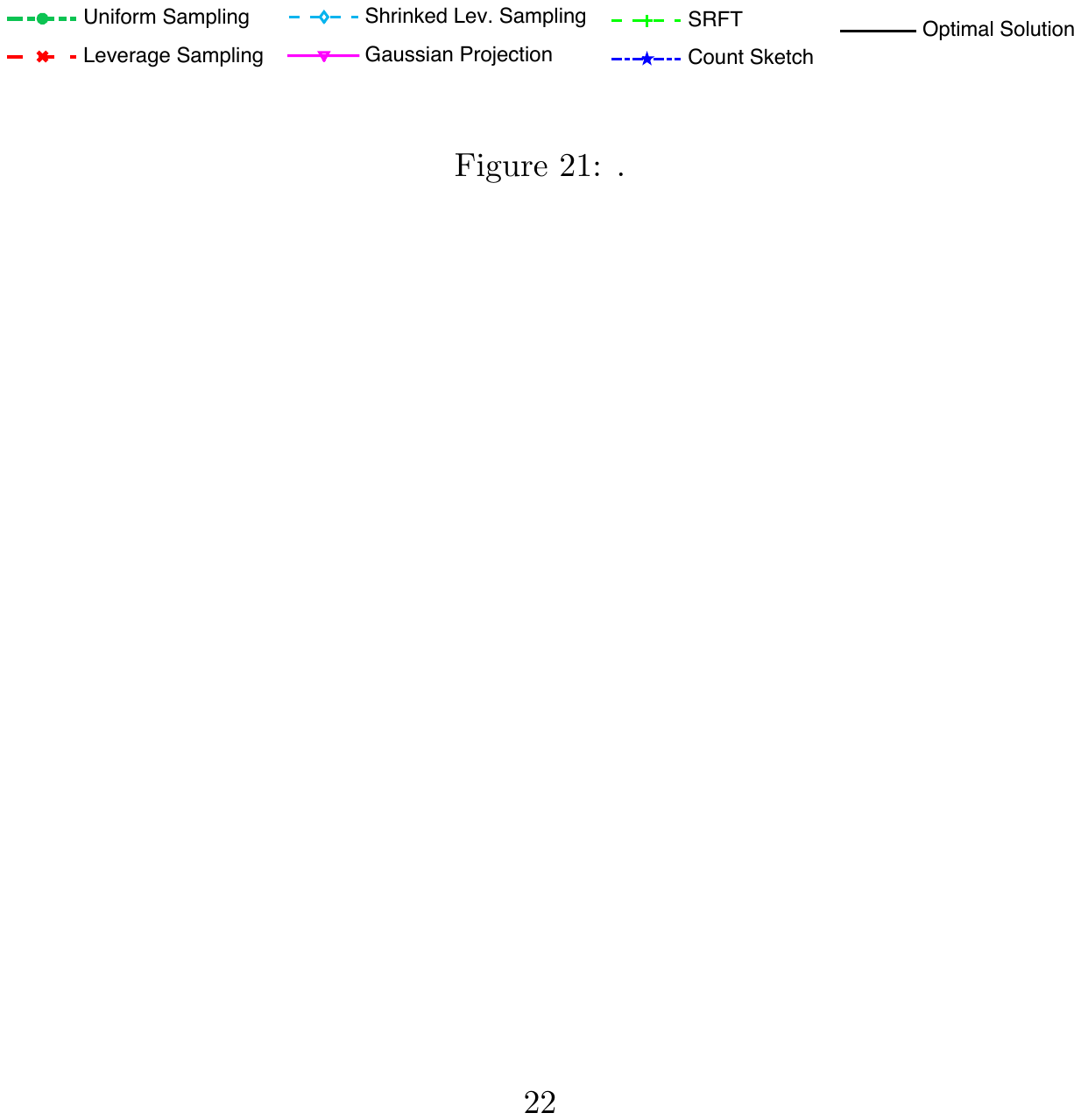}
		\subfigure[\textsf{The variance $\var ({\w}^{\textrm{c}}_{[g]} )$.}]{\includegraphics[width=0.49\textwidth]{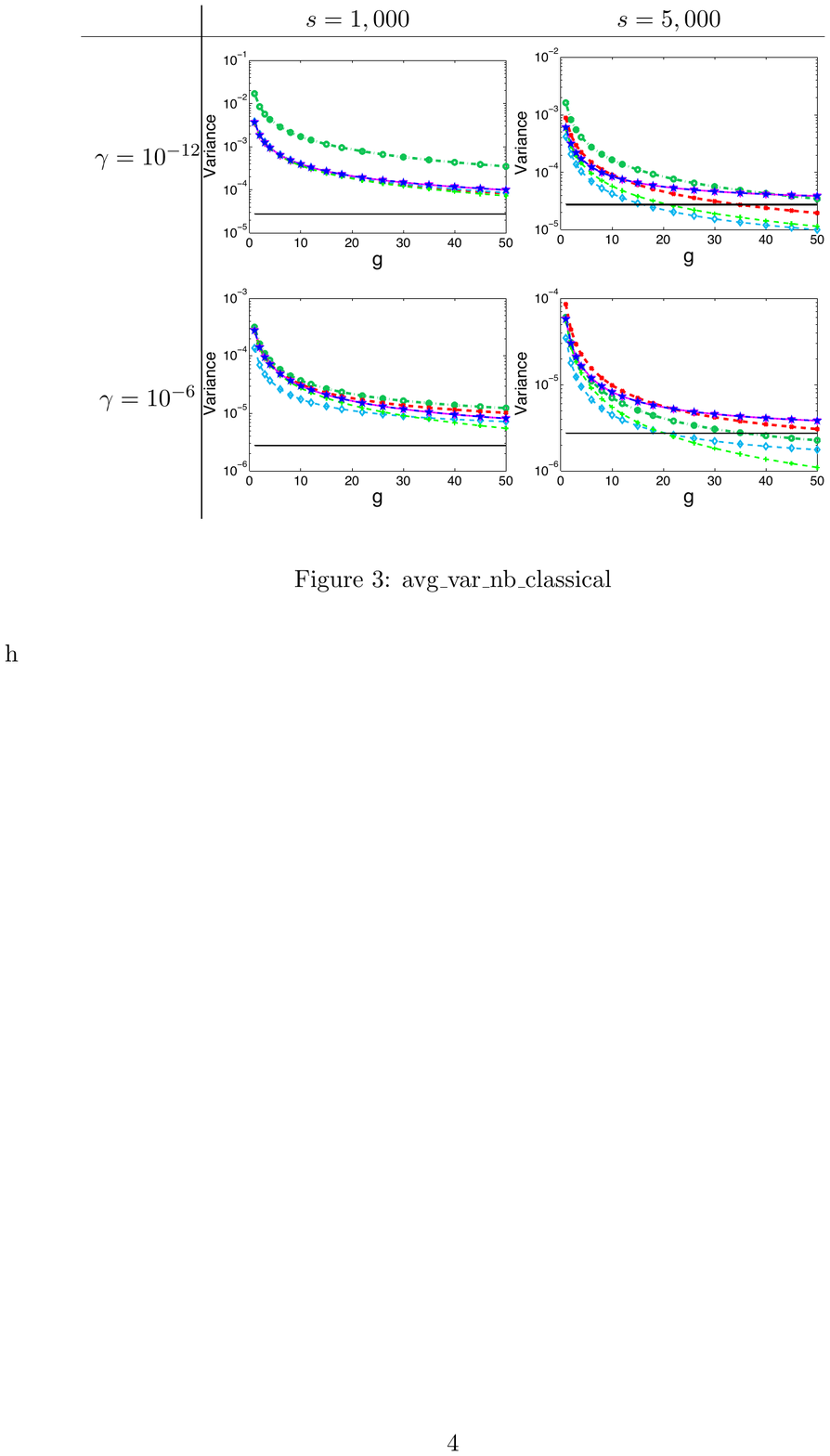}\label{fig:avg_var_nb_classical}}
		\subfigure[\textsf{The ratio 
			$\tfrac{\var ({\w}^{\textrm{c}}_{[1]} )}{\var ({\w}^{\textrm{c}}_{[g]} )}$.}]{\includegraphics[width=0.49\textwidth]{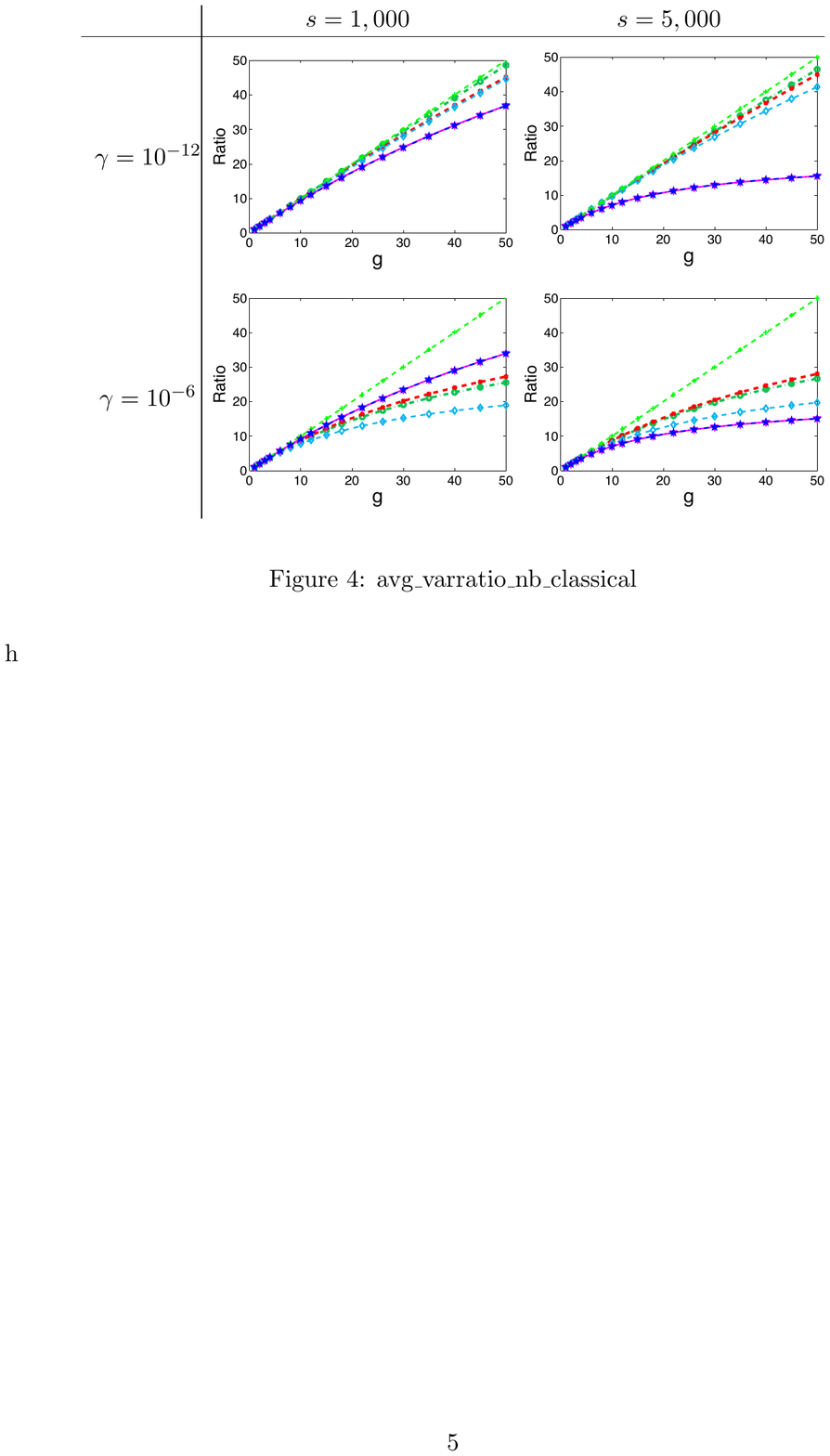}\label{fig:avg_varratio_nb_classical}}
	\end{center}
  \caption{An empirical study of the variance of classical sketch with model averaging.
    The $x$-axis is $g$, i.e., the number of models that are averaged.
		In \ref{fig:avg_var_nb_classical}, 
		the $y$-axis is the variance $\var ({\w}^{\textrm{c}}_{[g]} )$
    (log scale) defined in Theorem~\ref{thm:bias_var_decomp_avg}.
		In \ref{fig:avg_varratio_nb_classical}, 
		the $y$-axis is the ratio 
		$\tfrac{\var ({\w}^{\textrm{c}}_{[1]} )}{\var ({\w}^{\textrm{c}}_{[g]} )}$.
    Here $\gamma$ is the regularization parameter and $s$ is the sketch size.}
\end{figure}

Theorem~\ref{thm:optimization:hessian_avg} shows that
for large sketch size $s$,
e.g., Gaussian projection with $s = \tilde{\OM} \big( \frac{d }{\epsilon }  \big)$,

\begin{eqnarray*}
	f ({\w}^{\textrm{h}}) - f (\w^\star )
	& \leq & \beta^2 \, \Big( \tfrac{\epsilon}{g} + {\epsilon^2 } \Big) \,
	\Big( \tfrac{\|\y \|_2^2}{n}  -  f (\w^\star ) \Big)  ,
\end{eqnarray*}
where $\beta = \frac{\|\X\|_2^2}{\|\X\|_2^2 + n \gamma} \leq 1$.
In Figure~\ref{fig:avg_obj_nb_hessian},
we plot the ratio
\begin{eqnarray} \label{eq:ratio_obj_avg_hat}
\tfrac{f ({\w}^{\textrm{h}}_{[1]}) - f (\w^\star) }{f ( {\w}^{\textrm{h}}_{[g]})  - f (\w^\star)} 
\end{eqnarray}
against $g$.
Rapid growth of this ratio indicates that model averaging is highly effective.
Our empirical results indicate that the growth rate of this ratio is moderately rapid for very small $g$ and very slow for large $g$.


\subsection{Model Averaging: Statistical Perspective} \label{sec:experiment1:stats_avg}

We empirically study model averaging from the statistical perspective.
We calculate the bias and variance $\bias ( {\w}^{\star} )$, $\var ({\w}^{\star} )$ of the optimal MRR solution 
according to Theorem~\ref{thm:bias_var_decomp}
and the bias and variance $\bias ( {\w}^{\textrm{c}}_{[g]} )$, $\var ({\w}^{\textrm{c}}_{[g]} )$ 
and $\bias ( {\w}^{\textrm{h}}_{[g]} )$, $\var ({\w}^{\textrm{h}}_{[g]} )$ of, respectively, the model averaged classical
sketch solution and the model averaged Hessian sketch solution according to Theorem~\ref{thm:bias_var_decomp_avg}.

\begin{figure*}
	\begin{center}
		\includegraphics[width=0.7\textwidth]{figure/legend3.pdf}\\
		\includegraphics[width=0.95\textwidth]{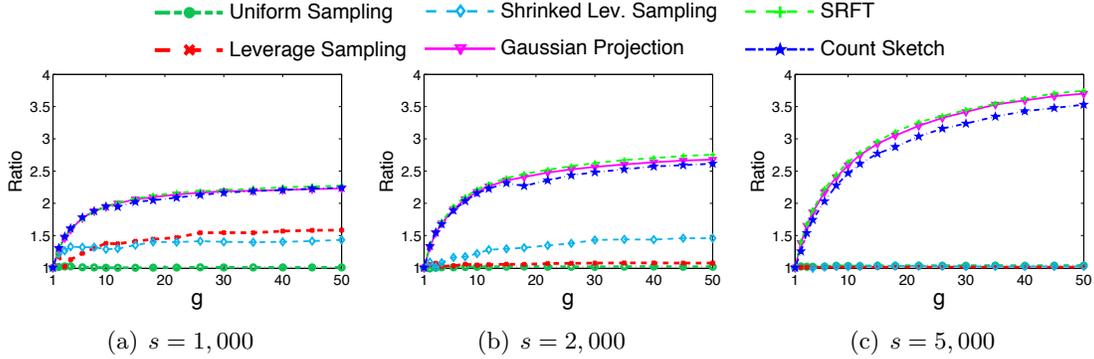}
	\end{center}
  \caption{An empirical study of the bias of Hessian sketch with model averaging.
    The $x$-axis is $g$, the number of models being averaged;
		the $y$-axis is the ratio \eqref{eq:experiment:avg:hessian:ratio}.}
	\label{fig:avg_biasratio_nb_hessian}
\end{figure*}

\subsubsection{Classical Sketch}

Theorem~\ref{thm:biasvariance:classical_avg} indicates that
for large enough $s$, e.g., Gaussian projection with $s = \tilde\OM \big( \frac{d}{\epsilon^2} \big)$,
with high probability
\begin{align*}
\tfrac{\bias ( {\w}^{\textrm{c}}_{[g]} )}{\bias (\w^\star )}
\leq 1 + \epsilon 
\qquad \textrm{and} \qquad
\tfrac{\var ( {\w}^{\textrm{c}}_{[g]} ) }{ \var (\w^\star ) }
\leq \tfrac{n}{s} \Big( \sqrt{\tfrac{1 + \epsilon }{h} } 
+ \epsilon \Big)^2,
\end{align*}
where $h = \min \{ g, \, \Theta (\frac{n}{s}) \}$.
This result implies that model averaging decreases the variance of classical sketch without significantly changing the bias. 
We conduct experiments to verify this point.

In Figure \ref{fig:avg_var_nb_classical} we plot the variance $\var ({\w}^{\textrm{c}}_{[g]} )$ against $g$;
the variance of the optimal solution $\w^\star$ is depicted for comparison.
Clearly, the variance drops as $g$ grows.
In particular, when $s$ is big ($s=5,000$) and $g$ exceeds $\frac{n}{s}$
($=\frac{100,000}{5,000} = 20$),
$\var ({\w}^{\textrm{c}}_{[g]} )$ can be even lower than $\var (\w^\star )$.

To more clearly decrease the impact of model averaging on the variance, 
in Figure~\ref{fig:avg_varratio_nb_classical} we plot the ratio $\frac{\var ({\w}^{\textrm{c}}_{[1]} )}{\var ({\w}^{\textrm{c}}_{[g]} )}$ against $g$.
According to Theorem~\ref{thm:biasvariance:classical_avg},
this ratio grows linearly in $g$ when $s$ is at least $\tilde{\OM} (d g)$, and otherwise is sublinear in $g$.
This claim is verified by the empirical results in Figure~\ref{fig:avg_varratio_nb_classical}.

When $\bias ({\w}^{\textrm{c}}_{[g]} )$ is plotted as a function of $g$, the curves are almost horizontal,
indicating that, as expected, {\it the bias is insensitive to the number of models $g$}.
We do not show such plots because these nearly horizontal curves are not interesting.

\subsubsection{Hessian Sketch}

Theorem~\ref{thm:biasvariance:hessian_avg} indicates that for large enough $s$, e.g., Gaussian projection with
$s = \tilde\OM \big( \frac{d}{\epsilon^2} \big)$,
the inequalities
\begin{align*}
\tfrac{\bias ( \w^{\textrm{h}}_{[g]} )}{\bias (\w^\star ) }
\; \leq \;
1 + \epsilon + \Big( \tfrac{\epsilon }{ \sqrt{g}} + \epsilon^2  \Big) \tfrac{\| \X \|_2^2 }{n \gamma } 
\qquad \textrm{and} \qquad
\tfrac{\var ( {\w}^{\textrm{h}}_{[g]} ) }{ \var (\w^\star ) }
\; \leq \;
1+\epsilon
\end{align*}
hold with high probability.
That is, model averaging improves the bias without affecting the variance.
The bound
\[
\tfrac{\bias ( {\w}^{\textrm{h}}_{[g]} ) - \bias (\w^\star ) }{\bias (\w^\star ) }
\; \leq \; \epsilon + \Big( \tfrac{\epsilon }{ \sqrt{g} } + \epsilon^2  \Big)
\tfrac{\| \X \|_2^2 }{n \gamma }
\]
indicates that if $n \gamma$ is much smaller than $\| \X \|_2^2$
and $\epsilon \leq \frac{1}{ \sqrt{g} }$, or equivalently, $s $ is at least $ \tilde{\OM} (d g)$,
then the ratio is proportional to $\frac{\epsilon}{ \sqrt{g} }$.

To verify Theorem~\ref{thm:biasvariance:hessian_avg}, we
set $\gamma$ very small---$\gamma = 10^{-12}$---and vary $s$ and $g$.
In Figure~\ref{fig:avg_biasratio_nb_hessian} we plot the ratio
\begin{small}
\begin{eqnarray} \label{eq:experiment:avg:hessian:ratio}
\tfrac{\bias ( {\w}^{\textrm{h}}_{[1]} ) - \bias (\w^\star )}{\bias ( {\w}^{\textrm{h}}_{[g]} ) - \bias (\w^\star ) } ,
\end{eqnarray}
\end{small}%
by fixing $\gamma = 10^{-12}$ and varying $s$ and $g$.
The theory indicates that for large sketch size $s = \tilde{\OM} (d g^2)$, this ratio should grow nearly linearly in $g$.
Figure~\ref{fig:avg_biasratio_nb_hessian} shows that only for large $s$ and very small $g$, the growth is near linear in $g$;
this verifies our theory.

When we similarly plot $\var \big( {\w}^{\textrm{h}}_{[g]} \big)$ against $g$, we
observe that $\var \big( {\w}^{\textrm{h}}_{[g]} \big)$ remains nearly unaffected as $g$ grows from 1 to 50.
Since the curves of the variance against $g$ are almost horizontal lines,
we do not show this plot in the paper.


\begin{figure}[t]
	\begin{center}
		\includegraphics[width=0.5\textwidth]{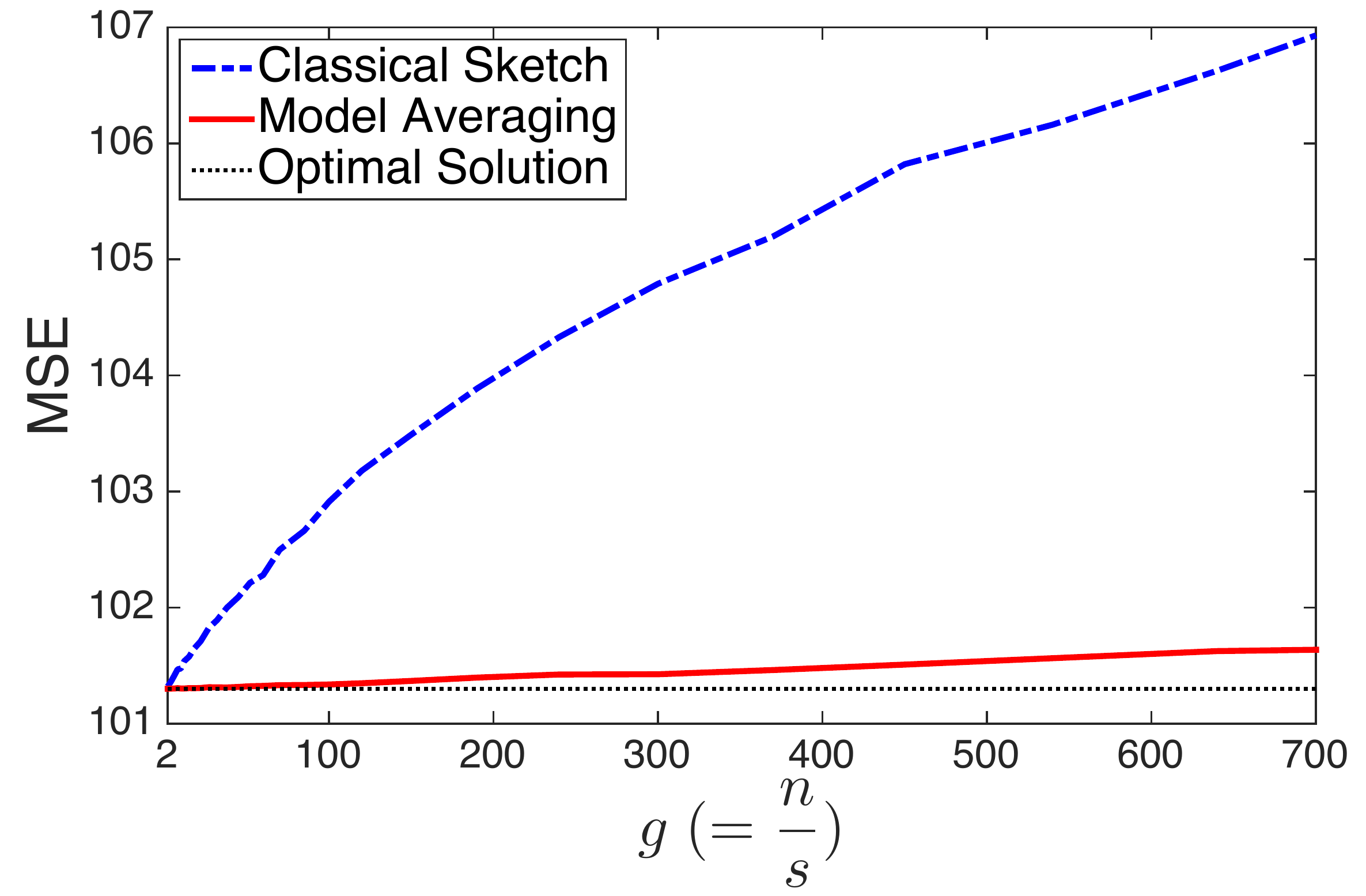}
		\vspace{-5mm}
	\end{center}
  \caption{Prediction performance of classical sketch with and without model averaging on the Year Prediction data set.
    The $x$-axis is $g$, the number of data partitions, and 
		the $y$-axis is the mean squared error (MSE) on the test set.}
	\label{fig:realdata_mse}
	\vspace{-5mm}
\end{figure}

\section{Model Averaging Experiments on Real-World Data} \label{sec:experiments_real}

In Section~\ref{sec:introduction} we mentioned that
in the distributed setting where 
the feature-response pairs $(\x_1, \y_1) , \cdots , (\x_n, \y_n ) \in \RB^{d\times m}$
are randomly and uniformly partitioned across $g$ machines,\footnote{If the samples are i.i.d., then any deterministic partition is essentially a uniformly randomly distributed partition. Otherwise, we can invoke a \textsf{Shuffle} operation, which is supported by systems such as Apache Spark~\citep{zaharia2010spark}, to make the partitioning uniformly randomly distributed.}
classical sketch with model averaging requires only one
round of communication, and is therefore a communication-efficient algorithm that can be 
used to: (1) obtain an approximate solution of the MRR problem with risk comparable to
a batch solution, and (2) obtain a low-precision solution of the MRR optimization problem
that can be used as an initializer for more communication-intensive optimization algorithms.
In this section, we demonstrate both applications.

We use the Million Song Year Prediction data set,
which has $ 463,715$ training samples and $ 51,630$ test samples
with $90$ features and one response.
We normalize the data by shifting the responses to have zero mean and scaling
the range of each feature to $[-1, 1]$.
We randomly partition the training data into $g$ parts, which amounts to
uniform row selection with sketch size $s = \frac{n}{g}$.

\subsection{Prediction Error}

We tested the prediction performance of sketched ridge regression
by implementing classical sketch with model averaging in PySpark \citep{zaharia2010spark}.\footnote{The code is 
	available at https://github.com/wangshusen/SketchedRidgeRegression.git}
We ran our experiments using PySpark in local mode; the experiments
proceeded in three steps:
(1) use five-fold cross-validation to determine the regularization parameter $\gamma$;
(2) learn the model $\w$ using the selected $\gamma$;
and (3) use $\w$ to predict on the test set and record the mean squared errors (MSEs).
These steps map cleanly onto the Map-Reduce programming model used by PySpark.

In Figure~\ref{fig:realdata_mse}, we plot the test MSE against $g=\frac{n}{s}$.
As $g$ grows, the sketch size $s = \frac{n}{g}$ decreases,
so the performance of classical sketch deteriorates.
However classical sketch with model averaging always has test MSE comparable to the optimal solution.

\subsection{Optimization Error}

We mentioned earlier that classical sketch with or without model averaging
can be used to initialize optimization algorithms for solving MRR problems.
If $\w$ is initialized with zero-mean random variables or deterministically with zeros, then
$\EB \big[ \|\w - \w^\star \|_2 / \|\w^\star\|_2 \big] \geq 1$.
Any $\w$ with the above ratio substantially smaller than $1$ provides a better initialization.
We implemented classical sketch with and without model averaging
in Python and calculated the above ratio on the training set of the Year Prediction data set; 
to estimate the expectation, we repeated the procedure $100$ times and report the average of the ratios.

\begin{figure}[t]
	\begin{center}
    \subfigure[Classical sketch]{
			\includegraphics[width=0.45\textwidth]{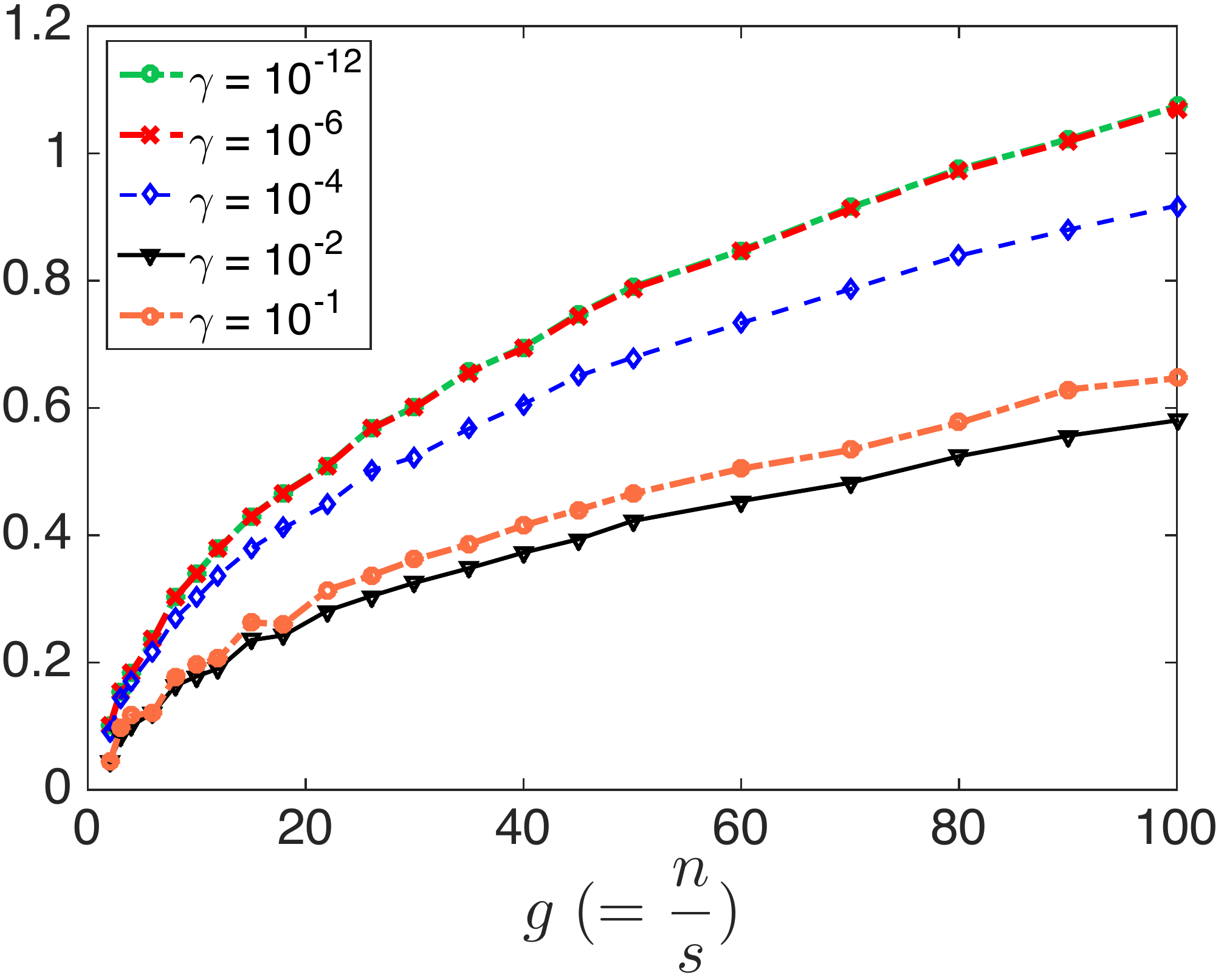}}~~~
    \subfigure[Classical sketch with model averaging]
		{\includegraphics[width=0.45\textwidth]{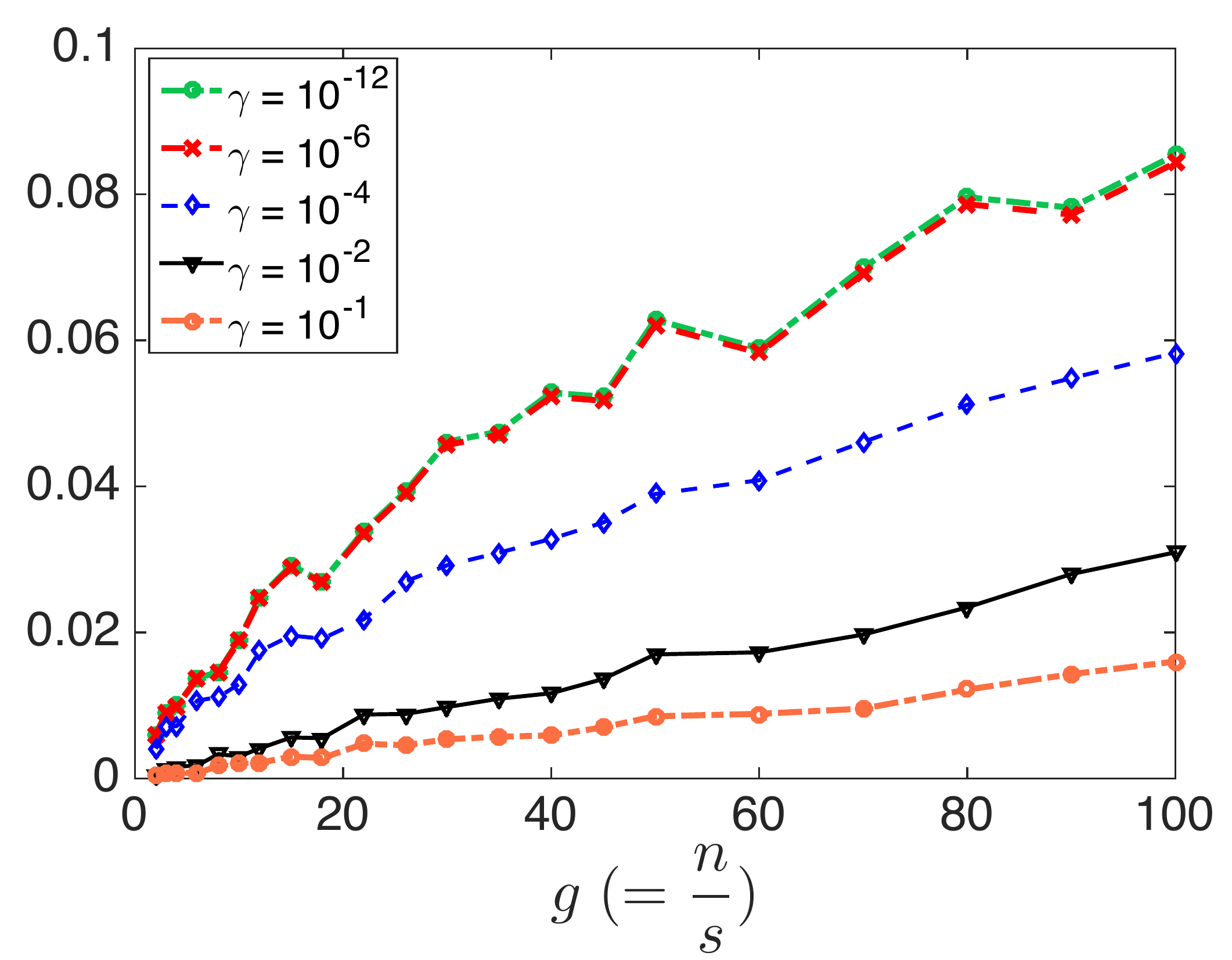}}
		\vspace{-5mm}
	\end{center}
  \caption{Optimization performance of classical sketch with and without model averaging.
    The $x$-axis is $g$, the number of data partitions, and 
		the $y$-axis is the ratio $\frac{\|\w - \w^\star \|_2}{\|\w^\star\|_2}$.}
	\label{fig:realdata_dist}
	\vspace{-5mm}
\end{figure}

In Figure \ref{fig:realdata_dist}, we plot the average of the ratio
$\frac{\|\w - \w^\star \|_2}{\|\w^\star\|_2}$ against $g$ for different settings of the regularization parameter $\gamma$.
Clearly, classical sketch does not give a good initialization unless $g$ is small
(equivalently, the sketch size $s = \frac{n}{g}$ is large).
In contrast, the averaged solution is always close to $\w^\star$.


\section{Sketch of Proof}\label{sec:proofsketch}

In this section, we outline the proofs of our main results.
The complete details are provided in the appendix.
Section~\ref{sec:proofsketch:sketch} recaps several relevant properties of matrix sketching.
Section~\ref{sec:proofsketch:sketch_avg} establishes certain properties of averages of sketches;
these results are used to analyze the application of model averaging to the MRR problem. 
Sections~\ref{sec:proofsketch:opt} to \ref{sec:proofsketch:stat_avg}
provide key structural results on sketched solutions to the MRR problem constructed with or without model averaging.

Our main results in Section~\ref{sec:main}
(Theorems~\ref{thm:optimization:classical}, \ref{thm:optimization:hessian},
\ref{thm:biasvariance:classical}, \ref{thm:biasvariance:hessian},
\ref{thm:optimization:classical_avg}, \ref{thm:optimization:hessian_avg},
\ref{thm:biasvariance:classical_avg}, and~\ref{thm:biasvariance:hessian_avg})
follow directly from the relevant properties of matrix sketching and the structural results for solutions to the sketched MRR problem.
Table~\ref{tab:proofsketch} summarizes the dependency relationships among these theorems.
For example, Theorem~\ref{thm:optimization:classical}, 
which studies classical sketching from the optimization perspective,
is one of our main theorems and is proven 
using Theorems~\ref{thm:properties} and \ref{thm:optimization:wtilde}.

\begin{table}[!h]\setlength{\tabcolsep}{0.3pt}
  \caption{An overview of our results and their dependency relationships.}
	\label{tab:proofsketch}
	\begin{center}
		\begin{small}
			\begin{tabular}{c c c c}
				\hline
				~~~{\bf Main Theorems}~~~ &~~~{\bf Solution}~~~
        &~~~{\bf Perspective}~~~&~~~{\bf Prerequisites}~~~ \\
				\hline
				~~~Theorem~\ref{thm:optimization:classical}~~~ 
				&~~~classical~~~ 
				&~~~optimization~~~
				& ~~~Theorems~\ref{thm:properties} and \ref{thm:optimization:wtilde}~~~ \\
				~~~Theorem~\ref{thm:optimization:hessian}~~~
				& Hessian
				& optimization
				& ~~~Theorems~\ref{thm:properties} and \ref{thm:optimization:what}~~~ \\
				~~~Theorem~\ref{thm:biasvariance:classical}~~~
				& classical
				& statistical
				& ~~~Theorems~\ref{thm:properties}, \ref{thm:biasvariance:wtilde_lower2}, 
				\ref{thm:biasvariance:wtilde}, \ref{thm:biasvariance:wtilde_lower1}~~~ \\
				~~~Theorem~\ref{thm:biasvariance:hessian}~~~
				& Hessian
				& statistical
				& ~~~Theorems~\ref{thm:properties} and \ref{thm:biasvariance:what}~~~ \\
				~~~Theorem~\ref{thm:optimization:classical_avg}~~~
				& ~~~classical, averaging~~~
				& optimization
				& ~~~Theorems~\ref{thm:properties_avg} and \ref{thm:optimization:wtilde_avg}~~~ \\
				~~~Theorem~\ref{thm:optimization:hessian_avg}~~~
				& ~~~Hessian, averaging~~~
				& optimization
				& ~~~Theorems~\ref{thm:properties_avg} and \ref{thm:optimization:what_avg}~~~ \\
				~~~Theorem~\ref{thm:biasvariance:classical_avg}~~~
				& ~~~classical, averaging~~~
				& statistical
				& ~~~Theorems~\ref{thm:properties_avg} and \ref{thm:biasvariance:wtilde_avg}~~~ \\
				~~~Theorem~\ref{thm:biasvariance:hessian_avg}~~~
				& ~~~Hessian, averaging~~~
				& statistical
				& ~~~Theorems~\ref{thm:properties_avg} and \ref{thm:biasvariance:what_avg}~~~ \\
				\hline
			\end{tabular}
		\end{small}
	\end{center}
\end{table}


\subsection{Properties of Matrix Sketching} \label{sec:proofsketch:sketch}

Our analysis of the performance of solutions to the sketched MRR problem draws heavily on the three key properties defined in
Assumption~\ref{assumption:sketching}.
Theorem~\ref{thm:properties} establishes that the six sketching methods considered in this paper
indeed enjoy the three key properties under certain conditions.
Finally, Theorem~\ref{thm:biasvariance:wtilde_lower2} establishes the lower bounds of $\|\S\|_2^2$ that are used to 
prove the lower bounds on the variance of sketched MRR solutions in Theorem~\ref{thm:biasvariance:classical}.

\begin{assumption}\label{assumption:sketching}
	Let $\eta , \epsilon \in (0, 1)$ be fixed parameters.
  Let $\B$ be any fixed matrix of conformal shape, $\rho = \rk (\X)$,
  and $\U \in \RB^{n\times \rho}$ be an orthonormal basis for the column span of $\X$.
  Let $\S \in \RB^{n\times s}$ be a sketching matrix, where $s$ depends on $\eta$ and/or $\epsilon$.
  Throughout this paper, we assume that $\S$ satisfies the following properties with a probability that depends on $s$:
	\begin{enumerate}[label=\textnormal{(\arabic*)}]
		\item[\mylabel{assumption:1:1}{1.1}]
		$\big\|\U^T \S \S^T \U - \I_\rho \big\|_2 \leq {\eta}$
		\quad(Subspace Embedding Property);
		\item[\mylabel{assumption:1:2}{1.2}] 
		$\big\|\U^T \S \S^T \B - \U^T \B \big\|_F^2 \leq {\epsilon} \|\B\|_F^2$
		\quad(Matrix Multiplication Property);
		\item[\mylabel{assumption:1:3}{1.3}] 
		When $s < n$, $\|\S \|_2^2  \leq \frac{\theta n}{s}$ \; for some constant $\theta$
		\quad(Bounded Spectral Norm Property).
	\end{enumerate}%
\end{assumption}

The subspace embedding property requires that sketching preserves the inner products between the columns of a matrix with orthonormal columns.
Equivalently, it ensures that the singular values of
any sketched column-orthonormal matrix are all close to one.
The subspace embedding property implies that, in particular, the squared norm of $\mathbf{S}\mathbf{x}$ is close to that of $\mathbf{x}$ for 
any $n$-dimensional vector in a fixed $\rho$-dimensional subspace. A dimension counting argument suggests that since $\mathbf{S}\mathbf{x}$ is an $s$-dimensional vector, its length must be scaled
by a factor of $\sqrt{\tfrac{n}{s}}$ to ensure that this consequence of the subspace embedding property holds. The bounded spectral norm property requires that the spectral
norm of $\S$ is not much larger than this rescaling factor of $\sqrt{\tfrac{n}{s}}$.

\begin{Remark}
  The first two assumptions were identified in \citep{mahoney2011ramdomized}
  and are the relevant structural conditions that allow 
  strong results from the optimization perspective.
  The third assumption is new, but \citet{ma2014statistical,raskutti2015statistical}
	demonstrated that some sort of additional condition is necessary to obtain strong results from the statistical perspective.
\end{Remark}

\begin{Remark}
	We note that  $\U^T \U = \I_{\rho}$, and thus Assumption~\ref{assumption:1:1} can be expressed in the form of an approximate matrix multiplication bound~\citep{drineas06fastmonte1}.  
	We call it the Subspace Embedding Property since, as first highlighted in~\citet{drineas2006sampling}, 
  this subspace embedding property is the key result necessary to obtain high-quality sketching algorithms for regression and related problems.
\end{Remark}

\begin{table}\setlength{\tabcolsep}{0.3pt}
  \caption{The two middle columns provide an upper bound on the sketch size $s$ needed to satisfy the 
    subspace embedding property and the matrix multiplication property, respectively, under the different 
    sketching modalities considered;
    the right column lists the parameter $\theta$ with which the bounded spectral norm property holds.
    These properties hold with constant probability for the indicated values of $s$.
		Here $\tau$ is defined in \eqref{eq:def:appro_lev} and reflects the quality
    of the approximation of the leverage scores of $\U$;
		$\mu $ is the row coherence of $\U$.
		For Gaussian projection and CountSketch,
    the small-$o$ notation is a consequence of $s = o(n)$.} 
	\label{tab:sketching}
	\begin{center}
		\begin{footnotesize}
			\begin{tabular}{c c c c}
				\hline
				{\bf Sketching}
				&~~{\bf Subspace Embedding}~~
				&~~{\bf Matrix Multiplication}~~
				&~{\bf Spectral Norm}~ \\
				\hline
				Leverage
				&~~~$s = \OM \big( \frac{\tau \rho }{\eta^2 } \log \frac{\rho}{\delta_1} \big)$~~~
				& ~~~$s = \OM \big( \frac{\tau \rho}{\epsilon \delta_2} \big)$~~~
				& ~~~$\theta=\infty$~~~ \\
				Uniform
				&~~~$s = \OM \big( \frac{\mu  \rho }{\eta^2 } \log \frac{\rho}{\delta_1} \big)$~~~
				& ~~~$s = \OM \big(  \frac{\mu \rho}{\epsilon \delta_2} \big)$~~~
				& ~~~$\theta = 1$~~~ \\
				Shrinked Leverage
				&~~~$s = \OM \big(  \frac{\tau \rho }{\eta^2 } \log \frac{\rho }{\delta_1} \big)$~~~
				& ~~~$s = \OM \big( \frac{\tau \rho }{\epsilon \delta_2} \big)$~~~
				& ~~~$\theta = 2$~~~ \\
				SRHT
				&~~~$s = \OM \big( \frac{ \rho + \log n }{\eta^2 }  \log \frac{\rho}{\delta_1 } \big)$~~~
				& ~~~$s = \OM \big(  \frac{\rho + \log n }{\epsilon \delta_2 } \big)$~~~
				& ~~~$\theta = 1$~~~ \\
				~Gaussian Projection~
				&~~~$s = \OM \big( \frac{\rho +  \log (1/\delta_1)}{\eta^2} \big)$~~~
				& ~~~$s = \OM \big( \frac{\rho}{\epsilon \delta_2} \big)$~~~
				& ~~~$\theta = 1+o(1)$ w.h.p.~~~ \\
				CountSketch
				&~~~$s = \OM \big( \frac{\rho^2}{\delta_1 \eta^2} \big)$~~~
				&  ~~~$s = \OM \big( \frac{\rho}{\epsilon \delta_2} \big)$~~~
				& ~~~$\theta = 1+o(1)$ w.h.p.~~~ \\
				\hline
			\end{tabular}
		\end{footnotesize}
	\end{center}
\end{table}

Theorem~\ref{thm:properties} shows that the six sketching methods satisfy the three properties when $s$ is sufficiently large.
In particular, Theorem~\ref{thm:properties} shows that for all the sketching methods
except leverage score sampling,\footnote{If one leverage score approaches zero, 
	then the corresponding sampling probability $p_i$ goes to zero.
  By the definition of $\S$, the scale factor $\frac{1}{\sqrt{s p_i}}$ goes to infinity,
	which makes $\|\S\|_2^2$ unbounded.
	The shinked leverage score sampling avoids this problem and is thus a better choice than the leverage score sampling.}
$\|\S\|_2^2$ has nontrivial upper bound.
This is why Theorems~\ref{thm:biasvariance:classical} and~\ref{thm:biasvariance:classical_avg}
do not apply to leverage score sampling.
This fact can also be viewed as a motivation to use shrinked leverage score sampling.
We prove Theorem~\ref{thm:properties} in Appendix~\ref{sec:sketch:proof}.

\begin{theorem} \label{thm:properties}
	Fix failure probability $\delta$ and error parameters $\eta$ and $\epsilon$;
	set the sketch size $s$ as Table~\ref{tab:sketching}.
	Assumption~\ref{assumption:1:1} is satisfied with probability at least $1-\delta_1$.
	Assumption~\ref{assumption:1:2} is satisfied with probability at least $1-\delta_2$.
  Assumption~\ref{assumption:1:3} is satisfied either surely or with high probability (w.h.p.);
  the parameter $\theta$ is indicated in Table~\ref{tab:sketching}.
\end{theorem}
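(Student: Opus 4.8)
The plan is to verify the three properties of Assumption~\ref{assumption:sketching} one at a time, and within each property to argue method by method through the six sketches of Table~\ref{tab:sketching}. Each property is a separate probabilistic claim, so they may be analyzed independently. The subspace embedding property~\ref{assumption:1:1} and the matrix multiplication property~\ref{assumption:1:2} are by now standard in the randomized linear algebra literature, so for these I would chiefly assemble and specialize known concentration bounds; the genuinely new ingredient is the bounded spectral norm property~\ref{assumption:1:3}, which I would establish directly for each sketch.

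For the subspace embedding property~\ref{assumption:1:1}, I would write $\U^T\S\S^T\U$ as an unbiased estimator of $\U^T\U=\I_\rho$ and bound its deviation in operator norm. For the three sampling schemes this estimator is a sum of $s$ independent rank-one terms, each a rescaled outer product of a sampled row of $\U$, so a matrix Chernoff/Bernstein inequality applies; the controlling quantity is $\max_k \tfrac{l_k}{s\,p_k}$, which is $\OM(\tfrac{\tau\rho}{s})$ for (shrinked) leverage sampling, since $p_k\propto l_k$ up to the factor $\tau$, and $\OM(\tfrac{\mu\rho}{s})$ for uniform sampling, using $\max_k l_k \le \tfrac{\mu\rho}{n}$. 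This explains the two sampling sketch sizes and their dependence on $\tau$ and $\mu$. For Gaussian projection, rotational invariance makes $\U^T\S$ a rescaled $\rho\times s$ Gaussian matrix, so $\U^T\S\S^T\U$ is Wishart and a standard extreme-eigenvalue bound gives the claim once $s=\OM\big((\rho+\log(1/\delta_1))/\eta^2\big)$; for SRHT the Hadamard step flattens the leverage scores so that the subsequent uniform sampling behaves as in the incoherent regime; and for CountSketch I would invoke the second-moment subspace-embedding argument that forces the quadratic dependence $s=\OM(\rho^2/(\delta_1\eta^2))$. The matrix multiplication property~\ref{assumption:1:2} is then easier: a direct second-moment computation yields $\EB\big\|\U^T\S\S^T\B-\U^T\B\big\|_F^2 \le \tfrac{c}{s}\,\|\U\|_F^2\,\|\B\|_F^2 = \tfrac{c\rho}{s}\,\|\B\|_F^2$, and Markov's inequality converts this into the stated bound with probability $1-\delta_2$ once $s=\OM(\rho/(\epsilon\delta_2))$ (with the analogous $\mu$ and $\tau$ factors for uniform and leverage sampling).

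The crux is the bounded spectral norm property~\ref{assumption:1:3}, which I would treat explicitly for each sketch. For any sampling sketch, $\S\S^T$ is diagonal with $k$-th entry $\tfrac{c_k}{s\,p_k}$, where $c_k$ counts how often row $k$ is drawn, so $\|\S\|_2^2=\max_k \tfrac{c_k}{s\,p_k}$; when the sampling probabilities are bounded below---$p_k=\tfrac1n$ for uniform and $p_k\ge\tfrac{1}{2n}$ for shrinked leverage---the factor $\tfrac{1}{s\,p_k}$ is controlled, giving $\theta=1$ and $\theta=2$ respectively, whereas exact leverage sampling admits $p_k$ arbitrarily close to zero when a leverage score vanishes, so no finite $\theta$ exists and $\theta=\infty$. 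For SRHT, $\tfrac{1}{\sqrt n}\D\H_n$ is orthogonal, hence $\|\S\|_2=\big\|\tfrac{1}{\sqrt n}\D\H_n\PP\big\|_2=\|\PP\|_2$, and the scaled uniform sampler satisfies $\|\PP\|_2^2=\tfrac ns$, giving $\theta=1$. For Gaussian projection, $\|\S\|_2^2=\tfrac1s\,\sigma_{\max}(\G)^2$ and the standard Gaussian singular-value bound gives $\sigma_{\max}(\G)\le(1+o(1))\sqrt n$ with high probability when $s=o(n)$, whence $\|\S\|_2^2\le\tfrac ns(1+o(1))$; for CountSketch, $\|\S\|_2^2$ equals the maximum hash-bucket load, whose concentration about its mean $\tfrac ns$ again gives $\theta=1+o(1)$ in the regime $s=o(n)$.

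The main obstacle I anticipate is bookkeeping rather than a single hard inequality: I must keep the two distinct kinds of guarantee (deterministic versus high-probability) straight across methods, pin down the correct leading constants $\theta$, and ensure that the coherence parameter $\mu$ and the leverage-approximation parameter $\tau$ enter the sketch sizes exactly as in Table~\ref{tab:sketching}. The subtlest point is the sampling analysis of property~\ref{assumption:1:3} under sampling with replacement: one must control the collision counts $c_k$ so that the stated $\theta$ for uniform and shrinked leverage sampling holds, and correctly identify which guarantees are deterministic and which hold only with high probability.
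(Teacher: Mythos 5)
Your overall plan matches the paper's proof in structure and, for the genuinely new part (the bounded spectral norm property), in substance. The paper in fact dispatches Assumptions~\ref{assumption:1:1} and~\ref{assumption:1:2} for five of the six sketches by citation to prior work, and proves them only for shrinked leverage score sampling by invoking a sampling lemma (Lemma~\ref{lem:sampling_property}) with the observation that $p_i \geq \|\u_{i:}\|_2^2/(2\tau\rho)$; your from-scratch matrix-Chernoff/second-moment plan is exactly the standard route those citations take, with the correct controlling quantity $\max_k l_k/(s p_k)$, so this difference is one of self-containedness rather than of method. Your treatment of Assumption~\ref{assumption:1:3} --- diagonal structure of $\S\S^T$ for sampling, orthogonality of $\tfrac{1}{\sqrt{n}}\D\H_n$ for SRHT, the extreme singular value bound for Gaussian projection, and the balls-in-bins maximum load for CountSketch --- is the same argument the paper gives.

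The one point you flag but do not resolve is a genuine gap as written. For sampling sketches you correctly get $\|\S\|_2^2 = \max_k c_k/(s p_k)$, but the claimed constants $\theta = 1$ (uniform) and $\theta = 2$ (shrinked leverage) follow only if $\max_k c_k \leq 1$. Under i.i.d.\ sampling with replacement this fails for $s$ comparable to $n$: with $s = n$ uniform draws the maximum multiplicity is $\Theta(\log n/\log\log n)$ with high probability, so $\|\S\|_2^2$ exceeds $\theta n/s$ for any fixed $\theta$. The paper closes this by switching to sampling \emph{without} replacement for this property, in which case $c_k \in \{0,1\}$ and one gets the exact identity $\S^T\S = \tfrac{n}{s}\I_s$ for uniform sampling and SRHT (and, for shrinked leverage, the surely-valid bound $\max_k \tfrac{1}{s p_k} \leq \tfrac{2n}{s}$ from $p_k \geq \tfrac{1}{2n}$). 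You need either to adopt that convention or to restrict to a regime (e.g.\ $s = o(\sqrt{n})$) where collisions are absent with high probability; the latter would weaken the theorem, so the without-replacement route is the one to take.
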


Theorem~\ref{thm:biasvariance:wtilde_lower2} establishes lower bounds on $\|\S\|_2^2$, and will
be applied to prove the lower bound on the variance of the classical sketch.
From Table~\ref{tab:var_wtilde} we see that the lower bound for (shrinked) leverage score sampling
is not interesting, because $\mu$ can be very large.
This is why Theorem~\ref{thm:biasvariance:classical} does not provide a lower bound for  
shrinked leverage score sampling. 
We prove Theorem~\ref{thm:biasvariance:wtilde_lower2} in Appendix~\ref{sec:sketch:proof}.

\begin{table}[!h]\setlength{\tabcolsep}{0.3pt}
  \caption{ Lower bounds on $\vartheta$ for the sketching modalities ($\vartheta$ is defined in Theorem~\ref{thm:biasvariance:wtilde_lower2}).
    The shrinked leverage score sampling is performed using the row leverage scores of a matrix $\X \in \RB^{n \times d}$, and $\mu$
    is the row coherence of $\X$.}
	\label{tab:var_wtilde}
	\begin{center}
		\begin{small}
			\begin{tabular}{c c}
				\hline
				Uniform
				& ~~~$\vartheta = 1$~~~ \\
				Leverage
				& ~~~$\vartheta \geq \frac{1 }{\mu} $~~~ \\
				Shrinked Leverage
				& ~~~$\vartheta \geq \frac{2 }{1 + \mu } $~~~ \\
				SRHT
				& ~~~$\vartheta = 1$~~~ \\
				~~~Gaussian Projection~~~
				& ~~~$\vartheta \geq 1-o(1)$ w.h.p.~~~ \\
				CountSketch
				& ~~~$\vartheta \geq 1-o(1)$ w.h.p.~~~ \\
				\hline
			\end{tabular}
		\end{small}
	\end{center}
\end{table} 

\begin{theorem} [Semidefinite Lower Bound on the Sketching Matrix] \label{thm:biasvariance:wtilde_lower2}
  When $s < n$, $\S^T \S \succeq \frac{\vartheta n }{s} \I_s$ holds either surely or
  with high probability (w.h.p.), where Table~\ref{tab:var_wtilde} provides the applicable
  $\vartheta$ for each sketching method.
\end{theorem}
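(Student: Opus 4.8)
The plan is to reduce the theorem to a lower bound on the smallest eigenvalue of $\S^T \S$ for each of the six sketching ensembles, since $\S^T \S \succeq \frac{\vartheta n}{s}\I_s$ is precisely the statement $\lambda_{\min}(\S^T \S) \geq \frac{\vartheta n}{s}$. I would then treat the ensembles in two groups: those for which $\S^T \S$ can be computed exactly (the three sampling schemes and SRHT), and those for which $\lambda_{\min}(\S^T \S)$ must be controlled by a concentration argument (Gaussian projection and CountSketch).

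First I would handle the exact cases. For uniform, leverage, and shrinked leverage sampling, the $j$-th column of $\S$ is the rescaled basis vector $\frac{1}{\sqrt{s\,p_{i_j}}}\,\e_{i_j}$, where $i_j$ is the $j$-th sampled index. When the $s$ sampled indices are distinct (the without-replacement implementation), $\S^T \S$ is diagonal with entries $\frac{1}{s\,p_{i_j}}$, so
\[
\lambda_{\min}(\S^T \S) \; = \; \min_j \frac{1}{s\, p_{i_j}} \; \geq \; \frac{1}{s \max_i p_i}.
\]
It then remains to bound $\max_i p_i$ using $\sum_i l_i = \rho$ and $\max_i l_i = \frac{\rho\mu}{n}$: uniform sampling gives $\max_i p_i = \frac1n$; leverage sampling, with $p_i = l_i/\rho$, gives $\max_i p_i = \frac{\mu}{n}$; and shrinked leverage sampling, with $p_i = \frac12(l_i/\rho + \frac1n)$, gives $\max_i p_i = \frac{\mu+1}{2n}$. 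Substituting yields $\vartheta = 1$, $\vartheta = \frac1\mu$, and $\vartheta = \frac{2}{1+\mu}$, respectively. For SRHT the Hadamard structure makes the computation exact as well: using $\D^T \D = \I_n$ and $\H_n^T \H_n = n\I_n$,
\[
\S^T \S \; = \; \tfrac1n\, \PP^T \H_n^T \D^T \D \H_n \PP \; = \; \PP^T \PP \; = \; \tfrac{n}{s}\I_s,
\]
where the last identity uses that $\PP$ selects $s$ distinct rows uniformly with the rescaling $\sqrt{n/s}$; hence $\vartheta = 1$ holds surely.

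Next I would treat the two randomized projections. For Gaussian projection, $\S = \frac{1}{\sqrt s}\G$ gives $\S^T\S = \frac1s\G^T\G$, so $\lambda_{\min}(\S^T\S) = \frac1s\,\sigma_{\min}^2(\G)$; the standard non-asymptotic tail bound $\sigma_{\min}(\G) \geq \sqrt n - \sqrt s - t$ for a tall $n\times s$ Gaussian matrix yields $\lambda_{\min}(\S^T\S) \geq \frac{n}{s}\big(1 - \sqrt{s/n} - t/\sqrt n\big)^2 = (1-o(1))\frac ns$ in the regime $s = o(n)$, so $\vartheta \geq 1 - o(1)$ w.h.p. For CountSketch, each of the $n$ rows hashes to a single one of the $s$ buckets, so $\S^T\S = \diag(n_1,\dots,n_s)$ with bucket loads $n_j$ satisfying $\sum_j n_j = n$, whence $\lambda_{\min}(\S^T\S) = \min_j n_j$; a Chernoff bound on each load (mean $n/s$) combined with a union bound over the $s$ buckets gives $\min_j n_j \geq (1-o(1))\frac ns$ w.h.p. whenever $n/s$ grows faster than $\log s$, so again $\vartheta \geq 1 - o(1)$ w.h.p.

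The hard part will be the two concentration cases: the Gaussian bound rests on the smallest-singular-value tail estimate for tall Gaussian matrices (of Davidson--Szarek / Rudelson--Vershynin type), and the CountSketch bound on a balls-into-bins minimum-load concentration, and in both the $1-o(1)$ factor only emerges in the regime $s = o(n)$. A secondary subtlety lies in the sampling schemes: with-replacement sampling can produce repeated indices, which collapses $\lambda_{\min}(\S^T\S)$ to zero, so the clean bound relies on the sampled indices being distinct (without replacement), consistent with the distributed application that motivates the result.
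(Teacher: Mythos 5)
Your proposal is correct and follows essentially the same route as the paper's proof: exact diagonal computation of $\S^T\S$ for the sampling schemes (bounding $\lambda_{\min}$ by $1/(s\,p_{\max})$ with $p_{\max}$ controlled via the coherence), the exact identity $\S^T\S=\frac{n}{s}\I_s$ for SRHT, the smallest-singular-value tail bound for Gaussian projection, and a balls-into-bins minimum-load concentration for CountSketch. Your remark that with-replacement sampling with a repeated index collapses $\lambda_{\min}(\S^T\S)$ to zero is a fair observation about a subtlety the paper's proof glosses over, but it does not change the argument.
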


\begin{Remark} \label{remark:sketch:spectral}
Let $p_1 , \cdots , p_n$ be an arbitrary set of sampling probabilities.
By the definition of the associated sampling matrix $\S \in \RB^{n\times s}$, the non-zero entries of $\S$ can be any of $\frac{1}{\sqrt{s p_i}}$, for $i \in [n]$.

For leverage score sampling, since the smallest sampling probability can be zero or close, and the largest sampling probability can be close to one, $\|\S\|_2^2$ has no nontrivial
upper or lower bound.\footnote{In our application, nontrivial bound means 
	$\|\S\|_2^2$ is of order $\frac{n}{s}$.}
It is because $\min_{i} p_i $ can be close to zero
and $\max_{i} p_i $ can be large (close to one).

For shrinked leverage score sampling, 
because $\min_{i} p_i $ is at least $\frac{1}{2n}$, $\|\S\|_2^2$ has a nontrivial upper bound;
but as in the case of leverage score sampling, since $\max_{i} p_i $ can be large, 
there is no nontrivial lower bound on $\|\S\|_2^2$.
\end{Remark}


\subsection{Matrix Sketching with Averaging} \label{sec:proofsketch:sketch_avg}

Assumptions~\ref{assumption:1:1} and \ref{assumption:1:2} imply that sketching can be used to 
approximate certain matrix products, but what happens if we independently draw $g$ sketches, 
use them to approximate the same matrix product, and then average the $g$ results?
Intuitively, averaging should lower the variance of the approximation without affecting its bias,
and thus provide a better
approximation of the true product.

To justify this intuition formally, let
$\S_1 , \cdots , \S_g \in \RB^{n\times s}$ be sketching matrices
and $\A$ and $\B$ be fixed conformal matrices.
Then evidently
\begin{small}
\[
\frac{1}{g} \sum_{i=1}^g \A^T \S_i \S_i^T \B
\; = \; \A^T \S \S^T \B,
\]
\end{small}%
where $\S = \frac{1}{\sqrt{g}} [\S_1 , \cdots , \S_g ] \in \RB^{n\times gs}$ 
can be thought of as a sketching matrix formed by concatenating the $g$ smaller sketching matrices.
If $\S_1 , \cdots , \S_g$ are all instance of column selection, SRHT, or Gaussian projection sketching matrices, 
then $\S$ is a larger instance of the same type of sketching matrix.\footnote{CountSketch sketching matrices
  does not have this property. If $\S_i \in \RB^{n\times s}$ is a CountSketch matrix,
	then it has only one non-zero entry in each row. In contrast,
	$\S \in \RB^{n\times gs}$ has $g$ non-zero entries in each row.}

To analyze the effect of model averaging on the solution to the sketched MRR problem,
we make the following assumptions on the concatenated sketch matrix.
Assumption~\ref{assumption:2:1} is the subspace embedding property,
Assumption~\ref{assumption:2:2} is the matrix multiplication property, and
Assumption~\ref{assumption:2:3} is the bounded spectral norm property.

\begin{assumption}\label{assumption:avg}
	Let $\eta , \epsilon \in (0, 1)$ be fixed parameters.
	Let $\B$ be any fixed matrix of proper size, $\rho = \rk (\X)$,
  and $\U \in \RB^{n\times \rho}$ be an orthonormal basis for the column span of $\X$.
	Let $\S_1 , \cdots , \S_g \in \RB^{n\times s}$ be sketching matrices
	and $\S = \frac{1}{\sqrt{g}} [\S_1 , \cdots , \S_g ] \in \RB^{n\times gs}$;
	here $s$ depends on $\eta$ and/or $\epsilon$.
  Throughout this paper we assume that $\S$ and the $\S_i$ satisfy the following 
  properties with a probability that depends on $g$ and $s$:
	\begin{itemize}
		\item[\mylabel{assumption:2:1}{2.1}]
		$\big\|\U^T \S_i \S_i^T \U - \I_\rho \big\|_2 \leq \eta$ for all $i\in [g]$
		\quad \textrm{and} \quad
		$\big\|\U^T \S \S^T \U - \I_\rho \big\|_2 \leq \frac{\eta}{ \sqrt{g} }$;
		\item[\mylabel{assumption:2:2}{2.2}]
		$\big(\tfrac{1}{g} \sum_{i=1}^g \big\|\U^T \S_i \S_i^T \B - \U^T \B \big\|_F \big)^2
		\leq \epsilon \|\B\|_F^2$
		\quad \textrm{and} \quad
		$\big\|\U^T \S \S^T \B - \U^T \B \big\|_F^2 \leq \frac{\epsilon}{g} \|\B\|_F^2$;
		\item[\mylabel{assumption:2:3}{2.3}]
		For some constant $\theta$,
		$\|\S_i\|_2^2  \leq \frac{\theta n}{s}$ for all $i\in [g]$,  
		\, \textrm{and} \,
		$\|\S \|_2^2  \leq \frac{\theta n}{g s}$ for $gs < n$.
	\end{itemize}	
  Except in the case of leverage score sampling, when $gs$ is comparable to or larger than $n$, $\|\S \|_2^2 = \Theta (1)$.
\end{assumption}

Theorem~\ref{thm:properties_avg} establishes that random column selection, SRHT, and Gaussian projection matrices
satisfy Assumptions~\ref{assumption:2:1}, \ref{assumption:2:2}, and~\ref{assumption:2:3}.
We prove Theorem~\ref{thm:properties_avg} in Appendix~\ref{sec:sketch:proof}.

\begin{theorem} \label{thm:properties_avg}
  Let $\S_1 , \cdots , \S_g \in \RB^{n\times s}$ be independent and identically distributed random sketching matrices that are either
	column selection, SRHT, or Gaussian projection matrices.
  Fix a failure probability $\delta$ and error parameters $\eta$ and $\epsilon$, then
	set the sketch size $s$ as Table~\ref{tab:sketching}.
	
  Assumption~\ref{assumption:2:1} holds with probability at least $1-(g+1)\delta_1$.
	Assumption~\ref{assumption:2:2} holds with probability at least $1-2\delta_2$.
  Assumption~\ref{assumption:2:3} is satisfied either surely or with high probability, 
  with the parameter $\theta$ specified in Table~\ref{tab:sketching}.
\end{theorem}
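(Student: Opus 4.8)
The plan is to derive every claim from the single-sketch guarantees of Theorem~\ref{thm:properties}, exploiting the observation (made just above the statement) that the rescaled concatenation $\S = \frac{1}{\sqrt{g}}[\S_1,\dots,\S_g]$ is itself a sketching matrix of the same type with sketch size $gs$. Thus there are really two regimes to handle: the ``for all $i$'' clauses, which concern the individual $\S_i$ at size $s$, and the second clause of each assumption, which concerns $\S$ at size $gs$. First I would dispose of the individual clauses: applying Theorem~\ref{thm:properties} to each $\S_i$ separately (with the size $s$ read off from Table~\ref{tab:sketching}) yields $\|\U^T\S_i\S_i^T\U - \I_\rho\|_2 \le \eta$ with probability $1-\delta_1$ and $\|\S_i\|_2^2 \le \theta n/s$ surely or w.h.p.; a union bound over $i\in[g]$ gives the first halves of Assumptions~\ref{assumption:2:1} and~\ref{assumption:2:3} at a cost of $g\delta_1$ in the failure probability.

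Next I would treat $\S$ as a single size-$gs$ sketch and feed it back into Theorem~\ref{thm:properties}, the key point being how the guaranteed error scales with the sketch size. In the subspace-embedding column of Table~\ref{tab:sketching} the admissible size grows like $\eta^{-2}$, so a sketch of size $gs$ attains embedding error $\eta/\sqrt{g}$ at exactly the per-sketch size $s$ already fixed; this gives $\|\U^T\S\S^T\U-\I_\rho\|_2\le\eta/\sqrt{g}$ with probability $1-\delta_1$, completing Assumption~\ref{assumption:2:1} with total failure probability $(g+1)\delta_1$. Similarly, the matrix-multiplication column grows like $\epsilon^{-1}$, so size $gs$ delivers error $\epsilon/g$, i.e.\ $\|\U^T\S\S^T\B-\U^T\B\|_F^2\le\frac{\epsilon}{g}\|\B\|_F^2$ with probability $1-\delta_2$; and the bounded-spectral-norm row gives $\|\S\|_2^2\le\theta n/(gs)$ for $gs<n$, finishing Assumption~\ref{assumption:2:3}.

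It remains to establish the averaged matrix-multiplication bound, the first half of Assumption~\ref{assumption:2:2}. Writing $Z_i=\|\U^T\S_i\S_i^T\B-\U^T\B\|_F$, the power-mean inequality gives $(\frac{1}{g}\sum_i Z_i)^2\le\frac{1}{g}\sum_i Z_i^2$; since each $Z_i^2$ has expectation $\OM(\rho/s)\|\B\|_F^2$ (the expectation bound underlying Assumption~\ref{assumption:1:2}, using $\|\U\|_F^2=\rho$), linearity of expectation bounds the mean by the same quantity, and Markov's inequality converts this into $(\frac{1}{g}\sum_i Z_i)^2\le\epsilon\|\B\|_F^2$ with probability $1-\delta_2$ at size $s=\OM(\rho/(\epsilon\delta_2))$. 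Note this routes through the expectation bound rather than union-bounding $g$ individual events, which is what keeps the failure budget at $2\delta_2$ once combined with the concatenated bound.

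The main obstacle I anticipate is the bookkeeping of the error-versus-size scalings, so that passing from size $s$ to size $gs$ genuinely buys the $\sqrt{g}$ improvement for the embedding error and the $g$ improvement for the multiplication error claimed in Assumptions~\ref{assumption:2:1} and~\ref{assumption:2:2}. The secondary technical point is verifying that the concatenation really does inherit the sketch type in each case: this is transparent for Gaussian projection (i.i.d.\ columns remain i.i.d.\ Gaussian after the $\frac{1}{\sqrt{gs}}$ rescaling) and for column sampling (the entries $\frac{1}{\sqrt{gs\,p_j}}$ are exactly those of a size-$gs$ sampling sketch), but requires care for the SRHT, whose sign-and-transform factor $\D\H_n$ must be shared across the $g$ blocks for $\S$ to be a single SRHT of size $gs$.
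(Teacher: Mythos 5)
Your proposal matches the paper's proof essentially step for step: union bound over the $g$ individual sketches for the ``for all $i$'' clauses, treat the rescaled concatenation $\S=\frac{1}{\sqrt{g}}[\S_1,\cdots,\S_g]$ as a single size-$gs$ sketch of the same type and reread Theorem~\ref{thm:properties} at that size (so the $\eta^{-2}$ and $\epsilon^{-1}$ scalings buy the $\eta/\sqrt{g}$ and $\epsilon/g$ improvements), and establish the averaged matrix-multiplication bound by routing through the expectation version of Assumption~\ref{assumption:1:2} plus Markov rather than a union bound. Your handling of that last step (power-mean inequality, then Markov applied to the squared average) is if anything slightly cleaner than the paper's (Jensen on each $\EB \|\U^T\S_i\S_i^T\B-\U^T\B\|_F$ followed by Markov on the first moment, which as literally written yields failure probability $\sqrt{\delta_2}$ rather than $\delta_2$), and the SRHT caveat you raise---independent blocks do not share the sign-and-Hadamard factor, so the concatenation is not literally a single SRHT---is a genuine subtlety that the paper's proof glosses over.
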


In Theorem~\ref{thm:properties}, Assumption~\ref{assumption:1:1} fails with probability at most $\delta_1$.
In contrast, in Theorem~\ref{thm:properties_avg}, 
the counterpart assumption fails with probability at most $(g+1) \delta_1$.
However, this makes little difference in practice, because the dependence of $s$ on $\delta_1$ is logarithmic, so $\delta_1$ can be set very small (recall Table~\ref{tab:sketching}) without increasing $s$ significantly.

\begin{Remark} \label{remark:cs_avg}
	We do not know whether CountSketch enjoys the properties in Assumption~\ref{assumption:avg}.
  There are two difficulties in establishing this using the same route as is employed in our proof of Theorem~\ref{thm:properties} for other sketching methods.
  First, the concatenation of multiple CountSketch matrices is not a CountSketch matrix.
  Second, the probability that a CountSketch matrix does not have the subspace embedding property is constant, rather than exponentially small.
\end{Remark}


\subsection{Sketched MRR: Optimization Perspective} \label{sec:proofsketch:opt}

The randomness in the performance of the classical and Hessian sketch is entirely due to the choice of random sketching matrix. 
We now assume that the randomly sampled sketching matrices are ``nice" in that they satisfy the assumptions just introduced, and
state deterministic results on the optimization performance of the classical and Hessian sketches.

Theorem~\ref{thm:optimization:wtilde} holds under the subspace embedding property 
and the matrix multiplication property
(Assumptions~\ref{assumption:1:1} and \ref{assumption:1:2}), and quantifies the suboptimality of the classical sketch.
We prove this result in Appendix~\ref{sec:optimization:proof}.

\begin{theorem} [Classical Sketch] \label{thm:optimization:wtilde}
	Let Assumptions~\ref{assumption:1:1} and \ref{assumption:1:2} hold
	for the sketching matrix $\S \in \RB^{n\times s}$.
  Let $\eta$ and $\epsilon$ be defined in Assumption~\ref{assumption:sketching}, and let
	$\alpha = \frac{2 \max \{ {\epsilon} , \eta^2 \} }{ 1- \eta } $
  and $\beta = \frac{\| \X \|_2^2}{\| \X \|_2^2 + n \gamma} $, then
	\begin{small}
	\begin{eqnarray*}
		f ({\W}^{\textrm{c}}) - f (\W^\star )
		& \leq & \alpha \beta  f (\W^\star ) .
	\end{eqnarray*}
\end{small}%
\end{theorem}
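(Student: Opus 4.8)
The plan is to exploit that $f$ is a convex quadratic minimized at $\W^\star$, so the suboptimality gap is \emph{exactly} a squared Hessian-weighted distance, and then to control that distance through the two structural properties. Writing $\A = \X^T\X + n\gamma\I_d$, $\tilde\A = \X^T\S\S^T\X + n\gamma\I_d$, and $\Delta = \W^{\textrm{c}} - \W^\star$, a second-order expansion of $f$ about its minimizer (where the gradient vanishes) gives $f(\W^{\textrm{c}}) - f(\W^\star) = \tfrac1n\|\A^{1/2}\Delta\|_F^2$, so everything reduces to bounding $\|\A^{1/2}\Delta\|_F$. To extract $\Delta$ I would use the two normal equations $\A\W^\star = \X^T\Y$ and $\tilde\A\W^{\textrm{c}} = \X^T\S\S^T\Y$ (both valid because $\W^\star,\W^{\textrm{c}}$ lie in $\range(\V)$, where $\A,\tilde\A$ are invertible so the $(\cdot)^\dag$ act as inverses). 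Subtracting and substituting $n\gamma\W^\star = \X^T\R$ with $\R = \Y - \X\W^\star$ yields the clean relation $\tilde\A\Delta = \X^T(\S\S^T - \I)\R$, so the randomness enters only through the single term $(\S\S^T - \I)\R$.

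Next I would pass from the sketched Hessian $\tilde\A$ to the true Hessian $\A$. Writing $\X = \U\Si\V^T$, the subspace embedding property $\|\U^T\S\S^T\U - \I_\rho\|_2 \le \eta$ gives $\X^T\S\S^T\X \succeq (1-\eta)\X^T\X$ in the Loewner order, hence $\tilde\A \succeq (1-\eta)\A$; this is what produces the $\tfrac{1}{1-\eta}$ factor when I bound $\|\A^{1/2}\Delta\|_F$ in terms of $\|\A^{-1/2}\X^T(\S\S^T-\I)\R\|_F$. Since $\A^{-1/2}\X^T = \V\D\U^T$ with $\D = (\Si^2+n\gamma\I_\rho)^{-1/2}\Si$ and $\|\D\|_2^2 = \tfrac{\|\X\|_2^2}{\|\X\|_2^2+n\gamma} = \beta$, this spectral weight is exactly where $\beta$ appears, leaving me to bound $\|\U^T(\S\S^T-\I)\R\|_F$.

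For that last term I would split the residual as $\R = \U\U^T\R + (\I-\U\U^T)\Y$, using $(\I-\U\U^T)\X\W^\star = \0$. The subspace embedding property controls the in-range piece, $\|(\U^T\S\S^T\U-\I_\rho)\U^T\R\|_F \le \eta\|\U^T\R\|_F$, while the matrix multiplication property applied to the orthogonal piece $\B = (\I-\U\U^T)\Y$ (for which $\U^T\B = \0$) gives $\|\U^T\S\S^T\B\|_F^2 \le \epsilon\|\B\|_F^2$. Combining via $(a+b)^2\le 2(a^2+b^2)$ together with Pythagoras, $\|\U^T\R\|_F^2 + \|(\I-\U\U^T)\Y\|_F^2 = \|\R\|_F^2$, yields $\|\U^T(\S\S^T-\I)\R\|_F^2 \le 2\max\{\eta^2,\epsilon\}\|\R\|_F^2$, the source of the $2\max\{\epsilon,\eta^2\}$ in $\alpha$. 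Assembling the three factors and using $\tfrac1n\|\R\|_F^2 \le f(\W^\star)$ then delivers $f(\W^{\textrm{c}}) - f(\W^\star) \le \alpha\beta\, f(\W^\star)$.

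The main obstacle, and the step I would be most careful with, is the Hessian comparison and the bookkeeping of the spectral weights: one must verify $\tilde\A \succeq (1-\eta)\A$ as a genuine Loewner inequality and then track how the factors $\tfrac{1}{1-\eta}$, $\beta$, and $2\max\{\eta^2,\epsilon\}$ multiply to form $\alpha\beta$ without accumulating stray powers of $(1-\eta)$. The residual split is the conceptual crux forcing the appearance of $\max\{\epsilon,\eta^2\}$ rather than $\epsilon$ alone: unlike unregularized least squares, here $\U^T\R = n\gamma(\Si^2+n\gamma\I_\rho)^{-1}\U^T\Y \ne \0$, so the in-range error genuinely contributes and cannot be discarded.
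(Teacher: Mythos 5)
Your proposal is correct and follows essentially the same route as the paper's proof: your identity $(\X^T\S\S^T\X+n\gamma\I_d)(\W^{\textrm{c}}-\W^\star)=\X^T(\S\S^T-\I_n)\R$ with $\R=\Y-\X\W^\star$, once $\R$ is split into $\Y^\perp$ and $\U\U^T\R=n\gamma\U(\Si^2+n\gamma\I_\rho)^{-1}\U^T\Y$, reproduces exactly the paper's decomposition into the terms it calls $\A$ and $\B$, and the remaining steps (the Loewner comparison of the sketched and true regularized Hessians, the matrix-multiplication bound on the out-of-range piece, the subspace-embedding bound on the in-range piece, and $(a+b)^2\le 2(a^2+b^2)$ together with $\tfrac1n\|\R\|_F^2\le f(\W^\star)$) coincide with the paper's. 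The one point to watch, which you flag yourself and which is shared with the paper's own write-up, is that squaring the comparison $\X^T\S\S^T\X+n\gamma\I_d\succeq(1-\eta)(\X^T\X+n\gamma\I_d)$ naturally produces a factor $(1-\eta)^{-2}$ rather than $(1-\eta)^{-1}$ in front of $2\max\{\epsilon,\eta^2\}\,\beta f(\W^\star)$, so landing exactly on the stated $\alpha$ requires either accepting that extra factor or a sharper handling of that step.
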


Theorem~\ref{thm:optimization:what} holds under the subspace embedding property (Assumption~\ref{assumption:1:1}), and quantifies
the suboptimality of the Hessian sketch.
We prove this result in Appendix~\ref{sec:optimization:proof}.

\begin{theorem} [Hessian Sketch] \label{thm:optimization:what}
	Let Assumption~\ref{assumption:1:1} hold
	for the sketching matrix $\S \in \RB^{n\times s}$.
	Let $\eta$ be defined in Assumption~\ref{assumption:sketching}
  and $\beta = \frac{\| \X \|_2^2}{\| \X \|_2^2 + n \gamma} $, then
	\begin{small}
	\begin{eqnarray*}
		f ({\W}^{\textrm{h}}) - f (\W^\star )
		& \leq & \frac{\eta^2 \beta^2 }{(1-\eta)^2 }  \bigg( \frac{\|\Y\|_F^2}{n }-  f (\W^\star ) \bigg) .
	\end{eqnarray*}
	\end{small}%
\end{theorem}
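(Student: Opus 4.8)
The plan is to reduce the objective-value gap to a single operator-norm estimate on the $\rk(\X)$-dimensional coordinate space, and then exploit the fact that every factor except the sketched quantity $\U^T \S \S^T \U$ is diagonal in the SVD basis. First I would record the exact quadratic expansion of $f$. Writing $\H = \X^T \X + n\gamma \I_d$, the first-order optimality of $\W^\star$ gives $\H \W^\star = \X^T \Y$, so for any $\W$ the cross term cancels and $f(\W) - f(\W^\star) = \tfrac1n \|\H^{1/2}(\W - \W^\star)\|_F^2$. Applying this to $\W^{\textrm{h}} = \tilde{\H}^\dag \X^T \Y$ with $\tilde{\H} = \X^T \S \S^T \X + n\gamma \I_d$, and using $\X^T \Y = \H \W^\star$, the deviation becomes $\W^{\textrm{h}} - \W^\star = \tilde{\H}^\dag (\H - \tilde{\H}) \W^\star$, where $\H - \tilde{\H} = \X^T (\I_n - \S \S^T) \X$.

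Second, I would substitute the SVD $\X = \U \Si \V^T$ and set $\Z = \U^T \S \S^T \U$, $\E = \Z - \I_\rho$ (symmetric, with $\|\E\|_2 \le \eta$ by Assumption~\ref{assumption:1:1}), and $\G = \Si^2 + n\gamma \I_\rho$. Since $\X^T \Y$, $\W^\star$, and $\W^{\textrm{h}}$ all lie in $\range(\V)$, and since the subspace embedding property makes $\Si \Z \Si$ invertible, the pseudoinverse restricts to a genuine inverse on $\range(\V)$ and every matrix reduces to a $\rho \times \rho$ block conjugated by $\V$. The crucial algebraic step is the factorization that telescopes away the diagonal factors: setting $\N = \G^{-1/2} \Si \E \Si \G^{-1/2}$ (symmetric), one has $\Si \E \Si = \G^{1/2} \N \G^{1/2}$ and $\tilde{\H}$ reduces to $\G^{1/2} (\I_\rho + \N) \G^{1/2}$, so that $\G^{1/2} \V^T (\W^{\textrm{h}} - \W^\star) = -(\I_\rho + \N)^{-1} \N \, \bb$ and the gap collapses to $f(\W^{\textrm{h}}) - f(\W^\star) = \tfrac1n \|(\I_\rho + \N)^{-1} \N \, \bb\|_F^2$, where $\bb = \Si \G^{-1/2} \U^T \Y$.

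Third, I would identify the right-hand side and close the bound. A direct residual computation gives $f(\W^\star) = \tfrac1n \|(\I_n - \U \U^T)\Y\|_F^2 + \gamma \tr(\Y^T \U \G^{-1} \U^T \Y)$, whence $\tfrac{\|\Y\|_F^2}{n} - f(\W^\star) = \tfrac1n \|\Si \G^{-1/2} \U^T \Y\|_F^2 = \tfrac1n \|\bb\|_F^2$ — exactly the factor multiplying the operator in the collapsed expression. It then remains to bound $\|(\I_\rho + \N)^{-1} \N\|_2$. Since $\|\Si \G^{-1/2}\|_2^2 = \max_i \tfrac{\sigma_i^2}{\sigma_i^2 + n\gamma} = \beta$, we get $\|\N\|_2 \le \beta \eta$, and as $\N$ is symmetric the spectral map $p \mapsto p/(1+p)$ yields $\|(\I_\rho + \N)^{-1} \N\|_2 \le \tfrac{\beta\eta}{1 - \beta\eta} \le \tfrac{\beta\eta}{1 - \eta}$ (using $\beta \le 1$). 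Combining the two displays gives $f(\W^{\textrm{h}}) - f(\W^\star) \le \tfrac{\eta^2 \beta^2}{(1-\eta)^2} \big( \tfrac{\|\Y\|_F^2}{n} - f(\W^\star) \big)$, as claimed.

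The main obstacle is the second step: recognizing the conjugation $\N = \G^{-1/2} \Si \E \Si \G^{-1/2}$ that simultaneously diagonalizes $\tilde{\H}$ relative to $\H$ and forces the $\G^{1/2}$ and $\Si$ factors to cancel, so that a product of several non-commuting-looking matrices reduces to the clean resolvent form $-(\I_\rho + \N)^{-1} \N$. Extracting $\beta$ — rather than a cruder condition-number factor — also hinges on keeping $\Si$ and $\G^{-1/2}$ paired as $\Si \G^{-1/2}$ throughout, instead of bounding $\|\Si\|_2$ and $\|\G^{-1}\|_2$ separately.
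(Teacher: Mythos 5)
Your proposal is correct and follows essentially the same route as the paper: the paper also reduces to $\tfrac1n\|\H^{1/2}(\W^{\textrm{h}}-\W^\star)\|_F^2$ via Lemma~\ref{lem:optimization}, applies the resolvent identity in the SVD basis to obtain the product $\V\A\B\C$ with $\A=-\N$, $\B=(\I_\rho+\N)^{-1}$, and $\C=\bb$ in your notation, and bounds $\|\A\|_2\le\eta\beta$, $\|\B\|_2\le(1-\eta)^{-1}$, $\|\C\|_F^2 = \|\Y\|_F^2 - n f(\W^\star)$. Your only (cosmetic) deviation is bounding the symmetric product $\|(\I_\rho+\N)^{-1}\N\|_2\le\beta\eta/(1-\beta\eta)$ by spectral mapping rather than bounding the two factors separately, which is marginally tighter before you relax it to the stated $\beta\eta/(1-\eta)$.
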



\subsection{Sketched MRR: Statistical Perspective}\label{sec:proofsketch:stat}

Similarly, we assume that the randomly sampled sketching matrices are nice, and state deterministic
results on the bias and variance of the classical and Hessian sketches.

Theorem~\ref{thm:biasvariance:wtilde} holds under the subspace embedding property 
(Assumption~\ref{assumption:1:1})
and the bounded spectral norm property (Assumption~\ref{assumption:1:3}), and bounds the
bias and variance of the classical sketch.
Specifically, it shows that the bias of the classical sketch is close to that of the optimal solution,
but that the variance may be much larger.
We prove this result in Appendix \ref{sec:statistical:proof}.

\begin{theorem} [Classical Sketch] \label{thm:biasvariance:wtilde}
	Let $\eta$ and $\theta$ be defined in Assumption~\ref{assumption:sketching}.
	Under Assumption~\ref{assumption:1:1}, it holds that
	\begin{eqnarray*}
		\tfrac{1}{ 1 + \eta }
		\; \leq \; \tfrac{\bias ( \W^{\textrm{c}} )}{\bias (\W^\star ) }
		\; \leq \; \tfrac{1}{ 1 - \eta } .
	\end{eqnarray*}
	Further assume $s \leq n$;
	under Assumptions~\ref{assumption:1:1} and \ref{assumption:1:3}, it holds that
	\begin{eqnarray*}
		\tfrac{\var ( \W^{\textrm{c}} ) }{ \var (\W^\star ) }
		\; \leq \; \tfrac{ (1+\eta )}{ (1-\eta)^2} \tfrac{\theta n}{s} .
	\end{eqnarray*}
\end{theorem}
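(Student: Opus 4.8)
The plan is to reduce both statements to assertions about the single symmetric matrix $\Q := \U^T \S \S^T \U$ and the diagonal matrix $\De := n\gamma \Si^{-2} \succeq \0$. Assumption~\ref{assumption:1:1} states exactly that $\|\Q - \I_\rho\|_2 \leq \eta$, so $(1-\eta)\I_\rho \preceq \Q \preceq (1+\eta)\I_\rho$; in particular $\Q$ is invertible and the pseudoinverses appearing in $\W^{\textrm{c}}$ are genuine inverses. Using the identity $(\Si^2 + n\gamma \I_\rho)^{-1}\Si = (\I_\rho + \De)^{-1}\Si^{-1}$, I would first rewrite the optimal bias from Theorem~\ref{thm:bias_var_decomp} as $\bias(\W^\star) = \gamma\sqrt{n}\,\|(\I_\rho + \De)^{-1}\a\|_F$ with $\a := \Si^{-1}\V^T\W_0$, so that it matches $\bias(\W^{\textrm{c}}) = \gamma\sqrt{n}\,\|(\Q+\De)^{-1}\a\|_F$ up to replacing $\I_\rho$ by $\Q$ inside the inverse. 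The whole bias comparison then amounts to controlling $\|(\Q+\De)^{-1}\a\|_F$ against $\|(\I_\rho+\De)^{-1}\a\|_F$.

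The essential difficulty is that $\Q$ is generally not diagonal while $\De$ is, so the two do not commute and I cannot simultaneously diagonalize. I would sidestep this with a resolvent-style factorization. For the upper bound, insert $(\I_\rho+\De)(\I_\rho+\De)^{-1}$ and use $(\Q+\De)^{-1}(\I_\rho+\De) = \I_\rho + (\Q+\De)^{-1}(\I_\rho - \Q)$; since $\|(\Q+\De)^{-1}\|_2 \leq (1-\eta)^{-1}$ (from $\Q+\De \succeq (1-\eta)\I_\rho$) and $\|\I_\rho - \Q\|_2 \leq \eta$, this operator has spectral norm at most $1 + \tfrac{\eta}{1-\eta} = \tfrac{1}{1-\eta}$, giving $\bias(\W^{\textrm{c}}) \leq \tfrac{1}{1-\eta}\bias(\W^\star)$. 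For the lower bound I would run the same argument with the roles reversed, writing $(\I_\rho+\De)^{-1}\a$ via $(\I_\rho+\De)^{-1}(\Q+\De) = \I_\rho + (\I_\rho+\De)^{-1}(\Q - \I_\rho)$; here $\|(\I_\rho+\De)^{-1}\|_2 \leq 1$ because $\De \succeq \0$, so the factor is bounded by $1+\eta$, yielding $\bias(\W^\star) \leq (1+\eta)\bias(\W^{\textrm{c}})$.

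For the variance I would additionally invoke the bounded spectral norm property (Assumption~\ref{assumption:1:3}), which gives $\S^T\S \preceq \tfrac{\theta n}{s}\I_s$. Writing $\var(\W^{\textrm{c}}) = \tfrac{\xi^2}{n}\|(\Q+\De)^{-1}\U^T\S\S^T\|_F^2$ as the trace $\tfrac{\xi^2}{n}\tr[(\Q+\De)^{-1}\U^T\S(\S^T\S)\S^T\U(\Q+\De)^{-1}]$ and substituting the semidefinite bound on $\S^T\S$ lets me replace $\U^T\S(\S^T\S)\S^T\U$ by $\tfrac{\theta n}{s}\Q$, so that $\|(\Q+\De)^{-1}\U^T\S\S^T\|_F^2 \leq \tfrac{\theta n}{s}\,\|\Q^{1/2}(\Q+\De)^{-1}\|_F^2$. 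Finally I would factor $\Q^{1/2}(\Q+\De)^{-1} = \Q^{1/2}(\Q+\De)^{-1}(\I_\rho+\De)\cdot(\I_\rho+\De)^{-1}$ and apply $\|AB\|_F \leq \|A\|_2\|B\|_F$; the leading operator norm is at most $\|\Q^{1/2}\|_2\,\|(\Q+\De)^{-1}(\I_\rho+\De)\|_2 \leq \sqrt{1+\eta}\cdot\tfrac{1}{1-\eta}$ by the bounds already established, and $\|(\I_\rho+\De)^{-1}\|_F^2$ equals $\tfrac{n}{\xi^2}\var(\W^\star)$. Collecting constants gives $\var(\W^{\textrm{c}})/\var(\W^\star) \leq \tfrac{1+\eta}{(1-\eta)^2}\tfrac{\theta n}{s}$. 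The one point demanding care throughout is that every comparison between $(\Q+\De)^{-1}$ and $(\I_\rho+\De)^{-1}$ must be routed through operator-norm bounds on products rather than through a (false) commuting diagonalization, and I expect this non-commutativity to be the main obstacle.
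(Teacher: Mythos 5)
Your proof is correct and follows essentially the same route as the paper's: both start from the bias and variance expressions of Theorem~\ref{thm:bias_var_decomp}, use Assumption~\ref{assumption:1:1} to sandwich $\U^T \S \S^T \U + n\gamma\Si^{-2}$ between $(1\mp\eta)(\I_\rho + n\gamma\Si^{-2})$, and invoke $\|\S\|_2^2 \leq \theta n /s$ only for the variance. The single difference is one of explicitness: the paper states the comparison of $\big[(\U^T\S\S^T\U + n\gamma\Si^{-2})^{\dag}\big]^{2}$ against $(\I_\rho + n\gamma\Si^{-2})^{-2}$ as an immediate Loewner-order consequence of that sandwich, whereas your factorization $(\Q+\De)^{-1}(\I_\rho+\De) = \I_\rho + (\Q+\De)^{-1}(\I_\rho-\Q)$ is precisely the justification that step needs (squaring does not preserve the semidefinite order for non-commuting matrices), so your write-up is, if anything, the more careful rendering of the same argument.
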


Theorem \ref{thm:biasvariance:wtilde_lower1} establishes a lower bound on the variance of the classical sketch.
We prove this result in Appendix \ref{sec:statistical:proof}.

\begin{theorem} [Lower Bound on the Variance] \label{thm:biasvariance:wtilde_lower1}
	Under Assumption~\ref{assumption:1:1} and the additional assumption that
	$\S^T \S \succeq \frac{\vartheta n }{s} \I_s$, it holds that
	\begin{eqnarray*}
		\tfrac{ \var ( \W^{\textrm{c}} ) }{\var (\W^\star ) }
		& \geq & \tfrac{  1-\eta  }{ (1+\eta )^2} \tfrac{\vartheta n }{s}  .
	\end{eqnarray*}
\end{theorem}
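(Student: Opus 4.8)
The plan is to work directly from the closed-form expressions for $\var(\W^{\textrm{c}})$ and $\var(\W^\star)$ supplied by Theorem~\ref{thm:bias_var_decomp}, extracting the factor $\tfrac{\vartheta n}{s}$ from the semidefinite lower bound $\S^T\S \succeq \tfrac{\vartheta n}{s}\I_s$ and then controlling the remaining trace via the subspace embedding property. Throughout I would abbreviate $\A = \U^T\S\S^T\U$ and $\De = n\gamma\Si^{-2}$; since $\De \succ 0$, the matrix $\A+\De$ is positive definite and its pseudoinverse equals its inverse. Writing $\M = (\A+\De)^{-1}\U^T\S \in \RB^{\rho\times s}$, one has $\var(\W^{\textrm{c}}) = \tfrac{\xi^2}{n}\|\M\S^T\|_F^2$.

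First I would exploit the spectral lower bound. Since $\M\S^T\S\M^T - \tfrac{\vartheta n}{s}\M\M^T = \M\big(\S^T\S - \tfrac{\vartheta n}{s}\I_s\big)\M^T \succeq 0$, taking traces yields $\|\M\S^T\|_F^2 = \tr(\M\S^T\S\M^T) \geq \tfrac{\vartheta n}{s}\|\M\|_F^2$. A short computation gives $\|\M\|_F^2 = \tr\big((\A+\De)^{-1}\A(\A+\De)^{-1}\big)$, so it remains to show this trace is at least $\tfrac{1-\eta}{(1+\eta)^2}\tr\big((\I_\rho+\De)^{-2}\big)$; the latter factor equals $\tfrac{n}{\xi^2}\var(\W^\star)$ because $(\I_\rho+\De)^{-1}$ is symmetric (indeed diagonal).

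Next I would bring in the subspace embedding property $\|\A-\I_\rho\|_2 \leq \eta$, i.e.\ $(1-\eta)\I_\rho \preceq \A \preceq (1+\eta)\I_\rho$. Setting $\B=(\A+\De)^{-1}$, congruence gives $\B\A\B \succeq (1-\eta)\B^2$, hence $\tr(\B\A\B) \geq (1-\eta)\tr(\B^2)$. Separately, $\A+\De \preceq (1+\eta)\I_\rho + \De \preceq (1+\eta)(\I_\rho+\De)$ (using $\De \succeq 0$), and inverting reverses the order to yield $\B \succeq \tfrac{1}{1+\eta}(\I_\rho+\De)^{-1} =: \D \succeq 0$.

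The main obstacle is the final step: I want $\tr(\B^2) \geq \tr(\D^2)$ from $\B \succeq \D \succeq 0$, but squaring is not operator monotone, so the naive matrix inequality $\B^2 \succeq \D^2$ fails in general. The fix is to stay at the level of traces: write $\tr(\B^2) - \tr(\D^2) = \tr\big(\B(\B-\D)\big) + \tr\big((\B-\D)\D\big)$, and observe that each summand is the trace of a product of two positive semidefinite matrices, hence nonnegative. This gives $\tr(\B^2) \geq \tr(\D^2) = \tfrac{1}{(1+\eta)^2}\tr\big((\I_\rho+\De)^{-2}\big)$. Chaining the inequalities produces $\var(\W^{\textrm{c}}) \geq \tfrac{\vartheta n}{s}\cdot\tfrac{1-\eta}{(1+\eta)^2}\,\var(\W^\star)$, which is the claimed bound.
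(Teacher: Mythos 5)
Your proof is correct and follows essentially the same route as the paper's: extract the factor $\tfrac{\vartheta n}{s}$ from $\S^T\S \succeq \tfrac{\vartheta n}{s}\I_s$ via $\|\M\S^T\|_F^2 \geq \tfrac{\vartheta n}{s}\|\M\|_F^2$, then lower-bound $\|\M\|_F^2$ against $\|(\I_\rho + n\gamma\Si^{-2})^{-1}\|_F^2$ using the subspace embedding property (the paper cites this second bound from its proof of the upper-bound theorem rather than re-deriving it). The one substantive difference is to your credit: where the paper passes from $(1-\eta)(\I_\rho+n\gamma\Si^{-2}) \preceq \U^T\S\S^T\U+n\gamma\Si^{-2}$ directly to the corresponding inequality between squares (a step that is not valid as an operator inequality for non-commuting matrices), you correctly stay at the trace level via $\tr(\B^2)-\tr(\D^2)=\tr(\B(\B-\D))+\tr((\B-\D)\D)\geq 0$, which is all that is needed.
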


Theorem~\ref{thm:biasvariance:what} holds under the subspace embedding property (Assumption~\ref{assumption:1:1}), and
quantifies the bias and variance of the Hessian sketch.
We prove this result in Appendix \ref{sec:statistical:proof}.

\begin{theorem} [Hessian Sketch] \label{thm:biasvariance:what}
  Let $\eta$ be defined in Assumption~\ref{assumption:sketching}, take $\rho = \rk(\X)$, and let $\sigma_1 \geq \cdots \geq \sigma_\rho$ be the singular values of $\X$. 
	Under Assumption~\ref{assumption:1:1}, it holds that
	\begin{align*}
	&\tfrac{\bias ( {\W}^{\textrm{h}} )}{\bias (\W^\star ) }
	\; \leq \; \tfrac{1}{1 - \eta} \,
	\Big( 1 + \tfrac{\eta \sigma_{1}^2 }{n \gamma} \Big)  ,\\
	&\tfrac{1}{1+\eta }
	\; \leq \;
	\tfrac{\var ( {\W}^{\textrm{h}} ) }{ \var (\W^\star ) }
	\; \leq \;
	\tfrac{1}{1- \eta }  .
	\end{align*}
	Further assume that $\sigma_\rho^2 \geq \frac{n \gamma}{\eta }$.
	Then
	\begin{align*}
	\tfrac{\bias ( {\W}^{\textrm{h}} )}{\bias (\W^\star ) }
	\; \geq \; \tfrac{1}{1 + \eta} \,
	\Big(  \tfrac{\eta \sigma_{\rho}^2 }{n \gamma} - 1\Big)  .
	\end{align*}
\end{theorem}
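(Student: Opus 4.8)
The plan is to reduce all four quantities of Theorem~\ref{thm:bias_var_decomp} to two $\rho\times\rho$ symmetric matrices: the deterministic $\Ps = \I_\rho + n\gamma\Si^{-2}$ (governing the optimal solution) and the sketched $\Ph = \U^T\S\S^T\U + n\gamma\Si^{-2}$. Writing $\De = \U^T\S\S^T\U - \I_\rho$, Assumption~\ref{assumption:1:1} reads $\|\De\|_2 \le \eta$ and one has the identity $\Ph = \Ps + \De$; moreover $\Ph \succ 0$ since $\U^T\S\S^T\U \succeq (1-\eta)\I_\rho$. Setting $\Z = \Si\V^T\W_0$ and using $\Si^{-2} + \tfrac{\De}{n\gamma} = \tfrac{1}{n\gamma}(\Ph - \I_\rho)$ together with $(\Ph - \I_\rho)\Ph^{-1} = \I_\rho - \Ph^{-1}$, the bias formulas collapse to $\bias(\W^\star) = \tfrac{1}{\sqrt n}\|(\I_\rho - \Ps^{-1})\Z\|_F$ and $\bias(\W^{\textrm h}) = \tfrac{1}{\sqrt n}\|(\I_\rho - \Ph^{-1})\Z\|_F$, while $\var(\W^{\textrm h})/\var(\W^\star) = \|\Ph^{-1}\|_F^2/\|\Ps^{-1}\|_F^2$. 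Everything then hinges on comparing $\Ph^{-1}$ with $\Ps^{-1}$.

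For the variance I would first establish the operator sandwich. Since $\Ps \succeq \I_\rho$ and $-\eta\I_\rho \preceq \De \preceq \eta\I_\rho$, we get $-\eta\Ps \preceq \De \preceq \eta\Ps$, hence $(1-\eta)\Ps \preceq \Ph \preceq (1+\eta)\Ps$ and therefore $\tfrac{1}{1+\eta}\Ps^{-1} \preceq \Ph^{-1} \preceq \tfrac{1}{1-\eta}\Ps^{-1}$. Because $0\preceq A \preceq B$ implies $\|A\|_F \le \|B\|_F$ for PSD matrices (via $\tr(A^2)\le\tr(AB)\le\tr(B^2)$), this sandwich passes to the Frobenius norm and yields the two-sided bound on $\var(\W^{\textrm h})/\var(\W^\star)$.

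For the bias upper bound I would split via the resolvent identity $\I_\rho - \Ph^{-1} = (\I_\rho - \Ps^{-1}) + \Ph^{-1}\De\Ps^{-1}$ and use the triangle inequality; the first piece reproduces $\sqrt n\,\bias(\W^\star)$. For the perturbation piece, bound $\|\Ph^{-1}\De\Ps^{-1}\Z\|_F \le \|\Ph^{-1}\|_2\,\|\De\|_2\,\|\Ps^{-1}\Z\|_F \le \tfrac{\eta}{1-\eta}\|\Ps^{-1}\Z\|_F$, using $\|\Ph^{-1}\|_2 \le \tfrac{1}{1-\eta}$ and $\|\De\|_2\le\eta$. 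Finally convert $\|\Ps^{-1}\Z\|_F$ back to $\bias(\W^\star)$: since $\Si$ and $\Ps$ are diagonal and commute, $\Ps^{-1}\Z = \Si^2\,\Ps^{-1}\Si^{-1}\V^T\W_0$, so $\|\Ps^{-1}\Z\|_F \le \sigma_1^2\|\Ps^{-1}\Si^{-1}\V^T\W_0\|_F = \tfrac{\sigma_1^2}{\gamma\sqrt n}\bias(\W^\star)$. Collecting terms gives $\bias(\W^{\textrm h}) \le \big(1 + \tfrac{\eta\sigma_1^2}{(1-\eta)n\gamma}\big)\bias(\W^\star)$, which implies the stated bound $\tfrac{1}{1-\eta}\big(1+\tfrac{\eta\sigma_1^2}{n\gamma}\big)$.

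The main obstacle is the bias \emph{lower} bound. Applying the reverse triangle inequality to the same splitting gives $\bias(\W^{\textrm h}) \ge \tfrac{1}{\sqrt n}\|\Ph^{-1}\De\Ps^{-1}\Z\|_F - \bias(\W^\star)$, and the hypothesis $\sigma_\rho^2 \ge n\gamma/\eta$ forces $n\gamma\Si^{-2}\preceq\eta\I_\rho$, so that $\Ps$ lies within $\eta$ of $\I_\rho$ and $\bias(\W^\star)$ is correspondingly suppressed; this is what lets the sketch-induced term dominate and produces the factor $\tfrac{\eta\sigma_\rho^2}{n\gamma}$. The delicate step is lower-bounding $\|\Ph^{-1}\De\Ps^{-1}\Z\|_F$: the upper bound $\|\De\|_2\le\eta$ alone does not suffice (were $\De$ to annihilate $\Z$, the extra bias would vanish), so the argument must extract a genuine lower bound on the action of $\De = \U^T\S\S^T\U - \I_\rho$ along $\Z$, i.e.\ it must use that the subspace embedding is actually lossy at the scale $\eta$ rather than merely bounded above. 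I expect this to be the crux of the proof, and the place where $\sigma_\rho$ (rather than $\sigma_1$) and the hypothesis $\sigma_\rho^2 \ge n\gamma/\eta$ enter essentially.
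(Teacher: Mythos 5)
Your reductions and the first two bounds are sound. The identity $\bias({\W}^{\textrm{h}})=\tfrac{1}{\sqrt n}\|(\I_\rho-\Ph^{-1})\Z\|_F$ with $\Ph=\U^T\S\S^T\U+n\gamma\Si^{-2}$, $\Ps=\I_\rho+n\gamma\Si^{-2}$, $\Z=\Si\V^T\W_0$ is correct; the variance argument (sandwich $(1-\eta)\Ps\preceq\Ph\preceq(1+\eta)\Ps$, invert, pass to the Frobenius norm via $\0\preceq\A\preceq\B\Rightarrow\|\A\|_F\le\|\B\|_F$) is exactly the paper's; and your bias upper bound via the splitting $\I_\rho-\Ph^{-1}=(\I_\rho-\Ps^{-1})+\Ph^{-1}\De\Ps^{-1}$ is a mildly different route from the paper's, which instead writes $\bias({\W}^{\textrm{h}})=\gamma\sqrt n\,\|\A\Si^2\B\|_F\le\gamma\sqrt n\,\|\A\Si^2\|_2\|\B\|_F$ with $\A=\Si^{-2}+\tfrac{\De}{n\gamma}$ and $\B=\Si^{-2}\Ph^{-1}\Si\V^T\W_0$ and bounds the two factors by $1+\tfrac{\eta\sigma_1^2}{n\gamma}$ and $\tfrac{1}{1-\eta}\tfrac{\bias(\W^\star)}{\gamma\sqrt n}$ separately. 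Your constant $1+\tfrac{\eta\sigma_1^2}{(1-\eta)n\gamma}$ is slightly sharper and implies the stated one.

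The gap is the third inequality: you do not prove the lower bound on $\bias({\W}^{\textrm{h}})$, and you say so yourself. Since that bound is the substantive content of the theorem (it is what shows Hessian sketch inflates the bias when $n\gamma\ll\sigma_\rho^2$), the proposal is incomplete where it matters most. For the record, the paper keeps the product form and writes $\bias({\W}^{\textrm{h}})\ge\gamma\sqrt n\,\sigma_{\min}(\A\Si^2)\,\|\B\|_F$, lower-bounding $\|\B\|_F\ge\tfrac{1}{1+\eta}\tfrac{\bias(\W^\star)}{\gamma\sqrt n}$ from the sandwich and asserting $\sigma_{\min}(\A\Si^2)\ge\tfrac{\eta\sigma_\rho^2}{n\gamma}-1$ via the claim $\A^2\succeq\big(\tfrac{\eta\sigma_\rho^2}{n\gamma}-1\big)^2\Si^{-4}$, which it derives from the two-sided bound $\Si^{-2}-\tfrac{\eta}{n\gamma}\I_\rho\preceq\A\preceq\Si^{-2}+\tfrac{\eta}{n\gamma}\I_\rho$. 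Your own diagnosis applies verbatim to that step: Assumption~\ref{assumption:1:1} only upper-bounds $\|\De\|_2$, and $\De=\0$ (hence $\A=\Si^{-2}$, ${\W}^{\textrm{h}}=\W^\star$) is consistent with it, so a sandwich whose lower end is negative semidefinite and whose upper end is positive definite cannot bound $\A^2$ away from zero. In other words, you have correctly located the genuine difficulty — the lower bound requires treating $\eta$ as the realized, direction-uniform distortion of the sketch rather than a mere upper bound — but you have not produced the stated inequality, and no argument from Assumption~\ref{assumption:1:1} alone can.
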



\subsection{Model Averaging: Optimization Perspective}\label{sec:proofsketch:opt_avg}

Theorem~\ref{thm:optimization:wtilde_avg} holds under 
the subspace embedding property (Assumption~\ref{assumption:2:1})
and the matrix multiplication property
(Assumption~\ref{assumption:2:2}).
We prove this result in Appendix~\ref{sec:opt_avg:proof}.

\begin{theorem} [Classical Sketch with Model Averaging] \label{thm:optimization:wtilde_avg}
  Let $\eta$ and $\epsilon$ be defined in Assumption~\ref{assumption:avg}, and 
  let $\alpha = 2 \big( \tfrac{1}{\sqrt{g}} + 2 \beta \eta \big)^2 \max \big\{ \epsilon , \eta^2 \big\}  $
	and $\beta = \frac{\| \X \|_2^2}{\| \X \|_2^2 + n \gamma } \leq 1$.
	Under Assumption~\ref{assumption:2:1} and~\ref{assumption:2:2},
	we have that
	\begin{eqnarray*}
		f ({\W}^{\textrm{c}}) - f (\W^\star )
		& \leq & \alpha \beta f (\W^\star ) .
	\end{eqnarray*}
\end{theorem}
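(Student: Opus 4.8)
The plan is to reduce the objective gap to a Mahalanobis distance and then control that distance by separating an ``averaged'' leading term from a lower-order correction. First I would use the fact that $f$ is quadratic with minimizer $\W^\star$: expanding around $\W^\star$ and using the stationarity condition $(\X^T\X + n\gamma\I_d)\W^\star = \X^T\Y$ to kill the linear term gives the exact identity
\[
f(\W) - f(\W^\star) = \tfrac1n\big\|(\X^T\X + n\gamma\I_d)^{1/2}(\W - \W^\star)\big\|_F^2
\]
for every $\W$. Writing $\R = \Y - \X\W^\star$ for the residual, the same stationarity condition reads $\X^T\R = n\gamma\W^\star$, and subtracting the sketched normal equations from the exact ones yields the closed form $\W_i^{\textrm{c}} - \W^\star = (\X^T\S_i\S_i^T\X + n\gamma\I_d)^{-1}\X^T(\S_i\S_i^T - \I_n)\R$. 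Since averaging is linear, $\W^{\textrm{c}} - \W^\star = \frac1g\sum_{i=1}^g(\W_i^{\textrm{c}} - \W^\star)$.

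Next I would pass to the SVD $\X = \U\Si\V^T$ and set $\Lam = \Si^2 + n\gamma\I_\rho$ and $\e_i = \U^T(\S_i\S_i^T - \I_n)\R$. A short computation shows each summand lands in the range of $\V$ and that
\[
(\X^T\X + n\gamma\I_d)^{1/2}(\W_i^{\textrm{c}} - \W^\star) = \big(\I_\rho + \Lam^{-1/2}\E_i\Lam^{-1/2}\big)^{-1}\Lam^{-1/2}\Si\,\e_i,
\]
where $\E_i = \Si(\U^T\S_i\S_i^T\U - \I_\rho)\Si$. The two spectral facts needed here are $\|\Lam^{-1/2}\Si\|_2 = \sqrt\beta$ and, via the subspace embedding property (Assumption~\ref{assumption:2:1}), $\|\Lam^{-1/2}\E_i\Lam^{-1/2}\|_2 \le \beta\eta$; the latter keeps the inverse well defined and bounds it by $(1-\beta\eta)^{-1}$.

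The crux is the averaging step. I would write $(\I_\rho + \Lam^{-1/2}\E_i\Lam^{-1/2})^{-1} = \I_\rho - \Q_i$ with $\|\Q_i\|_2 \le \beta\eta/(1-\beta\eta) \le 2\beta\eta$, so that summing over $i$ splits $(\X^T\X+n\gamma\I_d)^{1/2}(\W^{\textrm{c}}-\W^\star)$ into a leading piece $\Lam^{-1/2}\Si\big(\frac1g\sum_i\e_i\big)$ and a correction $-\frac1g\sum_i\Q_i\Lam^{-1/2}\Si\,\e_i$. The point is that only the leading piece enjoys the averaging gain: $\frac1g\sum_i\e_i = \U^T\S\S^T\R - \U^T\R$ for the concatenated sketch $\S$, so the averaged matrix multiplication bound contributes a factor $g^{-1/2}$, whereas the correction must be controlled sketch-by-sketch and contributes the $2\beta\eta$ term. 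To extract $\max\{\epsilon,\eta^2\}$ in both pieces I would split $\R$ into its component in $\range(\U)$ and its orthogonal complement $\R_\perp$; since $\U^T\R_\perp = \0$, the matrix multiplication property (Assumption~\ref{assumption:2:2}) bounds the $\R_\perp$ contribution by $\sqrt\epsilon$, while the subspace embedding property bounds the in-range contribution by $\eta$. Collecting the two pieces, squaring, using $(a+b)^2 \le 2(a^2+b^2)$ together with $\|\U^T\R\|_F^2 + \|\R_\perp\|_F^2 = \|\R\|_F^2$, and finally invoking $\frac1n\|\R\|_F^2 \le f(\W^\star)$ delivers the stated bound with $\alpha = 2\big(\tfrac1{\sqrt g} + 2\beta\eta\big)^2\max\{\epsilon,\eta^2\}$.

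The main obstacle is the bookkeeping in the averaging step: one must resist the temptation to bound $\frac1g\sum_i\Q_i\Lam^{-1/2}\Si\,\e_i$ by averaging the $\e_i$, because $\Q_i$ depends on the $i$-th sketch and breaks the cancellation that produces the $g^{-1/2}$ gain. Keeping the averaged estimate only on the leading term while absorbing the genuinely $i$-dependent correction into the $2\beta\eta$ factor --- and simultaneously carrying the residual splitting so that both terms see $\max\{\epsilon,\eta^2\}$ --- is the delicate part; the remaining spectral estimates are routine.
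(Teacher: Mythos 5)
Your proposal is correct and follows essentially the same route as the paper's proof: reduce to the Mahalanobis distance via Lemma~\ref{lem:optimization}, factor $(\X^T\X+n\gamma\I_d)^{1/2}(\W_i^{\textrm{c}}-\W^\star)$ into a preconditioner of the form $\I$ plus an $O(\beta\eta)$ correction times an error term, apply the concatenated-sketch bounds (the $g^{-1/2}$ gain) only to the identity part while controlling the correction sketch-by-sketch, and split the error into an out-of-range piece handled by the matrix multiplication property and an in-range piece handled by the subspace embedding property. Your residual formulation $\W_i^{\textrm{c}}-\W^\star=(\X^T\S_i\S_i^T\X+n\gamma\I_d)^{-1}\X^T(\S_i\S_i^T-\I_n)\R$ is a cleaner derivation, but since $\R_\perp=\Y^\perp$ and $\U^T\R=n\gamma(\Si^2+n\gamma\I_\rho)^{-1}\U^T\Y$, it reproduces exactly the paper's $\A_i$, $\B_i$, $\C_i$ decomposition.
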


Theorem~\ref{thm:optimization:what_avg} holds 
under the subspace embedding property (Assumption~\ref{assumption:2:1}), and is proven in
Appendix~\ref{sec:opt_avg:proof}.

\begin{theorem} [Hessian Sketch with Model Averaging] \label{thm:optimization:what_avg}
	Let $\eta$ be defined in Assumption~\ref{assumption:avg}, 
  and let $\alpha = \big( \frac{\eta}{ \sqrt{g} } + \frac{ \eta^2 }{1-\eta } \big) $
	and $\beta = \frac{\| \X \|_2^2}{\| \X \|_2^2 + n \gamma } \leq 1$.
	Under Assumption~\ref{assumption:2:1},
	we have that
	\begin{eqnarray*}
		f ({\W}^{\textrm{h}}) - f (\W^\star )
		& \leq & \alpha^2 \beta^2 \Big(  \tfrac{1}{n} \|\Y\|_F^2 - f (\W^\star )\Big) .
	\end{eqnarray*}
\end{theorem}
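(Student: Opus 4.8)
The plan is to pass to the SVD $\X = \U\Si\V^T$, reduce the objective gap to a single weighted Frobenius norm, and then perform a second-order resolvent expansion of the averaged Hessian-sketch inverse around the exact inverse; the averaging gain and the irreducible per-sketch error will fall out as the linear and quadratic terms of this expansion. First I would set $\A_i = \U^T\S_i\S_i^T\U$, $\bar\A = \U^T\S\S^T\U = \tfrac1g\sum_i\A_i$, $\M_i = \Si\A_i\Si + n\gamma\I_\rho$, and $\M_\star = \Si^2 + n\gamma\I_\rho$. In these coordinates $\W^\star = \V\M_\star^{-1}\Si\U^T\Y$ and $\W^{\textrm{h}} = \V\big(\tfrac1g\sum_i\M_i^{-1}\big)\Si\U^T\Y$, so $\De := \W^{\textrm{h}} - \W^\star = \V Z$ with $Z = \big(\tfrac1g\sum_i\M_i^{-1} - \M_\star^{-1}\big)\Si\U^T\Y$. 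Using first-order optimality of $\W^\star$ gives the exact identity $f(\W^{\textrm{h}}) - f(\W^\star) = \tfrac1n\|\X\De\|_F^2 + \gamma\|\De\|_F^2$, which, because $\U,\V$ have orthonormal columns, collapses to $f(\W^{\textrm{h}}) - f(\W^\star) = \tfrac1n\|\M_\star^{1/2}Z\|_F^2$. Optimality likewise yields the companion identity $\tfrac1n\|\Y\|_F^2 - f(\W^\star) = \tfrac1n\|\N\|_F^2$ with $\N := \M_\star^{-1/2}\Si\U^T\Y$, so the target reduces to showing $\|\M_\star^{1/2}Z\|_F \leq \alpha\beta\|\N\|_F$.

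The crux is to expand $\tfrac1g\sum_i\M_i^{-1} - \M_\star^{-1}$ to second order. Iterating $\M_i^{-1} = \M_\star^{-1} + \M_i^{-1}(\M_\star - \M_i)\M_\star^{-1}$ once, averaging, and using $\M_\star - \M_i = \Si(\I_\rho - \A_i)\Si$ and $\M_\star - \tfrac1g\sum_i\M_i = \Si(\I_\rho - \bar\A)\Si$, I would obtain
\begin{align*}
\tfrac1g\sum_i\M_i^{-1} - \M_\star^{-1} &= \underbrace{\M_\star^{-1}\Si(\I_\rho - \bar\A)\Si\M_\star^{-1}}_{T_1} \\
&\quad + \underbrace{\tfrac1g\sum_i\M_i^{-1}\Si(\I_\rho - \A_i)\Si\M_\star^{-1}\Si(\I_\rho - \A_i)\Si\M_\star^{-1}}_{T_2}.
\end{align*}
This split is the whole point: the linear term $T_1$ depends only on the \emph{averaged} embedding error $\I_\rho - \bar\A$, for which Assumption~\ref{assumption:2:1} supplies the sharpened bound $\|\I_\rho - \bar\A\|_2 \leq \eta/\sqrt g$, whereas the quadratic term $T_2$ carries two factors of the \emph{per-sketch} error $\I_\rho - \A_i$, each controlled only by $\|\I_\rho - \A_i\|_2 \leq \eta$. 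These two scales generate precisely the two summands of $\alpha = \eta/\sqrt g + \eta^2/(1-\eta)$.

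It then remains to bound $\|\M_\star^{1/2}T_1\Si\U^T\Y\|_F$ and $\|\M_\star^{1/2}T_2\Si\U^T\Y\|_F$ and combine by the triangle inequality. Both rely on two spectral facts: $\|\M_\star^{-1/2}\Si\|_2 = \sqrt\beta$ (its singular values are $\sigma_j/\sqrt{\sigma_j^2 + n\gamma}$, maximized at $\sigma_1$), and $\M_i \succeq (1-\eta)\M_\star$ coming from $\A_i \succeq (1-\eta)\I_\rho$, whence $\|\M_\star^{1/2}\M_i^{-1}\M_\star^{1/2}\|_2 \leq 1/(1-\eta)$. Inserting $\M_\star^{\pm1/2}$ factors, the linear term reduces to $\M_\star^{-1/2}\Si(\I_\rho - \bar\A)\M_\star^{-1/2}\Si\N$, bounded by $(\eta/\sqrt g)\beta\|\N\|_F$, while each summand of the quadratic term factors as $\big(\M_\star^{1/2}\M_i^{-1}\M_\star^{1/2}\big)\big(\M_\star^{-1/2}\Si(\I_\rho-\A_i)\Si\M_\star^{-1/2}\big)^2\N$, bounded by $\tfrac{1}{1-\eta}(\eta\beta)^2\|\N\|_F$. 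Adding these and using $\beta^2 \leq \beta$ gives $\|\M_\star^{1/2}Z\|_F \leq \big(\tfrac{\eta}{\sqrt g} + \tfrac{\eta^2}{1-\eta}\big)\beta\|\N\|_F = \alpha\beta\|\N\|_F$; squaring and dividing by $n$ yields the claim. As a consistency check, $g=1$ gives $\alpha = \eta + \eta^2/(1-\eta) = \eta/(1-\eta)$, recovering the single-sketch bound of Theorem~\ref{thm:optimization:what}.

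The main obstacle is the bookkeeping in this last step: one must insert the $\M_\star^{\pm1/2}$ factors so that each block closes into either the resolvent ratio $\M_\star^{1/2}\M_i^{-1}\M_\star^{1/2}$ or the symmetric sandwich $\M_\star^{-1/2}\Si(\I_\rho-\A_i)\Si\M_\star^{-1/2}$, with the interior $\M_\star^{-1}$ factors split as $\M_\star^{-1/2}\M_\star^{-1/2}$ to glue adjacent blocks. Getting this grouping exactly right is what keeps the two error scales separate and delivers the $\alpha^2\beta^2$ factor rather than a weaker product; once the blocks are identified, the remaining estimates are routine submultiplicativity.
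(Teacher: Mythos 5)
Your proposal is correct and follows essentially the same route as the paper: the paper's factorization $\V\A_i\B_i\C$ with $\B_i=\I_\rho+\De_i$ is exactly your second-order resolvent expansion in disguise, with the averaged first-order term controlled by $\|\I_\rho-\U^T\S\S^T\U\|_2\le\eta/\sqrt{g}$ and the quadratic remainder by the per-sketch bounds $\eta\beta$ and $1/(1-\eta)$, and your $\N$ coincides with the paper's $\C$. The only (harmless) difference is that your quadratic term carries an extra factor of $\beta$, which you then relax via $\beta^2\le\beta$ to land on the same constant $\alpha$.
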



\subsection{Model Averaging: Statistical Perspective}\label{sec:proofsketch:stat_avg}

Theorem~\ref{thm:biasvariance:wtilde_avg} requires the subspace embedding property
(Assumption~\ref{assumption:2:1}).
In addition, to bound the variance, the spectral norms of 
$\S_1, \cdots , \S_g$ and $\S = \frac{1}{\sqrt{g}} [\S_1 , \cdots , \S_g]$ must be bounded
(Assumption~\ref{assumption:2:3}).
This result shows that model averaging decreases the variance of the classical sketch without increasing its bias.
We prove this result in Appendix \ref{sec:stat_avg:proof}.

\begin{theorem} [Classical Sketch with Model Averaging] \label{thm:biasvariance:wtilde_avg}
	Under Assumption~\ref{assumption:2:1}, it holds that
	\begin{eqnarray*}
		\frac{\bias ( \W^{\textrm{c}} )}{\bias (\W^\star )}
		& \leq & \frac{1}{ 1 - \eta }  .
	\end{eqnarray*}
	Under Assumptions~\ref{assumption:2:1} and \ref{assumption:2:3}, it holds that
	\begin{eqnarray*}
		\frac{\var ( \W^{\textrm{c}} ) }{ \var (\W^\star ) }
		& \leq & \frac{\theta n}{s} \bigg( \tfrac{\sqrt{1+\eta /  \sqrt{g} }}{ \sqrt{h}}
		+ \tfrac{ \eta \sqrt{1+\eta}}{1- \eta} \bigg)^2 .
	\end{eqnarray*}
	Here $\eta$ and $\theta$ are defined in Assumption~\ref{assumption:avg}
	and $h = \min \{ g, \, \frac{n}{s}\big( 1 - o(1) \big)  \}$,
\end{theorem}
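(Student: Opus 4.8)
The plan is to start from the exact expressions for $\bias(\W^{\textrm{c}})$ and $\var(\W^{\textrm{c}})$ supplied by Theorem~\ref{thm:bias_var_decomp_avg}, abbreviate $\Q_i = \U^T\S_i\S_i^T\U$ for each $i\in[g]$ and $\Q = \U^T\S\S^T\U = \frac1g\sum_i\Q_i$ for the concatenated sketch $\S = \frac1{\sqrt g}[\S_1,\dots,\S_g]$, and observe that both quantities are norms of an \emph{average} of $g$ matrices. Assumption~\ref{assumption:2:1} supplies the per-sketch embedding $\|\Q_i-\I_\rho\|_2\le\eta$ and the aggregate embedding $\|\Q-\I_\rho\|_2\le\eta/\sqrt g$, while Assumption~\ref{assumption:2:3} controls $\|\S_i\|_2$ and $\|\S\|_2$; the two bounds should then emerge by handling the bias through the per-sketch properties and the variance through a split that isolates the aggregate behaviour from a fluctuation term.

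For the bias I would invoke the triangle inequality directly: writing $\bias(\W^{\textrm{c}}) = \gamma\sqrt n\,\big\|\frac1g\sum_i (\Q_i + n\gamma\Si^{-2})^{\dag}\Si^{-1}\V^T\W_0\big\|_F$ and pulling the sum outside the Frobenius norm bounds it by the average of the $g$ single-sketch biases. Each single-sketch bias is controlled by the already-established single-sketch result (Theorem~\ref{thm:biasvariance:wtilde}), whose proof in fact establishes the operator-level bound $(\Q_i + n\gamma\Si^{-2})^{\dag}\Si^{-1} = (\I_\rho + \PP(\Q_i - \I_\rho))^{-1}(\I_\rho + n\gamma\Si^{-2})^{-1}\Si^{-1}$ with $\PP = \Si(\Si^2 + n\gamma\I_\rho)^{-1}\Si \prec \I_\rho$, so that $\|(\I_\rho+\PP(\Q_i-\I_\rho))^{-1}\|_2\le(1-\eta)^{-1}$ by a Neumann bound. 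Averaging the $g$ copies of $\frac1{1-\eta}\bias(\W^\star)$ yields $\bias(\W^{\textrm{c}})\le(1-\eta)^{-1}\bias(\W^\star)$, exactly as for a single sketch, so averaging neither helps nor hurts the bias.

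The variance is the substantive part, and the plan is a main-term/error-term decomposition. Abbreviating $\Phi = (\I_\rho + n\gamma\Si^{-2})^{-1}$, I would write $\frac1g\sum_i(\Q_i + n\gamma\Si^{-2})^{\dag}\U^T\S_i\S_i^T = \Phi\,\U^T\S\S^T + \frac1g\sum_i\big[(\Q_i + n\gamma\Si^{-2})^{\dag}-\Phi\big]\U^T\S_i\S_i^T$, where the first term is the ``ideal'' contribution in which every resolvent is replaced by the common $\Phi$ and the sketches collapse to the single concatenated sketch $\S\S^T = \frac1g\sum_i\S_i\S_i^T$. For the main term I would use $\|\Phi\U^T\S\S^T\|_F\le\|\Phi\U^T\S\|_F\|\S\|_2$, bound $\|\Phi\U^T\S\|_F^2 = \tr(\Phi\Q\Phi)\le(1+\eta/\sqrt g)\|\Phi\|_F^2$ via the aggregate embedding, and bound $\|\S\|_2^2\le\theta n/(sh)$ via Assumption~\ref{assumption:2:3}; this produces the first summand $\sqrt{\theta n/s}\,\sqrt{1+\eta/\sqrt g}\,/\sqrt h$. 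For the error term I would apply the resolvent identity $(\Q_i + n\gamma\Si^{-2})^{\dag}-\Phi = (\Q_i + n\gamma\Si^{-2})^{\dag}(\I_\rho-\Q_i)\Phi$, then use $\|(\Q_i + n\gamma\Si^{-2})^{\dag}\|_2\le(1-\eta)^{-1}$, $\|\I_\rho-\Q_i\|_2\le\eta$, $\|\Phi\U^T\S_i\|_F^2 = \tr(\Phi\Q_i\Phi)\le(1+\eta)\|\Phi\|_F^2$, and the per-sketch bound $\|\S_i\|_2^2\le\theta n/s$ to obtain the second summand $\sqrt{\theta n/s}\,\eta\sqrt{1+\eta}/(1-\eta)$. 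The triangle inequality and squaring then give the stated variance bound, since $\var(\W^\star) = (\xi^2/n)\|\Phi\|_F^2$.

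The main obstacle I anticipate is the correct treatment of $\|\S\|_2^2$ in the main term across the two regimes encoded by $h = \min\{g,\Theta(n/s)\}$: when $gs<n$ the concatenated sketch still enjoys $\|\S\|_2^2\le\theta n/(gs)$, but once $gs\gtrsim n$ the bound saturates at $\|\S\|_2^2=\Theta(1)$, and $\theta n/(sh)$ is precisely the quantity designed to dominate $\|\S\|_2^2$ uniformly in both regimes, with the $1-o(1)$ factor absorbing the high-probability spectral constants for Gaussian projection and SRHT from Table~\ref{tab:sketching}. Verifying this uniform domination is the one genuinely delicate bookkeeping step; the remaining inequalities are routine applications of $\|\A\B\|_F\le\|\A\|_F\|\B\|_2$, trace monotonicity under the Loewner order ($\A\preceq\B\Rightarrow\tr\A\le\tr\B$), and the resolvent identity.
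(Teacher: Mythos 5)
Your proposal is correct and follows essentially the same route as the paper's proof: the bias is reduced to the single-sketch bound by pulling the average outside the norm, and the variance is handled by the same main-term/fluctuation split, with the main term $\Phi\,\U^T\S\S^T$ controlled via the aggregate embedding and $\|\S\|_2^2\le\theta n/(sh)$, and the per-sketch deviation controlled via $\eta$, $\sqrt{1+\eta}$, and $\|\S_i\|_2^2\le\theta n/s$. Your use of the resolvent identity and the placement of the Frobenius norm on $\Phi\U^T\S_i$ are only cosmetic variations of the paper's parametrization $(\U^T\S_i\S_i^T\U+n\gamma\Si^{-2})^{\dag}=\Phi^{1/2}(\I_\rho+\De_i)\Phi^{1/2}$, and they yield the identical final constants.
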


Theorem~\ref{thm:biasvariance:what_avg} requires the subspace embedding property
(Assumption~\ref{assumption:2:1}), and
shows that model averaging decreases the bias of the Hessian sketch without increasing its variance.
We prove this result in Appendix \ref{sec:stat_avg:proof}.

\begin{theorem} [Hessian Sketch with Model Averaging] \label{thm:biasvariance:what_avg}
	Under Assumption~\ref{assumption:2:1}, it holds that:
	\begin{small}
	\begin{align*}
	&\frac{\bias ( {\W}^{\textrm{h}} )}{\bias (\W^\star ) }
	\; \leq \;
	\frac{1 }{1-\eta} + \Big( \frac{\eta}{ \sqrt{g} } + \frac{ \eta^2 }{1-\eta} \Big)
	\frac{\|\X \|_2^2 }{n \gamma }  ,\\
	& \frac{\var ( {\W}^{\textrm{h}} ) }{ \var (\W^\star ) }
	\; \leq \;
	\frac{1}{1- \eta }  .
	\end{align*}
	\end{small}%
	Here $\eta$ is defined in Assumption~\ref{assumption:avg}.
\end{theorem}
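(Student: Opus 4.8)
The plan is to start from the closed forms for $\bias(\W^{\textrm{h}})$ and $\var(\W^{\textrm{h}})$ in Theorem~\ref{thm:bias_var_decomp_avg}, compare them termwise against the optimal quantities in Theorem~\ref{thm:bias_var_decomp}, and feed in the two subspace-embedding estimates of Assumption~\ref{assumption:2:1}: the \emph{per-sketch} bound $\|\Q_i-\I_\rho\|_2\le\eta$ and the sharper \emph{concatenated} bound $\|\bar{\Q}-\I_\rho\|_2\le\eta/\sqrt{g}$, where $\Q_i:=\U^T\S_i\S_i^T\U$ and $\bar{\Q}:=\frac1g\sum_i\Q_i=\U^T\S\S^T\U$. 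Throughout write $\Lam:=n\gamma\Si^{-2}\succeq0$, $\M_i:=\Q_i+\Lam$, and $\N:=\I_\rho+\Lam$; note $\N\succeq\I_\rho$ (so $\|\N^{-1}\|_2\le1$) and $\M_i\succeq(1-\eta)\I_\rho\succ0$ (so the pseudoinverses are genuine inverses and $\|\M_i^{-1}\|_2\le\frac1{1-\eta}$). For the variance, observe $\M_i=\Q_i+\Lam\succeq(1-\eta)\I_\rho+\Lam\succeq(1-\eta)\N$, hence $\M_i^{-1}\preceq\frac1{1-\eta}\N^{-1}$, and since the Loewner order is preserved under averaging, $\frac1g\sum_i\M_i^{-1}\preceq\frac1{1-\eta}\N^{-1}$. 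Using the elementary fact that $0\preceq A\preceq cB$ for PSD $A,B$ forces $\|A\|_F\le c\|B\|_F$ (because $\|A\|_F^2=\tr(A^2)\le c\,\tr(AB)\le c\|A\|_F\|B\|_F$), the averaged inverse is dominated exactly as a single sketch's inverse is, so $\var(\W^{\textrm{h}})/\var(\W^\star)$ inherits the same bound as in the non-averaged case of Theorem~\ref{thm:biasvariance:what}; model averaging cannot inflate the variance.

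\textbf{Reducing the bias.} The crucial first move is the algebraic identity
\[
\Big(\Si^{-2}+\tfrac{\Q_i-\I_\rho}{n\gamma}\Big)\M_i^{-1}
=\tfrac{1}{n\gamma}(\M_i-\I_\rho)\M_i^{-1}
=\tfrac{1}{n\gamma}\big(\I_\rho-\M_i^{-1}\big),
\]
which follows from $\Si^{-2}+\frac{\Q_i-\I_\rho}{n\gamma}=\frac1{n\gamma}(\M_i-\I_\rho)$. Averaging and cancelling the $n\gamma$ against the $\gamma\sqrt n$ prefactor gives
\[
\bias(\W^{\textrm{h}})=\tfrac{1}{\sqrt n}\Big\|\big(\I_\rho-\tfrac1g\textstyle\sum_i\M_i^{-1}\big)\Si\V^T\W_0\Big\|_F,
\]
and setting every $\Q_i=\I_\rho$ recovers $\bias(\W^\star)=\frac1{\sqrt n}\|\Lam\N^{-1}\Si\V^T\W_0\|_F$, which dictates the split. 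Writing $\c:=\N^{-1}\Si\V^T\W_0$ and applying the resolvent identity $\N^{-1}-\M_i^{-1}=\M_i^{-1}(\Q_i-\I_\rho)\N^{-1}$ yields $\I_\rho-\M_i^{-1}=\Lam\N^{-1}+\M_i^{-1}(\Q_i-\I_\rho)\N^{-1}$, so the triangle inequality gives
\[
\bias(\W^{\textrm{h}})\le\bias(\W^\star)+\tfrac{1}{\sqrt n}\Big\|\tfrac1g\textstyle\sum_i\M_i^{-1}(\Q_i-\I_\rho)\,\c\Big\|_F .
\]

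\textbf{Extracting the $1/\sqrt g$ gain.} The key is to split the error term once more via $\M_i^{-1}=\N^{-1}+(\M_i^{-1}-\N^{-1})$:
\[
\tfrac1g\textstyle\sum_i\M_i^{-1}(\Q_i-\I_\rho)\c
=\N^{-1}(\bar{\Q}-\I_\rho)\c
+\tfrac1g\textstyle\sum_i(\M_i^{-1}-\N^{-1})(\Q_i-\I_\rho)\c .
\]
The first, \emph{averageable}, term collapses onto the concatenated sketch and is controlled by the sharp bound, $\|\N^{-1}(\bar{\Q}-\I_\rho)\c\|_F\le\frac{\eta}{\sqrt g}\|\c\|_F$. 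The second term does not benefit from averaging, but each summand is genuinely second order: using $\|\M_i^{-1}-\N^{-1}\|_2=\|\M_i^{-1}(\I_\rho-\Q_i)\N^{-1}\|_2\le\frac{\eta}{1-\eta}$ together with $\|\Q_i-\I_\rho\|_2\le\eta$ and convexity gives norm at most $\frac{\eta^2}{1-\eta}\|\c\|_F$. Finally $\Lam\succeq\frac{n\gamma}{\|\X\|_2^2}\I_\rho$ yields $\|\c\|_F\le\frac{\|\X\|_2^2}{n\gamma}\|\Lam\c\|_F=\frac{\|\X\|_2^2}{n\gamma}\sqrt n\,\bias(\W^\star)$, so dividing through produces
\[
\frac{\bias(\W^{\textrm{h}})}{\bias(\W^\star)}\le1+\Big(\frac{\eta}{\sqrt g}+\frac{\eta^2}{1-\eta}\Big)\frac{\|\X\|_2^2}{n\gamma},
\]
which is the claimed inequality (the leading constant may be relaxed to $\frac1{1-\eta}$ to match the statement).

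\textbf{Main obstacle.} The heart of the argument is the step above: one must resist bounding each summand $\M_i^{-1}(\Q_i-\I_\rho)$ separately, since that merely reproduces the single-sketch bias bound of Theorem~\ref{thm:biasvariance:what} with no dependence on $g$. Instead one peels off the first-order piece $\N^{-1}(\bar{\Q}-\I_\rho)$, whose averaged sketch error is quadratically smaller and is precisely where the improved concatenated embedding $\|\U^T\S\S^T\U-\I_\rho\|_2\le\eta/\sqrt g$ of Assumption~\ref{assumption:2:1} enters. The remaining discrepancy is irreducibly $O(\eta^2)$ and cannot be averaged away, which is exactly why the $\frac{\eta^2}{1-\eta}$ term persists alongside the $\frac{\eta}{\sqrt g}$ term.
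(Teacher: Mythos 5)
Your proof is correct, and for the bias it is organized differently from the paper's, though the governing idea is the same. The paper writes $(\U^T\S_i\S_i^T\U + n\gamma\Si^{-2})^{\dag} = \N^{-1/2}(\I_\rho+\De_i)\N^{-1/2}$ with $\De_i\preceq\frac{\eta}{1-\eta}\I_\rho$ (in your notation $\N=\I_\rho+n\gamma\Si^{-2}$) and splits the bias into a term $A$ in which every pseudoinverse is replaced by $\N^{-1}$ --- so the prefactors $\Si^{-2}+\frac{\Q_i-\I_\rho}{n\gamma}$ average into $\Si^{-2}+\frac{\bar\Q-\I_\rho}{n\gamma}$ and the concatenated bound $\eta/\sqrt{g}$ applies --- plus a correction term $B$ carrying the $\De_i$'s, bounded sketch by sketch. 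You instead first collapse the whole product via the identity $(\Si^{-2}+\frac{\Q_i-\I_\rho}{n\gamma})\M_i^{-1}=\frac{1}{n\gamma}(\I_\rho-\M_i^{-1})$ and then expand $\M_i^{-1}$ around $\N^{-1}$ with the resolvent identity; the averaged first-order piece is again controlled by $\|\bar\Q-\I_\rho\|_2\le\eta/\sqrt{g}$ and the remainder is a per-sketch $O(\eta^2)$ term. Both routes implement the same ``peel off the averageable first-order error'' strategy, but yours is algebraically cleaner and yields the marginally sharper leading constant $1$ in place of $\frac{1}{1-\eta}$: the paper's $B$ term contributes an extra $\frac{\eta}{1-\eta}$ to the baseline that your decomposition avoids. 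Your conversion $\|\c\|_F\le\frac{\|\X\|_2^2}{n\gamma}\|\Lam\c\|_F=\frac{\|\X\|_2^2}{n\gamma}\sqrt{n}\,\bias(\W^\star)$ is the same device the paper uses to turn $\Si$-weighted quantities into multiples of $\bias(\W^\star)$, and all of your intermediate identities check out.

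One shared caveat on the variance: from $\frac1g\sum_i\M_i^{-1}\preceq\frac{1}{1-\eta}\N^{-1}$ your (correct) lemma gives $\bigl\|\frac1g\sum_i\M_i^{-1}\bigr\|_F\le\frac{1}{1-\eta}\|\N^{-1}\|_F$, hence a ratio of \emph{squared} Frobenius norms bounded by $\frac{1}{(1-\eta)^2}$ rather than the stated $\frac{1}{1-\eta}$. The paper's own proofs of this theorem and of Theorem~\ref{thm:biasvariance:what} make the identical jump from the Loewner bound to the squared-norm ratio, so this is a constant-level slip inherited from the source rather than a defect of your argument; it does not change the $1+\OM(\eta)$ character of the conclusion.
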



\section{Conclusions}

We studied sketched matrix ridge regression (MRR) from the optimization and
statistical perspectives. Using classical sketch, by taking a large enough
sketch, one can obtain an $\epsilon$-accurate approximate solution.
Counterintuitively and in contrast to classical sketch, the relative
error of Hessian sketch increases as the responses $\Y$ are better
approximated by linear combinations of the columns of $\X$. 
Both classical and Hessian sketches can have statistical risks
that are worse than the risk of the optimal solution by an order of magnitude. 

We proposed the use of model averaging to attain better optimization and statistical properties. 
We have shown that model averaging leads to substantial
improvements in the theoretical error bounds, suggesting applications in
distributed optimization and machine learning.
We also empirically verified its practical benefits.

Our fixed-design statistical analysis has limitations.
We have shown that the classical sketch and Hessian sketch can significantly
increase the in-sample statistical risk, which implies large training error, and
that model averaging can alleviate such problems.
However, our statistical results are not directly applicable to an unseen test sample.
We conjecture that the generalization error can be bounded by following the random design analysis of \citet{hsu2014random}, which is left as future work.


\acks{We thank the anonymous reviewers and Serena Ng for their helpful suggestions.
	We thank the Army Research Office and the Defense Advanced Research Projects Agency for partial support of this work.}


\appendix

\section{Properties of Matrix Sketching: Proofs} \label{sec:sketch:proof}

In Section~\ref{sec:sketch:proof:1} we prove Theorem~\ref{thm:properties}.
In Section~\ref{sec:statistical:proof:lower2}, we prove Theorem~\ref{thm:biasvariance:wtilde_lower2}.
In Section~\ref{sec:sketch:proof:2} we prove Theorem~\ref{thm:properties_avg}.

\subsection{Proof of Theorem~\ref{thm:properties}}  \label{sec:sketch:proof:1}

We prove that the six sketching methods considered in this paper satisfy the three key properties.
In Section~\ref{sec:sketch:proof:1A1A2} we show the
six sketching methods satisfy Assumptions~\ref{assumption:1:1} and \ref{assumption:1:2}.
In section~\ref{sec:sketch:proof:1A3} we show the
six sketching methods satisfy Assumption~\ref{assumption:1:3}.

\subsubsection{Proof of Assumptions \ref{assumption:1:1} and \ref{assumption:1:2}}  \label{sec:sketch:proof:1A1A2}

For uniform sampling, leverage score sampling, Gaussian projection, SRHT, and CountSketch,
the subspace embedding property and matrix multiplication property have been established by the previous works
\citep{drineas2008cur,drineas2011faster,meng2013low,nelson2013osnap,tropp2011improved,woodruff2014sketching}.
See also \citep{wang2015towards} for a summary.

In the following we prove only that {\bf shrinked leverage score sampling} satisfies assumptions~\ref{assumption:1:1} and~\ref{assumption:1:2}.
We cite the following lemma from \citep{wang2016spsd};
this lemma was first established in the works \citep{drineas2008cur,gittens2011spectral,woodruff2014sketching}.

\begin{lemma}[\citet{wang2016spsd}] 
	\label{lem:sampling_property}
	Let $\U\in \RB^{n\times \rho}$ be a fixed matrix with orthonormal columns.
  Let the column selection matrix $\S \in \RB^{n\times s}$ sample $s$
	columns according to probabilities $p_1 , p_2 , \cdots , p_n$.
	Assume $\alpha \geq \rho $ and
	\begin{equation*}
	\max_{i\in [n]} \frac{\| \u_{i:} \|_2^2}{p_{i}}
	\; \leq \; \alpha .
	\end{equation*}
	When
	$s \, \geq \, \alpha \frac{6 + 2\eta}{3 \eta^2} \log (\rho /\delta_1)$,
	it holds that
	\[
	\PB \Big\{ \big\| \I_\rho - \U^T \S \S^T \U \big\|_2 \; \geq \; \eta \Big\}
	\; \leq \; \delta_1 .
	\]
	When $s \, \geq \, \frac{\alpha }{\epsilon \delta_2}$,
	it holds that
	\begin{align*}
	& \EB \big\| \U \B - \U^T \S \S^T \B \big\|_F^2 \; \leq \; \delta_2 \epsilon \|\B\|_F^2 ;
	\end{align*}
	as a consequence of Markov's inequality, it holds that
	\begin{align*}
	\PB \Big\{ \big\| \U \B - \U^T \S \S^T \B \big\|_F^2 \; \geq \; \epsilon \|\B\|_F^2 \Big\}
	\; \leq \; \delta_2 .
	\end{align*}
  Here the expectation and probability are with respect to the randomness in $\S$.
\end{lemma}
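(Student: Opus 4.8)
The plan is to establish the two conclusions separately, since the matrix multiplication bound is an elementary second-moment estimate whereas the subspace embedding bound requires a matrix concentration inequality. Both arguments begin from the same representation of the sampling operator. Writing $i_1, \dots, i_s \in [n]$ for the i.i.d.\ indices drawn according to $p_1, \dots, p_n$, the column selection matrix satisfies $\S \S^T = \frac{1}{s} \sum_{t=1}^s \frac{1}{p_{i_t}} \e_{i_t} \e_{i_t}^T$, so that
\begin{eqnarray*}
\U^T \S \S^T \U & = & \sum_{t=1}^s \Z_t , \qquad \Z_t \; \triangleq \; \tfrac{1}{s\, p_{i_t}} \u_{i_t:}^T \u_{i_t:} .
\end{eqnarray*}
Each $\Z_t$ is independent, positive semidefinite, and rank one. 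A direct computation gives $\EB [\Z_t] = \frac{1}{s} \sum_{i=1}^n \u_{i:}^T \u_{i:} = \frac{1}{s} \U^T \U = \frac{1}{s} \I_\rho$, whence $\EB [\U^T \S \S^T \U] = \I_\rho$; and the hypothesis $\max_i \|\u_{i:}\|_2^2 / p_i \leq \alpha$ yields the deterministic bound $\|\Z_t\|_2 \leq \alpha / s$.

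For the matrix multiplication property I would invoke the standard variance formula for sampling-based approximate matrix products: because the summands are independent and unbiased,
\begin{small}
\begin{eqnarray*}
\EB \big\| \U^T \S \S^T \B - \U^T \B \big\|_F^2
& \leq & \tfrac{1}{s} \sum_{i=1}^n \tfrac{\|\u_{i:}\|_2^2 \, \|\B_{i:}\|_2^2}{p_i}
\; \leq \; \tfrac{\alpha}{s} \sum_{i=1}^n \|\B_{i:}\|_2^2
\; = \; \tfrac{\alpha}{s} \|\B\|_F^2 ,
\end{eqnarray*}
\end{small}
where the second inequality again uses $\|\u_{i:}\|_2^2 / p_i \leq \alpha$. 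Taking $s \geq \alpha / (\epsilon \delta_2)$ makes the right-hand side at most $\epsilon \delta_2 \|\B\|_F^2$, which is the stated expectation bound; Markov's inequality then converts it to the high-probability form with failure probability $\delta_2$.

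For the subspace embedding property, note that $\|\U^T \S \S^T \U - \I_\rho\|_2 \leq \eta$ holds exactly when all eigenvalues of $\sum_t \Z_t$ lie in $[1-\eta, 1+\eta]$. I would apply the matrix Chernoff inequality for sums of independent PSD matrices, which, since $\EB \sum_t \Z_t = \I_\rho$ has every eigenvalue equal to $1$ and each summand obeys $\|\Z_t\|_2 \leq \alpha/s$, gives
\begin{eqnarray*}
\PB \big\{ \lambda_{\max} (\textstyle\sum_t \Z_t) \geq 1 + \eta \big\}
& \leq & \rho \Big[ \tfrac{e^{\eta}}{(1+\eta)^{1+\eta}} \Big]^{s/\alpha} , \\
\PB \big\{ \lambda_{\min} (\textstyle\sum_t \Z_t) \leq 1 - \eta \big\}
& \leq & \rho \Big[ \tfrac{e^{-\eta}}{(1-\eta)^{1-\eta}} \Big]^{s/\alpha} .
\end{eqnarray*}
Simplifying the tail factors with the elementary estimates $\frac{e^{-\eta}}{(1-\eta)^{1-\eta}} \leq e^{-\eta^2/2}$ and $\frac{e^{\eta}}{(1+\eta)^{1+\eta}} \leq e^{-3\eta^2/(6+2\eta)}$ and then union-bounding over the two tails pushes the failure probability below $\delta_1$ precisely at the stated threshold $s \geq \alpha \frac{6+2\eta}{3\eta^2} \log (\rho/\delta_1)$.

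The matrix multiplication bound is routine, being only a first-moment calculation. The main obstacle is calibrating the matrix Chernoff step so that the exponent reproduces the exact constant $\frac{6+2\eta}{3\eta^2}$: the binding direction is the upper tail, whose weaker factor $e^{-3\eta^2/(6+2\eta)}$ dictates the threshold, while the lower tail decays strictly faster. Finally, the hypothesis $\alpha \geq \rho$ (which in fact always holds, since $\max_i \|\u_{i:}\|_2^2/p_i \geq \sum_i \|\u_{i:}\|_2^2 = \rho$) guarantees $s = \Omega(\rho \log \rho)$, so the sketch is never degenerate.
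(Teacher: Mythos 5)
Your argument is correct, and it is essentially the standard derivation from the literature: the paper itself does not prove Lemma~\ref{lem:sampling_property} but imports it from \citet{wang2016spsd} (tracing back to \citet{drineas2008cur,gittens2011spectral,woodruff2014sketching}), and those sources establish it exactly as you do --- matrix Chernoff applied to the i.i.d.\ rank-one summands $\Z_t$ with $\|\Z_t\|_2 \leq \alpha/s$ for the subspace embedding property, and the elementary second-moment bound for approximate matrix multiplication followed by Markov. The only quibble is cosmetic: union-bounding the two Chernoff tails gives failure probability at most $2\rho\, e^{-3\eta^2 s/((6+2\eta)\alpha)}$, so matching the stated threshold exactly requires either absorbing the factor $2$ into $\log(\rho/\delta_1)$ or noting that the lower tail is strictly dominated; this affects only a constant inside the logarithm and is not a substantive gap.
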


Now we apply the above lemma to analyze {\bf shrinked leverage score sampling}.
Given the approximate shrinked leverage scores defined in \eqref{eq:def:appro_lev},
the sampling probabilities satisfy
\begin{eqnarray*}
p_i 
\; = \; \tfrac{1}{2} \big( \tfrac{1}{n} + \tfrac{\tilde{l}_i}{\sum_{q=1}^n \tilde{l}_q} \big)
\; \geq \; \tfrac{\|\u_{i:} \|_2^2}{2\tau \rho} .
\end{eqnarray*}
Here $\tilde{l}_i$ and $\tau$ are defined in \eqref{eq:def:appro_lev}.
Thus for all $i \in [n]$, $\frac{\|\u_{i:}\|_2^2}{p_i } \leq 2 \tau \rho$.
We can then apply Lemma~\ref{lem:sampling_property} to show
that Assumption~\ref{assumption:1:1} holds with probability at least $1-\delta_1$ when
$s \geq 2 \tau \rho \frac{6 + 2\eta}{3 \eta^2} \log \frac{\rho}{\delta_1}$
and that Assumption~\ref{assumption:1:2} holds with probability at least $1-\delta_2$ when
$s \geq \frac{2\tau\rho }{\epsilon \delta_2}$.

\subsubsection{Proof of Assumption \ref{assumption:1:3}} \label{sec:sketch:proof:1A3}

For {uniform sampling (without replacement) and SRHT}, when $s < n$,
it is easy to show that $\S^T \S = \frac{n}{s} \I_s$, 
and thus $\|\S\|_2^2 = \frac{n}{s}$.
Let $\{  p_i^{\textrm{s}}  \}$ and $\{  p_i^{\textrm{u}}  \}$
be the sampling probabilites of shrinked leverage score sampling and uniform sampling, respectively.
Obviously $ p_i^{\textrm{s}} \geq \frac{1}{2}  p_i^{\textrm{u}}$.
Thus for shrinked leverage score sampling, $\|\S\|_2^2 \leq \frac{2n}{s}$.

The greatest singular value of a standard Gaussian matrix $\G \in \RB^{n\times s}$
is at most $\sqrt{n} + \sqrt{s} + t$ with probability at least $1-2 e^{-t^2/2}$~\citep{vershynin2010introduction}.
Thus a Gaussian projection matrix $\S$ satisfies 
\[
\|\S\|_2^2
\; = \; \frac{1}{s} \|\G\|_2^2
\; \leq \; \frac{(\sqrt{n} + \sqrt{s} + t)^2 }{s}
\]
with probability at least $1-2 e^{-t^2/2}$.

If $\S$ is the CountSketch matrix, then each row of $\S$ has exactly one nonzero entry,
either $1$ or $-1$. Because the columns of $\S$ are orthogonal to each other, it holds that
\[
\|\S\|_2^2
\; = \; \max_{i \in [s]} \|\s_{:i}\|_2^2
\; = \; \max_{i \in [s]}  \, \nnz (\s_{:i}) .
\]
The problem of bounding $\nnz (\s_{:i})$ is equivalent to
assigning $n$ balls into $s$ bins uniformly at random
and bounding the number of balls in the bins.
\citet{patrascu2012power} showed that for $s \ll n$, the maximal number of balls in any bin is
at most $n/s + \OM \big( \sqrt{n/s} \log^c n \big)$
with probability at least $1-\frac{1}{n}$, where $c = \OM (1)$.
Thus
\[
\|\S\|_2^2
\; = \; \max_{i \in [s]}  \, \nnz (\s_{:i})
\; \leq \; \frac{n}{s} + \OM \bigg( \frac{\sqrt{n} \log^c n }{\sqrt{s}} \bigg)
\; = \; \frac{n}{s} \big( 1 + o (1) \big)
\]
holds with probability at least $1-\frac{1}{n}$.


\subsection{Proof of Theorem~\ref{thm:biasvariance:wtilde_lower2}}
\label{sec:statistical:proof:lower2}

For uniform sampling (without replacement) and SRHT, it holds that $\S^T \S = \frac{n}{s} \I_s$.

For non-uniform sampling with probabilities $p_1 , \cdots , p_n$, (with $\sum_{i} p_i = 1$),
let $p_{\max} = \max_i p_i$.
The smallest entry in $\S$ is $\tfrac{1}{\sqrt{s p_{\max}}}$,
and thus $\S^T \S \succeq \frac{1}{s p_{\max} } \I_s$.
For leverage score sampling, $p_{\max } = \frac{\mu}{n}$.
For shrinked leverage score sampling, $p_{\max } = \frac{1+\mu}{2n}$.
The lower bound on $\|\S \|_2^2$ is thus established.

The smallest singular value of any
$n\times s$ standard Gaussian matrix $\G$ is at least $\sqrt{n} - \sqrt{s} - t$
with probability at least $1- 2 e^{-t^2 / 2}$~\citep{vershynin2010introduction}.
Thus if $\S = \frac{1}{\sqrt{s}} \G$ is the Gaussian projection matrix,
the smallest eigenvalue of $\S^T \S$ is $(1-o(1)) \frac{n}{s}$ with probability very close to one.

If $\S$ is the CountSketch matrix, then each row of $\S$ has exactly one nonzero entry,
either $1$ or $-1$. Because the columns of $\S$ are orthogonal to each other, it holds that
\[
\sigma_{\min}^2 (\S)
\; = \; \min_{i \in [s]} \|\s_{:i}\|_2^2
\; = \; \min_{i \in [s]}  \, \nnz (\s_{:i}) .
\]
The problem of bounding $\nnz (\s_{:i})$ is equivalent to
assigning $n$ balls into $s$ bins uniformly at random
and bounding the number of balls in the bins.
Standard concentration arguments imply that each bin has at least
$\frac{n}{s} (1-o(1))$ balls w.h.p., and hence
$\sigma_{\min}^2 (\S) \geq \frac{n}{s} (1-o(1))$ w.h.p.


\subsection{Proof of Theorem~\ref{thm:properties_avg}}
 \label{sec:sketch:proof:2}

\paragraph{Assumption~\ref{assumption:2:1}.}
By Theorem~\ref{thm:properties} and the union bound,
we have that $\big\|\U^T \S_i \S_i^T - \I_\rho \big\|_2 \leq \eta$ 
hold simultaneously for all $i\in [g]$ with probability at least $1-g\delta_1$.
Because $\S \in \RB^{n\times gs}$ is the same type of sketching matrix,
it follows from Theorem~\ref{thm:properties} that
$\big\|\U^T \S \S^T \U - \I_\rho \big\|_2 \leq \frac{\eta}{ \sqrt{g} }$
holds with probability at least $1- \delta_1$.

\paragraph{Assumption~\ref{assumption:2:2}.}
By the same proof of Theorem~\ref{thm:properties}, 
we can easily show that	
\begin{eqnarray*}
	\EB \big\| \U^T \B - \U^T \S_i \S_i^T \B \big\|_F^2
	\; \leq \; \delta_2 \epsilon \, \|\B\|_F^2 ,
\end{eqnarray*}
where $\B$ is any fixed matrix and the expectation is taken w.r.t.\ $\S$.
It follows from Jensen's inequality that
\begin{align*}
\Big( \EB  \big\|\U^T \S_i \S_i^T \B - \U^T \B \big\|_F \Big)^2
\; \leq \; \EB  \big\|\U^T \S_i \S_i^T \B - \U^T \B \big\|_F^2
\; \leq \; {\delta_2 \epsilon } \big\|\B\big\|_F^2 .
\end{align*}
It follows that
\begin{align*}
\frac{1}{g} \sum_{i=1}^g \EB  \big\|\U^T \S_i \S_i^T \B - \U^T \B \big\|_F
\; \leq \; \sqrt{\delta_2 \epsilon } \big\|\B\big\|_F ,
\end{align*}
and thus
\begin{align*}
\Big( \frac{1}{g} \sum_{i=1}^g \EB  \big\|\U^T \S_i \S_i^T \B - \U^T \B \big\|_F \Big)^2
\; \leq \; {\delta_2 \epsilon } \big\|\B\big\|_F^2 .
\end{align*}
It follows from Markov's bound that
\begin{align*}
\PB \bigg\{ \Big( \frac{1}{g} \sum_{i=1}^g  \big\|\U^T \S_i \S_i^T \B - \U^T \B \big\|_F \Big)^2
\; \leq \; { \epsilon } \big\|\B\big\|_F^2 \bigg\}
\; \geq \; 1 - \delta_2 .
\end{align*}
Because $\S \in \RB^{n\times gs}$ is the same type of sketching matrix,
it follows from Theorem~\ref{thm:properties} that
$\big\|\U^T \S \S^T \B - \U^T \B \big\|_F^2 \leq \frac{\epsilon}{g} \|\B\|_F^2$
holds with probability at least $1-\delta_2$.

\paragraph{Assumption~\ref{assumption:2:3}.}
Theorem~\ref{thm:properties} shows that $\|\S_i\|_2^2$ can be bounded either surely or w.h.p.\ (assuming $n$ is large enough).
Because $g \ll n$, $\|\S_i\|_2^2$ can be bounded simultaneously for all $i\in [g]$
either surely or w.h.p.

Suppose $sg < n$.
Because $\S \in \RB^{n\times gs}$ is the same type of sketching matrix,
it follows from Theorem~\ref{thm:properties} that $\|\S\|_2^2 \leq \tfrac{\theta n}{gs}$ holds either surely or w.h.p.

Suppose $sg \geq n$.
It is not hard to show that uniform sampling, shrinked leverage score sampling, and SRHT satisfy $\| \S \|_2
= \Theta (1)$ w.h.p.
Previously we have shown that a random Gaussian projection matrix $\S \in \RB^{n\times sg}$ satisfies
\begin{small}
\[
\|\S\|_2^2
\; \leq \; \big( 1 + o(1) \big) \,  \frac{(\sqrt{n} + \sqrt{gs} )^2 }{gs} 
\]
\end{small}%
w.h.p. 
Hence for $sg \geq n$, $\|\S \|_2^2 \leq 4 + o(1)$ w.h.p.


\section{Sketched MRR from the Optimization Perspective: Proofs}
\label{sec:optimization:proof}

In Section~\ref{sec:optimization:proof:lemma} we establish a key lemma.
In Section~\ref{sec:optimization:proof:classical} we prove Theorem~\ref{thm:optimization:wtilde}.
In Section~\ref{sec:optimization:proof:hessian} we prove Theorem~\ref{thm:optimization:what}.

\subsection{Key Lemma}
\label{sec:optimization:proof:lemma}

Recall that the objective function of the matrix ridge regression (MRR) problem is
\begin{eqnarray*}
	f (\W )
	\; \triangleq \; \frac{1}{n} \big\| \X \W - \Y \big\|_F^2 + \gamma \|\W\|_F^2 .
\end{eqnarray*}
The optimal solution is $\W^\star = \argmin_\W f(\W)$.
The following is the key lemma for understanding the difference between the objective value at $\W^\star$ and any arbitrary $\W$.

\begin{lemma} \label{lem:optimization}
	For any matrix $\W$ and any nonsingular matrix $\M$ of proper size,
	it holds that
	\begin{eqnarray*}
		f (\W )
		& = & \frac{1}{n} \tr \Big[ \Y^T \Y - (2 \W^\star - \W)^T (\X^T \X + n \gamma \I_n) \W \Big] , \\
		f (\W^\star )
		& = & \frac{1}{n}  \Big[ \big\|\Y^\perp \big\|_F^2
		+ n \gamma \big\| \big( \Si^2 + n \gamma \I_\rho \big)^{-1/2} \U^T \Y \big\|_F^2 \Big] ,\\
		f (\W ) - f(\W^\star )
		& = & \frac{1}{n} \Big\| (\X^T \X + n \gamma \I_d)^{1/2} (\W - \W^\star ) \Big\|_F^2 ,\\
		\Big\| \M^{-1}({\W} - \W^\star ) \Big\|_F^2
		& \leq & \sigma_{\min}^{-2} \Big[ (\X^T   \X + n \gamma \I_d)^{1/2} \M \Big]
		\, \Big\| (\X^T \X + n \gamma \I_d)^{1/2} ({\W} - \W^\star ) \Big\|_F^2 .
	\end{eqnarray*}
	Here $\X = \U \Si \V^T$ is the SVD and $\Y^\perp = \Y - \X \X^\dag \Y$.
\end{lemma}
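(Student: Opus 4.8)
The plan is to treat $f$ as the quadratic it is and to exploit the first-order optimality condition obeyed by $\W^\star$. Write $\H = \X^T \X + n\gamma \I_d$ for brevity. The hinge of everything is the identity $\H \W^\star = \X^T \Y$: since $\W^\star = \H^\dag \X^T \Y$, this amounts to showing that $\H \H^\dag$ fixes $\X^T \Y$. When $\gamma > 0$ the matrix $\H$ is positive definite, so $\H \H^\dag = \I_d$ and the identity is immediate; when $\gamma = 0$ the matrix $\H \H^\dag$ is the orthogonal projector onto $\range(\X^T \X) = \range(\X^T)$, which contains $\X^T \Y$, so the identity still holds. For the first displayed equation I would expand $n f(\W) = \tr[(\X\W - \Y)^T (\X\W - \Y)] + n\gamma \tr[\W^T \W]$ and collect terms into $n f(\W) = \tr[\Y^T \Y] + \tr[\W^T \H \W] - 2\tr[\W^T \X^T \Y]$. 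Substituting $\X^T \Y = \H \W^\star$ and using symmetry of $\H$ to recognize $2\tr[\W^T \H \W^\star] - \tr[\W^T \H \W] = \tr[(2\W^\star - \W)^T \H \W]$ then yields the claim.

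The third equation is simply the first one evaluated at $\W$ and at $\W^\star$ and subtracted: plugging $\W^\star$ into the first identity gives $n f(\W^\star) = \tr[\Y^T \Y] - \tr[(\W^\star)^T \H \W^\star]$, and the difference telescopes into the perfect square $\tr[(\W - \W^\star)^T \H (\W - \W^\star)] = \|\H^{1/2}(\W - \W^\star)\|_F^2$. For the second equation I would start from the same expression $n f(\W^\star) = \tr[\Y^T \Y] - \tr[(\W^\star)^T \H \W^\star]$, use $\H \W^\star = \X^T \Y$ together with $\W^\star = \H^\dag \X^T \Y$ (and symmetry of $\H^\dag$) to rewrite the subtracted trace as $\tr[\Y^T \X \H^\dag \X^T \Y]$, so that $n f(\W^\star) = \tr[\Y^T (\I_n - \X \H^\dag \X^T) \Y]$. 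The one real computation is to evaluate $\X \H^\dag \X^T$ through the SVD $\X = \U \Si \V^T$: extending $\V$ to an orthonormal basis of $\RB^d$ shows $\X \H^\dag \X^T = \U \Si^2 (\Si^2 + n\gamma \I_\rho)^{-1} \U^T$, whence $\I_n - \X \H^\dag \X^T = (\I_n - \U\U^T) + n\gamma\, \U (\Si^2 + n\gamma \I_\rho)^{-1} \U^T$ using $\I_\rho - \Si^2(\Si^2 + n\gamma \I_\rho)^{-1} = n\gamma (\Si^2 + n\gamma \I_\rho)^{-1}$. Conjugating by $\Y$ and recognizing $(\I_n - \U\U^T)\Y = \Y^\perp$ delivers the stated decomposition.

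The fourth inequality is a short matrix-norm argument. Assuming $\H^{1/2}$ invertible (the regime $\gamma > 0$), I would factor $\M^{-1}(\W - \W^\star) = (\H^{1/2}\M)^{-1}\, \H^{1/2}(\W - \W^\star)$ and apply the submultiplicative bound $\|\A \B\|_F \leq \|\A\|_2 \|\B\|_F$ with $\A = (\H^{1/2}\M)^{-1}$ and $\B = \H^{1/2}(\W - \W^\star)$, using $\|(\H^{1/2}\M)^{-1}\|_2 = \sigma_{\min}^{-1}(\H^{1/2}\M)$; squaring both sides gives the claim.

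I expect the main obstacle to be the rank-deficient bookkeeping in the $\gamma = 0$ case, where $\H^\dag$ is a genuine pseudoinverse rather than an inverse: the identities $\H\H^\dag \X^T \Y = \X^T \Y$ and the reduction $\X \H^\dag \X^T = \U\U^T$ must be verified directly on $\range(\X^T)$, and the fourth inequality as written implicitly presumes $\H^{1/2}$ is invertible, so its clean form is really meant for $\gamma > 0$ (or $\X$ of full column rank). Once the optimality identity $\H \W^\star = \X^T \Y$ is secured, every other step is routine trace and SVD manipulation.
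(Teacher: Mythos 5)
Your proposal is correct and follows essentially the same route as the paper's proof: expand the quadratic and use the normal equation $(\X^T\X + n\gamma\I_d)\W^\star = \X^T\Y$ for the first and third identities, reduce $f(\W^\star)$ to $\tfrac{1}{n}\tr[\Y^T(\I_n - \X(\X^T\X+n\gamma\I_d)^\dag\X^T)\Y]$ and evaluate via the SVD for the second, and bound the Mahalanobis distance with $\sigma_{\min}(\A)\|\B\|_F \le \|\A\B\|_F$ for the fourth. Your explicit attention to the $\gamma=0$ / rank-deficient case is a small point of extra care the paper glosses over, but it does not change the argument.
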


\begin{proof}
	Let $\U$ be the left singular vectors of $\X$.
  The objective value $f (\W)$ can be written as
	\begin{eqnarray*}
		f (\W ) &= &
		\frac{1}{n} \big\| \X \W - \Y \big\|_F^2 + \gamma \big\|\W \big\|_F^2 \\
		& = & \frac{1}{n} \tr \Big[ \Y^T \Y - (2 \W^\star - \W)^T (\X^T \X + n \gamma \I_n) \W \Big],
	\end{eqnarray*}
so	
	\begin{eqnarray*}
		f (\W^\star )
		& = & \frac{1}{n} \tr\Big[\Y^T \Big( \I_n - \X (\X^T \X + n \gamma \I_d )^{-1} \X^T \Big) \Y \Big] \nonumber\\
		& = & \frac{1}{n} \tr\Big[\Y^T \Big( \I_n - \U (\I_\rho + n \gamma \Si^{-2} )^{-1} \U^T \Big) \Y\Big] \nonumber \\
		& = & \frac{1}{n} \tr\Big[\Y^T \Y - \Y^T \U \U^T \Y + \Y^T \U \U^T \Y
		- \Y^T   \U (\I_\rho + n \gamma \Si^{-2} )^{-1} \U^T \Y\Big] \nonumber \\
		& = & \frac{1}{n} \bigg\{ \tr \Big[ \Y^T (\I_n - \U \U^T) \Y \Big]
		+ n \gamma \cdot \tr \Big[ \Y^T \U \big( \Si^2 + n \gamma \I_\rho \big)^{-1} \U^T \Y \Big] \bigg\} \nonumber \\
		& = & \frac{1}{n}  \Big[ \big\|\Y^\perp \big\|_F^2
		+ n \gamma \big\| \big( \Si^2 + n \gamma \I_\rho \big)^{-1/2} \U^T \Y \big\|_F^2 \Big].
	\end{eqnarray*}
  The difference in the objective values is therefore
	\begin{eqnarray*}
		f (\W ) - f(\W^\star ) &= &
		\frac{1}{n}  \tr \Big[ (\W - \W^\star )^T (\X^T \X + n \gamma \I_d) (\W - \W^\star) \Big] \nonumber \\
		& = & \frac{1}{n} \Big\| (\X^T \X + n \gamma \I_d)^{1/2} (\W - \W^\star ) \Big\|_F^2 .
	\end{eqnarray*}
	Because $\sigma_{\min} (\A) \|\B\|_F \leq \|\A \B \|_F$ holds for any nonsingular $\A$ and any $\B$,
	it holds for any nonsingular matrix $\M$ that
	\begin{small}
	\begin{eqnarray*}
	\sigma_{\min}^2 \Big[ (\X^T \X + n \gamma \I_d)^{1/2} \M \Big]
	\Big\| \M^{-1} (\W - \W^\star ) \Big\|_F^2
	& \leq &
	 \Big\| (\X^T \X + n \gamma \I_d)^{1/2} \M \M^{-1} (\W - \W^\star ) \Big\|_F^2 \\
	& = &
	\Big\| (\X^T \X + n \gamma \I_d)^{1/2} (\W - \W^\star ) \Big\|_F^2 .
	\end{eqnarray*}
	\end{small}%
	The last claim in the lemma follows from the above inequality.
\end{proof}


\subsection{Proof of Theorem~\ref{thm:optimization:wtilde}}
\label{sec:optimization:proof:classical}

\begin{proof}
	Let $\rho = \rk (\X)$,
	$\U \in \RB^{n\times \rho}$ be the left singular vectors of $\X$,
	and $\Y^\perp = \Y - \X \X^\dag \Y = \Y - \U \U^T \Y$.
	It follows from the definition of $\W^\star $ and $\W^{\textrm{c}} $ that
	\begin{eqnarray*}
		{\W}^{\textrm{c}} - \W^\star
		& = & (\X^T \S \S^T \X + n \gamma \I_d)^{-1} \X^T \S \S^T \Y - (\X^T \X + n \gamma \I_d)^{-1} \X^T \Y  .
	\end{eqnarray*}
	It follows that
	\begin{align*}
	& (\X^T \S \S^T \X + n \gamma \I_d) ({\W}^{\textrm{c}} - \W^\star ) \\
	& = \; \X^T \S \S^T \Y^\perp + \X^T \S \S^T \X \X^\dag \Y
	- (\X^T \S \S^T \X + n \gamma \I_d) (\X^T \X + n \gamma \I_d)^{-1} \X^T \Y  \\
	& = \; \X^T \S \S^T \Y^\perp - n \gamma \X^\dag \Y
	+ (\X^T \S \S^T \X + n \gamma \I_d) \big[\X^\dag - (\X^T \X + n \gamma \I_d)^{-1} \X^T  \big] \Y \\
	& = \; \X^T \S \S^T \Y^\perp - n \gamma \X^\dag \Y
	+ n \gamma (\X^T \S \S^T \X + n \gamma \I_d) (\X^T \X + n \gamma \I_d)^{-1} \X^\dag  \Y \\
	& = \; \X^T \S \S^T \Y^\perp
	+ n \gamma (\X^T \S \S^T \X - \X^T \X ) (\X^T \X + n \gamma \I_d)^{-1} \X^\dag  \Y .
	\end{align*}
	It follows that
	\begin{eqnarray}  \label{eq:optimization:wtilde:0}
	(\X^T \X + n \gamma \I_d)^{-1/2} (\X^T  \S \S^T \X + n \gamma \I_d)
	({\W}^{\textrm{c}} - \W^\star )
	\; = \; \A + \B ,
	\end{eqnarray}
	where
	\begin{eqnarray*}
		\A & = & \big[ (\X^T \X + n \gamma \I_d)^{1/2} \big]^\dag \X^T \S \S^T \Y^\perp
		\; = \; \V (\Si^2 + n \gamma \I_\rho)^{-1/2} \Si \U \S \S^T \Y^\perp , \\
		\B & = & n \gamma  \big[ (\X^T \X + n \gamma \I_d)^{1/2} \big]^{\dag}
		(\X^T \S \S^T \X - \X^T \X ) (\X^T \X + n \gamma \I_d)^{\dag} \X^\dag  \Y \\
		& = & n \gamma \V (\Si^2 + n \gamma \I_\rho)^{-1/2} \Si
		(\U^T \S \S^T \U - \I_\rho ) \Si (\Si^2 + n \gamma \I_\rho )^{-1} \Si^{-1} \U^T  \Y\\
		& = & n \gamma \V \Si (\Si^2 + n \gamma \I_\rho)^{-1/2}
		(\U^T \S \S^T \U - \I_\rho ) (\Si^2 + n \gamma \I_\rho )^{-1}  \U^T \Y .
	\end{eqnarray*}
	It follows from \eqref{eq:optimization:wtilde:0} that
	\begin{align*}
	&(\X^T \X + n \gamma \I_d)^{1/2}
	\big({\W}^{\textrm{c}} - \W^\star \big) \\
	& = \;
	\big[(\X^T \X + n \gamma \I_d)^{-1/2} (\X^T  \S \S^T \X + n \gamma \I_d)
	(\X^T \X + n \gamma \I_d)^{-1/2} \big]^{\dag}
	\big( \A + \B \big) .
	\end{align*}
	By Assumption~\ref{assumption:1:1}, we have that
	\[
	(1 - \eta ) (\X^T \X + n \gamma \I_d)
	\; \preceq \; (\X^T  \S \S^T \X + n \gamma \I_d)
	\; \preceq \; (1 + \eta ) (\X^T \X + n \gamma \I_d) .
	\]
	It follows that
	\[
	\Big\| \big[(\X^T \X + n \gamma \I_d)^{-1/2} (\X^T  \S \S^T \X + n \gamma \I_d)
	(\X^T \X + n \gamma \I_d)^{-1/2} \big]^{\dag} \Big\|_2
	\; \leq \; \frac{1}{1-\eta}.
	\]
	Thus
	\begin{align*}
	& \Big\| (\X^T \X + n \gamma \I_d)^{1/2}
	\big({\W}^{\textrm{c}} - \W^\star \big) \Big\|_F^2
	\; \leq \;
	\frac{1}{1-\eta}
	\Big\| \A + \B \Big\|_F^2
	\; \leq \;
	\frac{2}{1-\eta}
	\Big( \big\| \A \big\|_F^2 + \big\| \B \big\|_F^2 \Big) .
	\end{align*}
	Lemma~\ref{lem:optimization} shows
	\begin{eqnarray} \label{eq:optimization:wtilde:2}
	f \big(\W^{\textrm{c}} \big) - f \big(\W^\star \big)
	=
	\frac{1}{n} \Big\| (\X^T \X + n \gamma \I_d)^{1/2} ({\W}^{\textrm{c}} - \W^\star ) \Big\|_F^2
	\leq   \frac{2}{n(1- \eta )}  \Big( \big\|\A\big\|_F^2 + \big\|\B \big\|_F^2  \Big) .
	\end{eqnarray}
	
	We respectively bound $\|\A\|_F^2$ and $\|\B\|_F^2$ in the following.
	It follows from Assumption~\ref{assumption:1:2} and $\U^T \Y^\perp = \0$ that
	\begin{eqnarray*}
		\| \A \|_F^2
		& = & \Big\| \V (\Si^2 + n \gamma \I_\rho)^{-1/2} \Si \U \S \S^T \Y^\perp \Big\|_F^2 \\
		& \leq &  \big\| (\Si^2 + n \gamma \I_\rho)^{-1/2} \Si \big\|_2^2 \,
		\big\| \U^T \S \S^T \Y^\perp - \U^T \Y^\perp \big\|_F^2  \\
		& \leq & {\epsilon } \big\| (\Si^2 + n \gamma \I_\rho )^{-1/2} \Si \big\|_2^2 \,
		\big\| \Y^\perp \big\|_F^2  .
	\end{eqnarray*}
	By the definition of $\B$, we have
	\begin{align*}
	& \| \B \|_F^2
	\; \leq \; n^2 \gamma^2 \big\| \Si (\Si^2 + n \gamma \I_\rho)^{-1/2}
	(\U^T \S \S^T \U - \I_\rho ) (\Si^2 + n \gamma \I_\rho )^{-1} \U^T  \Y \big\|_F^2 \\
	& \leq \; n^2 \gamma^2 \big\| \Si (\Si^2 + n \gamma \I_\rho)^{-1/2}
	(\U^T \S \S^T \U - \I_\rho ) (\Si^2 + n \gamma \I_\rho )^{-1/2} \big\|_2^2
	\big\| (\Si^2 + n \gamma \I_\rho )^{-1/2} \U^T  \Y \big\|_F^2 \\
	& = \; n^2 \gamma^2 \big\| \Si \N \big\|_2^2
	\big\| (\Si^2 + n \gamma \I_\rho )^{-1/2} \U^T  \Y \big\|_F^2,
	\end{align*}
	where we define $\N = (\Si^2 + n \gamma \I_\rho)^{-1/2}
	(\U^T \S \S^T \U - \I_\rho ) (\Si^2 + n \gamma \I_\rho )^{-1/2}$.
	By Assumption~\ref{assumption:1:1},
	we have
	\[
	- \eta (\Si^2 + n \gamma \I_\rho)^{-1}
	\; \preceq \;
	\N \; \preceq \;  \eta (\Si^2 + n \gamma \I_\rho)^{-1} .
	\]
	It follows that
	\begin{align*}
	& \| \B \|_F^2
	\; \leq \;  n^2 \gamma^2 \big\| \Si \N^2 \Si \big\|_2
	\big\| (\Si^2 + n \gamma \I_\rho )^{-1/2} \U^T  \Y \big\|_F^2 \\
	& \leq \; \eta^2 n^2 \gamma^2 \big\| \Si (\Si^2 + n \gamma \I_\rho)^{-2} \Si \big\|_2
	\big\| (\Si^2 + n \gamma \I_\rho )^{-1/2} \U^T  \Y \big\|_F^2 \\
	& = \; \eta^2 n^2 \gamma^2 \big\| (\Si^2 + n \gamma \I_\rho)^{-1} \Si \big\|_2^2
	\big\| (\Si^2 + n \gamma \I_\rho )^{-1/2} \U^T  \Y \big\|_F^2 \\
	& = \; \eta^2 n \gamma \big\| (\Si^2 + n \gamma \I_\rho)^{-1/2} \Si \big\|_2^2
	\big\| (\Si^2 + n \gamma \I_\rho )^{-1/2} \U^T  \Y \big\|_F^2 .
	\end{align*}
  The last equality follows from the fact that $\| (\Si^2 + n \gamma \I_\rho )^{-1/2} \|_2 \leq (n \gamma)^{-1/2}$.
	It follows that
	\begin{small}
		\begin{eqnarray}  \label{eq:optimization:wtilde:1}
		\| \A \|_F^2 + \| \B \|_F^2
		& \leq & {\max \big\{\epsilon , \eta^2 \big\}} \,
		\Big\| (\Si^2 + n \gamma \I_d)^{-1} \Si  \Big\|_2
		\Big[ \big\| \Y^\perp \big\|_F^2 +
		n \gamma  \big\| (\Si^2 + n \gamma \I_d)^{-1/2}  \U^T   \Y \big\|_F^2 \Big] \nonumber \\
		& \leq & {\max \big\{\epsilon , \eta^2 \big\}} \,
		\frac{\sigma_{\max}^2}{\sigma_{\max}^2 + n \gamma }
		\Big[ \big\| \Y^\perp \big\|_F^2 +
		n \gamma  \big\| (\Si^2 + n \gamma \I_d)^{-1/2}  \U^T   \Y \big\|_F^2 \Big] \nonumber \\
		& \leq & {\max \big\{\epsilon , \eta^2 \big\}} \,
		\beta n
		f(\W^\star ).
		\end{eqnarray}
	\end{small}%
	The last inequality follows from Lemma~\ref{lem:optimization}.
  The claimed result now follows from \eqref{eq:optimization:wtilde:1} and \eqref{eq:optimization:wtilde:2}.
\end{proof}


\subsection{Proof of Theorem~\ref{thm:optimization:what}}
\label{sec:optimization:proof:hessian}

\begin{proof}
	By the definition of ${\W}^{\textrm{h}}$ and $\W^\star$, we have
	\begin{align*}
	& (\X^T \X + n \gamma \I_d)^{1/2} \big({\W}^{\textrm{h}} - \W^\star \big) \\
	& = \;  (\X^T \X + n \gamma \I_d)^{1/2}
	\Big[ (\X^T \S \S^T \X + n \gamma \I_d)^\dag - (\X^T \X + n \gamma \I_d)^\dag \Big]
	\X^T \Y   \\
	& = \; \V (\Si^2 + n \gamma \I_\rho )^{1/2}
	\Big[ (\Si \U^T \S \S^T \U \Si + n \gamma \I_\rho )^\dag
	- (\Si^2 + n \gamma \I_\rho )^{-1}  \Big]
	\Si \U^T \Y .
	\end{align*}
	It follows from Assumption~\ref{assumption:1:1} that
	$\U^T \S \S^T \U$ has full rank, and thus
	\begin{align*}
	& (\X^T \X + n \gamma \I_d)^{1/2} \big({\W}^{\textrm{h}} - \W^\star \big) \\
	& = \; \V (\Si^2 + n \gamma \I_\rho )^{1/2}
	\Big[ (\Si \U^T \S \S^T \U \Si + n \gamma \I_\rho )^{-1}
	- (\Si^2 + n \gamma \I_\rho )^{-1}  \Big]
	\Si \U^T \Y \\
	& = \; \V (\Si^2 + n \gamma \I_\rho )^{1/2} (\Si^2 + n \gamma \I_\rho )^{-1}
	( \Si^2 - \Si \U^T \S \S^T \U \Si ) (\Si \U^T \S \S^T \U \Si + n \gamma \I_\rho )^{-1}
	\Si \U^T \Y \\
	& = \; \V (\Si^2 + n \gamma \I_\rho )^{-1/2}\Si
	( \I_\rho - \U^T \S \S^T \U  )\Si (\Si \U^T \S \S^T \U \Si + n \gamma \I_\rho )^{-1}
	\Si \U^T \Y ,
	\end{align*}
	where the second equality follow from
	$\M^{-1} - \N^{-1} = \N^{-1} (\N - \M) \M^{-1} $.
	We define
	\begin{align*}
	& (\X^T \X + n \gamma \I_d)^{1/2} \big({\W}^{\textrm{h}} - \W^\star \big)
	\; = \; \V \A \B \C ,
	\end{align*}
	where
	\begin{eqnarray*}
		\A & = & (\Si^2 + n \gamma \I_\rho)^{-1/2} \Si (\I_\rho - \U^T \S \S^T \U)
		\Si (\Si^2 + n \gamma \I_\rho)^{-1/2} ,\\
		\B & = & (\Si^2 + n \gamma \I_\rho)^{1/2}
		(\Si \U^T \S \S^T \U \Si + n \gamma \I_\rho)^{-1} (\Si^2 + n \gamma \I_\rho)^{1/2} , \\
		\C & = & (\Si^2 + n \gamma \I_\rho)^{-1/2} \Si \U^T \Y .
	\end{eqnarray*}
	It follows from Assumption~\ref{assumption:1:1} that
	\begin{eqnarray*}
		\| \A \|_2
		& \leq & \eta \Big\| (\Si^2 + n \gamma \I_\rho)^{-1/2} \Si^2
		(\Si^2 + n \gamma \I_\rho)^{-1/2} \Big\|_2
		\; \leq \;  \eta \beta ,\\
		\|\B\|_2
		& \leq & (1-\eta)^{-1} .
	\end{eqnarray*}
	It holds that
	\begin{eqnarray*}
		\big\| \C \big\|_F^2
		& \leq &
		\Big\| (\Si^2 + n \gamma \I_\rho)^{-1/2} \Si \U^T \Y \Big\|_F^2 \\
		& = & \bigg[ \tr \Big( \Y^T \U \U^T \Y \Big)
		- n \gamma \, \tr \Big(\Y^T \U  (\Si^2 + n \gamma \I_d)^{-1 } \U^T \Y \Big)
		\bigg] \\
		& = &  \bigg[ - \tr \Big( \Y^T (\I_d - \U \U^T) \Y \Big)
		- n \gamma \, \tr \Big(\Y^T \U  (\Si^2 + n \gamma \I_d)^{\dag } \U^T \Y \Big)+ \tr \big( \Y^T \Y \big) \bigg]
		\\
		& = & \Big(- n f(\W^\star)  +   \, \big\| \Y \big\|_F^2 \Big) ,
	\end{eqnarray*}
	where the last equality follows from Lemma~\ref{lem:optimization}.
	It follows from Lemma~\ref{lem:optimization} that
	\begin{eqnarray*}
		f ({\W}^{\textrm{h}} ) - f(\W^\star )
		& = & \frac{1}{n}\big\| (\X^T \X + n \gamma \I_d)^{1/2} 
		\big({\W}^{\textrm{h}} - \W^\star \big)  \big\|_F^2 \\
		& = & \frac{1}{n}\big\| \A \B \C \big\|_F^2
		\; \leq  \; \frac{\eta^2 \beta^2 }{(1-\eta)^2 }
		\Big(\frac{1 }{n} \, \big\| \Y \big\|_F^2 -  f(\W^\star)  \Big).
	\end{eqnarray*}
\end{proof}


\section{Sketched MRR from the Statistical Perspective: Proofs}
\label{sec:statistical:proof}

In Section~\ref{sec:statistical:proof:bv} we prove Theorem~\ref{thm:bias_var_decomp}.
In Section~\ref{sec:statistical:proof:classical} we prove Theorem~\ref{thm:biasvariance:wtilde}.
In Section~\ref{sec:statistical:proof:lower1} we prove Theorem~\ref{thm:biasvariance:wtilde_lower1}.
In Section~\ref{sec:statistical:proof:lower2} we prove Theorem~\ref{thm:biasvariance:wtilde_lower2}.
In Section~\ref{sec:statistical:proof:hessian} we prove Theorem~\ref{thm:biasvariance:what}.
Recall that the fixed design model is
$\Y = \X \W_0 + \Xii$
where $\Xii$ is random, $\EB \Xii = 0$, and $\EB [\Xii \Xii^T] = \xi^2 \I_n$.


\subsection{Proofs of Theorem~\ref{thm:bias_var_decomp}}
\label{sec:statistical:proof:bv}

We prove Theorem~\ref{thm:bias_var_decomp} in the following.
In the proof we exploit several identities.
The Frobenius norm and matrix trace satisfy 
\[
\|\A - \B \|_F^2
\; = \; \tr \big[ (\A - \B) ( \A - \B)^T) \big]
\; = \; \tr (\A \A^T) + \tr (\B \B^T ) - 2 \tr (\A \B^T ) 
\]
for any conformal matrices $\A$ and $\B$.
The trace is linear, and thus
for any fixed $\A$ and $\B$ and conformal random matrix $\Ps$, 
\[
\EB \big[ \tr (\A \Ps \B) \big]
\; = \; \tr \big[ \A (\EB \Ps ) \B \big],
\]
where the expectation is taken with respect to $\Ps$.

\begin{proof}
	It follows from the definition of
	the optimal solution $\W^\star$ in \eqref{eq:def_W_optimal} that
	\begin{eqnarray*}
		\X \W^\star
		& = & \X (\X^T \X + n \gamma \I_d )^\dag \X^T (\X \W_0 + \Xii) \\
		& = & \U (\Si^2 + n \gamma \I_\rho )^{-1} \Si^3 \V^T \W_0
		+ \U (\Si^2 + n \gamma \I_\rho )^{-1} \Si^2 \U^T \Xii \\
		& = & \U \Big[ \I_\rho - n \gamma (\Si^2 + n \gamma \I_\rho)^{-1} \Big] \Si \V^T \W_0
		+ \U (\Si^2 + n \gamma \I_\rho )^{-1} \Si^2 \U^T \Xii\\
		& = & \X \W_0 -  n \gamma \U (\Si^2 + n \gamma \I_\rho)^{-1}  \Si \V^T \W_0
		+ \U (\Si^2 + n \gamma \I_\rho )^{-1} \Si^2 \U^T \Xii .
	\end{eqnarray*}
	Since $\EB [\Xii] = \0$ and $\EB [\Xii \Xii^T ] = \xi^2 \I_n$,
	it holds that
	\begin{eqnarray*}
		R (\W^\star )
		& = & \frac{1}{n} \EB \big\|\X \W^\star - \X \W_0 \big\|_F^2 \\
		& = & \frac{1}{n}  \Big\| -  n \gamma (\Si^2 + n \gamma \I_\rho)^{-1}  \Si \V^T \W_0
		+ (\Si^2 + n \gamma \I_\rho )^{-1} \Si^2 \U^T \Xii \Big\|_F^2 \\
		& = &  n \gamma^2 \Big\|  (\Si^2 + n \gamma \I_\rho)^{-1}  \Si \V^T \W_0 \Big\|_F^2
		+ \frac{\xi^2}{n} \Big\| (\Si^2 + n \gamma \I_\rho )^{-1} \Si^2 \Big\|_F^2 .
	\end{eqnarray*}
  This exposes expressions for the bias and variance of the optimal solution $\W^\star$.

  We now decompose the risk function $R \big({\W}^{\textrm{c}} \big)$.
	It follows from the definition of ${\W}^{\textrm{c}}$ in \eqref{eq:def_W_tilde} that
	\begin{align*}
	&\X \W^{\textrm{c}}
	\; = \;  \X (\X^T \S \S^T \X + n\gamma \I_d )^{\dag} \X^T \S \S^T (\X \W_0 + \Xii) \\
	& = \; \U \Si \big( \Si \U^T \S \S^T \U \Si + n \gamma \I_d \big)^\dag
	\Si \big( \U^T \S \S^T \U \Si \V^T \W_0 + \U^T \S \S^T \Xii \big)
	\\
	& = \; \U (\U^T \S \S^T \U + n \gamma \Si^{-2} )^{-1}
	\Big[ (\U^T \S \S^T \U + n \gamma \Si^{-2} ) \Si \V^T \W_0
	- n \gamma \Si^{-1} \V^T \W_0
	+ \U^T \S \S^T \Xii\Big] \\
	& = \; \X \W_0
	+ \U (\U^T \S \S^T \U + n \gamma \Si^{-2} )^{-1}
	\big( - n \gamma \Si^{-1} \V^T \W_0
	+ \U^T \S \S^T \Xii\big) .
	\end{align*}
	Since $\EB [\Xii] = \0$ and $\EB [\Xii \Xii^T ] = \xi^2 \I_n$,
	it follows that
	\begin{align*}
	& R \big({\W}^{\textrm{c}} \big)
	\; = \; \frac{1}{n} \EB \big\|\X {\W}^{\textrm{c}} - \X \W_0 \big\|_F^2 \\
	& = \; \frac{1}{n}  \Big\| -  n \gamma (\U^T \S \S^T \U + n \gamma \Si^{-2} )^{-1}  \Si^{-1} \V^T \W_0
	+ (\U^T \S \S^T \U + n \gamma \Si^{-2} )^{-1} \U^T \S \S^T \Xii \Big\|_F^2 \\
	& = \;  n \gamma^2 \Big\| (\U^T \S \S^T \U + n \gamma \Si^{-2} )^{-1}  \Si^{-1} \V^T \W_0 \Big\|_F^2
	+ \frac{\xi^2}{n} \Big\| (\U^T \S \S^T \U + n \gamma \Si^{-2} )^{-1} \U^T \S \S^T  \Big\|_F^2 .
	\end{align*}
  This exposes expressions for the bias and variance of the approximate solution $\W^{\textrm{c}}$.

  We now decompose the risk function $R \big({\W}^{\textrm{h}}\big)$.
	It follows from the definition of ${\W}^{\textrm{h}}$ in \eqref{eq:def_W_hat} that
	\begin{align*}
	&  \X {\W}^{\textrm{h}} - \X \W_0
	\; = \; \X (\X^T \S \S^T \X + n \gamma \I_n )^\dag \X^T (\X \W_0 + \Xii ) - \X \W_0 \\
	& = \; \X (\X^T \S \S^T \X + n \gamma \I_d)^\dag \X^T \X \W_0 - \X \W_0
	+ \X (\X^T \S \S^T \X + n \gamma \I_d)^\dag \X^T \Xii  \\
	& = \; \U \big[ (\U^T \S \S^T \U + n \gamma \Si^{-2})^{-1} - \I_\rho^{-1} \big] \U^T \X \W_0
	+ \U (\U^T \S \S^T \U^T + n \gamma \Si^{-2})^\dag \U^T \Xii  \\
	& = \; \U \big( \I_\rho - \U^T \S \S^T \U - n \gamma \Si^{-2} \big)
	\big(\U^T \S \S^T \U + n \gamma \Si^{-2} \big)^{-1} \Si \V^T \W_0 \\
	& \quad + \U (\U^T \S \S^T \U^T + n \gamma \Si^{-2})^\dag \U^T \Xii ,
	\end{align*}
  where the last equality follows from the fact that
	$\A^{-1} - \B^{-1} = \B^{-1} (\B - \A ) \A^{-1}$ for any
  conformal nonsingular matrices $\A$ and $\B$.
	Since $\EB [\Xii] = \0$ and $\EB [\Xii \Xii^T ] = \xi^2 \I_n$,
	it follows that
	\begin{align*}
	R \big({\W}^{\textrm{h}} \big)
	\; = \; \bias^2 \big({\W}^{\textrm{h}} \big) + \var \big({\W}^{\textrm{h}} \big) ,
	\end{align*}
	where
	\begin{eqnarray*}
		\bias^2 \big({\W}^{\textrm{h}} \big)
		& = & \frac{1 }{n} \Big\|
		\big( n \gamma \Si^{-2} + \U^T \S \S^T \U - \I_\rho \big)
		\big(\U^T \S \S^T \U + n \gamma \Si^{-2} \big)^{-1} \Si \V^T \W_0 \Big\|_F^2 ,\\
		\var \big({\W}^{\textrm{h}} \big)
		& = & \frac{\xi^2 }{n} \Big\| \big(\U^T \S \S^T \U + n \gamma \Si^{-2} \big)^{-1} \Big\|_F^2 .
	\end{eqnarray*}
  This exposes expressions for the bias and variance of ${\W}^{\textrm{h}}$.
\end{proof}


\subsection{Proof of Theorem~\ref{thm:biasvariance:wtilde}}
\label{sec:statistical:proof:classical}

\begin{proof}
	Assumption~\ref{assumption:1:1} ensures that 
	$(1-\eta ) \I_\rho \preceq \U^T \S \S^T \U \preceq (1+\eta ) \I_\rho$.
	It follows that
	\[
	(1-\eta) \big( \I_\rho + n \gamma \Si^{-2} \big)
	\; \preceq \; \U^T \S \S^T \U + n \gamma \Si^{-2}
	\; \preceq \; (1+\eta) \big( \I_\rho + n \gamma \Si^{-2} \big) .
	\]
	The bias term can be written as
	\begin{eqnarray*}
		\bias^2 \big( \W^{\textrm{c}} \big)
		& = & n \gamma^2 \big\|
		\big( \U^T \S \S^T \U + n \gamma \Si^{-2} \big)^{\dag} \Si^{-1} \V^T \W_0 \big\|_F^2 \\
		& = & n \gamma^2 \, \tr \Big(
		\W_0^T \V \Si^{-1} 
		\big[ (\U^T \S \S^T \U + n \gamma \Si^{-2} )^\dag \big]^{2}
		\Si^{-1} \V^T \W_0  \Big) \\
		& \leq & \tfrac{n \gamma^2}{(1- \eta)^2 } 
		\big\| \big( \I_\rho + n \gamma \Si^{-2} \big)^{-1} \Si^{-1} \V^T \W_0 \big\|_F^2 \\
		& = & \tfrac{n \gamma^2}{(1- \eta)^2 } 
		\big\| \big( \Si^{2} + n \gamma \I_\rho \big)^{-1} \Si \V^T \W_0 \big\|_F^2 \\
		& = & \tfrac{1}{(1- \eta)^2 } \: \bias^2 (\W^\star ).
	\end{eqnarray*}
	We can analogously show
	$\bias^2 ({\W}^{\textrm{c}}) \geq \frac{1}{(1+\eta )^2 } \bias^2 (\W^\star )$.

	Let
	$\B = \big( \U^T \S \S^T \U + n\gamma \Si^{-2} \big)^{\dag} \U^T \S \in \RB^{\rho \times s}$.
	By Assumption~\ref{assumption:1:1}, it holds that
	\begin{eqnarray*}
		(1-\eta ) \big[ \big( \U^T \S \S^T \U + n\gamma \Si^{-2}  \big)^{2} \big]^{\dag}
		\; \preceq \; \B \B^T
		\; \preceq \; (1+ \eta ) \big[ \big( \U^T \S \S^T \U + n\gamma \Si^{-2}  \big)^{2} \big]^{\dag} .
	\end{eqnarray*}
	Applying Assumption~\ref{assumption:1:1} again, we obtain
	\begin{eqnarray*}
		(1-\eta )^2 \big( \I_\rho + n\gamma \Si^{-2}  \big)^{2} 
		\; \preceq \;  \big( \U^T \S \S^T \U + n\gamma \Si^{-2}  \big)^{2} 
		\; \preceq \; (1+\eta )^2 \big( \I_\rho + n\gamma \Si^{-2}  \big)^{2} .
	\end{eqnarray*}
	Note that both sides are nonsingular.
	Combining the above two equations, we have
	\begin{eqnarray*}
		\tfrac{1-\eta}{(1+\eta )^2} \big( \I_\rho + n\gamma \Si^{-2} \big)^{-2}
		\; \preceq \; \B \B^T
		\; \preceq \; \tfrac{1+\eta}{(1-\eta)^2} \big( \I_\rho + n\eta \Si^{-2} \big)^{-2} .
	\end{eqnarray*}
	Taking the trace of all the terms, we obtain
	\[
	\tfrac{1-\eta}{(1+\eta)^2}
	\; \leq \; \tfrac{\| \B \|_F^2}{
		\| ( \I_\rho + n\gamma \mathbf{\Sigma}^{-2} )^{-1} \|_F^2}
	\; \leq \; \tfrac{1+\eta}{(1-\eta)^2} .
	\]
	The variance term can be written as
	\begin{eqnarray*}
		\var \big( \W^{\textrm{c}} \big)
		& = & \tfrac{\xi^2 }{n}   \big\| \B \S^T  \big\|_F^2
		\; \leq \; \tfrac{\xi^2}{n}   \big\|\B\big\|_F^2 \, \big\| \S \big\|_2^2 \\
		& \leq & \tfrac{\xi^2 (1+\eta )}{n (1-\eta)^2}   \big\| \big(  \I_\rho + n\gamma \Si^{-2} \big)^{-1} \big\|_F^2
		\, \big\| \S \big\|_2^2  \\
		& = & \tfrac{ (1+\eta ) \| \S \|_2^2}{ (1-\eta)^2} \var (\W^\star ) .
	\end{eqnarray*}
	The upper bound on the variance follows from Assumption~\ref{assumption:1:3}.
\end{proof}


\subsection{Proof of Theorem~\ref{thm:biasvariance:wtilde_lower1}}
\label{sec:statistical:proof:lower1}

\begin{proof}
	Let $\B = \big( \U^T \S \S^T \U + n\gamma \Si^{-2} \big)^{\dag} \U^T \S \in \RB^{\rho \times s}$.
	In the proof of Theorem~\ref{thm:biasvariance:classical} we show that
	\begin{eqnarray*}
		\var \big( \W^{\textrm{c}} \big)
		& = & \tfrac{\xi^2 }{n}   \big\| \B \S^T  \big\|_F^2  .
	\end{eqnarray*}
	If $\S^T \S \succeq \frac{\vartheta n}{s} \I_s$, then it holds that
	\begin{eqnarray*}
		\var \big( \W^{\textrm{c}} \big)
		& = & \tfrac{\xi^2}{n}   \big\| \B \S^T  \big\|_F^2
		\; \geq \; \tfrac{\vartheta n}{s} \tfrac{\xi^2}{n}  \big\| \B \big\|_F^2
		\; \geq \; \tfrac{\vartheta n}{s} \tfrac{  1- \eta }{ (1+\eta)^2} \var (\W^\star ) .
	\end{eqnarray*}
	This establishes the lower bounds on the variance.
\end{proof}


\subsection{Proof of Theorem~\ref{thm:biasvariance:what}}
\label{sec:statistical:proof:hessian}

\begin{proof}
	Theorem~\ref{thm:bias_var_decomp} shows that
	\begin{eqnarray*}
		\bias \big({\W}^{\textrm{h}}\big)
		& = & \gamma \sqrt{n} \bigg\|
		\Big(  \Si^{-2} + \tfrac{\U^T \S \S^T \U - \I_\rho}{n \gamma} \Big)
		\big(\U^T \S \S^T \U + n \gamma \Si^{-2} \big)^{\dag} \Si \V^T \W_0 \bigg\|_F   \\
		& = & \gamma \sqrt{n} \big\| \A \Si^2 \B \big\|_F
		\; \leq \; \gamma \sqrt{n} \big\| \A \Si^2 \big\|_2 \big\|  \B \big\|_F   ,\\
		\var \big( {\W}^{\textrm{h}} \big)
		& = & \frac{\xi^2 }{n} \Big\| \big(\U^T \S \S^T \U + n \gamma \Si^{-2} \big)^{\dag } \Big\|_F^2 ,
	\end{eqnarray*}
	where we define
	\begin{eqnarray*}
		\A & = & \Si^{-2} + \tfrac{\U^T \S \S^T \U - \I_\rho}{n \gamma}  , \\
		\B & = & \Si^{-2} \big(\U^T \S \S^T \U + n \gamma \Si^{-2} \big)^{\dag} \Si \V^T \W_0 .
	\end{eqnarray*}
	
	We first analyze the bias.
	It follows from Assumption~\ref{assumption:1:1} that
	\begin{eqnarray} \label{eq:biasvariance:what:1}
	\Si^{-2} \big( \I_\rho  - \tfrac{\eta }{n \gamma} \Si^2 \big)
	\; \preceq \; \A
	\; \preceq \; \Si^{-2} \big( \I_\rho  + \tfrac{\eta }{n \gamma} \Si^2 \big) .
	\end{eqnarray}
	Since $\big( \I_\rho  - \frac{\eta }{n \gamma} \Si^2 \big)^2 \preceq
	\big( \I_\rho  + \frac{\eta }{n \gamma} \Si^2 \big)^2
	\preceq \big( 1 + \frac{\eta \sigma_{1}^2 }{n \gamma} \big)^2 \I_\rho $,
	it follows that
	\begin{eqnarray*}
		\A^2
		\; \preceq \; \Si^{-4} \big( \I_\rho  + \tfrac{\eta }{n \gamma} \Si^2 \big)^2
		\; \preceq \; \big( 1 + \tfrac{\eta \sigma_{1}^2 }{n \gamma} \big)^2 \Si^{-4}  .
	\end{eqnarray*}
	Thus
	\[
	\big\| \A \Si^2 \big\|_2^2
	\; = \; \big\| \Si^2 \A^2 \Si^2 \big\|_2
	\; \leq \; \Big( 1 + \tfrac{\eta \sigma_{1}^2 }{n \gamma} \Big)^2.
	\]
	It follows from Assumption~\ref{assumption:1:1} that
	\begin{align*}
	& (1+\eta )^{-1} \big( \I_\rho + n \gamma \Si^{-2} \big)^{-1}
	\; \preceq \;  \big( (1+\eta )\I_\rho + n \gamma \Si^{-2} \big)^{-1} \\
	& \; \preceq \; \big(\U^T \S \S^T \U + n \gamma \Si^{-2} \big)^{\dag}
	\; \preceq \; \big( (1-\eta ) \I_\rho + n \gamma \Si^{-2} \big)^{-1}
	\; \preceq \; (1-\eta )^{-1} \big( \I_\rho + n \gamma \Si^{-2} \big)^{-1}.
	\end{align*}
	Thus
	\begin{eqnarray}\label{eq:biasvariance:what:2}
	\B^T \B & = &
	\W_0^T \V \Si^3 \big( \Si^{-2} (\U^T \S \S^T \U + n \gamma \Si^{-2})^\dag
	\Si^{-2} \big)^{2} \Si^3 \V^T \W_0  \nonumber\\
	& \preceq & (1-\eta )^{-2} \W_0^T \V \Si^3
	\big(\Si^{-2} (\I_\rho + n \gamma \Si^{-2})^{-1} \Si^{-2} \big)^{2}
	\Si^3 \V^T \W_0 \nonumber  \\
	& = & (1-\eta )^{-2} \W_0^T \V \Si
	\big(\Si^2 + n \gamma \I_\rho  \big)^{-2}
	\Si \V^T \W_0 . 
	\end{eqnarray}
	It follows that
	\begin{eqnarray*}
		\|\B \|_F^2
		\; = \; \tr \big( \B^T \B \big)
		\; \leq \; (1-\eta )^{-2} \big\| \big( \Si^{-2} + n \gamma \I_\rho \big)^{-1} \Si \V^T \W_0 \big\|_F^2
		\; = \; \tfrac{\bias^2 (\W^\star)}{n \gamma^2 (1-\eta )^2} ,
	\end{eqnarray*}
	where the last equality follows from the definition of $\bias (\W^\star )$.
	By the definition of $\A$ and $\B$, we have
	\begin{align*}
	& \bias^2 \big({\W}^{\textrm{h}}\big)
	\; \leq \; \gamma^2 n \, \big\|\A \Si^2 \big\|_2^2 \big\| \B \big\|_F^2
	\; = \;  \tfrac{1}{(1 - \eta)^{2} } \,
	\big( 1 + \tfrac{\eta \sigma_{1}^2 }{n \gamma} \big)^2 \, \bias^2 \big( \W^\star \big) .
	\end{align*}
  Thus, the upper bound on $\bias \big({\W}^{\textrm{h}} \big)$ is established.
	
  Using the same $\A$ and $\B$, we can also show that
	\begin{eqnarray*}
		\bias \big({\W}^{\textrm{h}}\big)
		& = & \gamma \sqrt{n} \big\| \A \Si^2 \B \big\|_F
		\; \geq \; \gamma \sqrt{n} \; \sigma_{\min} \big( \A \Si^2  \big) \, \big\|  \B \big\|_F .
	\end{eqnarray*}
	Assume that $\sigma_\rho^2 \geq \frac{n \gamma}{\eta }$.
	It follows from \eqref{eq:biasvariance:what:1} that
	\[
	\A^2 \; \succeq \;
	\big( \tfrac{\eta \sigma_{\rho}^2 }{n \gamma} - 1\big)^2 \Si^{-4} .
	\]
	Thus
	\[
	\sigma_{\min}^2 \big( \A \Si^2  \big)
	\; = \;
	\sigma_{\min} (\Si^2 \A^2 \Si^2)
	\; \geq \; \big( \tfrac{\eta \sigma_{\rho}^2 }{n \gamma} - 1 \big)^2 .
	\]
	It follows from \eqref{eq:biasvariance:what:2} that
	\begin{eqnarray*}
		\B^T \B & \succeq &
		(1+\eta )^{-2} \W_0^T \V \Si
		\big(\Si^2 + n \gamma \I_\rho  \big)^{-2}
		\Si \V^T \W_0 . \nonumber
	\end{eqnarray*}
	Thus
	\begin{eqnarray*}
		\|\B \|_F^2
		\; = \; \tr \big( \B^T \B \big)
		\; \geq \; (1+\eta )^{-2} \big\| \big( \Si^{-2} + n \gamma \I_\rho \big)^{-1} \Si \V^T \W_0 \big\|_F^2
		\; = \; \tfrac{1}{n \gamma^2 (1+\eta )^2} \, \bias^2 (\W^\star) .
	\end{eqnarray*}
	In sum, we obtain
	\begin{align*}
	& \bias^2 \big({\W}^{\textrm{h}}\big)
	\; \geq \; \gamma^2 {n} \; \sigma_{\min}^2 \big( \A \Si^2  \big) \, \big\|  \B \big\|_F^2
	\; = \;  (1 + \eta)^{-2} \,
	\big(  \tfrac{\eta \sigma_{\rho}^2}{n \gamma}  - 1 \big)^2 \, \bias^2 \big( \W^\star \big) .
	\end{align*}
  Thus, the lower bound on $\bias \big({\W}^{\textrm{h}} \big)$ is established.

	It follows from Assumption~\ref{assumption:1:1} that
	\[
	(1+\eta )^{-1} \, \big( \I_\rho + n \gamma \Si^{-2} \big)^{-1}
	\; \preceq \; \big(\U^T \S \S^T \U + n \gamma \Si^{-2} \big)^{-1}
	\; \preceq \; (1-\eta )^{-1} \, \big( \I_\rho + n \gamma \Si^{-2} \big)^{-1} .
	\]
	It follows from Theorem~\ref{thm:bias_var_decomp} that
	\begin{eqnarray*}
		\var \big( {\W}^{\textrm{h}} \big)
		& = & \tfrac{\xi^2 }{n} \big\| \big(\U^T \S \S^T \U + n \gamma \Si^{-2} \big)^{-1} \big\|_F^2 \\
		& \in & \tfrac{1}{1\mp \eta}  \tfrac{\xi^2 }{n}
		\big\| \big( \I_\rho + n \gamma \Si^{-2} \big)^{-1} \big\|_F^2 \\
		& = & \tfrac{1}{1\mp \eta} \var \big( \W^\star \big) .
	\end{eqnarray*}
	This concludes the proof.
\end{proof}


\section{Model Averaging from the Optimization Perspective: Proofs} 
\label{sec:opt_avg:proof}

In Section~\ref{sec:opt_avg:proof:classical} we prove Theorem~\ref{thm:optimization:wtilde_avg}.
In Section~\ref{sec:opt_avg:proof:hessian} we prove Theorem~\ref{thm:optimization:what_avg}.

\subsection{Proof of Theorem~\ref{thm:optimization:wtilde_avg}}
\label{sec:opt_avg:proof:classical}

\begin{proof}
	By Lemma~\ref{lem:optimization},
  we only need to show that
	$ \| (\X^T \X + n \gamma \I_d)^{1/2}  ({\W}^{\textrm{c}} - \W^\star ) \|_F^2
	\leq n \alpha \beta f (\W^\star )$.
  In the proof, we define $\rho = \rk (\X)$ and let $\sigma_1 \geq \cdots \geq \sigma_\rho$
	be the singular values of $\X$.
	
	In the proof of Theorem~\ref{thm:optimization:wtilde} we show that
	\begin{align*}
	&(\X^T \X + n \gamma \I_d)^{1/2}
	\big({\W}^{\textrm{c}}_i - \W^\star \big) \\
	& = \;
	\big[(\X^T \X + n \gamma \I_d)^{-1/2} (\X^T  \S_i \S_i^T \X + n \gamma \I_d)
	(\X^T \X + n \gamma \I_d)^{-1/2} \big]^{\dag}
	\big( \A_i + \B_i \big) \\
	& = \; \C_i^\dag \big( \A_i + \B_i \big) ,
	\end{align*}
	where
	\begin{eqnarray*}
		\A_i & = &
		\V (\Si^2 + n \gamma \I_\rho)^{-1/2} \Si \U \S_i \S_i^T \Y^\perp , \\
		\B_i & = &
		n \gamma \V \Si (\Si^2 + n \gamma \I_\rho)^{-1/2}
		(\U^T \S_i \S_i^T \U - \I_\rho ) (\Si^2 + n \gamma \I_\rho )^{-1}  \U^T \Y\\
		\C_i
		& = & \big[  (\X^T \X + n \gamma \I_d)^{1/2} \big]^\dag
		\big( \X^T \S_i \S_i^T \X + n \gamma \I_d \big)
		\big[  (\X^T \X + n \gamma \I_d)^{1/2} \big]^\dag \\
		& = &\V (\I_\rho + n \gamma \Si^{-2} )^{-1/2} ( \U^T  \S_i \S_i^T \U  + n \gamma \Si^{-2})
		(\I_\rho + n \gamma \Si^{-2} )^{-1/2} \V^T \\
		& = & \V \V^T + \V (\I_\rho + n \gamma \Si^{-2} )^{-1/2}
		( \U^T  \S_i \S_i^T \U  - \I_\rho )
		(\I_\rho + n \gamma \Si^{-2} )^{-1/2} \V^T.
	\end{eqnarray*}
	By Assumption~\ref{assumption:2:1},
	we have that
	$
	\C_i \, \succeq \, \big( 1 - \frac{\eta \: \sigma_{\max}^2 }{\sigma_{\max}^2  + n \gamma} \big) \V \V^T
	$.
	Since $\eta \leq 1/2$, it follows that
	$
	\C_i^\dag \, \preceq \, \big( 1 + \frac{2\eta \: \sigma_{\max}^2 }{\sigma_{\max}^2  + n \gamma} \big) \V \V^T
	$.
	Let $\C_i^\dag = \V \V^T + \V \De_i \V^T $.
	It holds that
	$
	\De_i
	\, \preceq \, \frac{2\eta \: \sigma_{\max}^2 }{\sigma_{\max}^2  + n \gamma} \, \V \V^T
	\, \preceq \, 2 \eta \beta \V \V^T
	$.
	By definition, ${\W}^{\textrm{c}} = \frac{1}{g} \sum_{i=1}^g {\W}^{\textrm{c}}_i$.
	It follows that
	\begin{align} \label{eq:avg:optimization:wtilde:2}
	& \Big\| (\X^T \X + n \gamma \I_d)^{1/2}  ({\W}^{\textrm{c}}_i - \W^\star ) \Big\|_F
	\; = \; \Big\| \frac{1}{g} \sum_{i=1}^g \C_i^\dag (\A_i + \B_i )  \Big\|_F \nonumber \\
	& \leq \; \Big\| \frac{1}{g} \sum_{i=1}^g (\A_i + \B_i ) \Big\|_F
	+ \Big\| \frac{1}{g} \sum_{i=1}^g \V \De_i \V^T (\A_i + \B_i ) \Big\|_F  \nonumber \\
	& \leq \; \Big\| \frac{1}{g} \sum_{i=1}^g \A_i  \Big\|_F
	+ \Big\| \frac{1}{g} \sum_{i=1}^g \B_i  \Big\|_F
	+  \frac{1}{g} \sum_{i=1}^g \big\|\De_i \big\|_2
	\Big( \big\| \A_i \big\|_F+\big\| \B_i \big\|_F \Big) \nonumber \\
	& \leq \; \Big\| \frac{1}{g} \sum_{i=1}^g \A_i  \Big\|_F
	+ \Big\| \frac{1}{g} \sum_{i=1}^g \B_i  \Big\|_F
	+  2\eta \beta
	\frac{1}{g} \sum_{i=1}^g \Big( \big\| \A_i \big\|_F +  \big\| \B_i \big\|_F \Big) .
	\end{align}
	By Assumption~\ref{assumption:2:3}, we have that
	\begin{eqnarray*}
		\frac{1}{g} \sum_{i=1}^g \big\| \A_i \big\|_F
		& = & \big\| (\Si^2 + n \gamma \I_d)^{-1/2} \Si \big\|_2 \cdot
		\frac{1}{g} \sum_{i=1}^g \big\| \U^T \S_i \S_i^T \Y^\perp \big\|_F
		\; \leq \; \sqrt{ \tfrac{\epsilon \, \sigma_{\max}^2}{\sigma_{\max}^2 + n \gamma}}
		\big\|  \Y^\perp \big\|_F .
	\end{eqnarray*}
	We apply Assumption~\ref{assumption:2:1} and follow the proof of Theorem~\ref{thm:optimization:wtilde}
	to show that
	\begin{eqnarray*}
		\big\| \B_i \big\|_F^2
		& \leq & {\eta^2 n \gamma} \tfrac{ \sigma_{\max}^2}{\sigma_{\max}^2 + n \gamma} \,
		\Big\| (\Si^2 + n \gamma \I_d)^{-1/2}  \U^T   \Y \Big\|_F^2 .
	\end{eqnarray*}
	It follows that
	\begin{align}\label{eq:avg:optimization:wtilde:3}
	& \frac{1}{g} \sum_{i=1}^g \Big( \big\| \A_i \big\|_F + \big\| \B_i \big\|_F \Big) \nonumber \\
	& \leq \;  \max \Big\{ \sqrt{\epsilon}, \eta \Big\}
	\sqrt{ \tfrac{\sigma_{\max}^2}{\sigma_{\max}^2 + n \gamma}}
	\Big( \big\|  \Y^\perp \big\|_F
	+ \sqrt{n\gamma } \big\| (\Si^2 + n \gamma \I_d)^{-1/2}  \U^T   \Y \big\|_F \Big) \nonumber \\
	& \leq \;  \max \Big\{ \sqrt{\epsilon}, \eta \Big\}
	\sqrt{ \beta}
	\sqrt{ 2\big\|  \Y^\perp \big\|_F^2
		+ 2 {n\gamma } \big\| (\Si^2 + n \gamma \I_d)^{-1/2}  \U^T   \Y \big\|_F^2  } \nonumber  \\
	& = \;  \max \big\{ \sqrt{\epsilon}, \eta \big\}
	\sqrt{ \beta} \,
	\sqrt{ 2n \, f (\W^\star)  } .
	\end{align}
	Here the equality follows from Lemma~\ref{lem:optimization}.
	Let $\S = \frac{1}{g} [\S_1 , \cdots , \S_g ] \in \RB^{n\times sg}$.
	We have that
	\begin{eqnarray*}
		\frac{1}{g} \sum_{i=1}^g \A_i
		& = & \V (\Si^2 + n \gamma \I_d)^{-1/2} \Si \U^T \S \S^T \Y^\perp ,\\
		\frac{1}{g} \sum_{i=1}^g \B_i
		& = & n \gamma \V \Si  (\Si^2 + n \gamma \I_d)^{-1/2}
		(\U^T \S \S^T \U - \I_\rho )
		(\Si^2 + n \gamma \I_\rho)^{-1} \U^T  \Y .
	\end{eqnarray*}
	Applying Assumptions~\ref{assumption:2:1} and \ref{assumption:2:2}, we use the same techniques as in the above to obtain
	\begin{align} \label{eq:avg:optimization:wtilde:4}
	&\Big\| \frac{1}{g} \sum_{i=1}^g \A_i  \Big\|_F
	+ \Big\| \frac{1}{g} \sum_{i=1}^g \B_i  \Big\|_F
	\; \leq \;
	\sqrt{ 2\Big\| \frac{1}{g} \sum_{i=1}^g \A_i  \Big\|_F^2
		+ 2\Big\| \frac{1}{g} \sum_{i=1}^g \B_i  \Big\|_F^2 } \nonumber \\
	& \leq \;
	{\max \Big\{ \tfrac{\sqrt{\epsilon }}{ \sqrt{g} } , \tfrac{\eta }{ \sqrt{g} } \Big\}} \,
	\sqrt{\tfrac{ \sigma_{\max}^2}{\sigma_{\max}^2 + n \gamma }}
	\sqrt{2 n \,f(\W^\star )}
	\; = \;
	{\max \big\{ { \sqrt{\epsilon }} ,  {\eta }\big\}} \,
	\tfrac{\sqrt{\beta }}{ \sqrt{g} }
	\sqrt{2 n \,f(\W^\star )} .
	\end{align}
	It follows from \eqref{eq:avg:optimization:wtilde:2}, \eqref{eq:avg:optimization:wtilde:3},
	and \eqref{eq:avg:optimization:wtilde:4} that
	\begin{align*}
	& \Big\| (\X^T \X + n \gamma \I_d)^{1/2}  ({\W}^{\textrm{c}}_i - \W^\star ) \Big\|_F \nonumber \\
	& \leq \; \Big\| \frac{1}{g} \sum_{i=1}^g \A_i  \Big\|_F
	+ \Big\| \frac{1}{g} \sum_{i=1}^g \B_i  \Big\|_F
	+  2\eta \beta \,
	\frac{1}{g} \sum_{i=1}^g \Big( \big\| \A_i \big\|_F +  \big\| \B_i \big\|_F \Big) \\
	& \leq \; \Big[ \tfrac{1}{\sqrt{g} } \max \big\{ \sqrt{{\epsilon}} , {\eta } \big\}
	+ 2\beta \eta
	\cdot \max \big\{  \sqrt{\epsilon}, \eta \big\} \Big] \,
	\sqrt{\beta}
	\sqrt{2 n \, f(\W^\star )} \\
	& = \;  \max \big\{ \sqrt{\epsilon } , \eta \big\}
	\, \cdot \, \big( \tfrac{1}{\sqrt{g}} + 2 \beta \eta \big) 
	\,\sqrt{\beta}
	\sqrt{2 n \, f(\W^\star )} 
	\\
	& = \sqrt{ \alpha \beta n \, f(\W^\star ) }.
	\end{align*}
	This concludes our proof.
\end{proof}

\subsection{Proof of Theorem~\ref{thm:optimization:what_avg}}
\label{sec:opt_avg:proof:hessian}

\begin{proof}
	By Lemma~\ref{lem:optimization},
  we only need to show that
	$ \big\| (\X^T \X + n \gamma \I_d)^{1/2}  \big({\W}^{\textrm{h}} - \W^\star \big) \big\|_F^2
	\leq \alpha^2 \beta^2 \big( -n  f (\W^\star ) + \|\Y\|_F^2 \big)$.
	
	In the proof of Theorem~\ref{thm:optimization:hessian} we show that
	\begin{align*}
	& (\X^T \X + n \gamma \I_d)^{1/2} \big({\W}^{\textrm{h}}_i - \W^\star \big)
	\; = \; \V \A_i \B_i \C ,
	\end{align*}
	where
	\begin{eqnarray*}
		\A_i & = & (\Si^2 + n \gamma \I_\rho)^{-1/2} \Si (\I_\rho - \U^T \S_i \S_i^T \U)
		\Si (\Si^2 + n \gamma \I_\rho)^{-1/2} ,\\
		\B_i & = & (\Si^2 + n \gamma \I_\rho)^{1/2}
		(\Si \U^T \S_i \S_i^T \U \Si + n \gamma \I_\rho)^{-1} (\Si^2 + n \gamma \I_\rho)^{1/2} , \\
		\C & = & (\Si^2 + n \gamma \I_\rho)^{-1/2} \Si \U^T \Y .
	\end{eqnarray*}
	It follows from Assumption~\ref{assumption:2:1} that for all $i\in [g]$,
	\[
	\tfrac{1}{1+\eta } (\Si^2 + n \gamma \I_\rho)^{-1} 
	\; \preceq \;
	(\Si \U^T \S_i \S_i^T \U \Si + n \gamma \I_\rho)^{-1}
	\; \preceq \; \tfrac{1}{1-\eta } (\Si^2 + n \gamma \I_\rho)^{-1} .
	\]
	We let $\B_i = \I_\rho + \De_i$.
	Thus $-\frac{\eta}{1+\eta} \I_\rho  \preceq \De_i \preceq \frac{\eta}{1-\eta} \I_\rho $.
	It follows that
	\begin{align*}
	& (\X^T \X + n \gamma \I_d)^{1/2} \big({\W}^{\textrm{h}} - \W^\star \big)
	\; = \;
	\frac{1}{g} \sum_{i=1}^g (\X^T \X + n \gamma \I_d)^{1/2} \big({\W}^{\textrm{h}}_i - \W^\star \big) \nonumber \\
	& = \; \frac{1}{g} \sum_{i=1}^g \V \A_i (\I_\rho + \De_i ) \C
	\; = \; \frac{1}{g} \sum_{i=1}^g \V \A_i  \C
	+ \frac{1}{g} \sum_{i=1}^g \V \A_i \De_i \C  .
	\end{align*}
	It follows that
	\begin{align} \label{eq:avg:optimization:what:1}
	& \Big\| (\X^T \X + n \gamma \I_d)^{1/2} \big({\W}^{\textrm{h}} - \W^\star \big)  \Big\|_F
	\; \leq \; \Big\| \frac{1}{g} \sum_{i=1}^g \A_i \Big\|_2 \, \Big\| \C  \Big\|_F
	+ \frac{1}{g} \sum_{i=1}^g \big\| \A_i \big\|_2 \big\| \De_i \big\|_2 \big\| \C \big\|_F \nonumber \\
	& \leq  \; \Big\| \frac{1}{g} \sum_{i=1}^g \A_i \Big\|_2 \, \Big\| \C  \Big\|_F
	+ \frac{\eta }{1-\eta } \Big(\frac{1}{g}  \sum_{i=1}^g \big\| \A_i \big\|_2 \Big) \big\| \C \big\|_F .
	\end{align}
	Let $\S = \frac{1}{g} [\S_1 , \cdots , \S_g] \in \RB^{n\times gs}$.
	It follows from the definition of $\A_i$ that
	\begin{eqnarray*}
		\big\| \A_i \big\|_2
		& = & \Big\| (\Si^2 + n \gamma \I_\rho)^{-1/2} \Si (\I_\rho - \U^T \S_i \S_i^T \U)
		\Si (\Si^2 + n \gamma \I_\rho)^{-1/2} \Big\|_2 \\
		& \leq & \eta \Big\| (\Si^2 + n \gamma \I_\rho)^{-1/2} \Si
		\Si (\Si^2 + n \gamma \I_\rho)^{-1/2} \Big\|_2
		\; = \; \eta \tfrac{ \sigma_{\max}^2}{\sigma_{\max}^2 + n \gamma}
		\; = \; \eta \beta ,\\
		\Big\| \frac{1}{g} \sum_{i=1}^g \A_i \Big\|_2
		& = & \Big\| (\Si^2 + n \gamma \I_\rho)^{-1/2} \Si (\I_\rho - \U^T \S \S^T \U)
		\Si (\Si^2 + n \gamma \I_\rho)^{-1/2} \Big\|_2 \\
		& \leq & \tfrac{\eta }{ \sqrt{g} } \big\| (\Si^2 + n \gamma \I_\rho)^{-1/2} \Si
		\Si (\Si^2 + n \gamma \I_\rho)^{-1/2} \big\|_2
		\; = \; \tfrac{\eta}{ \sqrt{g} } \tfrac{ \sigma_{\max}^2}{\sigma_{\max}^2 + n \gamma}
		\; = \; \tfrac{\eta \beta }{ \sqrt{g} }.
	\end{eqnarray*}
	It follows from \eqref{eq:avg:optimization:what:1} that
	\begin{align*}
	& \Big\| (\X^T \X + n \gamma \I_d)^{1/2} \big({\W}^{\textrm{h}} - \W^\star \big)  \Big\|_F  \\
	& \leq \; \Big( \tfrac{\eta}{ \sqrt{g} } + \tfrac{ \eta^2 }{1 - \eta } \Big)
	\beta \big\| \C \big\|_F \\
	& \leq \; \Big( \tfrac{\eta}{ \sqrt{g} } + \tfrac{ \eta^2 }{1 - \eta}  \Big)
	\beta \sqrt{ - n f (\W^\star ) + \| \Y \|_F^2 } ,
	\end{align*}
	where the latter inequality follows from the proof of Theorem~\ref{thm:optimization:what}.
	This concludes the proof.
\end{proof}


\section{Model Averaging from the Statistical Perspective: Proofs}
\label{sec:stat_avg:proof}

In Section~\ref{sec:stat_avg:proof:classical} we prove Theorem~\ref{thm:biasvariance:wtilde_avg}.
In Section~\ref{sec:stat_avg:proof:hessian} we prove Theorem~\ref{thm:biasvariance:what_avg}.


\subsection{Proof of Theorem~\ref{thm:biasvariance:wtilde_avg}}
\label{sec:stat_avg:proof:classical}

\begin{proof}
	The bound on $\bias \big( \W^{\textrm{c}} \big)$ can be shown in the same way as
	the proof of Theorem~\ref{thm:biasvariance:wtilde}.
	
	We prove the bound on $\var \big( \W^{\textrm{c}} \big)$ in the following.
	It follows from Assumption~\ref{assumption:2:1} that
	\begin{align*}
	(1+\eta )^{-1} ( \I_\rho + n\gamma \Si^{-2} )^{-1}
	\; \preceq \; (\U^T \S_i \S_i^T \U + n\gamma \Si^{-2} )^{\dag}
	\; \preceq \; (1-\eta )^{-1} ( \I_\rho + n\gamma \Si^{-2} )^{-1}.
	\end{align*}
	Let
	\[
	(\U^T \S_i \S_i^T \U + n\gamma \Si^{-2} )^{\dag}
	= ( \I_\rho + n\gamma \Si^{-2} )^{-1/2} (\I_\rho + \De_i ) ( \I_\rho + n\gamma \Si^{-2} )^{-1/2} .
	\]
	It holds that
	\[
	-\frac{\eta}{1+\eta} \I_\rho
	\; \preceq \; \De_i
	\; \preceq \; \frac{\eta}{1-\eta} \I_\rho .
	\]
	By the definition of $\var ( {\W}^{\textrm{c}} )$ in Theorem~\ref{thm:bias_var_decomp_avg},
	we have that
	\begin{small}
		\begin{align*}
		& \sqrt{\var \big( \W^{\textrm{c}} \big)}\\
		& = \; \frac{\xi}{\sqrt{n}}
		\bigg\| \frac{1}{g} \sum_{i=1}^g ( \I_\rho + n\gamma \Si^{-2} )^{-1} \U^T \S_i \S_i^T
		+ \frac{1}{g} \sum_{i=1}^g ( \I_\rho + n\gamma \Si^{-2} )^{-1/2} \De_i
		( \I_\rho + n\gamma \Si^{-2} )^{-1/2} \U^T \S_i \S_i^T  \bigg\|_F \\
		& \leq \; \frac{\xi}{\sqrt{n}}  \Big( \Big\| ( \I_\rho + n\gamma \Si^{-2} )^{-1} \U^T \S \S^T \Big\|_F
		+ \frac{1}{g} \sum_{i=1}^g \Big\| ( \I_\rho + n\gamma \Si^{-2} )^{-1/2} \De_i
		( \I_\rho + n\gamma \Si^{-2} )^{-1/2}  \U^T \S_i \S_i^T  \Big\|_F  \Big) \\
		& \leq \; \frac{\xi }{\sqrt{n}}  \big\|( \I_\rho + n\gamma \Si^{-2} )^{-1}   \big\|_F
		\Big(\big\| \U^T \S \big\|_2 \big\| \S \big\|_2
		+ \frac{ \eta }{1- \eta}
		\frac{1}{g} \sum_{i=1}^g \big\| \U^T \S_i \big\|_2
		\big\| \S_i \big\|_2 \Big) \\
		& = \; \sqrt{\var \big( \W^\star \big)}
		\Big(\big\| \U^T \S \big\|_2 \big\| \S \big\|_2
		+ \frac{ \eta }{1- \eta}
		\frac{1}{g} \sum_{i=1}^g \big\| \U^T \S_i \big\|_2
		\big\| \S_i \big\|_2 \Big) .
		\end{align*}
	\end{small}%
	Under Assumption~\ref{assumption:2:1}, we have that
	$\| \S_i^T \U \|_2^2 \leq 1+\eta$
	and $\| \S^T \U \|_2^2 \leq 1+ \frac{\eta}{  \sqrt{g}  }$.
	It follows that
	\begin{small}
		\begin{align*}
		& \sqrt{\frac{\var \big( \W^{\textrm{c}} \big)}{\var \big( \W^\star \big)}  }
		\; \leq \; \sqrt{1+\frac{\eta}{ \sqrt{g} }} \big\| \S \big\|_2
		+ \frac{ \eta \sqrt{1+\eta}}{1- \eta} \frac{1}{g} \sum_{i=1}^g \big\| \S_i \big\|_2 .
		\end{align*}
	\end{small}%
  Now the desired result follows from Assumption~\ref{assumption:2:3}.
\end{proof}


\subsection{Proof of Theorem~\ref{thm:biasvariance:what_avg}}
\label{sec:stat_avg:proof:hessian}

\begin{proof}
	The bound on $\var \big( {\W}^{\textrm{h}} \big)$ can be established in the
	same way as Theorem~\ref{thm:biasvariance:what}.
	
	We prove the bound on $\bias \big( {\W}^{\textrm{h}} \big)$ in the following.
	Let
	\[
	(\U^T \S_i \S_i^T \U + n\gamma \Si^{-2} )^{\dag}
	= ( \I_\rho + n\gamma \Si^{-2} )^{-1/2} (\I_\rho + \De_i ) ( \I_\rho + n\gamma \Si^{-2} )^{-1/2} .
	\]
	Under Assumption~\ref{assumption:2:1}, we have that
	$
	\De_i \preceq \frac{\eta}{1-\eta} \I_\rho
	$.
	It follows from Theorem~\ref{thm:bias_var_decomp_avg} that
	\begin{small}
		\begin{align*}
		& \bias \big({\W}^{\textrm{h}}\big)
		\; = \; \gamma \sqrt{n} \bigg\| \frac{1}{g} \sum_{i=1}^g
		\Big(  \Si^{-2} + \tfrac{\U^T \S_i \S_i^T \U - \I_\rho}{n \gamma} \Big)
		(\U^T \S_i \S_i^T \U + n\gamma \Si^{-2} )^{\dag}  \Si \V^T \W_0 \bigg\|_F  \\
		& \leq  \; \gamma \sqrt{n} \bigg\| \frac{1}{g} \sum_{i=1}^g
		\Big(  \Si^{-2} + \tfrac{\U^T \S_i \S_i^T \U - \I_\rho}{n \gamma} \Big)
		( \I_\rho + n\gamma \Si^{-2} )^{-1}  \Si \V^T \W_0 \bigg\|_F  \\
		& \; + \gamma \sqrt{n} \bigg\| \frac{1}{g} \sum_{i=1}^g
		\Big(  \Si^{-2} + \tfrac{\U^T \S_i \S_i^T \U - \I_\rho}{n \gamma} \Big)
		( \I_\rho + n\gamma \Si^{-2} )^{-1/2} \De_i
		( \I_\rho + n\gamma \Si^{-2} )^{-1/2}  \Si \V^T \W_0 \bigg\|_F  \\
		& \triangleq \; \gamma \sqrt{n}  \big( A + B \big) ,
		\end{align*}
	\end{small}%
	where
	\begin{small}
		\begin{eqnarray*}
			A & = & \Big\| \frac{1}{g} \sum_{i=1}^g
			\Big(  \Si^{-2} + \tfrac{\U^T \S_i \S_i^T \U - \I_\rho}{n \gamma} \Big)
			\big( \I_\rho + n\gamma \Si^{-2} \big)^{-1} \Si \V^T \W_0 \Big\|_F \\
			& = & \Big\|  \Big(  \Si^{-2} + \tfrac{\U^T \S \S^T \U - \I_\rho}{n \gamma} \Big)
			\big( \I_\rho + n\gamma \Si^{-2} \big)^{-1} \Si \V^T \W_0 \Big\|_F , \\
			B & = &\Big\| \frac{1}{g} \sum_{i=1}^g
			\Big(  \Si^{-2} + \tfrac{\U^T \S_i \S_i^T \U - \I_\rho}{n \gamma} \Big)
			\big( \I_\rho + n\gamma \Si^{-2} \big)^{-1/2}
			\De_i  \big( \I_\rho + n\gamma \Si^{-2} \big)^{-1/2} \Si \V^T \W_0 \Big\|_F \\
			& \leq & \frac{1}{g} \sum_{i=1}^g \Big\|
			\Big(  \Si^{-2} + \tfrac{\U^T \S_i \S_i^T \U - \I_\rho}{n \gamma} \Big)
			\big( \I_\rho + n\gamma \Si^{-2} \big)^{-1/2}
			\De_i  \big( \I_\rho + n\gamma \Si^{-2} \big)^{-1/2} \Si \V^T \W_0 \Big\|_F .
		\end{eqnarray*}
	\end{small}%
	It follows from Assumption~\ref{assumption:2:1} that $\U^T \S \S^T \U - \I_\rho$ is semidefinitely bounded between $\pm \frac{\eta}{\sqrt{g}} \I_\rho $. Thus
	\[
	\Big(1 - \tfrac{\eta \sigma_{\max}^2 }{n\gamma \sqrt{g} } \Big) \Si^{-2}
	\; \preceq \;
	\Si^{-2} + \tfrac{\U^T \S \S^T \U - \I_\rho}{n \gamma}
	\; \preceq \;
	\Big(1+\tfrac{\eta \sigma_{\max}^2 }{n\gamma \sqrt{g} } \Big) \Si^{-2} .
	\]
	It follows that
	\begin{eqnarray*}
		A
		& = & \Big\|  \Big(  \Si^{-2} + \tfrac{\U^T \S \S^T \U - \I_\rho}{n \gamma} \Big)
		\big( \I_\rho + n\gamma \Si^{-2} \big)^{-1} \Si \V^T \W_0 \Big\|_F \\
		& \leq & \Big(1+\tfrac{\eta \sigma_{\max}^2 }{n\gamma \sqrt{g} } \Big)
		\Big\| \big( \Si^2 + n\gamma \I_\rho \big)^{-1} \Si \V^T \W_0 \Big\|_F .
	\end{eqnarray*}
	Similar to the proof of Theorem~\ref{thm:biasvariance:what}, we can show that
	\begin{eqnarray*}
		B & \leq & \Big(1+\frac{\eta \sigma_{\max}^2 }{n\gamma} \Big)  \cdot
		\frac{1}{g} \sum_{i=1}^g \Big\|
		\Si^{-2} \big( \I_\rho + n\gamma \Si^{-2} \big)^{-1/2}
		\De_i  \big( \I_\rho + n\gamma \Si^{-2} \big)^{-1/2} \Si \V^T \W_0 \Big\|_F \\
		& \leq &
		\frac{ \eta }{1-\eta}
		\Big(1+\frac{\eta \sigma_{\max}^2 }{n\gamma} \Big)  \cdot
		\Big\| \big( \Si^2 + n\gamma \I_\rho \big)^{-1} \Si \V^T \W_0 \Big\|_F .
	\end{eqnarray*}
	Hence
	\begin{align*}
	&\bias \big({\W}^{\textrm{h}}\big)
	\; \leq \; \gamma \sqrt{n}  \big( A + B \big) \\
	& \leq \; \Big[ \tfrac{ 1 }{1-\eta} + \Big( \tfrac{\eta}{ \sqrt{g} } + \tfrac{ \eta^2 }{1-\eta } \Big)
	\tfrac{\sigma_{\max}^2 }{n \gamma } \Big] \:
	\gamma \sqrt{n} \Big\| \big( \Si^2 + n\gamma \I_\rho \big)^{-1} \Si \V^T \W_0 \Big\|_F
	\\
	& = \; \Big[ \tfrac{ 1 }{1-\eta} + \Big( \tfrac{\eta}{ \sqrt{g} } + \tfrac{ \eta^2 }{1-\eta } \Big)
	\tfrac{\sigma_{\max}^2 }{n \gamma } \Big]
	\, \bias \big(\W^\star \big).
	\end{align*}
	Here the equality follows from Theorem~\ref{thm:bias_var_decomp}.
\end{proof}

\vskip 0.2in
\bibliography{matrix}

\end{document}